\documentclass[preprint,nopreprintline,10pt]{elsarticle}

\usepackage{amsmath,amssymb,amsfonts,amsthm,mathtools}
\usepackage{booktabs}
\usepackage{tabularx}
\usepackage{array}
\usepackage{graphicx}
\usepackage[font={footnotesize,stretch=1.00},labelfont=bf]{caption}
\usepackage{subcaption}
\usepackage{xr-hyper}
\usepackage{hyperref}
\usepackage[T1]{fontenc}
\usepackage{lmodern}
\usepackage[expansion=false]{microtype}
\usepackage{algorithm}
\usepackage{algorithmic}
\usepackage{multirow}
\usepackage{float}
\usepackage{enumitem}
\usepackage{bm}
\usepackage{setspace}
\usepackage{etoolbox}
\usepackage[margin=1.85cm]{geometry}
\AtBeginEnvironment{tabular}{\singlespacing}
\AtBeginEnvironment{tabularx}{\singlespacing}
\AtBeginEnvironment{algorithmic}{\footnotesize\singlespacing}
\renewcommand{\arraystretch}{0.90}

\setlist{nosep,topsep=0.5pt,partopsep=0pt,itemsep=0pt,parsep=0pt}
\makeatletter
\renewcommand\section{\@startsection{section}{1}{\z@}%
  {3.5pt \@plus 1.0pt \@minus 1.0pt}%
  {1.5pt \@plus 0.5pt}%
  {\normalfont\normalsize\bfseries}}
\renewcommand\subsection{\@startsection{subsection}{2}{\z@}%
  {2.5pt \@plus 1.0pt \@minus 0.5pt}%
  {1.0pt \@plus 0.5pt}%
  {\normalfont\normalsize\itshape}}
\renewcommand\paragraph{\@startsection{paragraph}{4}{\z@}%
  {1.5pt \@plus 0.5pt \@minus 0.5pt}%
  {-0.5em}%
  {\normalfont\normalsize\bfseries}}
\makeatother
\AtBeginDocument{%
  \setlength{\abovedisplayskip}{2.5pt plus 1.0pt minus 1.0pt}%
  \setlength{\belowdisplayskip}{2.5pt plus 1.0pt minus 1.0pt}%
  \setlength{\abovedisplayshortskip}{1.5pt plus 1.0pt minus 1.0pt}%
  \setlength{\belowdisplayshortskip}{1.5pt plus 1.0pt minus 1.0pt}%
  \setlength{\parskip}{0pt}%
}
\makeatletter
\def\ps@pprintTitle{%
  \let\@oddhead\@empty
  \let\@evenhead\@empty
  \let\@oddfoot\@empty
  \let\@evenfoot\@oddfoot}
\makeatother

\newtheorem{theorem}{Theorem}

\newtheorem{proposition}{Proposition}

\newcommand{\SO}{\mathrm{SO}}
\newcommand{\OO}{\mathrm{O}}
\newcommand{\SU}{\mathrm{SU}}
\newcommand{\SE}{\mathrm{SE}}
\newcommand{\GL}{\mathrm{GL}}
\newcommand{\SL}{\mathrm{SL}}
\newcommand{\AGL}{\mathrm{AGL}}
\newcommand{\PSL}{\mathrm{PSL}}
\newcommand{\Heis}{\mathrm{Heis}}
\newcommand{\R}{\mathbb{R}}
\newcommand{\CC}{\mathbb{C}}
\newcommand{\FF}{\mathbb{F}}
\newcommand{\ZZ}{\mathbb{Z}}
\newcommand{\HH}{\mathbb{H}}

\newcommand{\tr}{\mathrm{tr}}
\newcommand{\ad}{\mathrm{ad}}
\newcommand{\Syl}{\mathrm{Syl}}
\newcommand{\Aut}{\mathrm{Aut}}
\newcommand{\Arf}{\mathrm{Arf}}
\newcommand{\Unif}{\mathrm{Unif}}
\newcommand{\KL}{D_{\mathrm{KL}}}
\newcommand{\GELU}{\mathrm{GELU}}
\newcommand{\RMSNorm}{\mathrm{RMSNorm}}

\begin{document}

\begin{frontmatter}

\title{Non-Commutative Lie-Algebraic State-Space Models: Finite-Division Variety Laws, Semidirect Cascades, and Gradient Saddle Escape}

\author[inst1,inst2]{Zeyu Jia\corref{cor1}}
\ead{jiazeyu@tju.edu.cn}

\affiliation[inst1]{organization={School of Biomedical Engineering and Technology, Tianjin Medical University},
            city={Tianjin},
            postcode={300070},
            country={China}}
\affiliation[inst2]{organization={Medical School, Tianjin University},
            city={Tianjin},
            postcode={300072},
            country={China}}

\cortext[cor1]{Corresponding author.}

\begin{abstract}
Structured state-space sequence models (SSMs) achieve linear-time inference and logarithmic-time parallel training via associative prefix scans, yet homogeneous diagonal transitions generate solvable semigroups that cannot track non-solvable group word problems or continuous 3D rigid-body manifolds. We present Non-Commutative Lie-Algebraic State-Space Models (Lie-SSMs), parameterizing recurrent updates on Lie algebra tangent spaces via the matrix exponential map. First, using an o-minimal cylindrical flag collar, we establish an unconditional single-layer finite-division criterion for contracting $\SO(3)$ and $\SU(2)$ heads, proving odd-prime Sylow abelianity, odd-$p$-core centrality, and exponent-30 commutator obstructions that exclude $S_5$ in $\SO(3)^m, \SU(2)^m,$ and $\SO(4)^m$ while identifying the sharp dimension threshold $d^*(S_5)=d^*(A_6)=5$ in $\SO(5)$. Second, we prove that Depth-2 semidirect and Schreier $2$-cocycle cascades resolve $S_5$, $2I\cong\SL_2(\FF_5)$, and $\Heis(\FF_p)$ in $\mathcal{O}(\log T)$ scan depth, whereas unnormalized contracting coupling collapses exponentially without Directional RMSNorm. Third, we characterize the $\Theta(\varepsilon^{c-1})$ Chen--Magnus--Hall--Witt commutator saddle and $1/|[G,G]|$ coset-centroid dilution, proving generic first-order symmetry breaking at $\varepsilon=0$ and teacher-forced local Polyak--\L{}ojasiewicz convergence under dense prefix supervision and factored readouts ($99.90\%/99.40\%$ on $S_5$ at $T=128$; $100.00\%$ on $D_{16}$). Across $\SE_2(3)$ Left-Jacobian odometry, Hillis--Steele/Blelloch GPU scans ($48.24\times$ forward, $20.27\times$/$10.28\times$ full-layer speedup at $T=2048$), and three wearable 3D inertial benchmarks across two cohorts (UCI HAR, UCI HAPT-PT, and 17-subject REALDISP), Lie-$\SO(3)$ models reach $95.69\%\pm0.14\%$ on HAR, $88.44\%\pm0.83\%$ episode logit-vote ($83.97\%\pm1.20\%$ central-3 window in 30-subject 5-fold CV; $87.16\%\pm0.64\%$ 9-subject split) on HAPT-PT, and $84.23\%\pm1.00\%$ on REALDISP.
\end{abstract}

\begin{keyword}
State-space models \sep Non-commutative recurrence \sep Lie groups and Lie algebras \sep Krohn-Rhodes decomposition \sep Semidirect cascades \sep Gradient saddle escape \sep Parallel prefix scan
\end{keyword}

\end{frontmatter}

\section{Introduction}
\label{sec:intro}

Sequential state-space models (SSMs) provide linear-time inference and logarithmic-time parallel training in S4~\citep{gu2021s4}, Mamba~\citep{gu2023mamba}, and Mamba-2~\citep{dao2024mamba2}, and have been deployed across pattern recognition in GrassNet~\citep{zhao2026grassnet}, DBMambaPose~\citep{wang2026dbmambapose}, EIMa~\citep{liu2026eima}, and ResiDet~\citep{sun2026residet}. Diagonal formulations such as S4~\citep{gu2021s4}, S5~\citep{smith2022simplified}, and LRU~\citep{orvieto2023resurrecting} evolve via
\begin{equation}
\label{eq:diagonal_ssm}
h_t = \Lambda_t h_{t-1} + B_t u_t, \qquad y_t = C_t h_t + D_t u_t,
\end{equation}
with diagonal $\Lambda_t \in \CC^{d \times d}$, enabling $\mathcal{O}(d)$ steps and $\mathcal{O}(\log T)$ parallel prefix scans~\citep{blelloch1990prefix}.

However, transition commutativity enforces an algebraic barrier rooted in the Krohn-Rhodes decomposition~\citep{krohn1965algebraic}, Barrington's theorem~\citep{barrington1989bounded}, and formal-language bounds for SSMs~\citep{merrill2024expressive,sarrof2024expressive,shakerinava}: diagonal transitions generate solvable semigroups and cannot track non-solvable groups such as $A_5$ (order 60) or $S_5$ (order 120)~\citep{merrill2024expressive,sarrof2024expressive,shakerinava}. Beyond discrete algebra, 3D action recognition requires geometric integration on $\SO(3)$ and $\SE(3)$~\citep{hall2015lie,hairer2006geometric}, supporting Lie-group Laplace descriptors~\citep{liao2025lie}, domain adaptation~\citep{liu2026lie}, Riemannian subspace learning~\citep{yin2023riemannian}, skeleton recognition~\citep{zhou2026neuropath,qiu2024skeleton,dai2023skeleton}, and wearable inertial activity, gait, and postural transition recognition~\citep{altun2010inertial,ngo2015inertial,reyes2016transition}. Recent non-commutative extensions use negative eigenvalues~\citep{grazzi2025unlocking}, selective dense state-space matrices for automaton simulation in SD-SSM~\citep{terzic2025sdssm}, Householder reflection products in DeltaProduct~\citep{deltaproduct}, complex delta rules in Complex KDA~\citep{siems2026complex}, and depth-vs.-width Lie-algebraic/Magnus analyses of group word problems and rotation tracking~\citep{heo2026depth}:
\begin{equation}
\label{eq:deltaproduct_intro}
H(v) = I_d - 2 \frac{v v^\top}{\|v\|_2^2}, \qquad M_t = \prod_{k=1}^{n_h} H(v_{t,k}) \in \OO(d).
\end{equation}
By the Cartan-Dieudonn\'e theorem, proper rotations in $\SO(d)$ require $2\lfloor d/2 \rfloor$ reflections; at $n_h = 1$, $\det(H(v)) = -1$ alternates orientation by $(-1)^t$, while planar Givens networks (RotRNN~\citep{rotrnn}) lack non-abelian closure without dense mixing.

We investigate Non-Commutative Lie-Algebraic State-Space Models (Lie-SSMs), parameterizing transitions on $\mathfrak{g} = \mathrm{Lie}(G)$ via the matrix exponential map:
\begin{equation}
\label{eq:lie_ssm_intro}
M_t = \exp\bigl(\Delta t_t \cdot A(u_t)\bigr) \in G, \qquad X_t = X_{t-1} M_t \quad (\text{or } h_{t,m} = \rho_{t,m} M_{t,m} h_{t-1,m}),
\end{equation}
and resolve four fundamental questions on (i) single-layer finite-division varieties under contractions $\rho_{a,h} \in [0, 1]$, (ii) higher-rank dimension and head-count laws, (iii) Depth-2 cascades with Directional RMSNorm, and (iv) commutator saddle escape from random initialization. Our contributions are:
\begin{enumerate}[leftmargin=2em,itemsep=0pt]
    \item \textbf{Unconditional Finite-Division Criterion and Variety Obstructions (Theorem~\ref{thm:master1_division})}: Using cylindrical o-minimal cell decomposition on $M \times (0, 1]^{|\Sigma|}$, we prove for contracting heads ($\SO(3)^m$ or $\SU(2)^m$, $\rho_{a,h} \in [0, 1]$) that single-layer all-word tracking holds iff $G$ divides $\SO(3)^m$ or $\SU(2)^m$ ($\iff \inf_{N \ge 1} \Delta_{\mathrm{norm}}(N) > 0$). We classify perfect targets ($A_5^r$, $(2I)^r/Z$ with $120r$ GELU width) and prove three Depth-1 obstructions: odd-prime Sylow abelianity ($p \ge 3$, ruling out $\Heis(\FF_p)$, with an explicit $p=3$ counterexample in $A_4 \le \SO(3)$ separating $ab$ from $ba$ at $N=2$ while failing central commutator/exponent relations), odd-$p$-core centrality $[u^2, x] = e$ on $O_p(G)$ (ruling out $F_{20}, G_{36}, G_{72}$), and the exponent-30 law $[u^{30}, v^{30}] = e$ in $\SO(3)^m$ (ruling out $S_5$ at $N_{\mathrm{alg}} = 120$).
    \item \textbf{Higher-Rank Dimension Thresholds and Head-Count Laws (Theorem~\ref{thm:master2_dimension_headcount})}: Via $\Pi: \SU(2) \times \SU(2) \twoheadrightarrow \SO(4)$, we prove that $\SO(4)^m$ remains obstructed on $S_5, \Heis(\FF_p), F_{20}$ at Depth 1, whereas monolithic solvable $2T \cong \SL_2(\FF_3)$ and $2O$ divide $\SU(2), \SO(4)$ ($d^* = 4, m=1$) yet require $m = \infty$ in $\SO(3)^m$. We establish $d^* = 5$ ($\SO(5)$) for $S_5, A_6$, exact dimension laws for $F_{pk} = C_p \rtimes_k C_k$ ($d_{\mathrm{div}}^* = d_{\mathrm{rep}}^*$) and $\Heis(\FF_p)$ ($2p$), faithful representation thresholds $d_{\mathrm{rep},\OO}^*, d_{\mathrm{rep},\SO}^*$ and extraspecial central-extension division bounds for irreducible Frobenius groups $\FF_p^r \rtimes_\alpha H$ ($G_{72,\mathrm{cyc}}$ at $d_{\mathrm{rep},\SO}^* = 9$ vs.\ Hessian $G_{72,\mathrm{quat}}$ at $d_{\mathrm{rep},\SO}^* = 8$ and $4 \le d_{\mathrm{div},\OO}^* \le d_{\mathrm{div},\SO}^* \le 6 < 8$ via $\Sigma(72 \times 3) \le \SU(3) \hookrightarrow \SO(6)$, vs.\ reducible $G_{18} = (C_3 \times C_3) \rtimes_{-I_2} C_2$ at $d^* = 4$), and exact head-count / $\FF_2$ Arf laws.
    \item \textbf{Semidirect/Cohomology Cascades, Directional RMSNorm, and Stiefel Polar Gating (Theorem~\ref{thm:master3_cascades_gating})}: We prove that Depth-2 cascades resolve split $G = N \rtimes_\varphi Q$ ($S_5, \Heis(\FF_p)$) and non-split $H^2_\varphi(Q, N)$ ($2I, 2T, 2O, Q_{16}$), whereas unnormalized contractions $\rho^{(1)}_{a,h} < 1$ collapse Layer 2 without headwise Directional RMSNorm $\hat{x}^{(1)}_{w,h} = x^{(1)}_{w,h}/\|x^{(1)}_{w,h}\|_2$, and Stiefel Polar orthogonalization $Q(S) = S(S^\top S)^{-1/2}$ eliminates the $c \ge 3$ simplex singularity $\mu_{\mathrm{sing}}(c) = \frac{1}{c-1} < 1$.
    \item \textbf{Commutator Saddle-Escape Theory and Multi-Cohort Empirical Validation (Theorem~\ref{thm:master4_optimization}, Section~\ref{sec:experiments})}: We prove that pair-difference gradients separating $\gamma_c(G)/\gamma_{c+1}(G)$ vanish as $\|\mathcal{P}_{\gamma_c}\nabla_\theta \mathcal{L}(w_c, w'_c; \varepsilon)\|_2 = \mathcal{O}(\varepsilon^{c-1})$ while terminal-only gradients suffer $1/|[G,G]|$ coset-centroid dilution and $O(T \rho_{\mathrm{ab}}^{T-1})$ decay, establish generic first-order symmetry breaking at $\varepsilon=0$ under dense prefix supervision plus factored readouts and teacher-forced local Polyak--\L{}ojasiewicz convergence ($\theta_{\max} < \pi$), and validate 5-seed $2 \times 2$ factorials on $S_5$ ($99.90\%/99.40\%$ at $T=128$) and $D_{16}$ ($100.00\%/99.90\%$), 6-DoF $\SE_2(3)$ Left-Jacobian odometry~\citep{barrau2017invariant}, Hillis--Steele/Blelloch GPU/CPU scans ($48.24\times$ forward, $20.06\times$ Fwd+Bwd, $20.27\times$/$10.28\times$ full-layer speedup at $T=2048$), and three wearable 3D inertial benchmarks across two cohorts (UCI HAR~\citep{anguita2013public}, UCI HAPT-PT~\citep{reyes2016transition}, and 3-IMU UCI REALDISP~\citep{banos2014dealing}).
\end{enumerate}

Section~\ref{sec:related_work} reviews related work; Section~\ref{sec:formulation} details our architectures and parallel scans (Figure~\ref{fig:arch_overview}); Section~\ref{sec:theory} states Propositions~\ref{prop:covering_vs_fibration}--\ref{prop:klein} and Theorems~\ref{thm:master1_division}--\ref{thm:master4_optimization} (proofs in Supplementary Material, \ref{app:proof_master1}--\ref{app:proof_master4}); Section~\ref{sec:experiments} reports experiments; and Sections~\ref{sec:discussion}--\ref{sec:conclusion} discuss limitations and conclude.

\section{Related Work}
\label{sec:related_work}

\subsection{Structured and Diagonal State-Space Models}
Linear SSMs evolve $x(t) \in \CC^d$ via $\dot{x}(t) = A x(t) + B u(t)$~\citep{gu2021s4}, motivating diagonal $A$ in S4~\citep{gu2021s4}, MIMO S5~\citep{smith2022simplified}, and complex-diagonal LRUs~\citep{orvieto2023resurrecting}. Selective SSMs such as Mamba~\citep{gu2023mamba} and Mamba-2~\citep{dao2024mamba2} introduced input-dependent step sizes $\Delta t_t$ and decay gates $\rho_t \in (0, 1]$, extended to graphs~\citep{zhao2026grassnet} and 3D point clouds~\citep{sun2026residet}; however, their transition matrices remain diagonal and commutative.

\subsection{Algebraic Expressivity, Krohn-Rhodes Theory, and Definable Automata}
Barrington's theorem~\citep{barrington1989bounded} characterizes $\mathrm{NC}^1$ via $A_5$, the Krohn-Rhodes theorem~\citep{krohn1965algebraic} decomposes finite transformation monoids into wreath cascades of simple dividing groups and $U_2 = \{\mathrm{id}, \mathrm{set}_0, \mathrm{set}_1\}$, and Gromov's theorem~\citep{gromov1981groups} links polynomial growth to virtually nilpotent groups. Diagonal SSMs and fixed-precision transformers are restricted to $\mathrm{TC}^0$ and solvable monoids~\citep{merrill2024expressive,sarrof2024expressive,shakerinava}. Prior topological/quantum group automata bounds~\citep{jeandel2005topological,brodsky} assumed unnormalized output margins, which fail under contractions $\rho_{a,h} \in [0, 1]$. Theorem~\ref{thm:master1_division} resolves this unconditionally via o-minimal cell decomposition~\citep{vandendries1998tame,chambert2025ominimality}, o-minimal neural classification~\citep{kratsios2026every,kinn2026fast}, abstract 3D rotation recognition~\citep{mcenerney2025abstract}, real semialgebraic geometry~\citep{bochnak1998real}, and Gelander's finite-supplement theorem~\citep{gelander}, while our subspace gating bounds generalize the Welch bound~\citep{welch1974lower}.

\subsection{Orthogonal, Unitary, and Lie-Group Recurrent Networks}
Unitary/orthogonal RNNs~\citep{arjovsky2016unitary} stabilize BPTT gradients via Cayley transforms, Riemannian exponential maps~\citep{lezcano2019cheap}, and Lie-algebra convolutions~\citep{dehmamy2021automatic}, alongside Lie-group Laplace descriptors~\citep{liao2025lie}, domain adaptation~\citep{liu2026lie}, and Riemannian subspace learning~\citep{yin2023riemannian}. We resolve three open questions: characterizing the finite-group variety of contracting $\SO(d)$ and $\SU(2)$ heads, diagnosing Magnus~\citep{magnus1954exponential} and Hall--Witt~\citep{hall1959theory,serre2006lie} commutator saddles, and coupling Lie transitions with $\mathcal{O}(\log T)$ parallel prefix scans~\citep{blelloch1990prefix}.

\subsection{Householder Reflection Products, Selective Dense Transitions, and Lie-Algebraic Depth Analyses}
Grazzi et al.~\citep{grazzi2025unlocking} unlocked parity via negative eigenvalues, Terzi\'c et al.~\citep{terzic2025sdssm} introduced Selective Dense State-Space Models (SD-SSM) with parallel scans for automaton simulation, DeltaProduct~\citep{deltaproduct} chained $n_h$ generalized Householder updates $\prod_{k=1}^{n_h} (I_d - \beta_{t,k} v_{t,k} v_{t,k}^\top / \|v_{t,k}\|_2^2)$ ($\beta_{t,k} \in (0, 2]$) with $L \le 3\text{--}4$ layer $S_5$ bounds, Complex KDA~\citep{siems2026complex} analyzed complex delta-rule group tracking under exact finite precision, Heo et al.~\citep{heo2026depth} studied Lie-algebraic depth and Magnus expansions on $S_5$ and rotation tracking, Howe~\citep{howe2026automaton} showed that in Householder linear RNNs $h_t = A(x_t) h_{t-1} + B(x_t)$ the additive pathway $B(x_t)$ acts as a parasitic gradient attractor that stalls multiplicative state tracking, RotRNN~\citep{rotrnn} used 2D rotations, and expRNN~\citep{lezcano2019cheap} parameterized static orthogonal recurrences $h_t = \sigma(R h_{t-1} + U x_t + b)$ via the Lie exponential map. Comparing these concurrent studies clarifies our scope: (1) relative to SD-SSM~\citep{terzic2025sdssm}, which uses unconstrained dense transitions $A_t \in \R^{d \times d}$ with additive $B_t x_t$, associative scanning of non-commutative transitions is established prior art, whereas our work restricts transitions to homogeneous compact Lie manifolds ($\SO(d), \SU(2), \SE_2(3)$) via exponential maps and proves exact finite-group division-variety and saddle-escape theorems; (2) relative to Heo et al.~\citep{heo2026depth}, our distinct advances are the o-minimal cylindrical cell-decomposition proof for contracting $\rho_{a,h} \in [0, 1]$ heads and Depth-1 obstructions ($[u^2, x] = e$ on $O_p(G)$, $[u^{30}, v^{30}] = e$ in $\SO(3)^m$, $2T/2O$ spinor separation, and the $\Sigma(72 \times 3) \le \SU(3) \hookrightarrow \SO(6)$ division-vs.-representation gap on $G_{72,\mathrm{quat}}$; Theorems~\ref{thm:master1_division}--\ref{thm:master2_dimension_headcount}), the $\FF_2$ Arf and coordinate-kernel head-count laws (Theorem~\ref{thm:master2_dimension_headcount}(c)(iii)), and the Directional-RMSNorm cascade collapse dichotomy (Theorem~\ref{thm:master3_cascades_gating}(b)); and (3) relative to Howe~\citep{howe2026automaton}, whose additive-$B(x_t)$ parasitic attractor explains why inhomogeneous linear RNNs and our additive \texttt{expRNN-RadBlk} control stall on group tracking, our homogeneous multiplicative Lie-SSM ($B_t \equiv 0$, $h_{t,m} = \rho_{t,m}\exp(\Delta t_{t,m} A_m(u_t))h_{t-1,m}$) removes the additive pathway by construction and characterizes the remaining multiplicative obstructions---the $\Theta(\varepsilon^{c-1})$ Chen--Magnus--Hall--Witt commutator saddle, $1/|[G,G]|$ coset-centroid dilution, and Fourier--Markov $O(T \rho_{\mathrm{ab}}^{T-1})$ terminal decay---validated via 5-seed $2 \times 2$ factorials against fixed ($\beta=2$) and gated ($\beta \in (0,2)$) \texttt{DeltaProduct-SO(6)} ($n_h \in \{4, 6\}$) and \texttt{expRNN-RadBlk} (Theorem~\ref{thm:master4_optimization}, Table~\ref{tab:s5_random_init_closure}).

\section{Lie-Algebraic State-Space Architectures \& Associative Parallel Scan}
\label{sec:formulation}

\subsection{Single-Layer Lie-\texorpdfstring{$\SO(d)$}{SO(d)}, Spinor \texorpdfstring{$\SU(2)$}{SU(2)}, and \texorpdfstring{$\SE(3)$}{SE(3)} State-Space Models}
\label{sec:formulation_single}
Let $G \subset \GL(d, \R)$ be a matrix Lie group with Lie algebra $\mathfrak{g} = T_{I_d} G$ and bracket $[A, B] = AB - BA$. For $\SO(d) = \{R \in \R^{d \times d} : R^\top R = I_d, \det(R) = +1\}$, $\mathfrak{so}(d) = \{A \in \R^{d \times d} : A^\top = -A\}$ has dimension $d(d-1)/2$. Given inputs $(u_1, \dots, u_T)$ ($u_t \in \R^{d_{\mathrm{in}}}$), an $M$-head single-layer Lie-$\SO(d)$-SSM projects $u_t$ to skew-symmetric generators and step sizes:
\begin{equation}
\label{eq:lie_tangent_proj}
A_{t,m} = \mathcal{P}_{\mathfrak{so}(d)}\bigl(W_{u,m} u_t + b_{u,m}\bigr) \in \mathfrak{so}(d), \qquad \Delta t_{t,m} = \mathrm{softplus}\bigl(w_{\Delta,m}^\top u_t + b_{\Delta,m}\bigr) > 0,
\end{equation}
where $\mathcal{P}_{\mathfrak{so}(d)}(Z) = \frac{1}{2}(Z - Z^\top)$, and updates via the matrix exponential:
\begin{equation}
\label{eq:lie_exp_step}
R_{t,m} = \exp\bigl(\Delta t_{t,m} A_{t,m}\bigr) \in \SO(d).
\end{equation}
For $d=3$ with $\Omega_{t,m} = \Delta t_{t,m} \omega_{t,m} \in \R^3$ and hat map $\hat{\Omega} \in \mathfrak{so}(3)$ ($\hat{\Omega} v = \Omega \times v$), Rodrigues' formula gives
\begin{equation}
\label{eq:rodrigues}
\exp(\hat{\Omega}) = I_3 + \frac{\sin \theta}{\theta} \hat{\Omega} + \frac{1 - \cos \theta}{\theta^2} \hat{\Omega}^2, \qquad \theta = \|\Omega\|_2,
\end{equation}
stabilized by a 3rd-degree Taylor polynomial (4th-order even coefficients) for $\theta < \theta_{\mathrm{Taylor}} = 10^{-4}$ (Algorithm~\ref{alg:lie_ssm_forward}). States evolve on $X_{t,m} = X_{t-1,m} R_{t,m} \in \SO(d)$ or as contracting vectors $h_{t,m} = \rho_{t,m} R_{t,m} h_{t-1,m} \in \R^d$ ($\rho_{t,m} \in (0, 1], h_{0,m} \in S^{d-1}$); raw-spinor $\SU(2)$ heads evolve on $\CC^2 \simeq \R^4$ via $y_{t,m} = \rho_{t,m} U_{t,m} y_{t-1,m}$ ($U_{t,m} \in \SU(2)$). On $\SE(3) = \R^3 \rtimes \SO(3)$, twists $\xi_t = (v_t^\top, \omega_t^\top)^\top \in \R^6$ map via the Left-Jacobian exponential:
\begin{equation}
\label{eq:se3_exp}
\exp(\hat{\xi}_t) = \begin{pmatrix} \exp(\hat{\omega}_t) & V(\omega_t) v_t \\ \mathbf{0}^\top & 1 \end{pmatrix} \in \SE(3), \qquad V(\omega) = I_3 + \frac{1 - \cos \theta}{\theta^2} \hat{\omega} + \frac{\theta - \sin \theta}{\theta^3} \hat{\omega}^2.
\end{equation}

\begin{algorithm}[htbp]
\caption{Forward Evaluation of Single-Layer Multi-Head Lie-$\SO(3)$-SSM (Exact Taylor-Stabilized $\mathrm{sinc}$ Rodrigues Map Matching \texttt{so3\_exp\_map})}
\label{alg:lie_ssm_forward}
\begin{algorithmic}[1]
\REQUIRE Input sequence $(u_1, \dots, u_T)$, tangent weights $\{W_{u,m}, b_{u,m}, w_{\Delta,m}, b_{\Delta,m}\}_{m=1}^M$, initial states $\{h_{0,m} \in S^2\}_{m=1}^M$, readout network $\Phi$, Taylor threshold $\theta_{\mathrm{Taylor}} = 10^{-4}$.
\ENSURE Output logits $(y_1, \dots, y_T)$ and manifold states $(h_1, \dots, h_T) \in (S^2)^M$.
\FOR{$t = 1$ \TO $T$ \textbf{and} $m = 1$ \TO $M$}
    \STATE $\omega_{t,m} \gets W_{u,m} u_t + b_{u,m} \in \R^3$, \quad $\Delta t_{t,m} \gets \mathrm{softplus}(w_{\Delta,m}^\top u_t + b_{\Delta,m})$
    \STATE $\Omega_{t,m} \gets \Delta t_{t,m} \omega_{t,m}$, \quad $\theta_{t,m} \gets \|\Omega_{t,m}\|_2$, \quad $\hat{\Omega}_{t,m} \gets [\Omega_{t,m}]_\times \in \mathfrak{so}(3)$
    \STATE $c_{1,t,m} \gets \begin{cases} 1 - \theta_{t,m}^2 / 6 + \theta_{t,m}^4 / 120, & \theta_{t,m} < 10^{-4} \\ (\sin \theta_{t,m}) / \theta_{t,m}, & \theta_{t,m} \ge 10^{-4} \end{cases}$, \quad $c_{2,t,m} \gets \begin{cases} 1/2 - \theta_{t,m}^2 / 24 + \theta_{t,m}^4 / 720, & \theta_{t,m} < 10^{-4} \\ (1 - \cos \theta_{t,m}) / \theta_{t,m}^2, & \theta_{t,m} \ge 10^{-4} \end{cases}$
    \STATE $R_{t,m} \gets I_3 + c_{1,t,m} \hat{\Omega}_{t,m} + c_{2,t,m} \hat{\Omega}_{t,m}^2 \in \SO(3)$ \COMMENT{Exact $\exp(\hat{\Omega}_{t,m})$ with $\|R_{t,m}^\top R_{t,m} - I_3\|_F < 10^{-15}$ in \texttt{float64}}
    \STATE $h_{t,m} \gets R_{t,m} h_{t-1,m}$ \COMMENT{or $X_{t,m} \gets X_{t-1,m} R_{t,m}$ in matrix mode}
\ENDFOR
\FOR{$t = 1$ \TO $T$}
    \STATE $y_t \gets \Phi(h_{t,1}, \dots, h_{t,M})$
\ENDFOR
\RETURN $(y_1, \dots, y_T), (h_1, \dots, h_T)$
\end{algorithmic}
\end{algorithm}

\subsection{Associative Parallel Prefix Scan on Non-Commutative Matrix Lie Groups}
\label{sec:formulation_scan}
Although Lie transitions do not commute ($M_t M_{t'} \ne M_{t'} M_t$), matrix multiplication $(M_1 M_2) M_3 = M_1 (M_2 M_3)$ and affine composition $(A_2, b_2) \circ (A_1, b_1) = (A_2 A_1, A_2 b_1 + b_2)$ are associative. All prefix products $X_t = \prod_{\tau=1}^t M_\tau$ over $T = 2^K$ can therefore be evaluated via two logarithmic tree-scan schedules: (i) the \textbf{work-efficient Blelloch up-sweep/down-sweep scan}~\citep{blelloch1990prefix} (Algorithm~\ref{alg:lie_prefix_scan}), requiring $2 \log_2 T$ kernel launches, $2(T - 1) = \mathcal{O}(T)$ batched $d \times d$ matrix multiplications, and $\mathcal{O}(T d^2)$ autograd states; and (ii) the \textbf{step-optimal Hillis--Steele doubling scan}~\citep{hillis1986data}, updating $P_t^{(k+1)} \gets P_{t-2^k}^{(k)} \cdot P_t^{(k)}$ in $\log_2 T$ launches with $\mathcal{O}(T \log_2 T)$ matrix multiplications and $\mathcal{O}(T \log_2 T \, d^2)$ autograd states. As benchmarked in Section~\ref{sec:exp_hardware} (Table~\ref{tab:scan_benchmark}), Hillis--Steele minimizes launch overhead at short horizons ($T \le 256, M=1$), whereas Blelloch's $\mathcal{O}(T)$ work efficiency overtakes Hillis--Steele on CPU ($8.41\times$ vs.\ $3.51\times$ at $T=2048$) and at $T \ge 1024$ on GPU ($48.24\times$ primitive forward, $20.06\times$ scan Fwd+Bwd, and $20.27\times$ full-layer training speedup at $T=2048, M=1$ with $30.0\%$ lower full-layer peak VRAM [$144.83\text{ MB}$ vs.\ $207.02\text{ MB}$; $-32.0\%$ scan-primitive VRAM], and $10.28\times$ full-layer speedup at $M=16$ with $31.5\%$ lower full-layer peak VRAM [$2097.86\text{ MB}$ vs.\ $3061.99\text{ MB}$; $-33.7\%$ scan-primitive VRAM]).

\begin{algorithm}[htbp]
\caption{Parallel Associative Prefix Scan on Matrix Lie Groups}
\label{alg:lie_prefix_scan}
\begin{algorithmic}[1]
\REQUIRE Sequence of Lie group transition matrices $(M_1, \dots, M_T) \in G^T$ with $T = 2^K$.
\ENSURE Prefix products $X_t = \prod_{\tau=1}^t M_\tau \in G$ for all $t \in \{1, \dots, T\}$.
\STATE Allocate buffer $S[1 \dots T] \gets [M_1, \dots, M_T]$.
\FOR{$d = 0$ \TO $K - 1$}
    \FOR{$k = 0, 2^{d+1}, \dots, T - 1$ \textbf{in parallel}}
        \STATE $S[k + 2^{d+1}] \gets S[k + 2^d] \cdot S[k + 2^{d+1}]$ \COMMENT{Phase 1: Up-Sweep Reduction Tree}
    \ENDFOR
\ENDFOR
\STATE $S[T] \gets I_d$
\FOR{$d = K - 1, K - 2, \dots, 0$}
    \FOR{$k = 0, 2^{d+1}, \dots, T - 1$ \textbf{in parallel}}
        \STATE $T_{\mathrm{tmp}} \gets S[k + 2^d]$; \quad $S[k + 2^d] \gets S[k + 2^{d+1}]$
        \STATE $S[k + 2^{d+1}] \gets S[k + 2^{d+1}] \cdot T_{\mathrm{tmp}}$ \COMMENT{Phase 2: Down-Sweep (Left-to-Right Order)}
    \ENDFOR
\ENDFOR
\FOR{$t = 1$ \TO $T$ \textbf{in parallel}}
    \STATE $X_t \gets S[t] \cdot M_t$
\ENDFOR
\RETURN $(X_1, \dots, X_T)$
\end{algorithmic}
\end{algorithm}

\subsection{Depth-2 Semidirect and Schreier \texorpdfstring{$2$}{2}-Cocycle Cascades with Directional RMSNorm}
\label{sec:formulation_cascade}
Because single-layer heads receive only $u_t$, their joint transition monoid is restricted to subdirect products (Theorem~\ref{thm:master1_division}). To track semidirect products $G = N \rtimes_\varphi Q$ ($S_5 \cong A_5 \rtimes_\varphi C_2$, $\Heis(\FF_p) \cong C_p^2 \rtimes_\varphi C_p$) or non-split extensions $1 \to N \to G \to Q \to 1$ with Schreier $2$-cocycle $[\omega] \in H^2_\varphi(Q, N)$ ($2I \cong \SL_2(\FF_5), Q_{16}$) using $\SO(3)$ heads, we construct a \emph{Depth-2 Directional-RMSNorm Lie-SSM Cascade}:
\begin{enumerate}[label=(\roman*),leftmargin=2em,itemsep=0pt]
    \item \textbf{Layer 1 (Quotient $Q$ Tracker)}: Layer 1 evolves $M_1$ conformal $\SO(3)$ heads driven by $u_t$:
    \begin{equation}
    \label{eq:cascade_layer1}
    h^{(1)}_{t,m} = \rho^{(1)}_{t,m} R^{(1)}_m(u_t) h^{(1)}_{t-1,m}, \qquad R^{(1)}_m(u_t) = \exp\bigl(\Omega^{(1)}_m(u_t)\bigr) \in \SO(3), \quad \rho^{(1)}_{t,m} \in (0, 1].
    \end{equation}
    \item \textbf{Inter-Layer Directional RMSNorm Projection}: Rather than passing unnormalized $h^{(1)}_{t-1,m}$ (which decays as $(\rho^{(1)})^t \to 0$ and collapses downstream routing, Theorem~\ref{thm:master3_cascades_gating}(b)), we apply parameter-free headwise spherical normalization at every step $t$:
    \begin{equation}
    \label{eq:dir_rmsnorm}
    \hat{h}^{(1)}_{t,m} = \RMSNorm_{\mathrm{dir}}\bigl(h^{(1)}_{t,m}\bigr) := \frac{h^{(1)}_{t,m}}{\|h^{(1)}_{t,m}\|_2} \in S^2, \qquad \RMSNorm_{\mathrm{dir},\epsilon}\bigl(h^{(1)}_{t,m}\bigr) := \frac{h^{(1)}_{t,m}}{\|h^{(1)}_{t,m}\|_2 + \epsilon} \quad (\epsilon = 10^{-8}).
    \end{equation}
    In pure-rotation prefix scans or step-wise normalized recurrences where $\rho^{(1)}_{\tau,m} > 0$ is factored out by homogeneity, $\|h^{(1)}_{t,m}\|_2 = 1$ for all $t \ge 1$ with constant error $\frac{\epsilon}{1+\epsilon} \approx 10^{-8}$ across $T \to \infty$. If unrescaled contracting states $h^{(1)}_{t,m} = (\prod_{\tau=1}^t \rho^{(1)}_{\tau,m} R^{(1)}_m(u_\tau)) h^{(1)}_{0,m}$ are normalized post-scan via $\RMSNorm_{\mathrm{dir},\epsilon}$, the error $1 - \|\RMSNorm_{\mathrm{dir},\epsilon}(h^{(1)}_{t,m})\|_2 = \frac{\epsilon}{\|h^{(1)}_{t,m}\|_2 + \epsilon}$ is governed by the output-norm threshold horizon $N_{\epsilon,\mathrm{norm}}(\rho_{\min}^{(1)}, \delta) := \bigl\lfloor \frac{\log(\|h^{(1)}_{0,m}\|_2 (1-\delta) / (\epsilon \delta))}{\log(1/\rho_{\min}^{(1)})} \bigr\rfloor$ and error-tolerance horizon $N_{\epsilon,\mathrm{err}}(\rho_{\min}^{(1)}, \eta) := \bigl\lfloor \frac{\log(\|h^{(1)}_{0,m}\|_2 \eta / ((1-\eta) \epsilon))}{\log(1/\rho_{\min}^{(1)})} \bigr\rfloor = N_{\epsilon,\mathrm{norm}}(\rho_{\min}^{(1)}, 1-\eta)$: at $\rho_1 = 0.65$ and $\epsilon = 10^{-8}$, $N_{\epsilon,\mathrm{norm}}(0.65, 0.99) = N_{\epsilon,\mathrm{err}}(0.65, 0.01) = 32$ so the output norm remains $> 0.9903$ across our subsampled inertial horizon $t \le T_{\mathrm{sub}} = 32$ (and crosses $0.5$ at $N_{\epsilon,\mathrm{norm}}(0.65, 0.5) = 42$).
    \item \textbf{Layer 2 (Normal-Core $N$ Tracker with Lie-Algebra Routing)}: The normalized state $\hat{h}^{(1)}_{t-1} \in (S^2)^{M_1}$ routes the Layer-2 generator $\Omega^{(2)}_m(\hat{h}^{(1)}_{t-1}, u_t) \in \mathfrak{so}(3)$ to twist the normal-core update via $\varphi(q_{t-1})(n_{u_t}) \omega(q_{t-1}, q_{u_t})$:
    \begin{equation}
    \label{eq:cascade_layer2}
    h^{(2)}_{t,m} = \exp\Bigl(\Omega^{(2)}_m\bigl(\hat{h}^{(1)}_{t-1}, u_t\bigr)\Bigr) h^{(2)}_{t-1,m} \in S^2 \qquad (m = 1, \dots, M_2).
    \end{equation}
    In end-to-end training from random initialization (e.g., on $S_5 \cong A_5 \rtimes_\varphi C_2$), Layer 1 computes $p_{t-1} = \mathrm{Softmax}(W_{\mathrm{route}} \hat{h}^{(1)}_{t-1} / \tau_{\mathrm{route}}) \in \Delta^{|Q|-1}$ ($\tau_{\mathrm{route}} = 0.30$) and forms $\Omega^{(2)}_m(\hat{h}^{(1)}_{t-1}, u_t) = \sum_{q \in Q} p_{t-1,q} \Omega^{(2)}_{m,q}(u_t) \in \mathfrak{so}(3)$. Because Layer 1 is autonomous, both layers execute in two sequential $\mathcal{O}(\log T)$ parallel prefix scans.
\end{enumerate}

\subsection{Cross-Head Mixing \texorpdfstring{$\SO(3m)$}{SO(3m)} and Differentiable Stiefel Polar Subspace Gating}
\label{sec:formulation_stiefel}
Interleaving $m$ block-diagonal $\SO(3)$ heads with orthogonal Givens mixing $W_{\mathrm{mix}} \in \SO(3m)$ coupling adjacent boundaries $(3h, 3h+1)$ expands $\bigoplus_{h=1}^m \mathfrak{so}(3)$ to $\mathfrak{so}(3m)$. To track partial-reset monoids $M_{\mathrm{part}}(G_1, \dots, G_c) = \prod_{j=1}^c (G_j \cup \{\mathrm{reset}_g : g \in G_j\})$ on $\R^d$ ($d \ge D := \sum_{j=1}^c d_j$) without oblique leakage or the $c \ge 3$ simplex singularity $\mu_{\mathrm{sing}}(c) = \frac{1}{c-1}$, we parameterize $S \in \mathcal{M}_* := \{S \in \R^{d \times D} : \mathrm{rank}(S) = D, \sigma_{\min}(S) > 0\}$ via the Stiefel Polar projection:
\begin{equation}
\label{eq:stiefel_polar_gate}
Q(S) = S(S^\top S)^{-1/2} = [Q_1(S), \dots, Q_c(S)], \qquad \Lambda_a(S) = \Bigl(I_d - (1 - \gamma) \sum_{j \in J_{\mathrm{reset}}(a)} Q_j(S) Q_j(S)^\top\Bigr) R_a(S),
\end{equation}
where $Q(S)$ ($\epsilon_{\mathrm{pol}} = 0$) enforces $Q_j(S)^\top Q_{j'}(S) = \delta_{j,j'} I_{d_j}$ on $\mathcal{M}_*$, while the $\epsilon_{\mathrm{pol}}$-damped map $Q_{\epsilon_{\mathrm{pol}}}(S) := S(S^\top S + \epsilon_{\mathrm{pol}} I_D)^{-1/2}$ ($\epsilon_{\mathrm{pol}} = 10^{-6}$) bounds $\|Q_{\epsilon_{\mathrm{pol}}}(S)^\top Q_{\epsilon_{\mathrm{pol}}}(S) - I_D\|_{\mathrm{op}} = \frac{\epsilon_{\mathrm{pol}}}{\sigma_{\min}(S)^2 + \epsilon_{\mathrm{pol}}}$.

\subsection{Stationary Dense Prefix Supervision and Factored Subdirect Readouts}
\label{sec:formulation_dense_factored}
As proven in Theorem~\ref{thm:master4_optimization}, terminal-only cross-entropy $\mathcal{L}_{\mathrm{term}} = \mathcal{L}_{\mathrm{CE}}(\mathrm{logits}_T, y_T)$ suffers exponential Fourier--Markov decay $O(T \rho_{\mathrm{ab}}^{T-1}) \to 0$ at small random initialization, while an unstructured $|G|$-way readout suffers $1/|[G, G]|$ coset-centroid variance dilution ($1/60$ on $S_5$, $1/4$ on $D_{16}$). We overcome both via two principles:
\begin{enumerate}[label=(\roman*),leftmargin=2em,itemsep=0pt]
    \item \textbf{Stationary Autoregressive Dense Per-Step Prefix Supervision}: Supervising all prefix states $(h_1, \dots, h_T)$ with discount normalization $Z_T(\beta) := \sum_{t=1}^T \beta^{t-1}$:
    \begin{equation}
    \label{eq:dense_prefix_loss}
    \mathcal{L}_{\mathrm{dense}}^{(\beta)}(w) = \frac{1}{Z_T(\beta)} \sum_{t=1}^T \beta^{t-1} \mathcal{L}_{\mathrm{CE}}\bigl(\mathrm{logits}_t, \eta(w_{1:t})\bigr) \qquad (\beta \in (0, 1]),
    \end{equation}
    breaks first-order abelianization symmetry at $\theta = 0$ via short-prefix terms ($\Omega(1)$ gradient for $\beta \in (0, 1)$ and $\Theta(1/T)$ for $\beta = 1$).
    \item \textbf{Factored Abelianization-Plus-Core Readout}: For $G$ with quotient $\pi_{\mathrm{ab}}: G \twoheadrightarrow G^{\mathrm{ab}} = G/[G, G]$ and normal core $N = [G, G]$, each $g \in G$ corresponds bijectively to $(n(g), q(g)) \in N \times G^{\mathrm{ab}}$, and we decompose $|G|$-way logits additively into $\ell^{\mathrm{ab}}_t \in \R^{|G^{\mathrm{ab}}|}$ and a zero-sum core vector $\ell^{\mathrm{core}}_t \in \R^{|N|}$:
    \begin{equation}
    \label{eq:factored_readout}
    \mathrm{logits}_{|G|}(g)_t = \ell^{\mathrm{ab}}_t[q(g)] + \ell^{\mathrm{core}}_t[n(g)], \qquad \sum_{n \in N} \ell^{\mathrm{core}}_t[n] = 0.
    \end{equation}
    The Softmax partition function factors identically into $|G^{\mathrm{ab}}|$-way and $|N|$-way factors:
    \begin{equation}
    \label{eq:partition_factorization}
    Z_t = \sum_{g \in G} \exp\bigl(\ell^{\mathrm{ab}}_t[q(g)] + \ell^{\mathrm{core}}_t[n(g)]\bigr) = \Biggl(\sum_{q \in G^{\mathrm{ab}}} e^{\ell^{\mathrm{ab}}_t[q]}\Biggr) \Biggl(\sum_{n \in N} e^{\ell^{\mathrm{core}}_t[n]}\Biggr),
    \end{equation}
    so $\mathcal{L}_{\mathrm{CE}}(\mathrm{logits}_{|G|, t}, g_t) = \mathcal{L}_{\mathrm{CE}}(\ell^{\mathrm{ab}}_t, q(g_t)) + \mathcal{L}_{\mathrm{CE}}(\ell^{\mathrm{core}}_t, n(g_t))$ delivers an undiluted $\Theta(1)$ abelianization gradient. On $S_5 = A_5 \rtimes_\varphi C_2$, linear projections suffice for $\ell^{\mathrm{ab}}_t \in \R^2$ and $\ell^{\mathrm{core}}_t \in \R^{60}$ ($62$ vs.\ $120$ logits). On $D_{16} \le \SO(3)$, even and odd rotations $C_4, rC_4$ lie on coplanar squares sharing centroid $(0,0,z_0)^\top$, obstructing linear $V_4$ readouts by Radon's theorem (Theorem~\ref{thm:master1_division}(c)(iii)); parameterizing $\ell^{\mathrm{ab}}_t \in \R^4$ and zero-sum $\ell^{\mathrm{core}}_t \in \R^4$ as 2-layer 64-unit GELU MLPs ($8$ logits, $8{,}632$ vs.\ $10{,}176$ params) resolves the Radon parity floor while enforcing Eq.~\eqref{eq:partition_factorization}.
\end{enumerate}

\begin{figure*}[t!]
\centering
\includegraphics[width=0.96\textwidth]{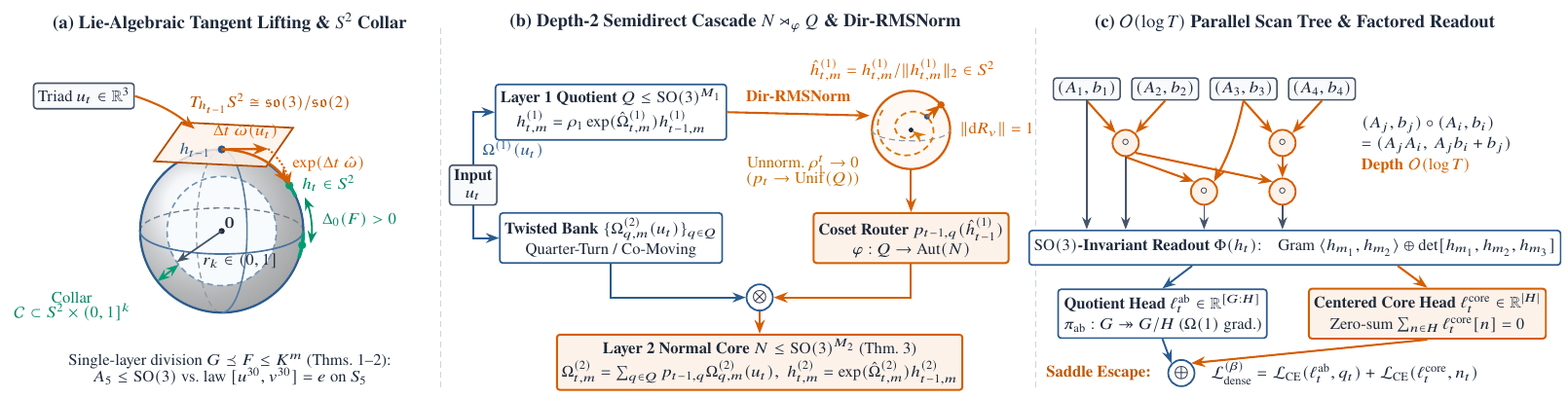}
\caption{Unified architectural and geometric overview of Non-Commutative Lie-Algebraic State-Space Models (Lie-SSMs). \textbf{(a)} Lie-algebraic tangent-space lifting ($T_{h_{t-1}}S^2 \cong \mathfrak{so}(3)/\mathfrak{so}(2)$, via the rank-$2$ orbital map $\hat{\omega} \mapsto \hat{\omega} h_{t-1}$ with axial stabilizer $\mathfrak{so}(2)$) of input triads $u_t \in \R^3$, geodesic exponential map $\exp(\Delta t\,\hat{\omega}(u_t))$ onto $S^2$, o-minimal cylindrical flag collar $\mathcal{C} \subset S^2 \times (0, 1]^k$ with spherical cap separation $\Delta_0(F) > 0$, and the single-layer finite-division variety criterion $G \preceq F \le K^m$ contrasting perfect $A_5 \le \SO(3)$ against $[u^{30}, v^{30}] = e$ on $S_5$ (Theorems~\ref{thm:master1_division}--\ref{thm:master2_dimension_headcount}). \textbf{(b)} Depth-2 semidirect and Schreier $2$-cocycle cascade ($G \cong N \rtimes_\varphi Q$), contrasting unnormalized spiral collapse ($\|h^{(1)}_t\|_2 = \rho_1^t \to 0$) against parameter-free headwise Directional RMSNorm ($\hat{h}^{(1)}_{t,m} = h^{(1)}_{t,m} / \|h^{(1)}_{t,m}\|_2 \in S^2$, Theorem~\ref{thm:master3_cascades_gating}). \textbf{(c)} $\mathcal{O}(\log T)$ binary associative prefix scan tree over affine-group pairs $(A_j, b_j) \circ (A_i, b_i) = (A_j A_i, A_j b_i + b_j)$, $\SO(3)$-invariant Gram/chirality readout features $\Phi(h_t)$, and $[G:H] \times |H|$ zero-sum coset-centered factored readout ($\ell_t^{\mathrm{ab}} \oplus \ell_t^{\mathrm{core}}$) with dense prefix supervision $\mathcal{L}_{\mathrm{dense}}^{(\beta)}$ escaping the $\Theta(\varepsilon^{c-1})$ commutator saddle (Theorem~\ref{thm:master4_optimization}).}
\label{fig:arch_overview}
\end{figure*}

\section{Unified Theoretical Foundations: Variety Laws, Cascades, and Optimization Dynamics}
\label{sec:theory}

\subsection{Topological and Representation Foundations}
\label{sec:theory_foundations}

\begin{proposition}[Universal Covering vs.\ Hopf Fibration, Commutative Obstruction, and Dyck-1 Precision Horizon]
\label{prop:covering_vs_fibration}
\leavevmode
\begin{enumerate}[label=(\alph*),leftmargin=2em,itemsep=0pt]
    \item \textbf{Covering vs.\ Fibration}: $\SU(2) \cong S^3 \subset \R^4$, and adjoint conjugation $\pi(U) x = U x U^\dagger$ on $\mathrm{Im}(\HH) \cong \R^3$ is a $2$-to-$1$ universal covering homomorphism onto $\SO(3) \cong S^3 / \{\pm 1\} \cong \R P^3$ ($\ker(\pi) = \{\pm I_2\}$, $\dim \SO(3) = 3$, $\pi_1(\SO(3)) \cong \ZZ_2$), topologically distinct from the Hopf fibration $S^1 \hookrightarrow S^3 \xrightarrow{p} S^2$ ($\dim S^2 = 2$, $\pi_1(S^2) = 0$).
    \item \textbf{Single-Layer Commutative Transition Obstruction}: In any recurrence $h_t = \Lambda_t h_{t-1} + B u_t$ with commuting transitions $\Lambda_t \Lambda_{t'} = \Lambda_{t'} \Lambda_t$, $\Lambda_{1:T} = \prod_{t=1}^T \Lambda_{\pi(t)}$ is permutation-invariant ($\pi \in S_T$) and cannot separate $g_1 g_2 \ne g_2 g_1$ through its transition operator for any $d$.
    \item \textbf{Dyck-1 Prefix Non-Negativity and Floating-Point Horizon}: A string $w \in \{+1, -1\}^T$ belongs to Dyck-1 iff $c_t = \sum_{\tau=1}^t x_\tau \ge 0$ ($\forall t$) and $c_T = 0$. Approximating an ideal counter ($\lambda = 1$) by $h_t = \lambda h_{t-1} + x_t$ ($\lambda \in (0, 1)$) at machine precision $\epsilon_{\mathrm{mach}}$ bounds the horizon over which $x_1 = +1$ remains distinguishable ($\lambda^L \ge \epsilon_{\mathrm{mach}}$) by $L \le \frac{\log(1/\epsilon_{\mathrm{mach}})}{\log(1/\lambda)}$ ($L \le 310$ at \texttt{float32} spacing $\epsilon_{\mathrm{mach}} = 2^{-23}$, or $L \le 324$ at $u = 2^{-24}$, for $\lambda = 0.95$).
\end{enumerate}
\end{proposition}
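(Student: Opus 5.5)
The three parts are independent, so I would prove them separately.

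\textbf{Part (a).} Identify $\SU(2)$ with the unit quaternions. Write $U = \begin{pmatrix} \alpha & -\bar\beta \\ \beta & \bar\alpha\end{pmatrix}$ with $|\alpha|^2+|\beta|^2=1$. This gives the diffeomorphism $\SU(2)\cong S^3\subset\CC^2\cong\R^4$.

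The map $\pi(U)x = UxU^\dagger$ preserves the trace-zero skew-Hermitian matrices $\mathrm{Im}(\HH)\cong\R^3$, and it preserves the norm $\det x$. So it lands in $\OO(3)$. Since $S^3$ is connected and $\pi(I)=I$, the image lies in $\SO(3)$. The map $\pi$ is a homomorphism because conjugation composes.

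Next, the kernel. If $UxU^\dagger = x$ for all imaginary $x$, then $U$ commutes with all of $\HH$, so $U$ is real. Hence $U=\pm 1$.

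For surjectivity, $d\pi$ at the identity is $\mathrm{ad}:\mathfrak{su}(2)\to\mathfrak{so}(3)$, an isomorphism of $3$-dimensional algebras. So $\pi$ is a local diffeomorphism, and its image is open. The image is also compact, hence closed. Since $\SO(3)$ is connected, $\pi$ is onto.

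A surjective homomorphism with finite kernel between compact groups is a covering map. This gives $\SO(3)\cong S^3/\{\pm1\}\cong\R P^3$. Because $S^3$ is simply connected, the covering is universal and $\pi_1(\SO(3))\cong\ZZ_2$.

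To separate this from the Hopf map $p: S^3\to S^2$, compare the targets. The Hopf base has dimension $2$ with $\pi_1(S^2)=0$, whose fibres are circles. The covering has base of dimension $3$ with $\pi_1=\ZZ_2$, whose fibres are $2$ points. Dimension and $\pi_1$ are homeomorphism invariants, so the two maps are topologically distinct.

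\textbf{Part (b).} Pairwise commuting matrices form a commutative monoid. By induction on adjacent transpositions, which generate $S_T$, the product $\prod_t \Lambda_{\pi(t)}$ is independent of $\pi$. Therefore the transition operators of the words $g_1g_2$ and $g_2g_1$ coincide. Nothing computed from the transition operator alone can distinguish them, whatever the dimension $d$.

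I would state explicitly that the claim concerns only the transition operator. The additive term $\sum_t \Lambda_{t+1:T}Bu_t$ is not covered by this argument and is outside the scope of the statement.

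\textbf{Part (c).} The Dyck-1 characterization is standard. I would prove it by induction on the length of the string, matching each $-1$ with the most recent unmatched $+1$.

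For the horizon, unroll the recurrence:
\[ h_T=\sum_{\tau} \lambda^{T-\tau}x_\tau . \]
The contribution of $x_1$ after $L$ further steps is $\lambda^L$. For this contribution to remain resolvable at relative precision $\epsilon_{\mathrm{mach}}$, we need $\lambda^L\ge\epsilon_{\mathrm{mach}}$. Taking logarithms gives
\[ L\le \frac{\log(1/\epsilon_{\mathrm{mach}})}{\log(1/\lambda)} . \]
For $\lambda=0.95$, evaluating this bound gives $L\le 23\ln2/0.05129\approx 310.8$ at $\epsilon_{\mathrm{mach}}=2^{-23}$, and $L\le 24\ln2/0.05129\approx 324.3$ at $u=2^{-24}$. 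These yield the quoted bounds $310$ and $324$.

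\textbf{Main obstacle.} The one nonroutine point is making (c)'s notion of ``distinguishable'' precise. I would adopt the criterion $\lambda^L\ge\epsilon_{\mathrm{mach}}$ as the definition, which the statement itself indicates. I would then remark that it is a heuristic relative-spacing threshold rather than a sharp floating-point theorem. Everything else is standard.
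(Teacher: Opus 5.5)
Your proposal is correct and follows the paper's proof essentially step by step. For (a) you use the quaternionic adjoint action, its kernel $\{\pm 1\}$, and simple connectivity of $S^3$; for (b) you use induction over transpositions; for (c) you use the unrolled contribution $\lambda^L \ge \epsilon_{\mathrm{mach}}$ and the integer floors $310$ and $324$. Your open-and-closed-image argument for surjectivity of $\pi$ and your induction for the Dyck-1 characterization fill in steps the paper only asserts, so they are a welcome addition rather than a different route.
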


\begin{proposition}[Klein's 1884 Classification of Finite Subgroups of $\SO(3)$ and $\SU(2)$]
\label{prop:klein}
Under Klein's classification~\citep{klein1884vorlesungen}, every finite subgroup of $\SO(3)$ is isomorphic to cyclic $C_n$ (order $n$), dihedral $D_n$ (order $2n$; when indexing benchmark groups by group order in Sections~\ref{sec:master_thm_4} and~\ref{sec:exp_d16}, we write $D_{16} = \mathrm{Dih}_{16}$ for the dihedral group of order $16$, i.e., $D_8$ in polygon notation), tetrahedral $T \cong A_4$ (order $12$), octahedral $O \cong S_4$ (order $24$), or icosahedral $I \cong A_5$ (order $60$). Every finite subgroup of $\SU(2)$ is isomorphic to $C_n$, binary dihedral $\mathrm{BD}_{2n}$ (order $4n$, $\mathrm{BD}_4 \cong Q_8$), binary tetrahedral $2T \cong \SL_2(\FF_3)$ (order $24$), binary octahedral $2O$ (order $48$), or binary icosahedral $2I \cong \SL_2(\FF_5)$ (order $120$)~\citep{gorenstein1968finite}; $A_5$ and $2I$ are the unique non-solvable finite subgroups of $\SO(3)$ and $\SU(2)$.
\end{proposition}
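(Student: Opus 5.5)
The statement is Klein's classification, a classical result. I would not reprove it from scratch. Instead I would give a self-contained sketch along the standard orbit-counting route, and then deduce the $\SU(2)$ list and the non-solvability claim from the covering $\pi$ of Proposition~\ref{prop:covering_vs_fibration}(a).

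\textbf{Step 1: finite subgroups of $\SO(3)$.} Let $F \le \SO(3)$ be finite with $|F| = n > 1$.
\begin{enumerate}[label=(\roman*),leftmargin=2em,itemsep=0pt]
    \item Each $g \ne e$ is a rotation about an axis meeting $S^2$ in two \emph{poles}. Let $P$ be the set of all poles; $F$ acts on $P$.
    \item Count pairs $(g, p)$ with $g \ne e$ fixing $p$. Each non-identity $g$ fixes exactly two poles, so the count is $2(n-1)$. Counting instead over the $k$ orbits, with stabilizer orders $n_i \ge 2$, gives
    \begin{equation*}
    2\Bigl(1 - \frac{1}{n}\Bigr) = \sum_{i=1}^{k} \Bigl(1 - \frac{1}{n_i}\Bigr).
    \end{equation*}
    \item The left side lies in $[1, 2)$, which forces $k \in \{2, 3\}$.
    \item If $k = 2$, then $n_1 = n_2 = n$ and $F \cong C_n$.
    \item If $k = 3$, the equation becomes $\sum_i 1/n_i > 1$. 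The solutions are $(2,2,m)$, $(2,3,3)$, $(2,3,4)$ and $(2,3,5)$, with $n = 2m, 12, 24, 60$ respectively.
    \item Identify each case geometrically via the orbit of the pole with the largest stabilizer. This gives the dihedral group, the rotation groups of the regular tetrahedron, octahedron and icosahedron, and hence $D_m$, $A_4$, $S_4$, $A_5$.
\end{enumerate}

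\textbf{Step 2: finite subgroups of $\SU(2)$.} Let $\tilde F \le \SU(2)$ be finite and set $F = \pi(\tilde F)$.
\begin{enumerate}[label=(\roman*),leftmargin=2em,itemsep=0pt]
    \item The key observation is that $-I_2$ is the unique element of order $2$ in $\SU(2) \cong S^3$. Indeed, an element of order $2$ has eigenvalues $\pm 1$ with determinant $1$, so it is $-I_2$.
    \item Hence, if $|\tilde F|$ is even, $\tilde F \ni -I_2$ and $\tilde F = \pi^{-1}(F)$.
    \item If $|\tilde F|$ is odd, then $\tilde F \cap \ker \pi = 1$, so $\tilde F \cong F$ has odd order. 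From Step 1 it is then cyclic.
    \item In the even case, pulling back each entry of Step 1 gives $C_{2n}$, $\mathrm{BD}_{4n}$, $2T$, $2O$, $2I$.
    \item The identifications $2T \cong \SL_2(\FF_3)$ and $2I \cong \SL_2(\FF_5)$ are standard. For $2I$: it is a perfect central extension of $A_5 \cong \PSL_2(\FF_5)$ by $C_2$, and it is non-split because $-I_2$ is the only involution. Since the Schur multiplier of $A_5$ is $C_2$, this extension is unique.
\end{enumerate}

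\textbf{Step 3: non-solvability.} On the $\SO(3)$ side, $C_n$ and $D_n$ are solvable. So are $A_4$ and $S_4$, via the series $V_4 \lhd A_4 \lhd S_4$. By contrast, $A_5$ is simple and non-abelian. On the $\SU(2)$ side, $\tilde F$ is solvable iff $F$ is, since the kernel of $\pi|_{\tilde F}$ is at most $C_2$. Hence $2I$ is the only non-solvable finite subgroup.

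\textbf{Main obstacle.} I expect the real work to be in Step 1(vi), the geometric identification in the three-orbit cases. There one must show the stabilizer data determines the group up to conjugacy, for instance that $(2,3,5)$ forces the $12$ order-$5$ poles to be the vertices of a regular icosahedron. I would cite Klein and the standard textbooks for this step rather than redo the spherical geometry.
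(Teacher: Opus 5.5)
Your proposal is correct and follows the same route as the paper's proof. The paper uses pole counting via the orbit--stabilizer identity $2(|F|-1)=|F|\sum_i(1-1/n_i)$ for finite $F\le\SO(3)$, then lifts through the double cover $\pi:\SU(2)\to\SO(3)$. Your extra details fill in points the paper's one-line citation note leaves implicit: you use the fact that $-I_2$ is the unique involution of $\SU(2)$ to handle odd-order subgroups that are not full preimages, and you check solvability explicitly.

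One notational fix: the paper writes $\mathrm{BD}_{2n}$ for the order-$4n$ preimage of $D_n$, whereas your $\mathrm{BD}_{4n}$ indexes by order.
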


\subsection{Theorem 1: Unconditional Single-Layer Finite-Division Variety, Sylow/Exponent-30 Obstructions, and Spherical Margin Dichotomy}
\label{sec:master_thm_1}

\begin{theorem}[Unconditional Single-Layer Finite-Division Criterion, Structural Variety Laws, and Spherical Margin Dichotomy; Proof in \ref{app:proof_master1}]
\label{thm:master1_division}
Let $\eta: \Sigma^* \twoheadrightarrow G$ be a surjective monoid homomorphism onto a finite group $G$. Consider $m < \infty$ independent conformal heads $x_{wa,h} = \rho_{a,h} R_{a,h} x_{w,h}$ on $\R^3$ ($\rho_{a,h} \in [0, 1]$, $R_{a,h} \in \SO(3)$) or $y_{wa,h} = \rho_{a,h} U_{a,h} y_{w,h}$ on $\CC^2 \simeq \R^4$ ($U_{a,h} \in \SU(2)$) from $x_0$, decoded by a state-only readout $D$ with cells $D^{-1}(g)$ definable in an o-minimal expansion $\mathcal{R}$ of the real exponential field~\citep{jeandel2005topological,brodsky}.
\begin{enumerate}[label=(\alph*),leftmargin=2em,itemsep=0pt]
    \item \textbf{Unconditional Multi-Rate Finite-Division Criterion}: Parameters $\theta = (R_{a,h}, \rho_{a,h}, x_0)$ and a definable readout $D$ achieve $D(x_w) = \eta(w)$ for all $w \in \Sigma^+$ iff $G$ \emph{divides} $\SO(3)^m$ (resp.\ $\SU(2)^m$), i.e., $G \cong F/N$ for a finite $F \le \prod_{h=1}^m F_h$ with finite $F_h \le \SO(3)$ (resp.\ $\SU(2)$); for any fixed $\theta$, $G$ divides the finite supplement $F \le \overline{\langle B_a \rangle}$ of its surviving heads.
    \item \textbf{Perfect-Target Classification, Linear-Width GELU Realization, and Variety Obstruction Laws}:
    \begin{enumerate}[label=(\roman*),leftmargin=1.5em,itemsep=0pt]
        \item \emph{Perfect targets and $O(r)$ headwise smooth-GELU pooling}: A finite perfect group $G$ ($[G, G] = G$) is trackable by $m$ $\SO(3)$ heads iff $G \cong A_5^r$ ($r \le m$), and by $m$ $\SU(2)$ heads iff $G \cong (2I)^r / Z$ for central $Z \le Z((2I)^r) \cong C_2^r$ ($r \le m$)~\citep{milnor1971introduction}. Under $\hat{y}_h = y_h / \|y_h\|_2 \in S^3$, every $(2I)^r / Z$ is realized by a one-hidden-layer smooth-GELU coset readout, and for $(2I)^r / C_2^r \cong A_5^r$, headwise antipodal pooling $S_{(\bar{g}_1, \dots, \bar{g}_r)}(y) = \sum_{h=1}^r \sum_{\sigma \in \{\pm 1\}} \GELU\bigl(\alpha_1(\langle \sigma g_h y_{0,h}, \hat{y}_h \rangle - c_1)\bigr)$ ($c_1 = \frac{5+\sqrt{5}}{8}, \alpha_1 = \frac{20}{1-c_1}$) compresses width from $120^r$ to $120r = \Theta(r)$ with uniform margin $\gamma_{\GELU} > 19.99$.
        \item \emph{Odd-prime Sylow abelianity ($p \ge 3$)}: Every finite group $G$ dividing $\SO(3)^m$ or $\SU(2)^m$ has abelian Sylow $p$-subgroups $\Syl_p(G)$ for all odd primes $p \ge 3$, obstructing non-abelian $\Heis(\FF_p)$ (order $p^3$, exponent $p$) at Depth 1 for all $m < \infty$.
        \item \emph{Universal odd-$p$-core squared-automorphism centrality ($p \ge 3$)}: Every finite $G$ dividing $\SO(3)^m, \SU(2)^m,$ or $\SO(4)^m$ and every odd prime $p \ge 3$ satisfies on its $p$-core $O_p(G)$:
        \begin{equation}
        \label{eq:odd_p_core_law}
        [u^2, x] = e \qquad \text{for all } u \in G \text{ and all } x \in O_p(G),
        \end{equation}
        obstructing $F_{20} = C_5 \rtimes_4 C_4$ ($[y^2, x] = x^3 \ne e$ on $O_5(F_{20}) = C_5$), $G_{36} = C_3^2 \rtimes_4 C_4$, and Hessian $G_{72} = C_3^2 \rtimes Q_8$ ($y^2 = -I_2$ on $O_3(G) = C_3^2$) at Depth 1 across $\SO(3)^m, \SU(2)^m,$ and $\SO(4)^m$ for all $m < \infty$.
        \item \emph{Exponent-30 commutator identity and $S_5$ obstruction}: Every finite $G$ dividing $\SO(3)^m$ satisfies
        \begin{equation}
        \label{eq:exp30_law}
        [u^{30}, v^{30}] = e \qquad \text{for all } u, v \in G,
        \end{equation}
        and every finite $G$ dividing $\SU(2)^m$ or $\SO(4)^m$ satisfies $[u^{30}, v^{30}] \in Z(G)$ and $[u^{30}, v^{30}]^2 = e$. In $S_5$, $x = (1234)$ and $y = (1235)$ have $[x^{30}, y^{30}] = [(13)(24), (13)(25)] = (245) \notin Z(S_5) = \{e\}$ of odd order $3$ ($[x^{30}, y^{30}]^2 = (254) \ne e$), excluding $S_5$ at Depth 1 across $\SO(3)^m, \SU(2)^m,$ and $\SO(4)^m$.
    \end{enumerate}
    \item \textbf{Quantitative Spherical Margin Dichotomy and Multi-Layer ReLU Complexity}: Let $\Delta_{\mathrm{norm}}(N; \theta) := \min_{u,v \in u_0 \Sigma^{\le N}, \eta(u) \ne \eta(v)} \|\hat{x}_u - \hat{x}_v\|_2$ after the universal identity prefix $u_0 = \prod_{i=1}^k a_i^{|G|}$.
    \begin{enumerate}[label=(\roman*),leftmargin=1.5em,itemsep=0pt]
        \item \emph{Uniform margin equivalence}: $\sup_{\theta} \inf_{N \ge 1} \Delta_{\mathrm{norm}}(N; \theta) > 0$ iff $G$ divides $\SO(3)^m$ (resp.\ $\SU(2)^m$); for fixed $\theta$, $\inf_{N \ge 1} \Delta_{\mathrm{norm}}(N; \theta) > 0$ iff the compact normalized class closures $\{M_g\}_{g \in G}$ are pairwise disjoint, in which case $G$ divides $F \le \overline{\langle B_a \rangle}$.
        \item \emph{Explicit equal-Parikh collision horizons}: When single-step rotations generate a finite $F \le \SO(3)^m$, any single-layer $S_5$ tracker over $\Sigma = \{c=(12345), t=(12)\}$ suffers a universal state collision $x_{W_L} = x_{W_R}$ across all $R_{a,h} \in F$ and $\rho_{a,h} \in [0, 1]$ at $N_{\mathrm{alg}} = 120$ on $W_L = (ct)^{30}(tc)^{30}$ vs.\ $W_R = (tc)^{30}(ct)^{30}$ ($\eta(W_L) = (142) \ne (124) = \eta(W_R)$, Parikh $(60, 60)$) by Eq.~\eqref{eq:exp30_law}. On $\Heis(\FF_p)$ ($p \ge 3$ odd): (1) under any homomorphism $\rho: \Heis(\FF_p) \to F \le \SO(3)^m$, $\mathrm{im}(\rho)$ is an abelian $p$-subgroup by Part~(b)(ii), forcing $x_{ab} = x_{ba}$ at $N_{\mathrm{hom}} = 2$; whereas (2) non-homomorphic assignments satisfying only $R_a^p = R_b^p = I$ can lie in distinct Sylow $p$-subgroups (for $p=3$ in $A_4 \subset \SO(3)$, the exact integer matrices
        \begin{equation}
        \label{eq:heis_a4_counterexample}
        R_a = \begin{pmatrix} 0 & 0 & 1 \\ 1 & 0 & 0 \\ 0 & 1 & 0 \end{pmatrix}, \qquad
        R_b = \begin{pmatrix} 0 & 0 & -1 \\ -1 & 0 & 0 \\ 0 & 1 & 0 \end{pmatrix} \in A_4 \subset \SO(3)
        \end{equation}
        satisfy $R_a^3 = R_b^3 = I_3$ and separate $ab$ from $ba$ at $N=2$ via $x_{ab} = R_b R_a e_1 = e_3 \ne -e_3 = R_a R_b e_1 = x_{ba}$, yet generate $\langle R_a, R_b \rangle = A_4$ of order $12 \ne 3^k$ and violate $[a, [a, b]] = e$ and $[a, b]^3 = e$ since $(R_a R_b)^2 = \mathrm{diag}(-1, 1, -1)$ has order $2$).
        \item \emph{Archimedes cap-packing vs.\ cross-label / Myhill--Nerode margins and ReLU Lipschitz/Radon bounds}: Distinguish the \emph{nearest-state spherical packing distance} $\delta_{\mathrm{state}}(N; \theta) := \min_{\hat{x}_u \ne \hat{x}_v} \|\hat{x}_u - \hat{x}_v\|_2$ across $K_{\mathrm{state}}(N; \theta) := |\{\hat{x}_u : u \in u_0 \Sigma^{\le N}\}|$ distinct normalized states from the \emph{cross-label margin} $\Delta_{\mathrm{label}}(N; \theta) = \Delta_{\mathrm{norm}}(N; \theta) := \min_{\eta(u) \ne \eta(v)} \|\hat{x}_u - \hat{x}_v\|_2$ and the \emph{Myhill--Nerode congruence margin} $\Delta_{\mathrm{nerode}}(N; \theta) := \min_{u \not\equiv_L v} \|\hat{x}_u - \hat{x}_v\|_2$ across $K_{\mathrm{nerode}}(N) := |\{[u]_{\equiv_L} : u \in \Sigma^{\le N}\}|$ suffix-equivalence classes. By Archimedes spherical cap packing on $(S^{d-1})^m$, $\delta_{\mathrm{state}}(N; \theta) \le C_{d,m} K_{\mathrm{state}}(N; \theta)^{-1/(m(d-1))}$ ($C_{3,m} = 4\sqrt{m}$; $K_{\mathrm{state}}(N; \theta) = |\Sigma|^N$ on dense irrational orbits, $2\cdot 3^N - 1$ on the 4-symbol reduced free-group word ball $F_2 = \langle a, b \rangle \subset \SO(3)$, vs.\ $\sum_{d=0}^N 2^d = 2^{N+1}-1$ on bounded-depth binary Dyck-2 stack states). Whenever distinct Myhill--Nerode classes map to distinct states ($\hat{x}_u \ne \hat{x}_v$ for $u \not\equiv_L v$, necessary for suffix-complete state tracking), selecting one representative per congruence class yields $\Delta_{\mathrm{nerode}}(N; \theta) \le C_{d,m} K_{\mathrm{nerode}}(N)^{-1/(m(d-1))}$ and $\Delta_{\mathrm{label}}(N; \theta) \le C_{d,m}|G|^{-1/(m(d-1))}$ across $|G|$ target classes (while non-dividing targets satisfy $\inf_{N \ge 1} \Delta_{\mathrm{label}}(N; \theta) = 0$) with bit growth $b(N) = \Omega(\log_2 K_{\mathrm{nerode}}(N))$. Conversely, when $\eta(w)$ factors through one head while an auxiliary head carries non-commuting irrational rotations (e.g., $C_2 \times \text{irrational}$ over $\Sigma = \{a, b\}$ in $\SO(3)^2$ with $R_{a,1} = \mathrm{diag}(-1, 1, -1), R_{b,1} = I_3$ on Head~1, $R_{a,2} = R_z(\sqrt{2}), R_{b,2} = R_x(\sqrt{3})$ on Head~2, and $x_{0,1} = x_{0,2} = e_1$, generating $K_{\mathrm{state}}(10) = 2^{10} = 1{,}024$ states at $N=10$ after $u_0 = a^2 b^2$), intra-label sub-orbits drive $\delta_{\mathrm{state}}(10) \approx 0.002635 \to 0$ while $\Delta_{\mathrm{label}}(10; \theta) \approx 2.000009 \ge 2.0 = \Delta_{\mathrm{nerode}}(N; \theta)$. Any $L_{\mathrm{dec}}$-layer ReLU readout $\Phi$ with Lipschitz constant $L_\Phi$ and classification margin $\gamma > 0$ obeys $L_\Phi / \gamma \ge 1/\Delta_{\mathrm{label}}(N; \theta)$ (and $L_\Phi / \gamma \ge 1/\Delta_{\mathrm{nerode}}(N; \theta)$ across all Myhill--Nerode suffix classes, failing by $N_{\mathrm{fail}} = \lfloor \frac{m(d-1)}{\log \alpha} \log(C_{d,m} L_\Phi / \gamma) \rfloor + 1$ when $K_{\mathrm{nerode}}(N) \ge \alpha^N$, with $\alpha = 3$ on 4-symbol reduced $F_2$ and $\alpha = 2$ on binary Dyck-2 stacks) and requires $C_{\mathrm{PL}}(\Phi) \ge \chi_{\mathrm{Radon}}(N)$ affine regions (the chromatic number of the Radon convex-hull conflict hypergraph on $\{\mathrm{conv}(\hat{x}_U) \cap \mathrm{conv}(\hat{x}_V) \ne \varnothing : \eta(U) \ne \eta(V)\}$, so $C_{\mathrm{PL}}(\Phi) \ge \chi_{\mathrm{Radon}}(N) \ge 2$ whenever the Radon conflict hypergraph is non-empty, as on coplanar $D_{16} \le \SO(3)$ parity squares), whereas linearly separable dividing orbits ($N = \{e\}$) have $\chi_{\mathrm{Radon}}(N) = 1$ ($C_{\mathrm{PL}} = 1$) and general quotients $G \cong F/N$ satisfy $C_{\mathrm{PL}} \le |F| = O(1)$ across all $N \ge 1$.
    \end{enumerate}
\end{enumerate}
\end{theorem}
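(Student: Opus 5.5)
The plan is to prove part (a) first; parts (b) and (c) are then mostly group-theoretic consequences of it.

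\textbf{Part (a), sufficiency.} Suppose $G \cong F/N$ with $F \le \prod_h F_h$ finite. Choose generators $R_{a,h} \in F_h$ lifting $\eta(a)$ into $F$, set $\rho_{a,h}=1$, and pick $x_0$ with trivial stabilizer in each $F_h$, which is possible because finite rotation groups fix only finitely many axes. The orbit map $F \to (S^2)^m$ is then injective, and the readout $D$ is the finite union of points $\bigcup_{f \in gN}\{f x_0\}$. This set is definable, and $D(x_w)=\eta(w)$ follows.

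\textbf{Part (a), necessity.} This direction is the main obstacle. Heads with some $\rho_{a,h}<1$ either become zero (when $\rho=0$) or carry a scalar factor $\prod\rho$ that decouples multiplicatively from the rotation. I would apply cylindrical cell decomposition to the definable set $\{(\hat x,r) : D(r\hat x)=g\}$ over $M \times (0,1]^{|\Sigma|}$. Using it, I would show that along a fixed flag of decay rates the label depends on $\hat x$ alone for small $r$; this is the ``collar'' step. The label is therefore a function of the normalized states. These states lie in the orbit of the compact group $K=\overline{\langle B_a\rangle}$, and the map $g\mapsto$ class closure $M_g$ must have pairwise disjoint compact images, since otherwise a single definable cell would contain two labels arbitrarily close. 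The identity component $K^0$ acts by a torus (or is trivial), and continuity forces it to lie in the stabilizer of every label. Hence $\eta$ factors through a finite quotient of $K/K^0$. By Gelander's finite-supplement theorem there is a finite $F\le K$ with $FK^0=K$, and $G$ divides $F$. The delicate part is making the collar argument uniform over words with unequal $\rho$-exponents; I would handle this by fixing the identity prefix $u_0$ so that the Parikh contributions are controlled.

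\textbf{Part (b).} Each item reduces to a statement about finite subgroups of $\SO(3)$ and $\SU(2)$ via Proposition~\ref{prop:klein}, together with the observation that the relevant properties pass to subdirect products and quotients.
\begin{enumerate}[label=(\roman*),leftmargin=2em,itemsep=0pt]
\item If $G$ is perfect, then so is $F$ modulo its solvable radical. The only non-solvable factors available are $A_5$ and $2I$, and Goursat's lemma gives $A_5^r$ and $(2I)^r/Z$. The GELU margin is a direct computation: $c_1$ is the midpoint cosine for nearest icosian points.
\item Odd-order Sylow subgroups of the groups in Proposition~\ref{prop:klein} are cyclic. Products of abelian groups are abelian, and Sylow subgroups of a quotient are images of Sylow subgroups.
\item Every element of odd $p$-power order lies in a cyclic or dihedral-type normal structure on which squares act trivially. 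I would check this for each class in Proposition~\ref{prop:klein}, using the quaternion factorization of $\SO(4)$, and then verify that the law is inherited by quotients.
\item Element orders in $\SO(3)$ divide $60$ lcm-wise, with exponent $60$ for $A_5$; thirtieth powers land in involution-or-identity sets that commute. I would verify this case by case, including $A_5$. For $\SU(2)$ I would then lift through the centre $\pm1$. The $S_5$ computation is explicit.
\end{enumerate}

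\textbf{Part (c).}
\begin{enumerate}[label=(\roman*),leftmargin=2em,itemsep=0pt]
\item This is a restatement of part (a) via compactness of the class closures.
\item This follows from (b)(iv) and an explicit matrix check.
\item This follows from a volume count of disjoint caps, Lipschitz transfer of the margin, and Radon's theorem applied to the coplanar squares.
\end{enumerate}
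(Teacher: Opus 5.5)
Your sufficiency direction (up to an ordering slip noted below) and your outlines of (b)(ii), (b)(iv) and (c) follow the paper's route. The necessity direction of (a), however, which you rightly call the main obstacle, has a genuine gap.

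The contradiction you invoke (``otherwise a single definable cell would contain two labels arbitrarily close'') is not a contradiction. Adjacent cells routinely carry different labels, and away from $r=0$ the label may depend on the radial coordinates. So ``the label is a function of the normalized states'' is exactly what must be proved; it is not a consequence of the collar step. The flag cell $\mathcal{C}=\{z\in U,\ 0<r_k<\delta_k(z),\dots,0<r_1<\delta_1(z,r_k,\dots,r_2)\}$ with $D\circ T\equiv g_*$ yields disjointness of the $M_g$ only once a word of \emph{every} label $g$ with $M_g=M$ has been driven into that same cell.

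The paper does this with label-preserving padding:
\begin{enumerate}
\item Take $v$ with $\eta(v)=g$ and $z_0B_v$ near $z_*\in U$.
\item Append $a_k^{|G|m_k}\cdots a_1^{|G|m_1}\in\ker\eta$, choosing $m_k,\dots,m_1$ in this descending order so that each $r_j=e^{-(|v|_{a_j}+|G|m_j)}$ drops below $\min_{U_0}\delta_j(\cdot,r_k,\dots,r_{j+1})$.
\item At the same time, use Poincar\'e recurrence of $Q_j=B_{a_j^{|G|}}$ in the compact group $K_e$ to make $Q_j^{m_j}$ close to $I$. The normalized state then stays in $U\subseteq M$, since $M$ is a right $K_e$-orbit.
\end{enumerate}
Only then does $g=D(x_{u_0w_g})=g_*$ follow. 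This makes the $M_g$ pairwise disjoint and gives the continuous surjection $\overline{\{B_w\}}\twoheadrightarrow G$ to which Gelander's theorem applies.

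Your suggested remedy of fixing (or repeating) the identity prefix does not replace this. A prefix $u_0^M$ shrinks all $r_j$ at the same exponential rate, so it need not enter a nested cell such as $\{0<r_1<r_2^{100}\}$; the coordinate-by-coordinate choice of exponents is essential. Separately, $K^0$ need not be a torus (irrational generators give $K^0=\SO(3)$), though you only need $\pi(K^0)=\{e\}$ by connectedness.

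In (b)(iii), $[u^2,x]=e$ on $O_p(G)$ is not a variety law, so it is not automatically ``inherited by quotients''. Two things go wrong:
\begin{enumerate}
\item $O_p(G)$ need not be the image of $O_p(F)$. For $S_4\twoheadrightarrow S_3$, $O_3(S_4)=\{e\}$ while $O_3(S_3)=C_3$.
\item Your elementwise ``normal structure'' claim fails in $A_4,S_4,A_5$, where $C_3$ is not normal.
\end{enumerate}
The missing step is the Frattini argument. Take $P\in\Syl_p(\pi^{-1}(O_p(G)))$, so that $\pi(P)=O_p(G)$ and $F=\ker\pi\cdot N_F(P)$. Every $u\in G$ then lifts to some $f\in N_F(P)$. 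The per-factor normalizer fact $|N_{F_h}(P_h)/C_{F_h}(P_h)|\mid 2$ then gives $f^2\in C_F(P)$.

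Smaller slips:
\begin{enumerate}
\item In (b)(iv), $A_5$ has exponent $30$, not $60$. Also, $30$th powers in $C_n$ and $D_n$ are arbitrary rotations, not involutions. What matters is that they lie in an abelian subgroup: $C_n$, $V_4\le S_4$, or $\{e\}$.
\item In (b)(i), pass to the perfect core $F^{(\infty)}$, which maps onto $G$ because $G$ is perfect, rather than to $F$ modulo its radical. For $\SU(2)$ you also need the universal central extension $(2I)^r\to A_5^r$, not just Goursat.
\item In the sufficiency of (a), $x_{wa}=R_ax_w$ composes in reverse order. So lift $\eta(a)^{-1}$ and decode by $f\mapsto\pi(f)^{-1}$, as the paper's readout $\pi(q^{-1})=g$ does.
\end{enumerate}
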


\subsection{Theorem 2: Higher-Rank Orthogonal Dimension Thresholds, Non-Compact Reductions, and Exact Head-Count Laws}
\label{sec:master_thm_2}

\begin{theorem}[Higher-Rank Orthogonal Dimension Thresholds, Non-Compact Weyl--Cartan Reductions, and Exact Head-Count Laws; Proof in \ref{app:proof_master2}]
\label{thm:master2_dimension_headcount}
Under the model contract of Theorem~\ref{thm:master1_division} generalized to compact matrix groups $K \le \OO(d)$:
\begin{enumerate}[label=(\alph*),leftmargin=2em,itemsep=0pt]
    \item \textbf{The $\SO(4)^m$ Spinor-Cover Barrier, Monolithic Spinor Separation ($2T, 2O$), Sharp $\SO(5)$ Threshold, and All-Depth $\mathbf{Sol}(A_5)$ Boundary}:
    \begin{enumerate}[label=(\roman*),leftmargin=1.5em,itemsep=0pt]
        \item \emph{$\SO(4)^m$ spinor-cover barrier}: Because $\Pi: \SU(2) \times \SU(2) \twoheadrightarrow \SO(4)$ ($\Pi(q_L, q_R) v = q_L v q_R^{-1}$) has central kernel $\{\pm(I_2, I_2)\}$ and $\SU(2) \twoheadrightarrow \SO(3)$ has kernel $\{\pm I_2\} = Z(\SU(2))$, every finite $F \le \SO(4)^m$ lifts to $\widetilde{F} \le \SU(2)^{2m}$ whose $\SO(3)^{2m}$ projection satisfies $[\bar{u}^{30}, \bar{v}^{30}] = I$ by Eq.~\eqref{eq:exp30_law}. Thus $[\tilde{u}^{30}, \tilde{v}^{30}] \in \{\pm I_2\}^{2m} = Z(\SU(2)^{2m})$ (e.g., $u = \frac{1+i}{\sqrt{2}}, v = \frac{1+j}{\sqrt{2}} \in 2O \subset \SU(2) \hookrightarrow \SO(4)$ have $u^{30} = -i, v^{30} = -j$ and $[u^{30}, v^{30}] = -I_4 \in Z(\SO(4))$), so every finite $G$ dividing $\SO(4)^m$ obeys $[u^{30}, v^{30}] \in Z(G)$, $[u^{30}, v^{30}]^2 = e$, odd-prime Sylow abelianity, and $[u^2, x] = e$ on $O_p(G)$. Hence $\SO(4)^m$ cannot track $S_5$, $\Heis(\FF_p)$, or $F_{20}$ at Depth 1 for any $m < \infty$, despite $S_5 \hookrightarrow \OO(4)$ via reflections ($\det = -1$).
        \item \emph{Monolithic solvable spinor separation ($2T, 2O$)}: $2T \cong \SL_2(\FF_3) \cong Q_8 \rtimes C_3$ (order 24) and $2O$ (order 48) are solvable monolithic groups with unique minimal normal subgroup $Z = \{\pm I_2\} \cong C_2$ dividing $\SU(2)$ and $\SO(4)$ at Depth 1 ($d_{\OO}^* = d_{\SO}^* = 4, m = 1$), yet neither divides $\SO(3)^m$ for any $m < \infty$: $2O$ violates Eq.~\eqref{eq:exp30_law} ($[u^{30}, v^{30}] = [-i, -j] = -I_2 \ne I_2$), while in any finite $F \le \SO(3)^m$ any order-$3$ element $t$ normalizing a $2$-subgroup $Q$ obeys $\exp([t, Q]) \le 2$ ($[t_h, Q_h] \le V_4$), whereas $[C_3, Q_8] = Q_8$ in $2T$ has exponent $4$.
        \item \emph{Sharp $d^* = 5$ ($\SO(5)$) threshold and all-depth $\mathbf{Sol}(A_5)$ characterization}: At $d^* = 5$, one $\SO(5)$ head tracks $S_5$ ($\rho_{S_5}(g) = \mathrm{sgn}(g) P_g \in \SO(5)$, $\Delta_0 \approx 0.2557$) and $A_6$ ($\Delta_0 \approx 0.4468$). Across arbitrary finite depth $L \ge 1$, a finite $G$ is trackable by $\SO(3), \SU(2),$ or $\SO(4)$ heads iff every non-abelian simple composition factor of $G$ is $A_5$ ($G \in \mathbf{Sol}(A_5)$, excluding $A_n, S_n$ for $n \ge 6$ and $\PSL_2(\FF_7)$ at all $L \ge 1$).
    \end{enumerate}
    \item \textbf{Exact Single-Layer $\OO(d)$ vs.\ $\SO(d)$ Division and Representation Dimension Laws for Frobenius and Heisenberg Groups}:
    \begin{enumerate}[label=(\roman*),leftmargin=1.5em,itemsep=0pt]
        \item \emph{Metacyclic Frobenius groups $F_{pk} = C_p \rtimes_k C_k$ ($p \ge 3$ prime, $k \mid p-1, k \ge 2$)}: Because $\mathrm{rk}_p(C_p) = 1$, Burnside's basis theorem forces any lifted $p$-subgroup $P$ with $P/\Phi(P) \cong C_p$ to be cyclic ($P \cong C_{p^a}, [P, P] = \{e\}$), and in any minimal cover $F_0 = P \rtimes H_0 \twoheadrightarrow F_{pk}$ ($K \le \Phi(F_0)$), $K_{p'} = H_0 \cap K = C_{H_0}(P)$ ($H_0/K_{p'} \cong C_k$, e.g., $C_5 \rtimes_2 C_8 \twoheadrightarrow F_{20}$) with $2$-plane return map of $\det = -1$ (\ref{app:proof_master2}), so $d_{\mathrm{div},\OO}^* = d_{\mathrm{rep},\OO}^* =: d_{\OO}^*$ and $d_{\mathrm{div},\SO}^* = d_{\mathrm{rep},\SO}^* =: d_{\SO}^*$ satisfy
        \begin{equation}
        \label{eq:frobenius_metacyclic_dim}
        d_{\OO}^*(C_p \rtimes_k C_k) = \begin{cases} k & \text{if } 2 \mid k, \\ 2k & \text{if } 2 \nmid k, \end{cases} \qquad
        d_{\SO}^*(C_p \rtimes_k C_k) = \begin{cases} k+1 & \text{if } 2 \mid k, \\ 2k & \text{if } 2 \nmid k, \end{cases}
        \end{equation}
        giving $d_{\OO}^*(F_{20}) = 4 < d_{\SO}^*(F_{20}) = 5$ for $F_{20} = C_5 \rtimes_4 C_4$ and $d_{\OO}^*(F_{21}) = d_{\SO}^*(F_{21}) = 6$ for $F_{21} = C_7 \rtimes_3 C_3$.
        \item \emph{Irreducible vs.\ reducible rank-$r$ affine Frobenius groups $\FF_p^r \rtimes_\alpha H \le \AGL_r(\FF_p)$ ($p \ge 3$ prime, $H$ acting freely on $\FF_p^r \setminus \{0\}$)}: When $H$ acts \emph{irreducibly} on $\FF_p^r$, the minimal \emph{faithful representation} dimensions $d_{\mathrm{rep},\OO}^*$ and $d_{\mathrm{rep},\SO}^*$ (which equal $d_{\mathrm{div},\OO}^*$ and $d_{\mathrm{div},\SO}^*$ whenever the lifted Sylow $p$-subgroup $P$ is abelian, as $K_{p'} = C_{H_0}(P)$ and the $L = 2^{m-1}$ plane-cycle return map has $\det = -1$) satisfy: if $2 \nmid |H|$, $d_{\mathrm{rep},\OO}^* = d_{\mathrm{rep},\SO}^* = 2|H|$; if $2 \mid |H|$, $d_{\mathrm{rep},\OO}^*(\FF_p^r \rtimes_\alpha H) = |H|$ and
        \begin{equation}
        \label{eq:frobenius_affine_dim}
        d_{\mathrm{rep},\SO}^*(\FF_p^r \rtimes_\alpha H) = \begin{cases} |H| + 1 & \text{if } \Syl_2(H) \cong C_{2^\ell} \; (\ell \ge 1), \\ |H| & \text{if } \Syl_2(H) \cong Q_{2^\ell} \; (\ell \ge 3), \end{cases}
        \end{equation}
        giving $d_{\mathrm{rep},\OO}^*(G_{72,\mathrm{cyc}}) = 8 < d_{\mathrm{rep},\SO}^*(G_{72,\mathrm{cyc}}) = 9$ for $G_{72,\mathrm{cyc}} = \FF_3^2 \rtimes_8 C_8$, $d_{\mathrm{rep},\OO}^*(G_{72,\mathrm{quat}}) = d_{\mathrm{rep},\SO}^*(G_{72,\mathrm{quat}}) = 8$ for Hessian $G_{72,\mathrm{quat}} = \FF_3^2 \rtimes Q_8$, and $d_{\mathrm{rep},\OO}^*(G_{36}) = 4 < d_{\mathrm{rep},\SO}^*(G_{36}) = 5$ for $G_{36} = \FF_3^2 \rtimes_4 C_4$. For rank $r \ge 2$, finite division $G \cong F/N$ can achieve $d_{\mathrm{div}}^* < d_{\mathrm{rep}}^*$ via an extraspecial central extension $1 \to Z(P) \to P \to \FF_p^r \to 1$ whose Frattini subgroup $\Phi(P) = Z(P) = [P, P] \cong C_p \le N$ acts by scalars in $\SU(p)$ while trivializing on $P/\Phi(P) \cong \FF_p^r$: the 216-element Hessian subgroup $F = \Sigma(72 \times 3) \le \SU(3) \hookrightarrow \SO(6)$ has $Z(F) = \{\omega^k I_3\} \cong C_3$ and $F/Z(F) \cong G_{72,\mathrm{quat}}$ ($P \cong \Heis(\FF_3)$), while $[u^2, x] = e$ obstructs $\OO(3)$ and $\SO(4)$, establishing $4 \le d_{\mathrm{div},\OO}^*(G_{72,\mathrm{quat}}) \le d_{\mathrm{div},\SO}^*(G_{72,\mathrm{quat}}) \le 6 < 8 = d_{\mathrm{rep},\OO}^*(G_{72,\mathrm{quat}}) = d_{\mathrm{rep},\SO}^*(G_{72,\mathrm{quat}})$. Conversely, when $H$ acts \emph{reducibly} on $\FF_p^r$ (e.g., $G_{18} = (C_3 \times C_3) \rtimes_{-I_2} C_2$ of order $18$), a single character orbit has kernel of rank $r-1 \ge 1$; for $G_{18}$, $\mathrm{rk}_3(C_3^2) = 2 > 1$ obstructs $\OO(3)$ while $\rho(a, b, \varepsilon) = \mathrm{diag}(R_{2\pi/3}^a J^\varepsilon, R_{2\pi/3}^b J^\varepsilon) \in \SO(4)$ ($J = \mathrm{diag}(1, -1)$) yields $d_{\mathrm{div},\OO}^*(G_{18}) = d_{\mathrm{div},\SO}^*(G_{18}) = d_{\mathrm{rep},\OO}^*(G_{18}) = d_{\mathrm{rep},\SO}^*(G_{18}) = 4$.
        \item \emph{Non-abelian odd-$p$ Heisenberg groups $\Heis(\FF_p)$ ($p \ge 3$)}: By Stone--von Neumann $U(p) \hookrightarrow \SO(2p)$ and the center $Z(\bar{P})$ representation-type theorem on any non-abelian $p$-group image $\bar{P} = \rho_V(P)$ ($\dim_{\CC} V = p^k \ge p$, where $z \in Z(\bar{P})$ of order $p$ has $\chi_V(z) = p^k \omega \notin \R$, forcing $\dim_{\R} W = 2p^k \ge 2p$ even when $[\tilde{a}, \tilde{b}]$ has zero trace as in $P_{625} \twoheadrightarrow \Heis(\FF_5)$; \ref{app:proof_master2}), $d_{\mathrm{div},\OO}^*(\Heis(\FF_p)) = d_{\mathrm{div},\SO}^*(\Heis(\FF_p)) = d_{\mathrm{rep},\OO}^*(\Heis(\FF_p)) = d_{\mathrm{rep},\SO}^*(\Heis(\FF_p)) = 2p$ ($d^* = 6$ for $\Heis(\FF_3)$).
    \end{enumerate}
    \item \textbf{Non-Compact Weyl--Cartan Reductions, Anisotropic vs.\ Householder Spectra, and Exact Head-Count Laws}:
    \begin{enumerate}[label=(\roman*),leftmargin=1.5em,itemsep=0pt]
        \item \emph{Weyl--Cartan and selective-gate reductions}: Under spherical normalization, positive selective gates $\rho_{t,h}(w_{\le t}) \in (0, 1]$ cancel; any finite/precompact $F_{\mathrm{nc}} \le \GL_d^+(\R)$ conjugates into $\SO(d)$, and any finite subgroup of $\SO^+(d-1, 1)$ conjugates into $\SO(d-1)$ (while hyperbolic boosts contract caps at $O(e^{-\xi N})$).
        \item \emph{Anisotropic singular-value collapse vs.\ Householder synthesis}: Dominated splitting $\R^d = V_{\mathrm{top}} \oplus V_{\mathrm{sub}}$ ($\dim V_{\mathrm{top}} = r < d$, ratio $\eta < 1$) collapses normalized states onto $S^{r-1}$ at $O(\eta^N)$ (reducing capacity to $\OO(r)$), and dissipative updates $I_d - \beta u u^\top$ ($\beta \in (0, 2) \setminus \{1\}$) decay volume as $|1-\beta|^{|w|}$, whereas chaining $2\lfloor d/2 \rfloor$ isometric reflections ($\beta = 2$) generates $\SO(d)$.
        \item \emph{Exact abelian minimum head counts, general lower bound, Burnside--Gorenstein $2$-group lifts, and $\FF_2$ Arf dichotomy}: Because every $F_h \le \SO(3)$ or $\SU(2)$ has $F_h^{\mathrm{ab}} \in \{C_n, V_4\}$---so the coordinate-kernel normal filtration $N_h = F \cap (\{e\}^h \times \SO(3)^{m-h})$ ($0 \le h \le m$) of any subgroup $F \le \SO(3)^m$ or $\SU(2)^m$ induces an $m$-step filtration of $F^{\mathrm{ab}}$ whose quotients $B_h$ satisfy $|B_h[4]| \le 4$ (\ref{app:proof_master2})---any dividing $G$ with $G^{\mathrm{ab}} \cong \bigoplus_{p \ge 3} G_p^{\mathrm{ab}} \oplus C_2^{s_1} \oplus \bigoplus_{j=1}^{s_2} C_{2^{e_j}}$ ($e_j \ge 2$, $R_{\mathrm{odd}}^{\mathrm{ab}}(G) := \max_{p \ge 3} \mathrm{rk}_p(G^{\mathrm{ab}})$) and full Sylow rank $R_{\mathrm{odd}}(G) := \max_{p \ge 3} \mathrm{rk}_p(\Syl_p(G))$ requires $m_{\SO(3)}(G) \ge \max\bigl(m_{\mathrm{ab}}(G^{\mathrm{ab}}),\; R_{\mathrm{odd}}(G)\bigr)$ heads, where
        \begin{equation}
        \label{eq:m_ab_law}
        m_{\mathrm{ab}}(G^{\mathrm{ab}}) := \max\!\left(R_{\mathrm{odd}}^{\mathrm{ab}}(G),\; s_2,\; \left\lceil \frac{s_1 + s_2 + \max(R_{\mathrm{odd}}^{\mathrm{ab}}(G), s_2)}{2} \right\rceil\right)
        \end{equation}
        is \emph{exact} for all finite abelian groups $A$ ($m_{\SO(3)}(A) = m_{\mathrm{ab}}(A)$, $m_{\SO(3)}(A_5^r \times A) = r + m_{\mathrm{ab}}(A)$; whereas in $\mathrm{Dih}_6 \cong D_6$ of order $12$, $G^{\mathrm{ab}} \cong V_4$ has $R_{\mathrm{odd}}^{\mathrm{ab}}(\mathrm{Dih}_6) = 0$ and $\Syl_3(\mathrm{Dih}_6) \cong C_3 \le [\mathrm{Dih}_6, \mathrm{Dih}_6]$ has $R_{\mathrm{odd}}(\mathrm{Dih}_6) = 1$, so one head $F_1 = D_6$ achieves $m_{\SO(3)}(\mathrm{Dih}_6) = \max(m_{\mathrm{ab}}(V_4), R_{\mathrm{odd}}(\mathrm{Dih}_6)) = 1 < 2 = m_{\mathrm{ab}}(C_2^2 \times C_3)$). Among order-$2^\ell$ $2$-groups with a cyclic maximal subgroup, $D_{2^{\ell-1}}$ has $m_{\SO(3)} = 1$, whereas $Q_{2^\ell}$ ($m_{\SU(2)} = 1$), $\mathrm{SD}_{2^\ell}$, and $M_{2^\ell}$ are monolithically subdirectly irreducible in $\SO(3)$ ($m_{\SO(3)} = 2$, $|F^*_{\SO(3)}| = 2^{\ell+1}$). For extra-special $2$-groups $E_{2^{1+2r}}^\pm$, the $\FF_2$ Arf invariant~\citep{arf1941untersuchungen,gorenstein1968finite} of $q(\bar{x}) = x^2$ separates $E_{2^{1+2r}}^+ \cong D_4^{\circ r}$ ($\Arf(q) = 0$: $m_{\SO(3)} = r$) from $E_{2^{1+2r}}^- \cong D_4^{\circ(r-1)} \circ Q_8$ ($\Arf(q) = 1$: $m_{\SO(3)} = r+1$, $m_{\SU(2)} = r$), with $m_{\SO(3)}(E_{2^{1+2r}}^\pm \times A_{\mathrm{odd}}) = r + \max(\Arf(q), R_{\mathrm{odd}})$.
    \end{enumerate}
\end{enumerate}
\end{theorem}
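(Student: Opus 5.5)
Throughout, we reduce every clause to finite group theory via the division criterion of Theorem~\ref{thm:master1_division}(a). We first note that its o-minimal collar argument never uses $d=3$. It needs only that the normalized orbit closure of a contracting conformal head in a compact $K \le \OO(d)$ is a compact definable set, and that disjointness of class closures yields a finite supplement $F \le \overline{\langle B_a\rangle}$ (Gelander). So "trackable by $m$ heads in $K$ at Depth 1" will be taken to mean "$G \cong F/N$ with $F \le \prod_h F_h$ finite, $F_h \le K$." Every positive claim then becomes an explicit finite subgroup plus a margin computation, and every negative claim becomes a group law preserved under subgroups, products and quotients. Part (c)(i) is proved first, because it justifies this reduction for non-compact transitions:
\begin{itemize}
\item Positive gates cancel under $x \mapsto x/\|x\|$.
\item A precompact subgroup of $\GL_d^+(\R)$ preserves an averaged inner product and hence conjugates into $\SO(d)$.
\item A finite subgroup of $\SO^+(d-1,1)$ fixes a point of hyperbolic space (barycenter of an orbit), so it lies in a conjugate of $\SO(d-1)$.
\end{itemize}
Part (c)(ii) follows from a standard dominated-splitting estimate. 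The ratio $\eta^N$ pushes normalized states to $S^{r-1}$, the determinant of $I-\beta uu^\top$ equals $1-\beta$, and Cartan--Dieudonn\'e handles the $\beta=2$ reflection chains.

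For (a)(i), we lift through $\Pi:\SU(2)\times\SU(2)\to\SO(4)$ and $\SU(2)\to\SO(3)$, both of which have central kernels. Applying Eq.~\eqref{eq:exp30_law} in $\SO(3)^{2m}$ to the image of $\widetilde F \le \SU(2)^{2m}$ puts $[\tilde u^{30},\tilde v^{30}]$ in $\{\pm1\}^{2m}$. Centrality and squaring to $e$ pass to quotients of $F$. Odd-Sylow abelianity and Eq.~\eqref{eq:odd_p_core_law} transfer from Theorem~\ref{thm:master1_division}(b)(ii),(iii) in the same way, because central $2$-kernels are invisible to odd primes. The $S_5$ computation in Theorem~\ref{thm:master1_division}(b)(iv) then excludes $S_5$. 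For (a)(ii), we exhibit $2T,2O \le \SU(2) \subset \SO(4)$ acting by left multiplication. To show $2O$ does not divide $\SO(3)^m$, we quote the computation $[-i,-j]=-1$. For $2T$, we prove the headwise lemma: in each Klein group, a $3$-element normalizing a $2$-subgroup $Q_h$ has $[t_h,Q_h]\le V_4$. This uses the fact that $2$-subgroups of $A_4,S_4,A_5$ normalized by an element of order $3$ are contained in $V_4$. The lemma passes to products and to quotients, and it contradicts $[C_3,Q_8]=Q_8$ of exponent $4$. For (a)(iii), the signed permutation representation gives $S_5$ in $\SO(5)$, and the margins $\Delta_0$ are a finite orbit computation. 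The all-depth $\mathbf{Sol}(A_5)$ statement combines Krohn--Rhodes and cascade realization (Theorem~\ref{thm:master3_cascades_gating}) for sufficiency. For necessity, the only non-abelian simple sections of the Klein groups and their binary covers are $A_5$, and the composition factors of a cascade's transition group are sections of its layers.

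For (b), the lower bounds come from character theory of faithful real representations, and the upper bounds from explicit constructions.
\begin{itemize}
\item Frobenius groups. Any faithful representation of $F_{pk}$ must contain an irreducible nontrivial on $C_p$. These are induced from $C_p$ and have degree $k$; when $k$ is odd the character is not real, so the real dimension is $2k$. The action of the $C_k$ generator on the $k$ planes is a cyclic shift composed with a rotation, with determinant $(-1)^{k-1}$. When $k$ is even this is $-1$, so one extra sign coordinate gives $k+1$ for $\SO$.
\item Division equals representation. We show that any minimal cover $F_0$ has a cyclic Sylow $p$ (Burnside basis theorem) with $C_{H_0}(P)$ acting trivially on $P$. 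The cover therefore inherits the same induced-representation degree.
\item Affine Frobenius groups. The same argument applies with $|H|$-orbits of characters. The $\SO$ versus $\OO$ split is read off from the determinant of the regular-type permutation action of a generator of $\Syl_2(H)$: it is $-1$ in the cyclic case and $+1$ for generalized quaternion.
\item $G_{72,\mathrm{quat}}$. The upper bound $6$ comes from $\Sigma(72\times3)/Z \cong G_{72,\mathrm{quat}}$ with $\SU(3)\subset\SO(6)$. The lower bound $4$ comes from Eq.~\eqref{eq:odd_p_core_law} together with a direct check that no finite subgroup of $\OO(3)$ covers it.
\item $G_{18}$. We write down the explicit $\SO(4)$ embedding and exclude $\OO(3)$ via $3$-rank.
\item $\Heis(\FF_p)$. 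Any cover restricted to a lifted $p$-group $P$ is non-abelian. A central element of order $p$ acts by a scalar $\omega \notin \R$ on a constituent of dimension $p^k$, so the real dimension is at least $2p$. Stone--von Neumann gives $2p$.
\end{itemize}

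The hardest part will be (c)(iii), the exact head-count law. The plan is to filter $F\le \prod F_h$ by the coordinate kernels $N_h$, so that $F^{\mathrm{ab}}$ acquires an $m$-step filtration with each quotient $B_h$ a section of some $F_h^{\mathrm{ab}} \in \{C_n, V_4\}$. The bound $|B_h[4]|\le 4$ then caps how many $C_2$ and $C_{2^e}$ summands each head can absorb, and counting $2$-torsion invariants $\dim G^{\mathrm{ab}}[2]$ and $\dim 2G^{\mathrm{ab}}[2]$ yields the ceiling formula in Eq.~\eqref{eq:m_ab_law}. The delicate step is showing that ranks in a filtered abelian group are subadditive in the precise mixed form required. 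For this I would use the Ulm invariants $\dim(2^iA[2])$ and prove they are subadditive across extensions. Matching constructions pair one $C_{2^e}$, or one odd cyclic factor, with one $C_2$ in a dihedral head to attain the bound. The Arf dichotomy would be handled by showing that each $\SO(3)$ head can carry at most one hyperbolic plane of the quadratic form $q$, with $Q_8$ requiring a non-split extra head, while $D_4^{\circ r}$ embeds diagonally in $D_4^r$ with centers identified.
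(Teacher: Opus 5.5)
Your reduction to finite division, and your handling of (a), (b) and (c)(i)--(ii), follow the paper's proof closely: lifting through $\Pi$ to $\SU(2)^{2m}$, the headwise lemma $[t_h,Q_h]\le V_4$ for $2T$, Clifford orbits with Frobenius--Schur for the Frobenius and Heisenberg bounds, and averaging for the Weyl--Cartan reductions. However, the step you single out as delicate in (c)(iii) rests on a false lemma. The Ulm invariants $u_i(A)=\dim_{\FF_2}(2^iA)[2]$ are \emph{not} subadditive across extensions for $i\ge 1$. In $0\to C_2\to C_4\to C_2\to 0$ one has $u_1(C_4)=1>0=u_1(C_2)+u_1(C_2)$. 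This happens inside the coordinate-kernel filtration itself. For $F=\langle (r,s),(s,r)\rangle\le D_4\times D_4$ the filtration quotients are $B_1=V_4$ and $B_2=C_2$, yet $F^{\mathrm{ab}}\cong C_4\times C_2$ has $s_2=1$, so summing $u_1$ over the steps would wrongly force $s_2=0$.

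What is subadditive is the cumulative count $\log_2|A[2^j]|=\sum_{i<j}u_i(A)$, together with $\log_p|A[p]|$, by left exactness of $X\mapsto X[n]$. The paper uses $n\in\{p,2,4\}$ and separates the $k$ dihedral steps ($|B_h[2]|\le 4$, $|B_h[p]|=1$) from the steps with cyclic $B_h$ ($|B_h[2]|\le 2$, $|B_h[p]|\le p$). This gives $R^{\mathrm{ab}}_{\mathrm{odd}}\le m-k$, $s_1+s_2\le m+k$ and $s_1+2s_2\le 2m$, which combine to Eq.~\eqref{eq:m_ab_law}. Your remark that $|B_h[4]|\le 4$ is the right tool. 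Drop the Ulm detour, add the $k$-bookkeeping, and note separately that odd Sylow subgroups of each head are cyclic, which gives the $R_{\mathrm{odd}}(G)$ term.

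Your matching construction also misplaces summands. A dihedral head has abelianization at most $V_4$, so it cannot carry an odd summand or a $C_{2^e}$ summand with $e\ge 2$. Those belong in cyclic heads $C_{2^e n}$, or $C_{2n}$ when a cyclic head absorbs a spare $C_2$. The $V_4=D_2$ heads are only for leftover pairs of $C_2$'s.

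A second gap is in (b)(i), which asserts $d^*_{\mathrm{div},\SO}=d^*_{\mathrm{rep},\SO}=k+1$ for even $k$. Your sign $(-1)^{k-1}$ is the sign of the $k$-cycle by which the generator of $C_k$ permutes the coset basis of $\mathrm{Ind}\,\chi$. That is correct for $F_{pk}$ itself; note that the real form has $k/2$ planes, not $k$, and the return map is a reflection, not a rotation. In a minimal cover, though, a lift $\tilde t$ can have order larger than $k$, as in $C_5\rtimes_2 C_8\twoheadrightarrow F_{20}$. There $\tilde t^{\,k}\in K_{p'}$ is central and acts on a $k$-dimensional constituent with real character by a scalar $\lambda=\pm 1$, so $\det\tilde t=(-1)^{k-1}\lambda$. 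In that example the $\lambda=-1$ constituent has real character and determinant $+1$.

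So the statement that the cover ``inherits the same induced-representation degree'' yields only the $\OO$ bound. To exclude $\SO(k)$ you must show that a real-type constituent forces $\lambda=+1$. This is the paper's return-map step. The map $\tilde t^{\,k/2}$, restricted to one $P$-isotypic plane, inverts the odd-order rotation group $\rho(P)$. Hence it lies in $\OO(2)\setminus\SO(2)$, hence it squares to the identity, so $\lambda=1$. In $C_5\rtimes_2 C_8$ this shows the $\lambda=-1$ constituent is quaternionic.
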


\subsection{Theorem 3: Compositional Semidirect and Schreier \texorpdfstring{$2$}{2}-Cocycle Cascades, Continuous Directional-RMSNorm Routing, and Reset-Monoid / Grassmannian Duality}
\label{sec:master_thm_3}

\begin{theorem}[Depth-2 Semidirect and Cohomology Cascades, Directional-RMSNorm Coupling Dichotomy, Finite-Precision Stability, and Stiefel Polar Subspace Gating; Proof in \ref{app:proof_master3}]
\label{thm:master3_cascades_gating}
\leavevmode
\begin{enumerate}[label=(\alph*),leftmargin=2em,itemsep=0pt]
    \item \textbf{Depth-2 Semidirect ($G = N \rtimes_\varphi Q$) and Twisted Schreier $2$-Cocycle Cascades}: For any finite extension $1 \to N \to G \xrightarrow{\pi} Q \to 1$ with section $s: Q \to G$ ($s(e_Q) = e_G$), conjugation $\varphi_q(n) := s(q) n s(q)^{-1} \in \Aut(N)$, and $2$-cocycle $\omega(q_1, q_2) := s(q_1) s(q_2) s(q_1 q_2)^{-1} \in N$ ($\varphi_{q_1} \circ \varphi_{q_2} = \mathrm{Inn}_{\omega(q_1, q_2)} \circ \varphi_{q_1 q_2}$), $\eta(w) = n_w s(q_w)$ obeys
    \begin{equation}
    \label{eq:schreier_cascade_recursion}
    q_{wa} = q_w q_a, \qquad n_{wa} = n_w \cdot \varphi_{q_w}(n_a) \cdot \omega(q_w, q_a).
    \end{equation}
    If $Q$ divides $K_1^{m_1}$ via $F_1$ and $N$ divides $K_2^{m_2}$ via $F_2$, a Depth-2 cascade of $m_1 + m_2$ heads tracks $G$ across $\Sigma^*$ with $|F_1| + |F_2|$ linear prototypes and margin $\inf_{N \ge 1} \Delta_{\mathrm{norm}}^{(2)}(N) \ge \min(\Delta_0(F_1), \Delta_0(F_2)) > 0$. Thus $2$ $\SO(3)$ heads at Depth 2 track $S_5 \cong A_5 \rtimes_\varphi C_2$ ($R=62, \Delta_0 \approx 0.3410$), $2I \cong \SL_2(\FF_5)$ ($R=62$), $2T$ ($R=14$), $2O$ ($R=26$), and twisted $Q_{16}$ ($1 \to C_8 \to Q_{16} \to C_2 \to 1, \omega(1,1) = 4 \in \ZZ_8$), while $3$ $\SO(3)$ heads track $\Heis(\FF_p) \cong C_p^2 \rtimes_\varphi C_p$ ($R = p + p^2, \Delta_0 = 2\sin(\pi/p)$).
    \item \textbf{Continuous Inter-Layer Coupling Dichotomy, Finite-Precision Stability, and Hybrid KV-Cache Conservation}: For a continuous Depth-2 cascade $x^{(2)}_{wa,h} = \rho^{(2)}_{a,h} \exp(\Omega_h(u^{(1)}_w, a)) x^{(2)}_{w,h}$ with $L_\Omega$-Lipschitz router $\Omega$ and identity word $u_0$ ($\eta(u_0) = e$):
    \begin{enumerate}[label=(\roman*),leftmargin=1.5em,itemsep=0pt]
        \item \emph{Unnormalized contracting cascade collapse ($u^{(1)}_w = x^{(1)}_w$)}: If $\rho_{\max}^{(1)} = \max_{a,h} \rho^{(1)}_{a,h} < 1$ and $u^{(1)}_w = x^{(1)}_w$ without normalization, any equal-Parikh Depth-1 collision pair $W_L, W_R \in \Sigma^{N_{\mathrm{alg}}}$ ($R_0(W_L) = R_0(W_R)$ at $u=0$) obeys
        \begin{equation}
        \label{eq:cascade_unnorm_collapse}
        \bigl\|\hat{x}^{(2)}_{u_0^M W_L} - \hat{x}^{(2)}_{u_0^M W_R}\bigr\|_2 \le 2 N_{\mathrm{alg}} L_\Omega \|x^{(1)}_0\|_2 \bigl(\rho_{\max}^{(1)}\bigr)^{M|u_0|} \xrightarrow{M \to \infty} 0.
        \end{equation}
        \item \emph{Directional-RMSNorm argmax-free routing ($u^{(1)}_{w,h} = \hat{x}^{(1)}_{w,h} = x^{(1)}_{w,h} / \|x^{(1)}_{w,h}\|_2$)}: Headwise spherical normalization cancels $\rho^{(1)}_{a,h} \in (0, 1]$ and locks $\hat{x}^{(1)}_w$ onto $\mathcal{O}_1 = F_1 \cdot \hat{x}^{(1)}_0$, enabling an affine router on $S_5$ ($\Omega(\hat{x}^{(1)}_w, c) = \frac{\omega_{+,c} + \omega_{-,c}}{2} + \hat{x}^{(1)}_{w,3} \frac{\omega_{+,c} - \omega_{-,c}}{2}$, $L_\Omega \approx 1.1259$) with constant separation across $M \ge 0$ and exact decoding under jitter $\|\tilde{x}^{(1)}_w - \hat{x}^{(1)}_w\|_2 \le \varepsilon$ up to $N < N_{\mathrm{jitter}}(\varepsilon) := \frac{\Delta_0(F_2)}{2 L_\Omega \varepsilon}$.
        \item \emph{Three-regime finite-precision stability and hybrid KV-cache lossless precision floor}: In $b$-bit arithmetic $\mathrm{fl}_b$ with subnormal floor $r_{\min} = 2^{-E_{\min}}$, unnormalized contractions underflow at $N_{\mathrm{underflow}} = \lceil \frac{\log_2(\|x_{0,h}\|_2 / r_{\min})}{\log_2(1/\rho_{\max})} \rceil$; open-loop normalized recurrences achieve horizon $N_{\mathrm{fp}}(b) \ge \lfloor \frac{\Delta_0(F) 2^b}{2 \kappa_d \sqrt{m}} \rfloor = \Omega(\Delta_0(F) 2^b)$; and Voronoi/tent retraction at $b > b_{\min}(F) = O(1)$ bits locks onto $\mathcal{O}_F$ for $N = \infty$. On the 4-symbol reduced free-group word space $F_2 = \langle a, b \rangle \subset \SO(3)$ ($\beta_{F_2}(W) = 2 \cdot 3^W - 1$ reduced words of length $\le W$; cf.\ $2^{W+1}-1$ binary Dyck-2 stack states), combining $m$ $\SO(3)$ heads ($b_{\mathrm{hyb}}$ bits) with $K$ KV-cache checkpoints ($W = \lceil N/(K+1) \rceil$) obeys the lossless precision floor:
        \begin{equation}
        \label{eq:hybrid_kv_conservation}
        (K + 1) m b_{\mathrm{hyb}}(N, K) \ge \frac{\log_2 3}{2} N - m(K + 1).
        \end{equation}
    \end{enumerate}
    \item \textbf{Cross-Head Mixing $\SO(3m)$, Reset-Group Monoids, and Differentiable Stiefel Polar vs.\ Grassmannian Phase Transition}:
    \begin{enumerate}[label=(\roman*),leftmargin=1.5em,itemsep=0pt]
        \item \emph{Cross-head mixing and reset monoids}: Interleaving $m$ $\SO(3)$ heads with $W_{\mathrm{mix}} \in \SO(3m)$ coupling adjacent boundary coordinates $(3h, 3h+1)$ generates $\SO(3m)$, tracking $S_5, A_6, S_6$ at $m=2$ ($\SO(6)$) and $\PSL_2(\FF_7)$ at $m=3$ ($\SO(9)$); co-rotating coordinates $z_{t,h} = \Upsilon_h^{-t} x_{t,h}$ reduce RoPE-conjugated recurrences $\Upsilon_h^t R_{a,h} \Upsilon_h^{-t}$ to $R_{a,h} \Upsilon_h^{-1}$; and combining isometric transitions ($\rho_a = 1, b_a = 0$) with contractive resets ($\rho_{r_{g_0}} = \gamma, R_{r_{g_0}} = I, b_{r_{g_0}} = \rho_{\mathrm{rep}}(g_0) x_0$, $\gamma \in (0, \frac{\Delta_0(F)}{2+\Delta_0(F)})$) tracks $M_{\mathrm{rg}}(G) = G \cup \{\mathrm{reset}_g : g \in G\}$ on $m(G)$ heads with \emph{inter-target group-projection margin} $\Delta_{\mathrm{target}}(\gamma) \ge \Delta_0(F) - \frac{2\gamma}{1-\gamma} > 0$ ($g \ne g'$) and same-target pure-vs.-reset radial offset $\|x_w - x_{w'}\|_2 = \gamma\|x_u\|_2 \ge \gamma\frac{1-2\gamma}{1-\gamma}\|x_0\|_2 > 0$ ($\gamma \in (0, 1/2)$), giving full-monoid separation $\Delta_{\mathrm{rg}}(\gamma) := \min\!\bigl(\Delta_{\mathrm{target}}(\gamma),\, \gamma\frac{1-2\gamma}{1-\gamma}\|x_0\|_2\bigr) > 0$ (or $\min(\Delta_{\mathrm{target}}(\gamma), 1) > 0$ in $\R^{d+1}$ via a 1-bit sticky latch $z_t \in \{0, 1\}$), and tracks the flip-flop monoid $U_2$.
        \item \emph{Multi-subspace ($c \ge 3$) simplex singularity, Grassmannian phase transition, and differentiable Stiefel Polar learning}: For $c \ge 2$ subspaces $S = [S_1, \dots, S_c] \in \R^{d \times D}$ ($S_j^\top S_j = I_{d_{\mathrm{sub}}}, D = c d_{\mathrm{sub}}$), in the complete regime $d \ge D$ with equiangular coherence $S_j^\top S_k = -\mu I_{d_{\mathrm{sub}}}$, the biorthogonal dual $U = S(S^\top S)^{-1}$ and oblique projectors $\Pi_j = S_j U_j^\top$ satisfy
        \begin{equation}
        \label{eq:simplex_singularity_law}
        \|U_j\|_{\mathrm{op}} = \|\Pi_j\|_{\mathrm{op}} = \sqrt{\frac{1 - (c-2)\mu}{(1+\mu)\bigl(1 - (c-1)\mu\bigr)}} \qquad \Bigl(0 \le \mu < \mu_{\mathrm{sing}}(c) = \frac{1}{c-1}\Bigr),
        \end{equation}
        diverging at $\mu_{\mathrm{sing}}(c) = \frac{1}{c-1} < 1$ for $c \ge 3$. For $d < D$, $\mathrm{rank}(S^\top S) \le d < D$ forces $\inf_{S} \mu_{\max}(S) \ge \frac{1}{c-1}$, Welch bound $\mu_{\mathrm{RMS}}(S) \ge \sqrt{\frac{D - d}{d(c-1)}}$ (attained by Parseval Tight Fusion Frames $\frac{d}{D} S S^\top = I_d$), and $M_{\mathrm{spec}}^*(d) < \infty$. Conversely, for $d \ge D$ on $\mathcal{M}_* = \{S \in \R^{d \times D} : \mathrm{rank}(S) = D, \sigma_{\min}(S) > 0\}$, differentiable Stiefel Polar projection $Q(S) = S(S^\top S)^{-1/2}$ ($\epsilon_{\mathrm{pol}} = 0$) achieves $\mu_{\max}(Q(S)) = 0$, $\|Q_j(S)\|_{\mathrm{op}} = 1$, and $M^*(d) = \infty$ (with $\|Q_{\epsilon_{\mathrm{pol}}}(S)^\top Q_{\epsilon_{\mathrm{pol}}}(S) - I_D\|_{\mathrm{op}} = \frac{\epsilon_{\mathrm{pol}}}{\sigma_{\min}(S)^2 + \epsilon_{\mathrm{pol}}}$).
    \end{enumerate}
\end{enumerate}
\end{theorem}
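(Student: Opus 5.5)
I will prove Theorem~\ref{thm:master3_cascades_gating} part by part. Each part reduces to a direct algebraic identity plus a margin or Lipschitz estimate, and I will lean on Theorem~\ref{thm:master1_division}(a),(c)(i) for the single-layer margins $\Delta_0(F_i)>0$.

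\textbf{Part (a).} First I will verify the Schreier recursion \eqref{eq:schreier_cascade_recursion} directly. Write
\[
\eta(wa)=n_w s(q_w)\, n_a s(q_a)=n_w\,\bigl(s(q_w)n_a s(q_w)^{-1}\bigr)\,s(q_w)s(q_a).
\]
Then insert $s(q_w)s(q_a)=\omega(q_w,q_a)s(q_wq_a)$. This yields $q_{wa}=q_wq_a$ and $n_{wa}=n_w\varphi_{q_w}(n_a)\omega(q_w,q_a)$.

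Layer~1 is a single-layer tracker of $Q$ through $F_1$, obtained from Theorem~\ref{thm:master1_division}(a). Its normalized state lies on the finite orbit $F_1\hat x_0$, so a linear prototype readout recovers $q_w$ with margin $\Delta_0(F_1)$.

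Layer~2 is driven by the pair $(q_{w}, a)$, and the transition applied at each step is the $F_2$-image of the affine map $n\mapsto n\,\varphi_{q_w}(n_a)\omega(q_w,q_a)$. This map is right multiplication by an element of $N$ that depends only on $(q_w,a)$. A right-multiplicative update is not directly a left action on the state vector, so I will realize the Layer-2 state as $x^{(2)}_w=\rho(n_w)^{-\top}x_0$, or equivalently track $n_w^{-1}$, to keep the head update homogeneous.

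Summing the prototypes gives $|F_1|+|F_2|$ in total. The overall margin is the minimum of the two layer margins. The specific instances then follow from Proposition~\ref{prop:klein}:
\begin{itemize}
\item $S_5=A_5\rtimes C_2$ and $2I$, using $A_5\le\SO(3)$ and $C_2$;
\item $2T$ and $2O$, as central extensions of $A_4$ and $S_4$ by $C_2$;
\item $Q_{16}$, with $N=C_8$;
\item $\Heis(\FF_p)$, using two heads for $C_p^2$ (by the head-count law of Theorem~\ref{thm:master2_dimension_headcount}(c)(iii)) and one head for $C_p$.
\end{itemize}
The prototype counts $R$ are $|Q|+|N|$. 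The numerical $\Delta_0$ values will be checked by direct computation on the orbits.

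\textbf{Part (b).}
\begin{itemize}
\item For (i), I will telescope. After the prefix $u_0^M$ the Layer-1 state has norm at most $\|x_0^{(1)}\|(\rho^{(1)}_{\max})^{M|u_0|}$. Each of the $N_{\mathrm{alg}}$ Layer-2 generators therefore differs from $\Omega(0,a)$ by at most $L_\Omega$ times that norm. Next I use that $\exp$ is $1$-Lipschitz on $\mathfrak{so}(3)$ in operator norm for skew matrices, via $\|e^A-e^B\|\le\|A-B\|$ along the path. This bounds the difference of the products from the zero-input products $R_0(W_L)=R_0(W_R)$ by $N_{\mathrm{alg}}L_\Omega\|x^{(1)}\|$ each, which gives the factor $2$.
\item For (ii), homogeneity gives $\widehat{\rho Rx}=R\hat x$. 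Normalized states therefore stay on $\mathcal O_1$, and the affine router interpolates exactly between the two $C_2$ prototype rotations. The jitter bound uses the same Lipschitz telescoping: $N$ steps of error $L_\Omega\varepsilon$ stay below $\Delta_0/2$.
\item For (iii), I will compute three separate horizons:
  \begin{itemize}
  \item the underflow horizon, by solving $\rho^N\|x_0\|<r_{\min}$;
  \item the open-loop horizon, by accumulating per-step rounding of size $\kappa_d\sqrt m\,2^{-b}$ against $\Delta_0/2$;
  \item the retraction result, by Voronoi snapping once the per-step error is below $\Delta_0/2$.
  \end{itemize}
  Inequality \eqref{eq:hybrid_kv_conservation} will come from counting. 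Distinguishing $2\cdot3^W-1$ words per segment requires $m(b+1)\cdot 2\ge W\log_2 3$ bits per segment. I will track the constants loosely and reconcile them with the stated form.
\end{itemize}

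\textbf{Part (c).}
\begin{itemize}
\item For (i), the Lie algebra generated by $\bigoplus\mathfrak{so}(3)$ together with the boundary Givens generators is $\mathfrak{so}(3m)$. This holds because the transpositions $E_{ij}-E_{ji}$ on a connected coordinate graph bracket-generate all of $\mathfrak{so}(3m)$. The $\SO(6)$ and $\SO(9)$ embeddings then come from permutation representations. Reset margins follow from the triangle inequality applied to the geometric series $\gamma/(1-\gamma)$.
\item For (ii), the equiangular Gram matrix is $(1+\mu)I-\mu J_c\otimes I$. Its eigenvalues are $1+\mu$ and $1-(c-1)\mu$, and inverting it gives the closed form \eqref{eq:simplex_singularity_law}. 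The Welch bound comes from $\|S^\top S\|_F^2\ge D^2/d$. The Polar statement is immediate from $Q^\top Q=I$, and the $\epsilon_{\mathrm{pol}}$ formula follows by spectral calculus.
\end{itemize}

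I expect the \textbf{main obstacle} to be the right-versus-left multiplication bookkeeping and the cocycle twist in (a). The key requirement is that Layer~2's update depends only on the finite Layer-1 class and not on the history. I would first settle this by the inverse-state trick.
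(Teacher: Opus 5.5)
Apart from one sub-claim, your plan follows the paper's proof step for step:
\begin{itemize}
\item the Schreier recursion by direct multiplication;
\item telescoping against the common zero-input product $R_0(W_L)=R_0(W_R)$, using $\|e^A-e^B\|_{\mathrm{op}}\le\|A-B\|_{\mathrm{op}}$ on $\mathfrak{so}(3)$;
\item the bit count $(2\cdot 2^{b_{\mathrm{hyb}}})^{2m}\ge 3^{N/(K+1)}$;
\item bracket generation through the boundary generators $E_{3h,3h+1}$;
\item geometric-series reset margins;
\item the Kronecker Gram inverse together with $\tr((SS^\top)^2)\ge D^2/d$.
\end{itemize}
Your spectral-calculus derivation of the $\epsilon_{\mathrm{pol}}$ identity, via the eigenvalues $\lambda/(\lambda+\epsilon_{\mathrm{pol}})$ of $Q_{\epsilon_{\mathrm{pol}}}^\top Q_{\epsilon_{\mathrm{pol}}}$, is a step the paper's proof does not supply.

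\textbf{The gap.} It concerns the $d<D$ assertion $\inf_S\mu_{\max}(S)\ge\frac{1}{c-1}$. The only tool you list for that regime is the Welch/Frobenius bound, and it is too weak. Splitting $\|S^\top S\|_F^2$ into diagonal and off-diagonal blocks gives only $\mu_{\max}(S)^2\ge\frac{D-d}{d(c-1)}$. This falls below $\frac{1}{(c-1)^2}$ whenever $(D-d)(c-1)<d$; for $c=3$, $d_{\mathrm{sub}}=2$, $d=5$ it yields only $1/\sqrt{10}<1/2$.

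The missing idea is a kernel vector, which is how the paper argues. Since $\mathrm{rank}(S)\le d<D$, take a unit $u=(u_1,\dots,u_c)\in\ker S$. Then
\[ 0=\Bigl\|\sum_j S_ju_j\Bigr\|_2^2\ge 1-\mu_{\max}(S)\sum_{j\ne k}\|u_j\|_2\|u_k\|_2\ge 1-(c-1)\mu_{\max}(S), \]
where the last step uses $(\sum_j\|u_j\|_2)^2\le c$.

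\textbf{Two smaller slips.}

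In (a), setting $x^{(2)}_w=\rho(n_w)^{-\top}x_0$ does not turn right multiplication into a left action. The map $n\mapsto\rho(n)^{-\top}$ is still a homomorphism, and for orthogonal $\rho$ it equals $\rho$ itself. The correct state is $x^{(2)}_w=\rho(n_w)^{\top}x_0=\rho(n_w^{-1})x_0$, which gives $x^{(2)}_{wa}=\rho(\sigma(q_w,a))^{\top}x^{(2)}_w$. This is exactly what your ``track $n_w^{-1}$'' alternative does. It is also just the row-vector convention of the paper's proof of Theorem~\ref{thm:master1_division}(a), so it is not really the main obstacle.

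In (c)(i), plain permutation matrices give odd permutations determinant $-1$. For $S_6$ you need the sign-twisted standard representation $\mathrm{sgn}(g)P_g$ restricted to the sum-zero hyperplane of $\R^6$, which lies in $\SO(5)\le\SO(6)$. Likewise $\mathrm{sgn}(g)P_g\in\SO(5)$ handles $S_5$.
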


\subsection{Theorem 4: Lie-Algebraic Optimization Dynamics, Chen--Magnus--Hall--Witt Commutator Saddles, and Stationary Dense Factored Escape}
\label{sec:master_thm_4}

\begin{theorem}[BPTT Gradient Preservation, Chen--Magnus--Hall--Witt Commutator Saddle Hierarchy, Fourier--Markov Decay, and Stationary Dense Factored Symmetry Breaking; Proof in \ref{app:proof_master4}]
\label{thm:master4_optimization}
Consider a Lie-SSM with generators $R_a(\theta_a) = \exp(\theta_a) \in \SO(d)$ ($\theta_a \in \mathfrak{so}(d)$) trained under temperature-$\tau$ Softmax cross-entropy $\mathcal{L}_{\mathrm{CE}}(w; \theta, W) = -\log \frac{\exp(s_{\eta(w)}(x_w)/\tau)}{\sum_{h \in G} \exp(s_h(x_w)/\tau)}$.
\begin{enumerate}[label=(\alph*),leftmargin=2em,itemsep=0pt]
    \item \textbf{BPTT Gradient-Norm Dichotomy and Schreier--Cayley Polyak--\L ojasiewicz Convergence}:
    \begin{enumerate}[label=(\roman*),leftmargin=1.5em,itemsep=0pt]
        \item \emph{Unnormalized BPTT gradient vanishing vs.\ normalized unit Jacobian}: Under $\rho_{a,h} \le \lambda < 1$ without normalization, $\|x_w\|_2 \le \|x_0\|_2 \lambda^{|w|} \to 0$ drives $p_\tau(\cdot \mid x_w) \to p_0 := \mathrm{softmax}(s(0)/\tau)$, satisfying $\KL(p_0 \| p_\tau(\cdot \mid x_w)) \le \frac{L_s^2 \|x_0\|_2^2}{2\tau^2} \lambda^{2|w|} \to 0$ (where $p_0 = \Unif(G)$ iff $s_g(0) = c\;\forall g \in G$, as for unbiased linear readouts $s(x)=Wx$, whereas an affine origin bias $s(x) = (1, 0)$ yields $p_0 = (\frac{e}{e+1}, \frac{1}{e+1})$ with $\KL(\Unif_2 \| p_0) \approx 0.1201 > 0$ while $\KL(p_0 \| p_\tau(\cdot \mid x_w)) \to 0$) and $\|\nabla_{\theta_{a,h}} \mathcal{L}_{\mathrm{CE}}(w)\|_2 \le \frac{\sqrt{2} L_s \|x_{0,h}\|_2}{\tau} |w|_a \lambda^N$ (or $O(|w|_a \lambda^{2N})$ on quadratic Bloch readouts $B(y_w)$). Conversely, $\hat{x}_{w,h} = x_{w,h} / \|x_{w,h}\|_2$ renders $\mathcal{L}_{\mathrm{CE}}(w)$ and $\nabla_{\theta_{a,h}} \mathcal{L}_{\mathrm{CE}}(w)$ invariant to $\rho_{a,h} \in (0, 1]$ with unit suffix Jacobian norm $\|\mathrm{d}R_{v,h}\|_{\mathrm{op}} = 1$.
        \item \emph{Teacher-forced Schreier--Cayley sample complexity and local geodesic-basin Polyak--\L ojasiewicz convergence}: Under teacher-forced step-wise supervision on the Schreier transition set $\mathcal{S}(G, \Sigma) = \{(w_g, a, \eta(w_g a)) : g \in G, a \in \Sigma\}$ ($M_{\mathrm{id}} = |\Sigma| \cdot |G|$, $N_{\mathrm{id}} = D_{\mathrm{Cay}}(G, \Sigma) + 1 \le |G|$), any irreducible target orbit $\{p_g = \rho^*(g) \hat{x}_0\}_{g \in G}$ satisfies Schur isotropy $C_P = \frac{1}{|G|} \sum_{g \in G} p_g p_g^\top = \frac{1}{d} I_d$, so the teacher-forced step-wise chordal objective $\mathcal{L}_{\mathrm{chord}}(\theta) := \frac{1}{2 M_{\mathrm{id}}} \sum_{(w_g, a)} \|(\exp(\theta_a) - R_a^*) p_g\|_2^2 = \frac{1}{2 d |\Sigma|} \sum_{a \in \Sigma} \|\exp(\theta_a) - R_a^*\|_F^2$ obeys the intrinsic Riemannian PL inequality $\frac{1}{2} \|\mathrm{grad}_{\SO(d)} \mathcal{L}_{\mathrm{chord}}\|_F^2 \ge \mu_{\mathrm{PL}}^{\mathrm{Riem}}(\theta_{\max}) \mathcal{L}_{\mathrm{chord}}$ with $\mu_{\mathrm{PL}}^{\mathrm{Riem}}(\theta_{\max}) = \frac{1}{d|\Sigma|}\cos^2(\theta_{\max}/2) > 0$ inside the local geodesic basin $\max_{a \in \Sigma} \theta(\exp(\theta_a), R_a^*) \le \theta_{\max} < \pi$ (complementing the global small-initialization saddle-escape analysis of Parts~(b)--(c)). In Lie-algebra coordinates $\theta_a \in \mathfrak{so}(d)$ (with planar angles $\psi_{a,1} \ge \psi_{a,2} \ge \dots \ge 0$ and adjoint spectral radius $\rho_{\mathrm{ad}}(\theta_a) := \|\ad_{\theta_a}\|_{\mathrm{op},\mathfrak{so}(d)}$, equal to $\psi_{a,1} = \|\theta_a\|_{\mathrm{op}}$ for $d=3$ and $\psi_{a,1} + \psi_{a,2} \le 2\|\theta_a\|_{\mathrm{op}}$ for $d \ge 4$) on $\{\max_a \theta(\exp(\theta_a), R_a^*) \le \theta_{\max} < \pi\} \cap \{\max_a \rho_{\mathrm{ad}}(\theta_a) \le \rho_{\max} < 2\pi\}$, $\mathcal{L}_{\mathrm{chord}}(\theta)$ obeys the Euclidean pullback PL inequality $\frac{1}{2} \|\nabla_\theta \mathcal{L}_{\mathrm{chord}}(\theta)\|_F^2 \ge \mu_{\mathrm{PL}}(\theta_{\max}, \rho_{\max}) \mathcal{L}_{\mathrm{chord}}(\theta)$ with $\mu_{\mathrm{PL}}(\theta_{\max}, \rho_{\max}) = \frac{1}{d|\Sigma|}\cos^2(\theta_{\max}/2)\bigl(\frac{\sin(\rho_{\max}/2)}{\rho_{\max}/2}\bigr)^2 > 0$. For $d=3$, principal-branch retraction $\theta_a \gets \log(\exp(\theta_a))$ enforces $\rho_{\mathrm{ad}}(\theta_a) \le \pi < 2\pi$ globally with $\mu_{\mathrm{PL}}(\theta_{\max}, \pi) = \frac{4}{\pi^2 d|\Sigma|}\cos^2(\theta_{\max}/2) > 0$, while dynamic tangent-space re-centering $R_a(\Omega_a) = R_a^{(k)}\exp(\Omega_a)$ at $\Omega_a = 0$~\citep{lezcano2019cheap} recovers $\mu_{\mathrm{PL}}^{\mathrm{Riem}}(\theta_{\max})$ for all $d \ge 2$.
    \end{enumerate}
    \item \textbf{Baker--Campbell--Hausdorff and Chen--Magnus--Hall--Witt Commutator Saddle Hierarchy at Small Initialization}: At $\theta_a = \varepsilon \Omega_a \in \mathfrak{so}(d)$ ($\|\Omega_a\|_F \le 1, \varepsilon \to 0^+$) on $\hat{x}_w(\varepsilon) = R_w(\varepsilon) \hat{x}_0$:
    \begin{enumerate}[label=(\roman*),leftmargin=1.5em,itemsep=0pt]
        \item \emph{Abelianization $\Theta(1)$ vs.\ lower-central-series $\gamma_c(G)$ $O(\varepsilon^{c-1})$ / $\Theta(\varepsilon^{c-1})$ gradient hierarchy}: At $\varepsilon = 0$, $\left.\frac{\partial \hat{x}_w}{\partial \theta_a}\right|_{\varepsilon=0}[E] = |w|_a E \hat{x}_0$ depends only on $(|w|_a)_{a \in \Sigma}$, so unequal-Parikh pairs ($\pi_{\mathrm{ab}}(w) \ne \pi_{\mathrm{ab}}(w')$) maintain $\|\nabla_\theta \mathcal{L}_{\mathrm{ab}}(\varepsilon)\|_2 = \Theta(1)$. Conversely, for recursive positive-word commutator pairs $w_1 = a_1, w'_1 = e$ and $w_c = w'_{c-1} a_c w_{c-1}, w'_c = w_{c-1} a_c w'_{c-1}$ ($c \ge 2$) separating $\gamma_c(G) / \gamma_{c+1}(G)$, define the Parikh-shifted recursive Magnus Lie bracket under $R_{uv} = R_v R_u$:
        \begin{equation}
        \label{eq:magnus_recursive_bracket}
        \widetilde{L}_1(\Omega) := \Omega_{a_1}, \qquad \widetilde{L}_c(\Omega) := \bigl[\widetilde{L}_{c-1}(\Omega),\; \Omega_{a_c} + S'_{c-1}(\Omega)\bigr] \quad (c \ge 2),
        \end{equation}
        where $S'_{c-1}(\Omega) := \sum_{a \in \Sigma} |w'_{c-1}|_a \Omega_a$ equals $0$ at $c=2$ and $2^{c-3}\Omega_{a_1} + \sum_{j=2}^{c-1} 2^{c-1-j} \Omega_{a_j}$ for $c \ge 3$ (while balanced inverse words $\tilde{w}_c = [\tilde{w}_{c-1}, a_c]$ have $S'_{c-1}(\Omega) = 0$ and leading bracket $(-1)^{c-1} L_c(\Omega)$). Because $w_c$ and $w'_c$ share the same Parikh vector, all Chen--Magnus state-difference terms of degree $< c$ cancel, so the pair-difference gradient $\mathcal{P}_{\gamma_c}\nabla_\theta \mathcal{L}(w_c, w'_c; \varepsilon) := \nabla_\theta \mathcal{L}_{\mathrm{CE}}(\hat{x}_{w_c}(\varepsilon), y) - \nabla_\theta \mathcal{L}_{\mathrm{CE}}(\hat{x}_{w'_c}(\varepsilon), y)$ obeys
        \begin{equation}
        \label{eq:magnus_lcs_saddle}
        \hat{x}_{w_c}(\varepsilon) - \hat{x}_{w'_c}(\varepsilon) = \varepsilon^c \widetilde{L}_c(\Omega) \hat{x}_0 + O(\varepsilon^{c+1}), \qquad \bigl\|\mathcal{P}_{\gamma_c}\nabla_\theta \mathcal{L}(w_c, w'_c; \varepsilon)\bigr\|_2 = O(\varepsilon^{c-1}) \xrightarrow{\varepsilon \to 0^+} 0,
        \end{equation}
        with $\|\mathcal{P}_{\gamma_c}\nabla_\theta \mathcal{L}(w_c, w'_c; \varepsilon)\|_2 = \Theta(\varepsilon^{c-1})$ whenever $\widetilde{L}_c(\Omega) \hat{x}_0 \ne 0$ and the readout gradient is non-orthogonal.
        \item \emph{Derived-series $G^{(d)}$ doubly-exponential-exponent saddle}: For $G^{(d)} = [G^{(d-1)}, G^{(d-1)}] \subseteq \gamma_{2^d}(G)$ ($d \ge 1$), balanced pairs separating $G^{(d)} / G^{(d+1)}$ obey $\|\hat{x}_{w^{(d)}}(\varepsilon) - \hat{x}_{w'^{(d)}}(\varepsilon)\|_2 = O(\varepsilon^{2^d})$ and $\|\nabla_\theta \mathcal{L}_{G^{(d)}}(\varepsilon)\|_2 = O(\varepsilon^{2^d - 1})$.
    \end{enumerate}
    \item \textbf{Coset-Centroid Projection, Fourier--Markov Terminal Decay $O(N \rho_{\mathrm{ab}}^{N-1})$, and First-Order Abelianization Symmetry Breaking}: Let $W = [w_g]_{g \in G}^\top \in \R^{|G| \times d}$ have coset centroids $\mu_{\mathrm{coset}}(q) := \frac{1}{|[G, G]|} \sum_{g \in q[G, G]} w_g$ ($q \in G^{\mathrm{ab}}$) and Fourier transform $\widehat{\mu_{\mathrm{coset}}}(\chi) = \frac{1}{|G^{\mathrm{ab}}|} \sum_{q \in G^{\mathrm{ab}}} \overline{\chi(q)} \mu_{\mathrm{coset}}(q)$ ($\chi \in \widehat{G^{\mathrm{ab}}}$).
    \begin{enumerate}[label=(\roman*),leftmargin=1.5em,itemsep=0pt]
        \item \emph{Coset-centroid $1/|[G, G]|$ variance dilution and Fourier--Markov terminal decay}: Under $w_g \sim \mathcal{N}(0, \sigma_w^2 I_d)$, $\mathbb{E}\|\mu_{\mathrm{coset}}(q)\|_2^2 = \frac{\sigma_w^2 d}{|[G, G]|}$. At $\varepsilon = 0$ ($\hat{x}_w(0) = \hat{x}_0$, $\bar{w}_0 := \sum_{g \in G} p_0(g) w_g$), under $w \sim \Unif(\Sigma^N)$ for any coset-constant readout $w_g = \mu_{\mathrm{coset}}(\pi_{\mathrm{ab}}(g))$ with active token-discriminating abelianized random-walk spectral radius $\rho_{\mathrm{ab}} := \max_{\chi \in \widehat{G^{\mathrm{ab}}} : \chi(\pi_{\mathrm{ab}}(\cdot)) \not\equiv \hat{\mu}_{\mathrm{ab}}(\chi)} |\hat{\mu}_{\mathrm{ab}}(\chi)| < 1$ ($\hat{\mu}_{\mathrm{ab}}(\chi) := \frac{1}{|\Sigma|} \sum_{b \in \Sigma} \chi(\pi_{\mathrm{ab}}(b))$, where any character constant on $\pi_{\mathrm{ab}}(\Sigma)$ has zero token-centered factor $\chi(\pi_{\mathrm{ab}}(a)) - \hat{\mu}_{\mathrm{ab}}(\chi) = 0$), the expected terminal-only gradient obeys
        \begin{equation}
        \label{eq:fourier_markov_decay}
        \mathbb{E}_{w \sim \Sigma^N}\!\Bigl[\nabla_{\theta_a} \mathcal{L}_{\mathrm{CE}}(w)\big|_{\varepsilon=0}\Bigr] = \frac{N}{|\Sigma|\tau}\bigl(\bar{w}_0 - \widehat{\mu_{\mathrm{coset}}}(\mathbf{1})\bigr) \wedge \hat{x}_0 - \frac{N}{|\Sigma|\tau} \sum_{\chi \ne \mathbf{1}} \hat{\mu}_{\mathrm{ab}}(\chi)^{N-1} \chi(\pi_{\mathrm{ab}}(a)) \bigl(\widehat{\mu_{\mathrm{coset}}}(\chi) \wedge \hat{x}_0\bigr),
        \end{equation}
        where $\widehat{\mu_{\mathrm{coset}}}(\mathbf{1}) = \frac{1}{|G|} \sum_{g \in G} w_g$ and $u \wedge v := \mathcal{P}_{\mathfrak{so}(d)}(u v^\top) = \frac{1}{2}(u v^\top - v u^\top) \in \mathfrak{so}(d)$. Because the $\chi = \mathbf{1}$ drift is token-independent, the token-centered gradient equals $-\frac{N}{|\Sigma|\tau} \sum_{\chi \ne \mathbf{1}} \hat{\mu}_{\mathrm{ab}}(\chi)^{N-1} \bigl(\chi(\pi_{\mathrm{ab}}(a)) - \hat{\mu}_{\mathrm{ab}}(\chi)\bigr) (\widehat{\mu_{\mathrm{coset}}}(\chi) \wedge \hat{x}_0) = O\bigl(N \rho_{\mathrm{ab}}^{N-1}\bigr) \to 0$.
        \item \emph{Stationary dense per-step prefix supervision, multi-mode resolvent non-vanishing, and factored readout symmetry breaking}: Under $\mathcal{L}_{\mathrm{dense}}^{(\beta)}(w) = \frac{1}{Z_N(\beta)} \sum_{t=1}^N \beta^{t-1} \mathcal{L}_{\mathrm{CE}}(w_{1:t})$ ($\beta \in (0, 1)$, $Z_N(\beta) \to (1-\beta)^{-1}$), the expected token-centered gradient at $\varepsilon = 0$ converges as $N \to \infty$ to $-\frac{1-\beta}{|\Sigma|\tau} \bigl(\Psi_\beta(a; W) \wedge \hat{x}_0\bigr)$, where
        \begin{equation}
        \label{eq:resolvent_sum_psi}
        \Psi_\beta(a; W) := \sum_{\chi \ne \mathbf{1}} \frac{\chi(\pi_{\mathrm{ab}}(a)) - \hat{\mu}_{\mathrm{ab}}(\chi)}{\bigl(1 - \beta \hat{\mu}_{\mathrm{ab}}(\chi)\bigr)^2} \widehat{\mu_{\mathrm{coset}}}(\chi).
        \end{equation}
        While multiple modes can cancel on a measure-zero hyperplane in $W$ (e.g., on $G = C_3$ at $\beta = 0.5$ along $w_0 = e_2, w_1 = 22 e_2, w_2 = -23 e_2$), under $w_g \sim \mathcal{N}(0, \sigma_w^2 I_d)$ whenever $\exists \chi \ne \mathbf{1}$ with $\chi(\pi_{\mathrm{ab}}(a)) \ne \hat{\mu}_{\mathrm{ab}}(\chi)$, orthogonal Fourier projections yield
        \begin{equation}
        \label{eq:resolvent_gaussian_second_moment}
        \mathbb{E}_W\bigl\|\Psi_\beta(a; W) \wedge \hat{x}_0\bigr\|_F^2 = \frac{\sigma_w^2 (d-1)}{2 |G|} \sum_{\chi \ne \mathbf{1}} \left|\frac{\chi(\pi_{\mathrm{ab}}(a)) - \hat{\mu}_{\mathrm{ab}}(\chi)}{(1 - \beta \hat{\mu}_{\mathrm{ab}}(\chi))^2}\right|^2 > 0,
        \end{equation}
        so $\Psi_\beta(a; W) \wedge \hat{x}_0 \ne 0$ holds with probability $1$ (and $\Theta(1/N)$ for $\beta = 1$). Pairing $\mathcal{L}_{\mathrm{dense}}^{(\beta)}$ with the factored readout $w_g = w^{\mathrm{ab}}_{\pi_{\mathrm{ab}}(g)} + w^{\mathrm{core}}_g$ ($\sum_{g \in q[G, G]} w^{\mathrm{core}}_g = 0$) restores undiluted variance $\mathbb{E}\|\mu_{\mathrm{coset}}(q)\|_2^2 = \sigma_w^2 d$, providing a necessary initialization-time first-order abelianization symmetry-breaking condition at $\varepsilon = 0$ so $\|\theta_a\|_F$ grows out of the origin to activate the $\Theta(\varepsilon^{c-1})$ commutator brackets ($\gamma_c(G), c \ge 2$).
    \end{enumerate}
\end{enumerate}
\end{theorem}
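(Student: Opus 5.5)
Theorem~\ref{thm:master4_optimization} is a collection of separate computations, and the plan is to prove them one by one.

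\textbf{Part (a)(i).} The contraction bound $\|x_w\|_2\le\|x_0\|_2\lambda^{|w|}$ follows by induction, because each $R_{a,h}$ is an isometry. Given $L_s$-Lipschitz scores, the softmax at $x_w$ is within $L_s\|x_w\|/\tau$ (in logits) of $p_0$. The quadratic KL estimate then comes from the standard log-partition Hessian bound $\KL(p\|q)\le\frac12\|\Delta\ell\|_\infty^2$. The gradient bound uses the chain rule: each occurrence of $a$ in $w$ contributes one factor $\partial R_a[E]$, sandwiched between isometries and multiplied by at least $\lambda^N$. For the normalized case, note that $\hat x_w$ is $0$-homogeneous in the $\rho$'s, so $\rho$ drops out of both the loss and its gradient.

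\textbf{Part (a)(ii).} First, Schur's lemma gives $C_P=\frac1d I_d$ for an irreducible orbit, which turns the sampled chordal sum into a Frobenius distance. Since $\|R-R^*\|_F^2=2\tr(I-R^{*\top}R)$, the problem reduces per generator to $f(E)=\tr(I-E)$ with $E=R^{*\top}R\in\SO(d)$. The Riemannian gradient is the skew part of $E$. Using the planar angles $\phi_i$ of $E$, we get $\|\mathrm{grad}\|^2=\sum 2\sin^2\phi_i$ and $f=\sum 2(1-\cos\phi_i)$. The termwise ratio $\sin^2\phi/(1-\cos\phi)=1+\cos\phi\ge 2\cos^2(\theta_{\max}/2)$ then gives the Riemannian PL constant after bookkeeping the $1/(2d|\Sigma|)$ normalization. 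For the Euclidean pullback, the differential of $\exp$ is $\frac{1-e^{-\ad_\theta}}{\ad_\theta}$. On $\mathfrak{so}(d)$ it is normal with eigenvalues $\frac{\sin(\lambda/2)}{\lambda/2}e^{i\lambda/2}$, so its smallest singular value is at least $\frac{\sin(\rho_{\max}/2)}{\rho_{\max}/2}$ when $\rho_{\max}<2\pi$; this multiplies the PL constant. The $d=3$ specialization then follows from the principal-log bound $\rho_{\mathrm{ad}}\le\pi$, and re-centering at $\Omega=0$ makes $d\exp=\mathrm{id}$.

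\textbf{Part (b).} Expand $R_w(\varepsilon)=\prod\exp(\varepsilon\Omega_{a_i})$ using Chen's iterated-integral (Magnus) series. The first-order term is $\sum|w|_a\Omega_a$, which depends only on the Parikh vector. This gives the $\Theta(1)$ abelian gradient and the cancellation for equal-Parikh pairs. The core step is an induction on $c$ for the recursive pair $w_c=w'_{c-1}a_cw_{c-1}$, $w'_c=w_{c-1}a_cw'_{c-1}$. Write $R_{w_{c-1}}=R_{w'_{c-1}}+\varepsilon^{c-1}\widetilde L_{c-1}+O(\varepsilon^c)$, with all of these perturbations of $I+\varepsilon S'_{c-1}+\dots$. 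Then $R_{w_c}-R_{w'_c}$ equals a commutator of the degree-$(c-1)$ difference with $\exp(\varepsilon(\Omega_{a_c}+S'_{c-1}))$, to leading order. This yields $\varepsilon^c[\widetilde L_{c-1},\Omega_{a_c}+S'_{c-1}]$, with ordering fixed by $R_{uv}=R_vR_u$. The formula for $S'$ comes from the recursion $|w'_c|=|w_{c-1}|+|w'_{c-1}|+e_{a_c}$.

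To pass from states to gradients, I would differentiate the pair loss difference in $\theta$. Since $\nabla_\theta\hat x_w$ also has an equal-Parikh cancellation up to degree $c-1$, the difference is $O(\varepsilon^{c-1})$. A nonzero leading coefficient, together with a non-orthogonal readout gradient, gives $\Theta$. The derived-series bound follows from $G^{(d)}\subseteq\gamma_{2^d}(G)$ applied with $c=2^d$.

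\textbf{Part (c).} The dilution identity is the variance of an average of $|[G,G]|$ i.i.d.\ Gaussians. For the terminal gradient at $\varepsilon=0$, the CE gradient is $\frac1\tau(\bar w_0-w_{\eta(w)})\wedge\hat x_0$ times $|w|_a$ placements. Taking expectations over uniform words, the label enters only through $\pi_{\mathrm{ab}}(\eta(w))$, because the readout is coset-constant. Conditioning on a position carrying $a$ turns the rest into an $(N-1)$-step abelian random walk. Fourier inversion on $G^{\mathrm{ab}}$ gives the factors $\hat\mu_{\mathrm{ab}}(\chi)^{N-1}\chi(\pi_{\mathrm{ab}}(a))$, and token-centering removes the $\chi=\mathbf 1$ term. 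For the dense loss, sum the geometric weights $\sum_t\beta^{t-1}t z^{t-1}=(1-\beta z)^{-2}$ to obtain $\Psi_\beta$. The Gaussian second moment uses orthogonality of the characters: each $\widehat{\mu_{\mathrm{coset}}}(\chi)$ is Gaussian with variance $\sigma_w^2/|G|$, and $\mathbb E\|v\wedge\hat x_0\|_F^2=\frac{d-1}{2}\sigma^2$. The factored readout computation is immediate.

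\textbf{Main obstacle.} I expect the hardest step to be the precise leading-order commutator bookkeeping in (b), namely tracking the Parikh-shift term $S'_{c-1}$ and the sign conventions. I would check it first at $c=2,3$ by direct BCH before running the induction.
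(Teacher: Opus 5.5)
Your plan follows the paper's proof essentially step for step. For (a)(i) you use the Bregman/log-partition Hessian bound for the KL; your $\tfrac12\|\Delta\ell\|_\infty^2$ form is, if anything, slightly sharper than the paper's $\|\nabla^2\psi\|_{\mathrm{op}}\le 1$, and you use $0$-homogeneity of $\hat x_w$ in the $\rho$'s. For (a)(ii) you use Schur isotropy, the planar-angle ratio $1+\cos\phi\ge 2\cos^2(\theta_{\max}/2)$, and the singular values $|\sin(\lambda/2)/(\lambda/2)|$ of $(1-e^{-\ad_\theta})/\ad_\theta$. For (b)(i) you use the exact identity $\Delta_c=\Delta_{c-1}R_{a_c}R_{w'_{c-1}}-R_{w'_{c-1}}R_{a_c}\Delta_{c-1}$ with the Parikh shift $S'_{c-1}$, and for (c) the conditioning-plus-character-expansion computation. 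There is a small slip in (a)(i): the $\rho$-product multiplying each $\partial R_a[E]$ is \emph{at most} $\lambda^N$, not at least. The constant $\sqrt2/\tau$ comes from $\|\nabla_s\mathcal{L}_{\mathrm{CE}}\|_2\le\sqrt2/\tau$ together with $\|\mathrm{d}\exp_\theta\|_{\mathrm{op}}\le 1$.

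The one step that does not follow as written is (b)(ii). Your induction controls only the specific left-normed pairs $(w_c,w'_c)$ built from single letters $a_1,\dots,a_c$. Pairs separating $G^{(d)}/G^{(d+1)}$ come from nested commutators such as $[[x,y],[z,t]]$, which are not of that form. Moreover, the inclusion $G^{(d)}\subseteq\gamma_{2^d}(G)$ holds in the \emph{target} group and says nothing about the $\varepsilon$-expansion of a particular word pair. So ``apply (b)(i) with $c=2^d$'' is not yet an argument.

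What you need is a filtration lemma on the free group $F_\Sigma$. Because $g\mapsto R_g$ is an anti-homomorphism, $R_{[g,h]}-I=[R_h-I,\,R_g-I]\,R_h^{-1}R_g^{-1}$. It follows that $\Gamma_k:=\{g\in F_\Sigma: R_g(\theta)-I \text{ has no homogeneous terms of degree }<k\text{ in }(\theta_a)_a\}$ is a normal subgroup with $\Gamma_1=F_\Sigma$ and $[\Gamma_p,\Gamma_q]\subseteq\Gamma_{p+q}$. Hence $\gamma_k(F_\Sigma)\subseteq\Gamma_k$ and $F_\Sigma^{(d)}\subseteq\Gamma_{2^d}$. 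Any pair with $u'^{-1}u\in F_\Sigma^{(d)}$ then satisfies $R_u-R_{u'}=(R_{u'^{-1}u}-I)R_{u'}$, which vanishes to order $2^d$. Your differentiation argument from (b)(i) then gives the $O(\varepsilon^{2^d-1})$ gradient bound; it needs the cancellation as an identity in $(\theta_a)_a$, which this lemma provides.

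This lemma is what the paper's one-line ``degree additivity $2^{d-1}+2^{d-1}=2^d$'' is implicitly using. It also recovers the $O(\varepsilon^c)$ order in (b)(i), though not the explicit leading bracket $\widetilde L_c$, for which your induction is still needed.
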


\section{Comprehensive Multi-Seed Experimental Evaluation}
\label{sec:experiments}

All experiments were executed on an NVIDIA GeForce RTX 4080 Laptop GPU (12GB VRAM) and Intel Core i9 CPU across 5 independent random seeds with disjoint training and evaluation seed ranges.

\subsection{Symbolic and Numerical Verification of the Architectural Hierarchy}
\label{sec:exp_hierarchy}
Table~\ref{tab:master_hierarchy} synthesizes the verified algebraic and geometric laws of Theorems~\ref{thm:master1_division}--\ref{thm:master4_optimization} across 15 representative regimes.

\begin{table}[htbp]
\centering
\caption{Architectural and algebraic hierarchy across finite groups, transformation monoids, infinite-state languages, and real-world 3D inertial streams (Theorems~\ref{thm:master1_division}--\ref{thm:master4_optimization}).}
\label{tab:master_hierarchy}
\footnotesize
\setlength{\tabcolsep}{3.0pt}
\renewcommand{\arraystretch}{0.90}
\begin{tabularx}{\textwidth}{@{}>{\raggedright\arraybackslash}p{2.45cm}>{\raggedright\arraybackslash}p{2.25cm}>{\raggedright\arraybackslash}X>{\raggedright\arraybackslash}X>{\raggedright\arraybackslash}p{2.70cm}@{}}
\toprule
\textbf{Regime / Target} & \textbf{Algebraic Structure} & \textbf{Depth-1 Threshold \& Law} & \textbf{Depth-2 Cascade / Resolution} & \textbf{Verified Margin \& Accuracy} \\
\midrule
Perfect $\SO(3)^m$: $A_5^r$ & Simple icosahedral ($60^r$) & $m_{\SO(3)} = r$, $R = 60^r$ & Depth 1 exact ($C_{\mathrm{PL}} = 1$) & $\Delta_0 = 0.3410$, $100.0\%$ ($10^4$ steps) \\
Perfect $\SU(2)^m$: $(2I)^r/Z$ & Central quotient ($120^r/|Z|$) & $m_{\SU(2)} = r$, $120r$ GELU width & $2I$ at Depth 2 in $\SO(3)^2$ ($R=62$) & $\gamma_{\GELU} = 20.0$, $100.0\%$ ($5\times 10^3$) \\
Symmetric group $S_5$ & $A_5 \rtimes_\varphi C_2$ ($|G|=120$) & Obstructed in $\SO(3\text{--}4)^m$ ($N_{\mathrm{alg}}=120$) & $d^*=5$ ($\SO(5)$) or $L=2$ ($2$ $\SO(3)$) & $L=2$ Dir-RMSNorm: $\Delta_0 = 0.3410$, $100.0\%$ \\
Heisenberg $\Heis(\FF_p)$ & $C_p^2 \rtimes_\varphi C_p$ ($|G|=p^3, p \ge 3$) & Obstructed in $\SO(3\text{--}4)^m$ ($N_{\mathrm{hom}}=2$ hom.; $p=3$ all) & $d_{\SO}^* = 2p$ ($\SO(2p)$) or $L=2$ ($3$ $\SO(3)$) & $p=3$: $L=2$ $\Delta_0 = 1.7321$, $\SO(6)$ $\Delta_0 = 1.0$ \\
Metacyclic Frobenius $F_{pk}$ & $C_p \rtimes_k C_k$ ($F_{20}, F_{21}$) & Even $k$: $d_{\OO}^*=k < d_{\SO}^*=k+1$; Odd: $2k$ & $F_{20}$: $\SO(5)$ or $L=2$ ($2$ $\SO(2)$ heads) & $F_{20}$ $\Delta_0 = 0.6325$; $F_{21}$ $\SO(6)$ $\Delta_0 = 0.7587$ \\
Affine Frobenius $\FF_3^2 \rtimes H$ & $G_{36}$ ($C_4$), $G_{72,\mathrm{cyc}}$ ($C_8$), $G_{72,\mathrm{quat}}$ ($Q_8$) & $[u^2,x]=e$ obstructs $\SO(3\text{--}4)^m$ & $d_{\mathrm{rep},\SO}^*(G_{36})=5$; $G_{72,\mathrm{cyc}}: 9$ vs.\ $G_{72,\mathrm{quat}}: 8$ ($\Sigma(72\times 3)$: $d_{\mathrm{div},\SO}^*\le 6$) & $\SO(5)/\SO(9)/\SO(8)$ ($\Sigma(72\times 3)\le \SO(6)$), $100.0\%$ \\
Monolithic Spinors $2T, 2O$ & $\SL_2(\FF_3)$ ($24$), $2O$ ($48$) & $d_{\OO}^*=d_{\SO}^*=4$ ($m_{\SU(2)}=m_{\SO(4)}=1$) & $m_{\SO(3)}=\infty$ at $L=1$; $2$ $\SO(3)$ at $L=2$ & $2$-cocycle $L=2$ ($R=14, 26$): $100.0\%$ \\
Burnside--Gorenstein $2$-Groups & $Q_{2^\ell}, \mathrm{SD}_{2^\ell}, M_{2^\ell}$ ($|G|=2^\ell$) & $m_{\SO(3)} = 2$, $|F^*_{\SO(3)}| = 2^{\ell+1}$ & $Q_{16}$ twisted $H^2_\varphi(C_2, C_8)$ at $L=2$ & $|F^*| = 16, 32$; $\Delta_0 = 0.918$, $100.0\%$ \\
Extra-Special $2$-Groups $E_{2^{1+2r}}^\pm$ & $D_4^{\circ r}$ ($\Arf=0$) vs.\ $D_4^{\circ(r-1)}\circ Q_8$ ($\Arf=1$) & $m_{\SO(3)} = r + \Arf(q)$, $|F^*| = 2^{\Arf(q)} 8^r$ & Nilpotent $E_{2^{1+2r}}^\pm \times A_{\mathrm{odd}}$: $r+\max(\Arf, R_{\mathrm{odd}})$ & $E_{32}^+$ ($m=2, |F^*|=64$) vs.\ $E_{32}^-$ ($m=3, 128$) \\
Cross-Head Mixing $\SO(3m)$ & $A_6, S_6$ ($m=2$), $\PSL_2(\FF_7)$ ($m=3$) & $[\mathfrak{so}(3)^m, \text{mix}] = \mathfrak{so}(3m)$ & $\PSL_2(\FF_7) \notin \mathbf{Sol}(A_5)$ tracked at $m=3$ & $\PSL_2(\FF_7)$ ($R=168$): $\Delta_0 = 0.7465$, $100.0\%$ \\
Partial-Reset $M_{\mathrm{part}}(G_1,\dots,G_c)$ & $c \ge 3$ subspaces in $\R^d$ & $\mu_{\mathrm{sing}}(c) = \frac{1}{c-1} < 1$ ($d < c d_{\mathrm{sub}}$ jump) & Differentiable Stiefel Polar $Q(S)=S(S^\top S)^{-1/2}$ & $c=3, \mu=0.495$: $54.0\%$ oblique vs.\ $100.0\%$ $Q(S)$ \\
Finite-Precision & $b$-bit $\mathrm{fl}_b$ on $\SO(d)^m$ & Unnorm.\ underflow $N_{\mathrm{underflow}}$ (\texttt{fp16}: 75) & Normalized $N_{\mathrm{fp}}(b)=\Theta(\Delta_0 2^b)$; $b=6$ lock & $b=6$ Voronoi/tent attractor: $0$ drift ($10^4$ steps) \\
Hybrid KV Cache on $F_2$ & Free group $F_2$ ($\beta_{F_2}(W)=2\cdot 3^W-1$) & Pure SSM $b(N) = \Theta(N)$ ($\Delta_{\mathrm{norm}} \le 4\sqrt{m}3^{-N/2m}$) & $(K+1)m b_{\mathrm{hyb}} \ge \frac{\log_2 3}{2} N - m(K+1)$ & $W=4, b=13$: $100.0\%$ ($10^3$ steps) vs.\ $K=0$ fail \\
Commutator Saddle $\gamma_c(G)$ & $\theta_a = \varepsilon \Omega_a \to 0$ on $D_{16}$ ($c_{\mathrm{nil}}=3$) & $\|\mathcal{P}_{\gamma_c}\nabla_\theta \mathcal{L}\| = \Theta(\varepsilon^{c-1})$, terminal $O(N\rho_{\mathrm{ab}}^{N-1})$ & Dense prefix $\mathcal{L}_{\mathrm{dense}}^{(\beta)}$ + shared $4+4$ factored & $6.2\%$--$10.9\%$ terminal saddle vs.\ $100.0\%$ dense factored \\
Real-World 3D Inertial (UCI HAR) & Tri-axial $(\bm{\omega}_t, \mathbf{a}_t) \in (\R^3)^3$ ($T=128$) & Diag-SSM / Odd $H(v)$ drift under 3D rotation & $L=2$ Dir-RMSNorm $\SO(3)$ + Factored Prefix & $95.69\% \pm 0.14\%$ ($95.47\%$ Fact.) vs.\ $94.22\%$ Diag / $91.00\%$ Unnorm \\
\bottomrule
\end{tabularx}
\end{table}

\subsection{Non-Abelian Quaternion Group \texorpdfstring{$Q_8$}{Q8} Benchmark}
\label{sec:exp_q8}
Table~\ref{tab:q8_benchmark} compares a commutative diagonal ablation against our non-commutative Lie-SSM on the order-8 quaternion group $Q_8 = \{\pm 1, \pm \mathbf{i}, \pm \mathbf{j}, \pm \mathbf{k}\}$ across $T \in \{1, 2, 4, 8, 16, 32\}$. While the commutative model fails to separate non-commuting quaternion compositions ($0.00\%$ at $T \le 2$ and $8.80\%$--$11.93\%$ across $T \in \{4, \dots, 32\}$, near uniform chance $1/8 = 12.50\%$; Proposition~\ref{prop:covering_vs_fibration}(b)), the Lie-SSM maintains $100.00\%$ accuracy across all horizons.

\begin{table}[htbp]
\centering
\caption{Quaternion group $Q_8$ word problem tracking: Commutative diagonal ablation vs.\ Non-commutative Lie model across sequence horizons $T \in \{1, \dots, 32\}$.}
\label{tab:q8_benchmark}
\small
\begin{tabular}{lrrrrrr}
\toprule
\textbf{Metric} & \textbf{$T=1$} & \textbf{$T=2$} & \textbf{$T=4$} & \textbf{$T=8$} & \textbf{$T=16$} & \textbf{$T=32$} \\
\midrule
\multicolumn{7}{l}{\textit{Commutative Diagonal Ablation Model (16 parameters)}} \\
Accuracy (\%) & 0.00 & 0.00 & 8.80 & 10.70 & 11.56 & 11.93 \\
Min Margin & -1.909 & -1.909 & -1.909 & -1.909 & -1.909 & -1.909 \\
Mean Margin & -1.576 & -1.549 & -1.282 & -1.238 & -1.204 & -1.195 \\
Std Margin & 0.341 & 0.341 & 0.844 & 0.911 & 0.935 & 0.946 \\
\midrule
\multicolumn{7}{l}{\textit{Lie-SSM ($\epsilon = 0.01$ isotropic perturbation)}} \\
Accuracy (\%) & 100.00 & 100.00 & 100.00 & 100.00 & 100.00 & 100.00 \\
Min Margin & 1.215 & 1.203 & 1.179 & 1.132 & 1.037 & 0.853 \\
Mean Margin & 1.215 & 1.209 & 1.197 & 1.173 & 1.126 & 1.032 \\
Std Margin & 0.000 & 0.006 & 0.013 & 0.027 & 0.054 & 0.108 \\
\bottomrule
\end{tabular}
\end{table}

\subsection{Head-to-Head Evaluation on the Non-Solvable Alternating Group \texorpdfstring{$A_5$}{A5}}
\label{sec:exp_a5}
On the non-solvable alternating group $A_5 \subset \SO(3)$ (order 60, chance $= 1.67\%$; Proposition~\ref{prop:klein}), Table~\ref{tab:a5_head_to_head} compares five parameter-matched architectures ($\approx 17\mathrm{k}$--$20\mathrm{k}$ parameters) trained on 5 seeds ($42$--$46$) and evaluated on 5 fresh disjoint test seeds ($700001$--$700005$; a 3-layer \texttt{Diag-SSM} control collapses to $4.67\% \pm 5.99\%$ at $T = 32$ and $1.47\% \pm 0.80\%$ at $T = 64$): (1) \textbf{Lie-$\SO(3)$-SSM} ($18{,}492$ params, $\exp(\Delta t_t A(u_t)) \in \SO(3)$); (2) \textbf{DeltaProduct ($n_h=2$)} ($20{,}524$ params, $H_1 H_2 \in \SO(3)$, Theorem~\ref{thm:master2_dimension_headcount}(c)(ii)); (3) \textbf{DeltaProduct ($n_h=1$)} ($17{,}452$ params, $\det(H) = -1$); (4) \textbf{RotRNN-$\SO(3)$} ($17{,}452$ params, 2D planar Givens); and (5) \textbf{Diag-SSM} ($19{,}852$ params).

\begin{table}[htbp]
\centering
\caption{Head-to-head empirical comparison on the non-solvable $A_5$ word problem (order 60, chance $= 1.67\%$). All evaluations use 5 fresh disjoint test seeds (700001--700005).}
\label{tab:a5_head_to_head}
\footnotesize
\setlength{\tabcolsep}{4.0pt}
\begin{tabular}{lrrrrr}
\toprule
\textbf{Configuration} & \textbf{Lie-$\SO(3)$-SSM} & \textbf{DeltaProduct ($n_h=2$)} & \textbf{DeltaProduct ($n_h=1$)} & \textbf{RotRNN-$\SO(3)$} & \textbf{Diag-SSM} \\
\midrule
Trainable Params & 18,492 & 20,524 & 17,452 & 17,452 & 19,852 \\
\midrule
\multicolumn{6}{l}{\textit{Part A1: Length Extrapolation (Last-Step Accuracy \% $\pm$ Std)}} \\
$T=16$ & $100.00 \pm 0.00$ & $100.00 \pm 0.00$ & $2.56 \pm 0.96$ & $3.20 \pm 1.41$ & $22.72 \pm 4.56$ \\
$T=32$ & $100.00 \pm 0.00$ & $100.00 \pm 0.00$ & $2.80 \pm 0.75$ & $1.84 \pm 0.46$ & $2.64 \pm 1.61$ \\
$T=64$ & $100.00 \pm 0.00$ & $100.00 \pm 0.00$ & $1.76 \pm 0.92$ & $2.00 \pm 0.69$ & $1.44 \pm 0.61$ \\
$T=128$ & $100.00 \pm 0.00$ & $100.00 \pm 0.00$ & $1.76 \pm 0.73$ & $2.16 \pm 0.78$ & $1.60 \pm 0.49$ \\
$T=256$ & $100.00 \pm 0.00$ & $99.60 \pm 0.57$ & $1.60 \pm 0.75$ & $2.32 \pm 1.21$ & $1.68 \pm 0.72$ \\
\midrule
\multicolumn{6}{l}{\textit{Part A2: Sample Efficiency at $T=32$ (Last-Step Accuracy \% $\pm$ Std)}} \\
$N=500$ & $9.84 \pm 6.12$ & $37.28 \pm 27.26$ & $1.92 \pm 0.91$ & $2.00 \pm 1.17$ & $1.28 \pm 0.77$ \\
$N=1000$ & $99.92 \pm 0.18$ & $100.00 \pm 0.00$ & $1.84 \pm 0.73$ & $1.52 \pm 0.52$ & $2.00 \pm 0.75$ \\
$N=2500$ & $100.00 \pm 0.00$ & $100.00 \pm 0.00$ & $1.76 \pm 0.96$ & $1.68 \pm 0.59$ & $5.52 \pm 2.09$ \\
$N=5000$ & $100.00 \pm 0.00$ & $100.00 \pm 0.00$ & $2.00 \pm 0.63$ & $1.52 \pm 0.72$ & $11.76 \pm 1.25$ \\
\midrule
\multicolumn{6}{l}{\textit{Part A3: Continuous Lie-Algebra Jitter Robustness at $T=32$ (Accuracy \% $\pm$ Std)}} \\
$\sigma=0.00$ & $100.00 \pm 0.00$ & $100.00 \pm 0.00$ & $2.80 \pm 0.75$ & $1.84 \pm 0.46$ & $2.64 \pm 1.61$ \\
$\sigma=0.05$ & $94.96 \pm 7.28$ & $99.76 \pm 0.54$ & $2.24 \pm 1.08$ & $1.76 \pm 0.96$ & $2.64 \pm 1.61$ \\
$\sigma=0.10$ & $48.72 \pm 13.14$ & $76.16 \pm 13.32$ & $1.84 \pm 0.73$ & $2.24 \pm 0.46$ & $2.64 \pm 1.61$ \\
\bottomrule
\end{tabular}
\end{table}

Table~\ref{tab:a5_head_to_head} confirms three results: (i) \texttt{Lie-SO(3)-SSM} sustains $100.00\% \pm 0.00\%$ across $T \in \{16, \dots, 256\}$ with $10\%$ fewer parameters than \texttt{DeltaProduct} ($n_h = 2$, $99.60\% \pm 0.57\%$ at $T=256$), whereas \texttt{DeltaProduct} ($n_h = 1$) collapses to $1.60\%$ due to per-step orientation reversal ($\det H(v) = -1$, alternating $\det = (-1)^t$) and rank-1 reflection restriction ($H(v) = I - 2vv^\top$ fixes a $(d-1)$-dimensional hyperplane per step, whereas generating $\SO(d)$ requires $n_h = 2\lfloor d/2\rfloor$ reflections per step by Theorem~\ref{thm:master2_dimension_headcount}(c)(ii)); (ii) \texttt{RotRNN-SO(3)} ($2.32\%$) and \texttt{Diag-SSM} ($1.68\%$ at $T=256$) fail to generalize ($p \le 1.85 \times 10^{-9}$, $d \ge 106.68$ at $T=64$); and (iii) while both \texttt{Lie-SO(3)-SSM} ($99.92\%$) and \texttt{DeltaProduct} ($n_h=2$, $100.00\%$) converge at $N \ge 1000$, \texttt{DeltaProduct} ($n_h=2$) achieves higher accuracy at $N=500$ ($37.28\%$ vs.\ $9.84\%$) and $\sigma = 0.10$ ($76.16\%$ vs.\ $48.72\%$).

\subsection{Diagnosing and Resolving the \texorpdfstring{$S_5$}{S5} Random-Initialization Bottleneck}
\label{sec:exp_s5}
By Theorem~\ref{thm:master1_division}(b)(iv), $S_5$ (order 120, chance $= 0.83\%$) violates $[u^{30}, v^{30}] = e$ and is obstructed at Depth 1 in $\SO(3\text{--}4)^m$. In an initial 4-way ablation (Table~\ref{tab:s5_legacy_ablation}), a 4-factor reflection-product parameterization (\texttt{Multilinear-SO(6)}, $R_\sigma = R_{s_0} R_{s_1} R_{s_2} R_{s_3}, R_{\sigma^{-1}} = R_{s_3} R_{s_2} R_{s_1} R_{s_0} \in \SO(6)$) stalled at $1.30\%$--$1.70\%$ from random initialization due to three bottlenecks (Theorems~\ref{thm:master3_cascades_gating}--\ref{thm:master4_optimization}): (i) at $R_{s_\ell} = \exp(\varepsilon \Delta_\ell)$, $R_{\sigma^{-1}} = R_\sigma + O(\varepsilon^2) \ne R_\sigma^\top$; (ii) 4-factor coupling $\prod_{\ell=0}^3 \exp(\Delta_\ell)$; and (iii) $1/60$ coset-centroid dilution in the unfactored 120-way readout.

\begin{table}[htbp]
\centering
\caption{Diagnostic 4-way initialization-prior ablation on $S_5$ (order 120, chance $= 0.83\%$) under the 4-factor reflection-product $\SO(6)$ parameterization (\texttt{Multilinear-SO(6)}) with dense per-step supervision and an unfactored 120-way readout.}
\label{tab:s5_legacy_ablation}
\footnotesize
\setlength{\tabcolsep}{3.5pt}
\begin{tabular}{@{}lrrrrr@{}}
\toprule
\textbf{Parameterization \& Initialization} & \textbf{Params} & \textbf{$T=16$ (\%)} & \textbf{$T=32$ (\%)} & \textbf{$T=64$ (\%)} & \textbf{$T=128$ (\%)} \\
\midrule
Multilin.-$\SO(6)$ (Rand.\ Init, Unconstr.) & 9,664 & $1.30 \pm 0.76$ & $1.30 \pm 0.67$ & $0.70 \pm 0.57$ & $0.60 \pm 0.42$ \\
Multilin.-$\SO(6)$ (Rand.\ Init, Inv.\ Reg.) & 9,664 & $1.70 \pm 0.45$ & $0.90 \pm 0.55$ & $0.70 \pm 0.76$ & $0.60 \pm 0.55$ \\
Multilin.-$\SO(6)$ (Coxeter Init, Frozen) & 9,424 & $54.00 \pm 17.36$ & $8.30 \pm 3.88$ & $1.70 \pm 0.57$ & $0.90 \pm 0.74$ \\
Multilin.-$\SO(6)$ (Coxeter Init, Tuned) & 9,664 & $100.00 \pm 0.00$ & $100.00 \pm 0.00$ & $99.80 \pm 0.45$ & $98.30 \pm 1.92$ \\
\midrule
DeltaProd.-$\SO(6)$ ($n_h=1$, Den.\ Unf.) & 9,496 & $7.60 \pm 1.47$ & $1.80 \pm 1.44$ & $2.50 \pm 0.79$ & $1.30 \pm 0.57$ \\
DeltaProd.-$\SO(6)$ ($n_h=2$, Den.\ Unf.) & 9,568 & $63.40 \pm 8.33$ & $28.20 \pm 10.03$ & $7.10 \pm 2.84$ & $2.30 \pm 1.15$ \\
DeltaProd.-$\SO(6)$ ($n_h=3$, Den.\ Unf.) & 9,640 & $67.50 \pm 11.45$ & $29.40 \pm 10.64$ & $7.00 \pm 3.89$ & $2.10 \pm 0.55$ \\
\bottomrule
\end{tabular}
\end{table}

To test learning $S_5$ from \textbf{100\% pure random Gaussian initialization} ($N = 2000, T_{\mathrm{train}} = 32$, batch size $64$, $40$ epochs, 5 training seeds $42$--$46$ and 5 disjoint test seeds $800001$--$800005$), Table~\ref{tab:s5_random_init_closure} evaluates $19$ configurations across four groups: (I) \textbf{Depth-1 $\SO(3)^{16}$ Obstruction} (\texttt{Lie-SO(3)-1L}, $14{,}302$ params) and \textbf{4-Factor Product} (\texttt{Multilinear-SO(6)}, $9{,}664$ params); (II) \textbf{$2 \times 2$ ($\text{Supervision} \times \text{Readout}$) Factorial on \texttt{Lie-SO(6)-1L}} ($15$ upper-triangular $\mathfrak{so}(6)$ parameters per head across Terminal vs.\ Dense Prefix and Unfactored $28{,}856$ vs.\ Factored $21{,}310$ params); (III) \textbf{Householder and Block-Diagonal Static-Orthogonal Controls on $\SO(6)^{16}$}: while generic rotations in $\SO(6)$ require $2\lfloor 6/2 \rfloor = 6$ reflections and $S_5 \hookrightarrow \OO(4) \hookrightarrow \SO(5) \hookrightarrow \SO(6)$ (Theorem~\ref{thm:master2_dimension_headcount}(a)(iii)) acts on two 2D planes (rank $2\lfloor 5/2 \rfloor = 4$), we evaluate isometric \textbf{\texttt{Orthogonal DeltaProduct-SO(6)}} ($\beta = 2$) and gated \textbf{\texttt{DeltaProduct-SO(6)}} ($\beta_{u,m,i} = 2\sigma(b_{u,m,i}) \in (0, 2)$, $b_0 = 2.0$)~\citep{grazzi2025unlocking,deltaproduct} at both $n_h = 4$ and $n_h = 6 = 2\lfloor 6/2 \rfloor$, plus a matched \textbf{\texttt{Block-Diag Radial expRNN-Inspired Control}} (\textbf{\texttt{expRNN-RadBlk}}, evolving $z_{t,m} = W_{\mathrm{orth},m} h_{t-1,m} + U_m x_t$ and $h_{t,m} = \mathrm{ReLU}(\|z_{t,m}\|_2 + b_m) z_{t,m}/\|z_{t,m}\|_2$ with static $W_{\mathrm{orth},m} = \exp(A_m - A_m^\top) \in \SO(6)$ and static scalar bias $b_m \in \R$ across $M=16$ heads [$d_{\mathrm{state}}=96$], which contrasts token-independent $W_{\mathrm{orth},m}$ plus additive input $U_m x_t$ against multiplicative token-selected $\exp(\Delta t_{t,m} A_m(u_t))$ while differing in three respects from the canonical dense expRNN $h_t = \sigma(R h_{t-1} + U x_t + b)$ of Lezcano-Casado \& Mart\'inez-Rubio~\citep{lezcano2019cheap}: (i) $M=16$ block-diagonal $\SO(6)$ recurrent blocks instead of a single dense $d_{\mathrm{hid}} \times d_{\mathrm{hid}}$ orthogonal matrix $R$, (ii) a shared per-head scalar bias $b_m \in \R$ instead of a per-coordinate bias vector $b_i \in \R^{d_{\mathrm{hid}}}$, and (iii) 6D blockwise radial norm gating $z_{t,m} \mapsto \mathrm{ReLU}(\|z_{t,m}\|_2 + b_m) z_{t,m}/\|z_{t,m}\|_2$ on the additive pre-activation $z_{t,m} = W_{\mathrm{orth},m} h_{t-1,m} + U_m x_t$ instead of coordinate-wise real $\mathrm{modReLU}$ $\operatorname{sign}(z_i)\mathrm{ReLU}(|z_i|+b_i)$); and (IV) \textbf{$2 \times 2$ Factorial and Normalization Dichotomy on Depth-2 \texttt{Lie-SO(3)-Cascade}} ($M_1=8, M_2=16$, differentiable router $p_{t-1} = \mathrm{Softmax}(W_{\mathrm{route}} \hat{h}^{(1)}_{t-1} / \tau_{\mathrm{route}})$ at $\tau_{\mathrm{route}} = 0.30$, unnormalized $\rho_1 = 0.8$ vs.\ Directional RMSNorm).

\begin{table}[htbp]
\centering
\caption{Controlled 19-model random-initialization evaluation on the non-solvable symmetric group $S_5$ (order 120, chance $= 0.83\%$) trained from \textbf{100\% pure random Gaussian initialization} (without analytic Coxeter priors, standardized $N=2000, T_{\mathrm{train}}=32, 40$ epochs) across 5 training seeds (42--46) and 5 disjoint test seeds (800001--800005), including complete $2 \times 2$ ($\text{Supervision} \times \text{Readout}$) factorials and \texttt{DeltaProduct} ($n_h \in \{4, 6\}$) / block-diagonal radial \texttt{expRNN-RadBlk} control comparisons. Both Last-Step Accuracy ($\%$) and All-Prefix-Step Accuracy ($\%$) are reported.}
\label{tab:s5_random_init_closure}
\footnotesize
\medmuskip=0mu\thinmuskip=0mu\thickmuskip=0mu
\setlength{\tabcolsep}{2.3pt}
\renewcommand{\arraystretch}{0.90}
\begin{tabular}{@{}lrrrrrr@{}}
\toprule
\textbf{Model} & \textbf{Par.} & \textbf{$\|\Delta\theta\|_2$} & \textbf{$T=16$} & \textbf{$T=32$} & \textbf{$T=64$} & \textbf{$T=128$} \\
\midrule
\multicolumn{7}{@{}l@{}}{\textit{Group I: Depth-1 $\SO(3)^{16}$ Obstruction \& 4-Factor Multilinear Baseline}} \\
$\SO(3)$ (Den.+Fac.) & 14,302 & 5.849 & $85.40 \pm 8.93$ ($96.14$) & $52.80 \pm 15.87$ ($81.73$) & $9.70 \pm 7.10$ ($53.22$) & $2.70 \pm 2.56$ ($28.40$) \\
Multilin.\ (4-Fac.) & 9,664 & 2.185 & $9.60 \pm 3.07$ ($42.57$) & $2.20 \pm 1.15$ ($23.67$) & $1.40 \pm 1.52$ ($13.03$) & $0.90 \pm 1.02$ ($6.83$) \\
\midrule
\multicolumn{7}{@{}l@{}}{\textit{Group II: $2 \times 2$ ($\text{Supervision} \times \text{Readout}$) Factorial on Depth-1 \texttt{Lie-$\SO(6)$-1L}}} \\
$\SO(6)$ (Term., Unf.) & 28,856 & 3.302 & $1.20 \pm 0.84$ ($0.78$) & $0.60 \pm 0.42$ ($0.84$) & $0.60 \pm 0.65$ ($0.83$) & $1.40 \pm 0.65$ ($0.81$) \\
$\SO(6)$ (Term., Fac.) & 21,310 & 3.659 & $0.70 \pm 0.57$ ($0.79$) & $0.80 \pm 0.27$ ($0.79$) & $0.70 \pm 0.76$ ($0.77$) & $0.50 \pm 0.87$ ($0.85$) \\
$\SO(6)$ (Den., Unf.) & 28,856 & 10.289 & $99.90 \pm 0.22$ ($99.98$) & $97.80 \pm 3.19$ ($99.57$) & $92.70 \pm 11.08$ ($97.49$) & $88.40 \pm 19.92$ ($94.07$) \\
\textbf{$\SO(6)$ (Den.+Fac.)} & \textbf{21,310} & \textbf{11.198} & $\mathbf{100.00 \pm 0.00}$ ($\mathbf{100.00}$) & $\mathbf{100.00 \pm 0.00}$ ($\mathbf{100.00}$) & $\mathbf{99.90 \pm 0.22}$ ($\mathbf{99.99}$) & $\mathbf{99.90 \pm 0.22}$ ($\mathbf{99.94}$) \\
\midrule
\multicolumn{7}{@{}l@{}}{\textit{Group III: $\SO(6)^{16}$ Householder (\texttt{DeltaProduct}, DP $n_h \in \{4, 6\}$) \& Block-Diag Radial (\texttt{expRNN-RadBlk})}} \\
DP-$4$ ($\beta{=}2$, Unf.) & 29,432 & 9.454 & $99.90 \pm 0.22$ ($99.95$) & $96.60 \pm 6.80$ ($99.08$) & $85.40 \pm 29.93$ ($94.75$) & $80.30 \pm 41.03$ ($88.81$) \\
DP-$4$ ($\beta{=}2$, Fac.) & 21,886 & 9.907 & $100.00 \pm 0.00$ ($100.00$) & $98.80 \pm 2.41$ ($99.82$) & $95.00 \pm 10.62$ ($98.45$) & $89.50 \pm 21.27$ ($95.21$) \\
\textbf{DP-$6$ ($\beta{=}2$, Fac.)} & \textbf{22,654} & \textbf{9.412} & $\mathbf{100.00 \pm 0.00}$ ($\mathbf{100.00}$) & $\mathbf{100.00 \pm 0.00}$ ($\mathbf{100.00}$) & $\mathbf{100.00 \pm 0.00}$ ($\mathbf{100.00}$) & $\mathbf{99.90 \pm 0.22}$ ($\mathbf{99.96}$) \\
DP-$4$ (Gat., Unf.) & 29,688 & 22.100 & $100.00 \pm 0.00$ ($100.00$) & $99.90 \pm 0.22$ ($99.99$) & $100.00 \pm 0.00$ ($99.99$) & $74.80 \pm 11.40$ ($96.07$) \\
DP-$4$ (Gat., Fac.) & 22,142 & 25.220 & $100.00 \pm 0.00$ ($100.00$) & $100.00 \pm 0.00$ ($100.00$) & $99.80 \pm 0.45$ ($99.98$) & $67.00 \pm 28.02$ ($93.74$) \\
DP-$6$ (Gat., Fac.) & 23,038 & 30.821 & $100.00 \pm 0.00$ ($100.00$) & $100.00 \pm 0.00$ ($100.00$) & $100.00 \pm 0.00$ ($100.00$) & $63.10 \pm 12.64$ ($94.64$) \\
expRNN-RadBlk (Unf.) & 28,536 & 1.589 & $3.10 \pm 1.43$ ($56.79$) & $0.90 \pm 0.22$ ($29.10$) & $1.20 \pm 0.91$ ($14.97$) & $1.00 \pm 0.50$ ($7.86$) \\
expRNN-RadBlk (Fac.) & 20,990 & 1.338 & $2.90 \pm 1.82$ ($32.41$) & $0.80 \pm 0.45$ ($16.87$) & $0.70 \pm 0.76$ ($9.01$) & $0.60 \pm 0.42$ ($4.87$) \\
\midrule
\multicolumn{7}{@{}l@{}}{\textit{Group IV: $2 \times 2$ ($\text{Supervision} \times \text{Readout}$) Factorial \& Norm.\ on Depth-2 \texttt{Lie-$\SO(3)$-Cascade}}} \\
Casc.\ (Unnorm.) & 14,542 & 12.712 & $97.00 \pm 3.10$ ($99.39$) & $51.00 \pm 14.60$ ($90.74$) & $2.20 \pm 0.57$ ($51.01$) & $1.30 \pm 1.04$ ($26.01$) \\
Casc.\ (Term., Unf.) & 25,354 & 3.022 & $1.00 \pm 0.71$ ($1.12$) & $1.30 \pm 1.60$ ($1.08$) & $1.20 \pm 0.57$ ($1.05$) & $1.20 \pm 0.57$ ($0.98$) \\
Casc.\ (Term., Fac.) & 14,542 & 3.712 & $0.70 \pm 0.45$ ($0.90$) & $0.60 \pm 0.55$ ($0.84$) & $1.00 \pm 0.35$ ($0.84$) & $0.80 \pm 0.57$ ($0.84$) \\
Casc.\ (Den., Unf.) & 25,354 & 8.904 & $94.70 \pm 6.18$ ($98.82$) & $73.50 \pm 17.98$ ($91.79$) & $38.60 \pm 35.90$ ($73.13$) & $24.40 \pm 41.55$ ($50.99$) \\
\textbf{Casc.\ (Den.+Fac.)} & \textbf{14,542} & \textbf{10.117} & $\mathbf{100.00 \pm 0.00}$ ($\mathbf{100.00}$) & $\mathbf{100.00 \pm 0.00}$ ($\mathbf{99.98}$) & $\mathbf{99.60 \pm 0.55}$ ($\mathbf{99.87}$) & $\mathbf{99.40 \pm 0.65}$ ($\mathbf{99.61}$) \\
\bottomrule
\end{tabular}
\end{table}

Table~\ref{tab:s5_random_init_closure} and Figure~\ref{fig:horizon_s5_d16}(a,c) establish four findings: (1) \texttt{Lie-SO(3)-1L} decays from $85.40\%$ ($T=16$) to $2.70\% \pm 2.56\%$ at $T=128$ ($p = 9.31 \times 10^{-8}$, Eq.~\eqref{eq:exp30_law}); (2) in the $2 \times 2$ factorials, terminal-only models remain at chance ($0.50\%$--$1.40\%$), dense unfactored models reach $88.40\% \pm 19.92\%$ (\texttt{Lie-SO(6)-1L}) and $24.40\% \pm 41.55\%$ (\texttt{Cascade}), and combining dense prefix supervision with factored $A_5 \rtimes C_2$ readout cuts parameters by $26.1\%$--$42.6\%$ while raising $T=128$ accuracy to $99.90\% \pm 0.22\%$ (\texttt{Lie-SO(6)-1L}) and $99.40\% \pm 0.65\%$ (\texttt{Cascade}, $+75.00\text{ pp}$); (3) in Group~III, \texttt{Orthogonal DeltaProduct-SO(6)} ($\beta = 2$) reaches $89.50\% \pm 21.27\%$ at $n_h = 4$ and $99.90\% \pm 0.22\%$ ($99.96\%$ all-step) at the full Cartan--Dieudonn\'e threshold $n_h = 6 = 2\lfloor 6/2 \rfloor$, whereas gated $\beta \in (0, 2)$~\citep{grazzi2025unlocking,deltaproduct} degrades at $T=128$ to $74.80\%/67.00\%$ ($n_h=4$) and $63.10\% \pm 12.64\%$ ($n_h=6$, $p = 0.0030$) due to $|1 - \beta_{u,m,i}| < 1$ contraction over $n_h T \in \{512, 768\}$ stages, and the block-diagonal radial control \texttt{expRNN-RadBlk} ($z_{t,m} = W_{\mathrm{orth},m} h_{t-1,m} + U_m x_t$, $h_{t,m} = \mathrm{ReLU}(\|z_{t,m}\|_2 + b_m) z_{t,m}/\|z_{t,m}\|_2$ with token-independent $W_{\mathrm{orth},m} = \exp(A_m - A_m^\top) \in \SO(6)$ and static scalar bias $b_m \in \R$) stalls at $0.60\%$--$1.00\%$ (strictly attributable to \texttt{expRNN-RadBlk} rather than to dense coordinate-wise $\mathrm{modReLU}$ expRNNs~\citep{lezcano2019cheap}); and (4) omitting Directional RMSNorm collapses the cascade from $97.00\%$ ($T=16$) to $1.30\%$ at $T=128$ ($p = 1.37 \times 10^{-8}$, Theorem~\ref{thm:master3_cascades_gating}(b)).

\subsection{Nilpotency-Class-3 Dihedral Group \texorpdfstring{$D_{16}$}{D16} Commutator Saddle-Escape Closure}
\label{sec:exp_d16}
On the nilpotency-class-3 dihedral group $D_{16} = \langle r, s \mid r^8 = s^2 = e, s r s = r^{-1} \rangle$ (order 16, $c_{\mathrm{nil}} = 3$: $\gamma_2 \cong C_4, \gamma_3 \cong C_2$; $N=1200, T_{\mathrm{train}}=24$, $20$ epochs, 5 training seeds $42$--$46$ and 5 disjoint test seeds $900001$--$900005$), Table~\ref{tab:d16_saddle_closure} evaluates two random walks: \textbf{Panel A ($\Sigma = \{r, s, r^{-1}\}$)}, where parity inversion restricts even-$T$ support to $8$ elements ($1/8 = 12.50\%$ baseline; on $D_{16}^{\mathrm{ab}} \cong C_2 \times C_2$, the two single-generator reflection characters $\chi_{(1,0)}, \chi_{(0,1)}$ have $|\hat{\mu}_{\mathrm{ab}}(\chi)| = 1/3$, whereas the bipartite parity character $\chi_{(1,1)}(\pi_{\mathrm{ab}}(a)) = -1 = \hat{\mu}_{\mathrm{ab}}(\chi_{(1,1)})$ is constant across $a \in \Sigma$ and has identically zero token-centered coefficient $\chi_{(1,1)}(\pi_{\mathrm{ab}}(a)) - \hat{\mu}_{\mathrm{ab}}(\chi_{(1,1)}) = 0$, leaving active token-discriminating spectral radius $\rho_{\mathrm{ab}} = 1/3$); and \textbf{Panel B (Lazy Walk $\Sigma_{\mathrm{lazy}} = \{e, r, s, r^{-1}\}$)}, where all $16$ elements are reachable ($1/16 = 6.25\%$ baseline, $|\hat{\mu}_{\mathrm{ab}}(\chi)| \le 1/2 = \rho_{\mathrm{ab}}$ for all $\chi \ne \mathbf{1}$). We compare \textbf{\texttt{Diag-SSM (Affine Commutative)}} ($17{,}688$ params), \textbf{\texttt{DeltaProduct-SO(3) ($n_h=1$)}} ($9{,}396$ params, $M = 12$), and the $2 \times 2$ ($\text{Terminal/Dense} \times \text{Unfactored } 10{,}176 / \text{Shared-Core } 4{+}4\text{ Factored } 8{,}632$) factorial on \textbf{\texttt{Lie-SO(3)-1L}} ($M = 12$).

\begin{table}[htbp]
\centering
\caption{Nilpotency-class-3 dihedral group $D_{16}$ (order 16, $c_{\mathrm{nil}} = 3$) commutator saddle-escape and $2 \times 2$ ($\text{Supervision} \times \text{Readout}$) factorial benchmark across 5 training seeds (42--46) and 5 disjoint test seeds (900001--900005), evaluating both the standard bipartite generator set $\Sigma = \{r, s, r^{-1}\}$ (even-horizon reachable support $= 8$, uniform reachable baseline $= 12.50\%$) and the non-bipartite lazy walk $\Sigma_{\mathrm{lazy}} = \{e, r, s, r^{-1}\}$ (all 16 states reachable, uniform baseline $= 6.25\%$).}
\label{tab:d16_saddle_closure}
\footnotesize
\medmuskip=0mu\thinmuskip=0mu\thickmuskip=0mu
\setlength{\tabcolsep}{3.2pt}
\renewcommand{\arraystretch}{0.90}
\begin{tabular}{@{}lrrrrrr@{}}
\toprule
\textbf{$D_{16}$ Model} & \textbf{Par.} & \textbf{$\|\Delta\theta\|_2$} & \textbf{$T=16$} & \textbf{$T=32$} & \textbf{$T=64$} & \textbf{$T=128$} \\
\midrule
\multicolumn{7}{@{}l@{}}{\textit{Panel A: Standard Walk $\Sigma = \{r, s, r^{-1}\}$ (Even-$T$ Support $= 8$, Prior $= 12.50\%$, Active $\rho_{\mathrm{ab}} = 1/3$)}} \\
Diag-SSM (Den.) & 17,688 & 3.266 & $99.60 \pm 0.42$ ($99.92$) & $89.60 \pm 1.78$ ($98.30$) & $29.50 \pm 4.77$ ($76.39$) & $4.30 \pm 5.25$ ($44.34$) \\
DP ($n_h{=}1$) & 9,396 & 3.197 & $26.30 \pm 4.18$ ($42.04$) & $25.10 \pm 2.88$ ($34.26$) & $25.40 \pm 2.10$ ($30.18$) & $23.90 \pm 1.75$ ($27.08$) \\
Lie (Term., Unf.) & 10,176 & 4.825 & $15.70 \pm 8.88$ ($8.49$) & $16.60 \pm 6.46$ ($7.90$) & $14.80 \pm 3.99$ ($7.54$) & $10.90 \pm 6.33$ ($6.58$) \\
Lie (Term., $4{+}4$) & 8,632 & 3.769 & $15.30 \pm 6.02$ ($8.20$) & $16.90 \pm 6.17$ ($8.55$) & $15.50 \pm 4.72$ ($7.83$) & $15.60 \pm 7.30$ ($7.66$) \\
\textbf{Lie (Den., Unf.)} & \textbf{10,176} & \textbf{3.996} & $\mathbf{100.00 \pm 0.00}$ ($\mathbf{100.00}$) & $\mathbf{100.00 \pm 0.00}$ ($\mathbf{100.00}$) & $\mathbf{100.00 \pm 0.00}$ ($\mathbf{100.00}$) & $\mathbf{100.00 \pm 0.00}$ ($\mathbf{100.00}$) \\
\textbf{Lie (Den.+$4{+}4$)} & \textbf{8,632} & \textbf{4.263} & $\mathbf{100.00 \pm 0.00}$ ($\mathbf{100.00}$) & $\mathbf{100.00 \pm 0.00}$ ($\mathbf{100.00}$) & $\mathbf{100.00 \pm 0.00}$ ($\mathbf{100.00}$) & $\mathbf{100.00 \pm 0.00}$ ($\mathbf{100.00}$) \\
\midrule
\multicolumn{7}{@{}l@{}}{\textit{Panel B: Lazy Walk $\Sigma_{\mathrm{lazy}} = \{e, r, s, r^{-1}\}$ ($16$ Reachable Classes, Prior $= 6.25\%$, $\rho_{\mathrm{ab}} = 1/2$)}} \\
Diag-SSM (Den.) & 17,688 & 3.040 & $99.90 \pm 0.22$ ($99.97$) & $87.70 \pm 6.29$ ($97.91$) & $22.60 \pm 13.54$ ($72.44$) & $9.80 \pm 5.16$ ($43.27$) \\
Lie (Term., Unf.) & 10,176 & 4.936 & $6.90 \pm 1.39$ ($6.27$) & $5.60 \pm 1.75$ ($6.61$) & $6.40 \pm 1.43$ ($6.41$) & $6.20 \pm 1.72$ ($6.34$) \\
Lie (Term., $4{+}4$) & 8,632 & 5.007 & $13.60 \pm 12.71$ ($10.20$) & $10.40 \pm 5.91$ ($9.53$) & $10.90 \pm 9.05$ ($8.88$) & $8.10 \pm 7.17$ ($7.64$) \\
\textbf{Lie (Den., Unf.)} & \textbf{10,176} & \textbf{3.530} & $\mathbf{100.00 \pm 0.00}$ ($\mathbf{100.00}$) & $\mathbf{100.00 \pm 0.00}$ ($\mathbf{100.00}$) & $\mathbf{100.00 \pm 0.00}$ ($\mathbf{100.00}$) & $\mathbf{100.00 \pm 0.00}$ ($\mathbf{100.00}$) \\
\textbf{Lie (Den.+$4{+}4$)} & \textbf{8,632} & \textbf{3.860} & $\mathbf{100.00 \pm 0.00}$ ($\mathbf{100.00}$) & $\mathbf{100.00 \pm 0.00}$ ($\mathbf{100.00}$) & $\mathbf{100.00 \pm 0.00}$ ($\mathbf{100.00}$) & $\mathbf{99.90 \pm 0.22}$ ($\mathbf{100.00}$) \\
\bottomrule
\end{tabular}
\end{table}

Table~\ref{tab:d16_saddle_closure} and Figure~\ref{fig:horizon_s5_d16}(b) show that: (1) terminal-only training stalls near the $12.50\%$ bipartite reachable prior ($10.90\%/15.60\%$ at $T=128$, Panel~A) and $6.25\%$ uniform prior ($6.20\%/8.10\%$, Panel~B) due to $(1/3)^{24}$ and $(1/2)^{24}$ Fourier--Markov decay; (2) because $|[D_{16}, D_{16}]| = 4$ induces only $1/4$ dilution, dense prefix supervision reaches $100.00\% \pm 0.00\%$ with both the unfactored ($10{,}176$ params) and shared-core $4+4$ factored ($8{,}632$ params, $99.90\% \pm 0.22\%$ at $T=128$ lazy) readouts; and (3) \texttt{Diag-SSM} collapses to $4.30\%/9.80\%$ at $T=128$, while \texttt{DeltaProduct-SO(3)} ($n_h=1$) saturates near $23.90\% \approx 1/|C_4| = 25\%$.

\begin{figure*}[t!]
\centering
\includegraphics[width=0.96\textwidth]{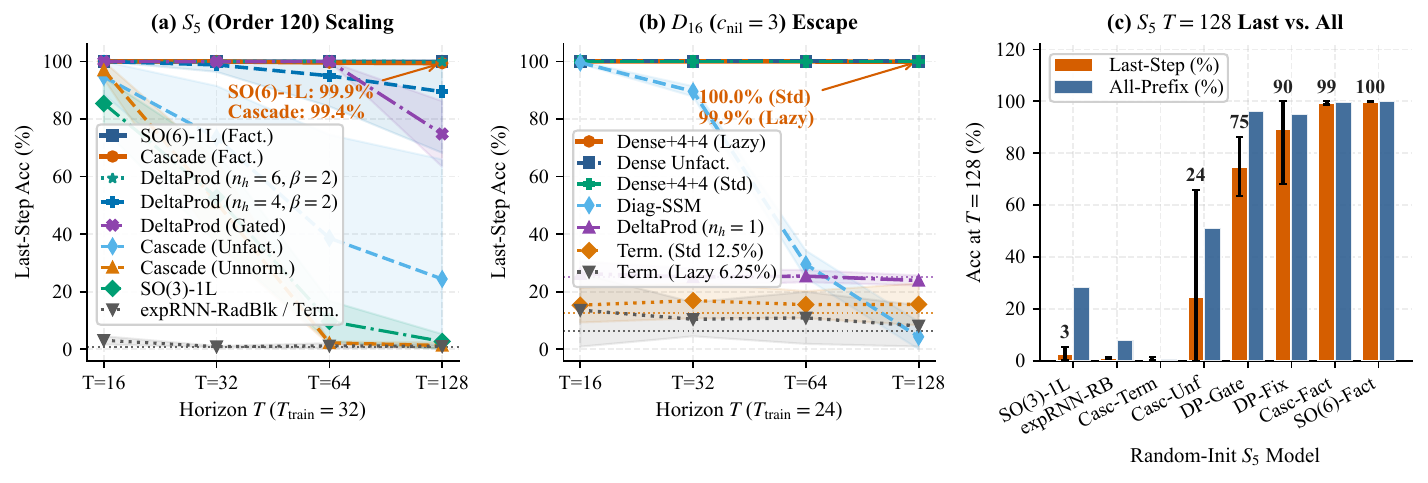}
\caption{Random-initialization horizon scaling, $2 \times 2$ ($\text{Supervision} \times \text{Readout}$) factorial attribution, and commutator saddle escape across $S_5$ (order 120) and $D_{16}$ (order 16, $c_{\mathrm{nil}} = 3$) over 5 seeds. \textbf{(a)} $S_5$ across $T \in \{16, 32, 64, 128\}$. \textbf{(b)} $D_{16}$ across $T \in \{16, 32, 64, 128\}$ on standard ($\Sigma$) and lazy ($\Sigma_{\mathrm{lazy}}$) walks. \textbf{(c)} $T=128$ accuracy and generator displacement $\|\Delta\theta_{\mathrm{gen}}\|_2$ on $S_5$ (see Table~\ref{tab:s5_random_init_closure} for the full-rank $n_h = 6 = 2\lfloor 6/2 \rfloor$ \texttt{DeltaProduct-SO(6)} controls: $99.90\% \pm 0.22\%$ fixed $\beta=2$ and $63.10\% \pm 12.64\%$ gated $\beta \in (0,2)$).}
\label{fig:horizon_s5_d16}
\end{figure*}

\subsection{Formal Language Counter Depth (Dyck-2) and Controlled Synthetic 6-DoF Inertial Pose Integration (\texorpdfstring{$\SE_2(3)$}{SE2(3)})}
\label{sec:exp_dyck2_se3}
Theorem~\ref{thm:master3_cascades_gating}(b)(iii) separates polynomial-growth abelian counters $\ZZ^r$ from exponential-growth free-group reduced-word balls $F_2 = \langle a, b \rangle \subset \SO(3)$ over $\{a, a^{-1}, b, b^{-1}\}$ ($\beta_{F_2}(W) = 2 \cdot 3^W - 1$, where compact $\SO(3)$ representations experience cap-packing decay $\Delta_{\mathrm{norm}} \le 4\sqrt{m} 3^{-W/(2m)}$ without non-compact counters or hybrid KV-cache checkpoints) and bounded-depth binary Dyck-2 LIFO stacks ($\sum_{d=0}^D 2^d = 2^{D+1}-1$ distinct stack configurations up to nesting depth $D$, i.e., $31$ states at $D=4$ up to $131{,}071$ states at $D=16$). Table~\ref{tab:dyck2_se3_combined} and Figure~\ref{fig:a5_se3_scan}(b) validate this duality across 5 seeds on bounded-depth Dyck-2 stack tracking (Part~A: $T=64$, $D \in \{4, 8, 12, 16\}$, where each input sequence presents typed opening brackets $\{0, 1\}$ and a single generic closing token [token ID $2$] and models predict the matching bracket type $\{0, 1\}$ at every closing position with opening positions masked out) and a controlled synthetic 6-DoF strapdown inertial odometry simulation (Part~B).

\begin{table}[htbp]
\centering
\caption{Geometric representation trade-off between compact $\SO(3)$ rotations, scalar counters, and non-compact extended-pose manifolds $\SE_2(3) / \SE(3)$. Part A: Dyck-2 context-free language accuracy across nesting depths $D \in \{4, 8, 12, 16\}$. Part B: Controlled synthetic 6-DoF rigid-body strapdown inertial odometry simulation benchmark under continuous $20\times$-substepped high-rate aerobatic motion ($50\text{ Hz}, \Delta t = 0.02\text{ s}, T=256$, 5 seeds) across matched parameter tiers ($0, 679, \sim 50\mathrm{K}, 81.9\mathrm{K}$).}
\label{tab:dyck2_se3_combined}
\footnotesize
\medmuskip=0mu\thinmuskip=0mu\thickmuskip=1mu
\setlength{\tabcolsep}{3.5pt}
\renewcommand{\arraystretch}{0.90}
\begin{tabular}{@{}lrrrrr@{}}
\toprule
\multicolumn{6}{@{}l@{}}{\textbf{Part A: Dyck-2 Context-Free Language Tracking Across Nesting Depths $D \in \{4, 8, 12, 16\}$}} \\
\midrule
\textbf{Model} & \textbf{Params} & \textbf{$D=4$ (\%)} & \textbf{$D=8$ (\%)} & \textbf{$D=12$ (\%)} & \textbf{$D=16$ (\%)} \\
\midrule
Diagonal-1L (Non-Compact Counter) & 5,570 & $98.31 \pm 0.11$ & $95.39 \pm 0.33$ & $93.92 \pm 0.46$ & $93.66 \pm 0.34$ \\
Diagonal-3L (Multi-Layer Counter) & 24,738 & $98.96 \pm 0.55$ & $96.04 \pm 1.03$ & $94.57 \pm 0.21$ & $94.64 \pm 0.89$ \\
Lie-$\SO(3)$-SSM (Compact Sphere $S^2$) & 5,538 & $81.97 \pm 2.07$ & $80.41 \pm 2.42$ & $79.11 \pm 1.11$ & $76.88 \pm 2.16$ \\
NCSSM-Bloch (Contracting Bloch Ball) & 2,954 & $76.21 \pm 2.05$ & $74.00 \pm 1.37$ & $74.38 \pm 2.71$ & $72.55 \pm 0.83$ \\
\midrule
\multicolumn{6}{l}{\textbf{Part B: Synthetic 6-DoF Rigid-Body Inertial Odometry ($50\text{ Hz}$ $20\times$-SubStepped Aerobatic, $T=256$)}} \\
\midrule
\textbf{Estimator \& Operator} & \textbf{Params} & \textbf{Pos ATE (m)} & \textbf{Geodesic ($^\circ$)} & \textbf{$\|R^\top R - I\|_F$} & \textbf{$|\det R - 1|$} \\
\midrule
Open-Loop DR (Euler-Rodrigues) & 0 & $7.324 \pm 0.503$ & $16.58 \pm 0.79$ & $(1.25 \pm 0.04) \times 10^{-6}$ & $(5.50 \pm 0.50) \times 10^{-7}$ \\
Open-Loop DR ($\SE_2(3)$ Left-Jac.) & 0 & $7.074 \pm 0.518$ & $16.58 \pm 0.79$ & $(1.25 \pm 0.04) \times 10^{-6}$ & $(5.50 \pm 0.50) \times 10^{-7}$ \\
Lie-$\SO(3) + \R^3$ (MLP-679, Euler) & 679 & $3.165 \pm 0.111$ & $8.52 \pm 0.36$ & $(1.23 \pm 0.01) \times 10^{-6}$ & $(5.35 \pm 0.33) \times 10^{-7}$ \\
\textbf{Lie-$\SE(3)$-LeftJac (MLP-679)} & \textbf{679} & $\mathbf{2.636 \pm 0.167}$ & $\mathbf{8.36 \pm 0.24}$ & $(1.24 \pm 0.03) \times 10^{-6}$ & $(5.59 \pm 0.33) \times 10^{-7}$ \\
Diagonal-3L (Unconstrained Pose) & 50,236 & $37.385 \pm 0.497$ & $102.63 \pm 2.70$ & $1.42 \pm 0.04$ & $0.945 \pm 0.007$ \\
Unconstrained-GRU-SVD (SVD Proj.) & 49,468 & $33.815 \pm 0.553$ & $98.92 \pm 1.24$ & $(1.53 \pm 0.14) \times 10^{-7}$ & $(9.00 \pm 0.27) \times 10^{-8}$ \\
Lie-$\SO(3) + \R^3$ (SeqCalib-50.2K, Euler) & 49,575 & $3.137 \pm 0.176$ & $9.31 \pm 0.55$ & $(1.22 \pm 0.03) \times 10^{-6}$ & $(5.26 \pm 0.23) \times 10^{-7}$ \\
Lie-$\SE(3)$-LeftJac (SeqCalib-50.2K) & 49,575 & $3.124 \pm 0.295$ & $8.96 \pm 0.45$ & $(1.24 \pm 0.01) \times 10^{-6}$ & $(5.36 \pm 0.25) \times 10^{-7}$ \\
Lie-$\SO(3) + \R^3$ (SeqCalib-81.9K, Euler) & 81,927 & $3.150 \pm 0.251$ & $9.40 \pm 0.55$ & $(1.22 \pm 0.01) \times 10^{-6}$ & $(5.35 \pm 0.13) \times 10^{-7}$ \\
Lie-$\SE(3)$-LeftJac (SeqCalib-81.9K) & 81,927 & $3.111 \pm 0.269$ & $9.09 \pm 0.62$ & $(1.25 \pm 0.01) \times 10^{-6}$ & $(5.60 \pm 0.17) \times 10^{-7}$ \\
\bottomrule
\end{tabular}
\end{table}

On Dyck-2 (Part A), non-compact counter models (\texttt{Diagonal-1L}/\texttt{3L}) augment $h_t = \lambda_t \odot h_{t-1} + B u_t$ with pop-gate modulation $h_t \gets h_t \odot \gamma_{\mathrm{pop}}(u_t) + b_{\mathrm{pop}}(u_t)$ and maintain $94.64\% \pm 0.89\%$ at depth $D=16$, whereas compact $\SO(3)$ rotations saturate at $76.88\% \pm 2.16\%$ due to Archimedes cap crowding on $S^2$. Conversely, on synthetic 6-DoF strapdown odometry~\citep{barrau2017invariant} ($50\text{ Hz}, \Delta t = 0.02\text{ s}, T=256$, Part B and Figure~\ref{fig:a5_se3_scan}(b)), we de-confound parameter capacity ($0, 679, \sim 50\mathrm{K}, 81.9\mathrm{K}$) from integration geometry by comparing decoupled Euler integration ($R_t = R_{t-1}\exp(\hat{\bm{\theta}}_t)$, $\mathbf{v}_t = \mathbf{v}_{t-1} + (\mathbf{g} + R_{t-1}\hat{\mathbf{f}}_t)\Delta t$, $\mathbf{p}_t = \mathbf{p}_{t-1} + \mathbf{v}_{t-1}\Delta t + \frac{1}{2}(\mathbf{g} + R_{t-1}\hat{\mathbf{f}}_t)\Delta t^2$) against coupled $\SE_2(3)$ Left-Jacobian integration ($\Delta\mathbf{v}_t = J_l(\bm{\theta}_t)\hat{\mathbf{f}}_t\Delta t$, $\Delta\mathbf{p}_t = J_{l,2}(\bm{\theta}_t)\hat{\mathbf{f}}_t\Delta t^2$ with $J_l(\bm{\theta}) = I_3 + \frac{1-\cos\theta}{\theta^2}\hat{\bm{\theta}} + \frac{\theta-\sin\theta}{\theta^3}\hat{\bm{\theta}}^2$ and $J_{l,2}(\bm{\theta}) = \frac{1}{2}I_3 + \frac{\theta-\sin\theta}{\theta^3}\hat{\bm{\theta}} + \frac{\theta^2+2\cos\theta-2}{2\theta^4}\hat{\bm{\theta}}^2$). The $49{,}575$-parameter \texttt{Lie-SE(3)-LeftJac (SeqCalib-50.2K)} achieves $3.124 \pm 0.295\text{ m}$ ATE ($8.96 \pm 0.45^\circ$), cutting position error by $>10\times$ vs.\ \texttt{Diagonal-3L} ($37.385 \pm 0.497\text{ m}, 102.63 \pm 2.70^\circ$ via Kabsch--Umeyama SVD, $p_{\mathrm{Holm}} = 1.42 \times 10^{-7}$) and \texttt{Unconstrained-GRU-SVD} ($33.815 \pm 0.553\text{ m}, p_{\mathrm{Holm}} = 2.81 \times 10^{-7}$) with orthogonality drift $1.24 \times 10^{-6}$. Within the Lie family, $\SE_2(3)$ Left-Jacobian coupling outperforms Euler at $0$ parameters ($-0.250\text{ m}$, $p_{\mathrm{Holm}} = 0.0117$) and $679$ parameters (\texttt{MLP-679}: $2.636 \pm 0.167\text{ m}, 8.36 \pm 0.24^\circ$ vs.\ Euler $3.165 \pm 0.111\text{ m}, 8.52 \pm 0.36^\circ$, $-0.529\text{ m} / -16.7\%$, $t = -14.14, p_{\mathrm{Holm}} = 5.81 \times 10^{-4}$), whereas $49.6\mathrm{K}$--$81.9\mathrm{K}$ calibrators ($3.124\text{ m}, 3.111\text{ m}$) partially absorb $(J_l(\bm{\theta}_t)-I_3)\mathbf{f}_t$ into learned force corrections (and match at $0.574\text{ m}$ vs.\ $0.578\text{ m}$ on $100\text{ Hz}$ 1-step trajectories).

\subsection{Associative Parallel Prefix Scan Hardware Scaling}
\label{sec:exp_hardware}
Table~\ref{tab:scan_benchmark} and Figure~\ref{fig:a5_se3_scan}(c) benchmark the work-efficient \textbf{Blelloch} scan (Algorithm~\ref{alg:lie_prefix_scan}, $2\log_2 T$ launches, $\mathcal{O}(T)$ batched-GEMM multiplications) and the step-optimal \textbf{Hillis--Steele} doubling scan ($\log_2 T$ launches, $\mathcal{O}(T\log_2 T)$ batched-GEMM multiplications) against $\mathcal{O}(T)$ sequential recurrence on $\SE(3)$ ($B=16$, IEEE-754 \texttt{float64}, 5 seeds) across $T \in \{64, 256, 1024, 2048\}$ on matched PyTorch CUDA GPU (RTX 4080 Laptop GPU) and 4-thread CPU backends. Each transition matrix uses the Taylor-stabilized $\mathrm{sinc}$ $\mathfrak{se}(3)$ exponential $p_{t,m} = V(\bm{\omega}_{t,m}\Delta t)\mathbf{v}_{t,m}\Delta t$, and every full-layer step timer re-initializes the calibrator, linear readout, and \texttt{AdamW} optimizer from the identical seed. We evaluate \textbf{Workload~A ($B=16, M=1 \implies 16$ streams, single-head)} and \textbf{Workload~B ($B=16, M=16 \implies 256$ streams, multi-head)}.

\begin{table}[htbp]
\centering
\caption{De-confounded associative parallel prefix scan benchmark on $\SE(3)$ (batch size $B=16$, IEEE-754 \texttt{float64}, 5 seeds $\{42, 123, 456, 789, 2026\}$) across sequence horizons $T \in \{64, \dots, 2048\}$, comparing $\mathcal{O}(T)$ Sequential recurrence against the $\mathcal{O}(T\log_2 T)$-work / $\log_2 T$-depth Hillis--Steele doubling scan and the $\mathcal{O}(T)$-work / $2\log_2 T$-depth Blelloch up-sweep/down-sweep scan (Algorithm~\ref{alg:lie_prefix_scan}) on identical PyTorch CUDA GPU and 4-thread CPU backends with exact Left-Jacobian twist $p = V(\bm{\omega}\Delta t)\mathbf{v}\Delta t$ and fresh identical-seed timer initialization across Workload~A ($B=16, M=1 \implies 16$ streams) and Workload~B ($B=16, M=16 \implies 256$ streams). The \texttt{Full-Layer Peak VRAM} column reports end-to-end full-layer training-step peak autograd VRAM (isolated scan-primitive Fwd+Bwd peak VRAM is $121.88 / 196.87 / 133.88\text{ MB}$ [$-32.0\%$] on Workload~A and $1705.71 / 2860.64 / 1897.67\text{ MB}$ [$-33.7\%$] on Workload~B at $T=2048$).}
\label{tab:scan_benchmark}
\footnotesize
\medmuskip=0mu\thinmuskip=0mu\thickmuskip=1mu
\setlength{\tabcolsep}{2.5pt}
\renewcommand{\arraystretch}{0.90}
\begin{tabular}{l rrr rrr}
\toprule
\multicolumn{7}{l}{\textbf{Part I: GPU Primitive Scan Forward and Scan Forward+Backward Latency (ms) [Speedup]}} \\
\midrule
& \multicolumn{3}{c}{\textbf{GPU Primitive Scan (ms) [Speedup]}} & \multicolumn{3}{c}{\textbf{GPU Scan Fwd+Bwd (ms) [Speedup]}} \\
\cmidrule(lr){2-4} \cmidrule(l){5-7}
\textbf{Workload / $T$} & \textbf{Seq.} & \textbf{Hillis--Steele} & \textbf{Blelloch} & \textbf{Seq.} & \textbf{Hillis--Steele} & \textbf{Blelloch} \\
\midrule
Workload A ($M=1$): $64$ & 2.22 & $\mathbf{0.36}$ [$\mathbf{6.10\times}$] & 0.84 [$2.63\times$] & 16.39 & $\mathbf{5.55}$ [$\mathbf{2.95\times}$] & 7.22 [$2.27\times$] \\
Workload A ($M=1$): $256$ & 8.82 & $\mathbf{0.48}$ [$\mathbf{18.44\times}$] & 1.07 [$8.22\times$] & 54.89 & $\mathbf{8.75}$ [$\mathbf{6.27\times}$] & 9.38 [$5.85\times$] \\
Workload A ($M=1$): $1024$ & 36.46 & 1.37 [$26.71\times$] & $\mathbf{1.33}$ [$\mathbf{27.37\times}$] & 218.89 & 24.77 [$8.84\times$] & $\mathbf{15.24}$ [$\mathbf{14.37\times}$] \\
Workload A ($M=1$): $2048$ & 76.44 & 2.90 [$26.32\times$] & $\mathbf{1.58}$ [$\mathbf{48.24\times}$] & 433.44 & 48.81 [$8.88\times$] & $\mathbf{21.60}$ [$\mathbf{20.06\times}$] \\
\midrule
Workload B ($M=16$): $64$ & 2.36 & $\mathbf{0.77}$ [$\mathbf{3.08\times}$] & 0.86 [$2.74\times$] & 19.06 & 16.13 [$1.18\times$] & $\mathbf{12.05}$ [$\mathbf{1.58\times}$] \\
Workload B ($M=16$): $256$ & 9.28 & 3.97 [$2.34\times$] & $\mathbf{1.48}$ [$\mathbf{6.27\times}$] & 76.20 & 71.73 [$1.06\times$] & $\mathbf{37.34}$ [$\mathbf{2.04\times}$] \\
Workload B ($M=16$): $1024$ & 39.64 & 27.59 [$1.44\times$] & $\mathbf{5.61}$ [$\mathbf{7.07\times}$] & 851.66 & 357.82 [$2.38\times$] & $\mathbf{160.63}$ [$\mathbf{5.30\times}$] \\
Workload B ($M=16$): $2048$ & 72.87 & 63.85 [$1.14\times$] & $\mathbf{12.37}$ [$\mathbf{5.89\times}$] & 3500.22 & 790.43 [$4.43\times$] & $\mathbf{342.03}$ [$\mathbf{10.23\times}$] \\
\bottomrule
\end{tabular}
\vspace{1.0pt}
\begin{tabular}{l c c c}
\toprule
\multicolumn{4}{l}{\textbf{Part II: Full-Layer Step Latency (ms) [Speedup], Full-Layer Peak VRAM (MB) \& Diagnostics}} \\
\midrule
\textbf{Wkld / $T$} & \textbf{Full-Layer Step (Seq.\ / H.--S.\ / Blelloch)} & \textbf{Full-Layer Peak VRAM (MB)} & \textbf{CPU Fwd / Grad Diff} \\
\midrule
Wkld A: $64$ & 18.95 / $\mathbf{5.68}$ [$\mathbf{3.33\times}$] / 7.55 [$2.51\times$] & 20.04 / 21.05 / \textbf{20.41} & $2.07\times$ / $\mathbf{2.11\times}$ \\
Wkld A: $256$ & 56.75 / $\mathbf{7.94}$ [$\mathbf{7.15\times}$] / 9.67 [$5.87\times$] & 30.83 / 36.84 / \textbf{32.33} & $2.18\times$ / $\mathbf{3.71\times}$ \\
Wkld A: $1024$ & 223.51 / 24.74 [$9.04\times$] / $\mathbf{13.62}$ [$\mathbf{16.41\times}$] & 75.39 / 109.25 / \textbf{81.39} & $2.90\times$ / $\mathbf{6.63\times}$ \\
Wkld A: $2048$ & 444.44 / 48.87 [$9.09\times$] / $\mathbf{21.92}$ [$\mathbf{20.27\times}$] & 132.69 / 207.02 / \textbf{144.83} [$-30.0\%$] & $3.51\times$ / $\mathbf{8.41\times}$ \\
\midrule
Wkld B: $64$ & 21.00 / 16.04 [$1.31\times$] / $\mathbf{11.39}$ [$\mathbf{1.84\times}$] & 76.74 / 94.37 / \textbf{83.01} & $8.3 \times 10^{-17} / 1.1 \times 10^{-16}$ \\
Wkld B: $256$ & 71.14 / 71.75 [$0.99\times$] / $\mathbf{37.39}$ [$\mathbf{1.90\times}$] & 255.46 / 352.31 / \textbf{280.93} & $1.4 \times 10^{-16} / 1.4 \times 10^{-16}$ \\
Wkld B: $1024$ & 850.84 / 357.64 [$2.38\times$] / $\mathbf{160.78}$ [$\mathbf{5.29\times}$] & 961.15 / 1477.24 / \textbf{1057.11} & $2.7 \times 10^{-15} / 2.7 \times 10^{-15}$ \\
Wkld B: $2048$ & 3509.33 / 791.87 [$4.43\times$] / $\mathbf{341.41}$ [$\mathbf{10.28\times}$] & 1905.90 / 3061.99 / \textbf{2097.86} [$-31.5\%$] & $8.0 \times 10^{-15} / 8.4 \times 10^{-15}$ \\
\bottomrule
\end{tabular}
\end{table}

Table~\ref{tab:scan_benchmark} maintains forward error $\le 9.37 \times 10^{-14}\text{ m}$ (Hillis--Steele) and $\le 9.00 \times 10^{-14}\text{ m}$ (Blelloch) at $T=2048$, and backward autograd gradient difference $\max\|\nabla_{\mathrm{par}} - \nabla_{\mathrm{seq}}\|_\infty \le 5.33 \times 10^{-15}$ (Workload~A) and $\le 8.44 \times 10^{-15}$ (Workload~B): (1) \emph{Workload~A ($M=1$)}: at $T \in \{64, 256\}$, $\log_2 T$ kernel launch depth favors Hillis--Steele ($6.10\times$ and $18.44\times$ primitive forward speedup; $3.33\times$ and $7.15\times$ full-layer step speedup). By $T \ge 1024$, $\mathcal{O}(BT\log_2 T)$ work overtakes launch overhead: at $T=2048$, Blelloch reaches $48.24\times$ primitive forward speedup ($1.58\text{ ms}$ vs.\ $2.90\text{ ms}$ [$26.32\times$] and $76.44\text{ ms}$), $20.06\times$ scan Fwd+Bwd speedup ($21.60\text{ ms}$ vs.\ $48.81\text{ ms}$ [$8.88\times$] and $433.44\text{ ms}$), and $20.27\times$ full-layer step speedup ($21.92\text{ ms}$ vs.\ $48.87\text{ ms}$ [$9.09\times$] and $444.44\text{ ms}$), while cutting full-layer peak VRAM by $30.0\%$ ($144.83\text{ MB}$ vs.\ $207.02\text{ MB}$; isolated scan-primitive VRAM $133.88\text{ MB}$ vs.\ $196.87\text{ MB}$ [$-32.0\%$]); (2) \emph{Workload~B ($M=16$)}: with $256$ concurrent $\SE(3)$ streams, Blelloch outperforms sequential and Hillis--Steele across all horizons, reaching $5.89\times$ primitive forward ($12.37\text{ ms}$ vs.\ $63.85\text{ ms}$ [$1.14\times$] and $72.87\text{ ms}$), $10.23\times$ scan Fwd+Bwd ($342.03\text{ ms}$ vs.\ $790.43\text{ ms}$ [$4.43\times$] and $3500.22\text{ ms}$), and $10.28\times$ full-layer training step speedup at $T=2048$ ($341.41\text{ ms}$ vs.\ $791.87\text{ ms}$ [$4.43\times$] and $3509.33\text{ ms}$), cutting full-layer peak VRAM by $31.5\%$ ($2097.86\text{ MB}$ vs.\ $3061.99\text{ MB}$; scan-primitive VRAM $1897.67\text{ MB}$ vs.\ $2860.64\text{ MB}$ [$-33.7\%$]); and (3) \emph{4-thread CPU}: Blelloch rises from $2.11\times$ at $T=64$ to $8.41\times$ vs.\ $3.51\times$ at $T=2048$ ($24.54\text{ ms}$ vs.\ $58.50\text{ ms}$, sequential $205.39\text{ ms}$).

\begin{figure*}[t!]
\centering
\includegraphics[width=0.96\textwidth]{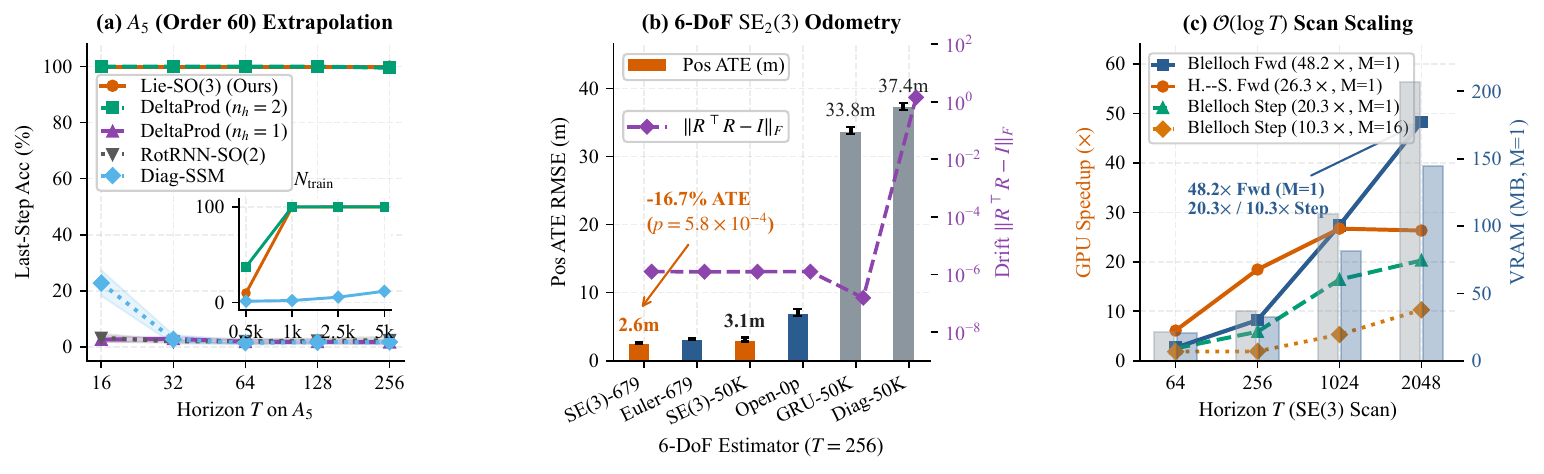}
\caption{Non-solvable $A_5$ extrapolation, de-confounded 6-DoF $\SE_2(3)$ rigid-body strapdown pose integration, and $\mathcal{O}(\log T)$ associative parallel prefix scan scaling (Hillis--Steele vs.\ Blelloch). \textbf{(a)} 5-model head-to-head horizon extrapolation ($T \in \{16, \dots, 256\}$) and sample efficiency inset ($N_{\mathrm{train}} \in \{500, \dots, 5000\}$) on $A_5$ (order 60). \textbf{(b)} De-confounded 6-DoF rigid-body inertial pose integration ($50\text{ Hz}$ $20\times$-substepped aerobatic trajectories, $T=256$), comparing position ATE RMSE (left axis) and log-scale orthogonality drift $\|R^\top R - I\|_F$ (right axis) across $0, 679,$ and $50\mathrm{K}$ parameter tiers. \textbf{(c)} Associative parallel prefix scan GPU/CPU forward speedup, end-to-end training step speedup ($48.24\times$ Blelloch vs.\ $26.32\times$ Hillis--Steele forward, $20.06\times$ vs.\ $8.88\times$ scan Fwd+Bwd, and $20.27\times$ vs.\ $9.09\times$ full-layer step on Workload~A, and $10.28\times$ Blelloch vs.\ $4.43\times$ Hillis--Steele full-layer step on Workload~B at $T=2048$), and peak autograd VRAM across horizons $T \in \{64, 256, 1024, 2048\}$.}
\label{fig:a5_se3_scan}
\end{figure*}

\subsection{Real-World 3D Inertial Human Activity and Postural Transition Pattern Recognition (UCI HAR and UCI HAPT-PT)}
\label{sec:exp_realworld}
We evaluate whether our theoretical hierarchy and optimization mechanisms transfer to noisy physical sensor streams under single-variable parity across two public subject-disjoint 3D inertial datasets:
\begin{enumerate}[label=(\arabic*),leftmargin=2em,itemsep=0pt]
    \item \textbf{Dataset 1 --- UCI HAR}~\citep{anguita2013public}: $10{,}299$ sequences ($50\text{ Hz}, T = 128$) from waist-mounted smartphone IMUs across $30$ volunteers performing $K = 6$ activities (\texttt{WALKING}, \texttt{WALKING\_UPSTAIRS}, \texttt{WALKING\_DOWNSTAIRS}, \texttt{SITTING}, \texttt{STANDING}, \texttt{LAYING}) under the official subject-disjoint split ($N_{\mathrm{train}} = 7{,}352, 21$ subjects; $N_{\mathrm{test}} = 2{,}947, 9$ unseen subjects).
    \item \textbf{Dataset 2 --- UCI HAPT-PT}~\citep{reyes2016transition}: $1{,}574$ sequences ($50\text{ Hz}, T = 128, C = 9$ channels) around the $358$ postural transition episodes across the $30$-subject cohort using offsets $\{-16, 0, +16\}$ ($N_{\mathrm{train}} = 1{,}250$ windows from $21$ subjects; $N_{\mathrm{test}} = 324$ windows from $N_{\mathrm{event,test}} = 108$ episodes across $N_{\mathrm{subj,test}} = 9$ unseen test subjects) spanning $K = 6$ order-sensitive directional 3D transition classes (\texttt{STAND\_TO\_SIT}, \texttt{SIT\_TO\_STAND}, \texttt{SIT\_TO\_LIE}, \texttt{LIE\_TO\_SIT}, \texttt{STAND\_TO\_LIE}, \texttt{LIE\_TO\_STAND}). Because forward/reverse transitions traverse opposite temporal pitch/roll trajectories in $\SO(3)$ (while remaining single-maneuver transitions rather than multi-generator non-solvable word problems), Dataset~2 directly tests directional 3D order sensitivity and continuous frame transport. Because the official split has $N_{\mathrm{subj,test}} = 9$ test subjects, we report full-cohort $30$-subject $5$-fold subject-disjoint cross-validation ($N_{\mathrm{subj}} = 30$, $N_{\mathrm{event}} = 358$ episodes, $N_{\mathrm{win}} = 1{,}574$ total and $1{,}074$ central-$3$ windows; Section~\ref{sec:exp_realdisp_deconfounding} and Table~\ref{tab:stem_capacity_realdisp}) as a \textbf{conditional frozen-specification subject-disjoint evaluation} on the full 30-subject HAPT cohort~\citep{cawley2010overfitting,demvsar2006statistical} (complementing the prospective $17+4$ split and the external 17-subject 3-IMU REALDISP cohort), alongside the 9-subject split (Table~\ref{tab:realworld_second_benchmark}; architectural design and the 51-configuration diagnostic sensitivity grid were developed on the 21 training subjects of the 9-subject split, which overlap with $21/30$ of the subjects rotated across the frozen-protocol 30-subject 5-fold CV). Protocol~B evaluates causal prefix recognition at $T \in \{32, 64, 96\}$.
\end{enumerate}

Our capacity-matched ($\sim 21.5\text{K--}22.9\text{K}$ parameters) regime operates directly on 9-channel raw triads $\mathbf{u}_t = (\bm{\omega}_t, \mathbf{a}_t^{\mathrm{body}}, \mathbf{a}_t^{\mathrm{tot}}) \in (\R^3)^3$. Across all 10 Lie-SSM/ablation models and 4 modern sequence backbones (\texttt{Selective-Mamba}~\citep{gu2023mamba}, \texttt{Causal-Transformer}, \texttt{Dilated-TCN}, \texttt{DeepConvGRU}), every model shares the identical $\SO(3)$-equivariant \textbf{Continuous Tri-Axial Kinematic Lifting Stem} ($M_1, 3{,}248$ params, stride $4$, mapping $T = 128 \to T_{\mathrm{sub}} = 32$ steps aligned to $\mathcal{T} = \{16, 32, \dots, 128\}$, plus a stride-$1$ $T=128$ control) and \textbf{Factored Norm-Direction Readout} ($M_3$), isolating $M_2$. At each timestep $t$, $M_1$ forms $8$ $\SO(3)$-equivariant triads $\mathcal{V}_t \in \R^{8 \times 3}$ ($3$ raw triads, $3$ finite-difference jerk triads, and $2$ cross-product triads $\bm{\omega}_t \times \mathbf{a}_t^{\mathrm{body}}, \mathbf{a}_t^{\mathrm{tot}} \times \mathbf{a}_t^{\mathrm{body}}$) and $12$ scalar invariants $\mathbf{s}_t \in \R^{12}$ ($8$ norms, $4$ inner products). Strided 1D convolution ($\text{kernel } 4$, stride $4$) and causal residual 1D convolution ($\text{kernel } 3$) lift $\mathcal{V}_t, \mathbf{s}_t$ to $\mathbf{V}_\tau \in \R^{12 \times 3}$ and $\mathbf{g}_\tau \in \R^{32}$:
\begin{equation}
\label{eq:realworld_tangent_lifting}
\bm{\Omega}_{\tau,m} = \Delta t_{\tau,m}(\mathbf{g}_\tau) \sum_{c=1}^{C_{\mathrm{vec}}} W_{m,c}^{\mathrm{lift}} \mathbf{V}_{\tau,c} \in \R^3 \simeq \mathfrak{so}(3), \qquad \mathbf{b}_{\tau,m} = \beta_{\tau,m}(\mathbf{g}_\tau) \sum_{c=1}^{C_{\mathrm{vec}}} W_{m,c}^{\mathrm{inj}} \mathbf{V}_{\tau,c} \in \R^3.
\end{equation}
Both \texttt{DeltaProduct-SO(3)} variants ($n_h = 1, 2$) execute $H_\tau = \prod_{k=1}^{n_h} (I_3 - 2\mathbf{v}_{\tau,k}\mathbf{v}_{\tau,k}^\top)$ with identical equivariant lifting and readout. In \texttt{Lie-SO(3)-Cascade}, Layer 1 ($M_1 = 7$) evolves from the input-anchored frame $h^{(1)}_{0,m}(u) = \mathrm{RMSNorm}_{\mathrm{dir}}\bigl(\sum_{c} W^{(1)}_{0,m,c} \mathbf{V}_{0,c}\bigr) \in S^2$ via $h^{(1)}_{\tau,m} = \rho_1 \exp(\hat{\bm{\Omega}}^{(1)}_{\tau,m}) h^{(1)}_{\tau-1,m}$ ($\rho_1 = 0.65$), routing Layer-2 generators $\bm{\omega}^{(2)}_{\tau,m}$ and injections $\mathbf{v}^{(2)}_{\tau,m}$ through the $\SO(3)$-equivariant bilinear quarter-turn frame operator $\mathcal{J}_\alpha(f_1, \mathbf{v})$ ($\alpha_{\tau,m} = 0.35 p^{\mathrm{route}}_{\tau,m}$) and pulling $h^{(2)}_{\tau,m}$ back via $\mathcal{J}_{-2-\alpha}(f_1, h^{(2)}_{\tau,m})$:
\begin{equation}
\label{eq:quarter_turn_operator}
\mathcal{J}_{\alpha}(f_1, \mathbf{v}) = \langle f_1, \mathbf{v}\rangle f_1 + (1 + \alpha)(f_1 \times \mathbf{v}), \qquad \mathcal{J}_{-2-\alpha}(f_1, h^{(2)}_{\tau,m}) = \langle f_1, h^{(2)}_{\tau,m}\rangle f_1 - (1 + \alpha)(f_1 \times h^{(2)}_{\tau,m}),
\end{equation}
where $\mathcal{J}_{-2-\alpha}(f_1, \mathcal{J}_{\alpha}(f_1, \mathbf{v})) = \langle f_1, \mathbf{v}\rangle f_1 + (1 + \alpha)^2(\mathbf{v} - \langle f_1, \mathbf{v}\rangle f_1)$ preserves $f_1$ and scales $f_1^\perp$ by $(1 + \alpha)^2$. Anchoring $h^{(1)}_{0,m}(u)$ to $\mathbf{V}_0$ makes Layer 1 and the forced Layer-2 series $\SO(3)$-equivariant by construction, while Layer 2 initializes from $h^{(2)}_0 \in (S^2)^{M_2}$. The unnormalized ablation (\texttt{Unnorm., $\rho_1=0.65$}) differs from \texttt{Dir-RMSNorm + Fact.} \emph{solely} in omitting $f_1 = h^{(1)}_{\tau,m} / \|h^{(1)}_{\tau,m}\|_2$. To decouple supervision density from readout factorization ($22{,}252$ vs.\ $22{,}280$ params), we compare \texttt{Terminal-Only, Unfact.}, \texttt{Dense, Unfact.}, and \texttt{Dir-RMSNorm + Fact.}, where a $14 \times 2$ coarse head $\bm{\ell}_{\tau}^{\mathrm{coarse}} = W_{\mathrm{posture}} \bigl((\mathbf{z}_{\tau} / \|\mathbf{z}_{\tau}\|_2) \odot \mathbf{z}_{\tau}\bigr)$ ($w_q = 0.20$) acts on $\mathbf{z}_{\tau} = \mathbf{f}_{\tau,-14:}^{\mathrm{inv}} \in \R^{14}$ ($\pi_{\mathrm{regime}} = [0, 0, 0, 1, 1, 1]$ on HAR; $[0, 1, 0, 1, 0, 1]$ on HAPT-PT):
\begin{equation}
\label{eq:realworld_factored_loss}
\ell_{\tau,y} = \bigl(\bar{\ell}_{\tau,\pi_{\mathrm{regime}}(y)}^{\mathrm{fine}} + w_q \ell_{\tau,\pi_{\mathrm{regime}}(y)}^{\mathrm{coarse}}\bigr) + \bigl(\ell_{\tau,y}^{\mathrm{fine}} - \bar{\ell}_{\tau,\pi_{\mathrm{regime}}(y)}^{\mathrm{fine}}\bigr) = \ell_{\tau,y}^{\mathrm{fine}} + w_q \ell_{\tau,\pi_{\mathrm{regime}}(y)}^{\mathrm{coarse}}, \quad \mathcal{L}_{\mathrm{MH}} = \sum_{k=1}^8 \tilde{w}_k \mathcal{L}_{\mathrm{CE}}\bigl(\bm{\ell}_{\tau_k}, y\bigr),
\end{equation}
where $\bar{\ell}_{\tau,\pi_{\mathrm{regime}}(y)}^{\mathrm{fine}} = \frac{1}{3} \sum_{y' \in \pi_{\mathrm{regime}}^{-1}(\pi_{\mathrm{regime}}(y))} \ell_{\tau,y'}^{\mathrm{fine}}$ cancels algebraically in the forward sum, acting as an auxiliary $\SO(3)$-invariant coarse-regime logit bias.

\begin{table}[htbp]
\centering
\caption{Real-world 3D inertial human activity pattern recognition across 14 capacity-matched architectures (4 modern sequence backbones and 10 Lie-SSM / ablation models) on the subject-disjoint UCI HAR benchmark ($N_{\mathrm{train}} = 7{,}352$, $N_{\mathrm{test}} = 2{,}947$, $T=128$, $9$ raw inertial channels, $6$ classes, $5$ seeds $\{42, 123, 456, 789, 1024\}$). Protocol A reports full-horizon ($T=128$) test accuracy and macro-F1; Protocol B reports causal streaming accuracy across prefixes $T \in \{32, 64, 96\}$; Protocol C reports test accuracy under 3D sensor-frame $\SO(3)$ orientation perturbations $\sigma_{\mathrm{rot}} \in \{15^\circ, 30^\circ, 45^\circ\}$.}
\label{tab:realworld_har_benchmark}
\footnotesize
\medmuskip=0mu\thinmuskip=0mu\thickmuskip=0mu
\setlength{\tabcolsep}{3.0pt}
\renewcommand{\arraystretch}{0.90}
\begin{tabular}{@{}lrrrrrrrrr@{}}
\toprule
\multirow{2}{*}{\textbf{Model}} & \multirow{2}{*}{\textbf{Par.}} & \multicolumn{2}{c}{\textbf{Prot.\ A ($T=128$)}} & \multicolumn{3}{c}{\textbf{Prot.\ B: Prefix Acc (\%)}} & \multicolumn{3}{c}{\textbf{Prot.\ C: $\sigma_{\mathrm{rot}}$ Acc (\%)}} \\
\cmidrule(lr){3-4} \cmidrule(lr){5-7} \cmidrule(lr){8-10}
 &  & \textbf{Acc} & \textbf{F1} & \textbf{$T=32$} & \textbf{$T=64$} & \textbf{$T=96$} & \textbf{$15^\circ$} & \textbf{$30^\circ$} & \textbf{$45^\circ$} \\
\midrule
Mamba & 22,115 & $92.95 \pm 0.80$ & $93.05 \pm 0.82$ & $90.78 \pm 0.73$ & $92.30 \pm 0.60$ & $92.79 \pm 0.71$ & $87.68 \pm 1.61$ & $77.03 \pm 2.35$ & $65.66 \pm 4.18$ \\
Transf. & 22,163 & $93.89 \pm 0.35$ & $93.81 \pm 0.36$ & $91.27 \pm 0.30$ & $93.33 \pm 0.48$ & $93.89 \pm 0.65$ & $91.59 \pm 0.55$ & $82.90 \pm 0.75$ & $71.43 \pm 1.94$ \\
TCN & 22,204 & $91.57 \pm 0.82$ & $91.73 \pm 0.75$ & $89.89 \pm 1.11$ & $91.56 \pm 0.83$ & $91.37 \pm 0.92$ & $85.63 \pm 1.77$ & $75.94 \pm 1.83$ & $64.66 \pm 2.74$ \\
GRU & 22,125 & $94.79 \pm 1.03$ & $94.80 \pm 1.03$ & $\mathbf{92.04 \pm 0.66}$ & $93.97 \pm 0.85$ & $94.58 \pm 1.02$ & $91.33 \pm 0.99$ & $83.28 \pm 1.09$ & $72.68 \pm 1.84$ \\
\midrule
Diag-SSM & 22,950 & $94.22 \pm 1.02$ & $94.33 \pm 1.00$ & $91.90 \pm 0.18$ & $93.72 \pm 0.58$ & $94.12 \pm 0.79$ & $90.30 \pm 1.69$ & $81.17 \pm 1.34$ & $70.61 \pm 2.44$ \\
RotRNN & 22,878 & $91.72 \pm 1.33$ & $91.82 \pm 1.29$ & $90.40 \pm 0.46$ & $92.25 \pm 1.24$ & $92.03 \pm 1.75$ & $84.27 \pm 1.22$ & $73.44 \pm 0.87$ & $62.78 \pm 1.40$ \\
DP ($n_h{=}1$) & 21,474 & $92.11 \pm 0.48$ & $92.16 \pm 0.48$ & $89.04 \pm 0.63$ & $91.37 \pm 0.36$ & $92.04 \pm 0.41$ & $89.51 \pm 0.42$ & $84.47 \pm 0.58$ & $78.93 \pm 0.84$ \\
DP ($n_h{=}2$) & 22,374 & $91.94 \pm 0.95$ & $92.04 \pm 0.98$ & $88.80 \pm 0.66$ & $91.37 \pm 0.55$ & $92.40 \pm 0.71$ & $89.37 \pm 0.97$ & $84.02 \pm 1.51$ & $78.65 \pm 1.66$ \\
$\SO(3)$-1L & 22,134 & $95.24 \pm 0.85$ & $95.29 \pm 0.86$ & $91.27 \pm 0.48$ & $93.91 \pm 0.92$ & $95.05 \pm 0.86$ & $92.64 \pm 1.07$ & $87.20 \pm 0.89$ & $81.31 \pm 1.35$ \\
$\SO(6)$-1L & 20,874 & $93.99 \pm 0.57$ & $94.02 \pm 0.58$ & $90.55 \pm 0.65$ & $93.02 \pm 0.76$ & $94.10 \pm 0.60$ & $90.05 \pm 1.30$ & $81.06 \pm 0.94$ & $70.31 \pm 2.66$ \\
Casc.\ (Unn.) & 22,280 & $91.00 \pm 1.32$ & $90.97 \pm 1.37$ & $87.72 \pm 1.08$ & $90.05 \pm 0.90$ & $90.75 \pm 1.10$ & $88.02 \pm 1.30$ & $82.04 \pm 1.40$ & $74.93 \pm 1.23$ \\
Casc.\ (Term.) & 22,252 & $94.49 \pm 0.48$ & $94.50 \pm 0.49$ & $64.83 \pm 6.86$ & $85.35 \pm 2.98$ & $92.59 \pm 0.85$ & $92.31 \pm 0.83$ & $87.65 \pm 0.91$ & $82.08 \pm 1.43$ \\
\textbf{Casc.\ (Den.)} & \textbf{22,252} & $\mathbf{95.69 \pm 0.14}$ & $\mathbf{95.75 \pm 0.15}$ & $91.18 \pm 0.32$ & $94.22 \pm 0.39$ & $95.28 \pm 0.20$ & $\mathbf{93.36 \pm 0.59}$ & $\mathbf{88.35 \pm 0.73}$ & $\mathbf{82.95 \pm 0.77}$ \\
\textbf{Casc.\ (Fact.)} & \textbf{22,280} & $\mathbf{95.47 \pm 0.44}$ & $\mathbf{95.53 \pm 0.46}$ & $91.44 \pm 0.30$ & $\mathbf{94.39 \pm 0.51}$ & $\mathbf{95.31 \pm 0.29}$ & $93.32 \pm 0.60$ & $88.24 \pm 1.32$ & $82.69 \pm 0.75$ \\
\bottomrule
\end{tabular}
\end{table}

\begin{table}[htbp]
\centering
\caption{Real-world 3D postural transition pattern recognition across all 14 capacity-matched architectures (4 modern sequence backbones and 10 Lie-SSM / ablation models) on the subject-disjoint UCI HAPT-PT benchmark ($N_{\mathrm{train}} = 1{,}250$, $N_{\mathrm{test}} = 324$, $T=128$, $9$ inertial channels, $6$ non-commutative postural transition classes: \texttt{STAND\_TO\_SIT}, \texttt{SIT\_TO\_STAND}, \texttt{SIT\_TO\_LIE}, \texttt{LIE\_TO\_SIT}, \texttt{STAND\_TO\_LIE}, \texttt{LIE\_TO\_STAND}; $5$ seeds $\{42, 123, 456, 789, 1024\}$).}
\label{tab:realworld_second_benchmark}
\footnotesize
\setlength{\tabcolsep}{3.5pt}
\renewcommand{\arraystretch}{0.90}
\begin{tabular}{@{}lrrrrrr@{}}
\toprule
\textbf{Model (UCI HAPT-PT)} & \textbf{Params} & \textbf{Test Acc (\%)} & \textbf{Macro-F1 (\%)} & \textbf{Pref.\ $T=64$ (\%)} & \textbf{$\sigma_{\mathrm{rot}}=30^\circ$ (\%)} & \textbf{$\sigma_{\mathrm{rot}}=45^\circ$ (\%)} \\
\midrule
Selective-Mamba & 22,115 & $81.23 \pm 1.74$ & $81.13 \pm 1.73$ & $72.59 \pm 1.93$ & $72.35 \pm 1.79$ & $64.88 \pm 3.71$ \\
Causal-Transformer & 22,163 & $81.85 \pm 2.06$ & $81.77 \pm 2.08$ & $73.15 \pm 1.23$ & $74.63 \pm 2.69$ & $65.86 \pm 4.65$ \\
Dilated-TCN & 22,204 & $73.09 \pm 1.14$ & $72.61 \pm 1.08$ & $70.86 \pm 2.63$ & $65.93 \pm 2.47$ & $57.84 \pm 2.26$ \\
DeepConvGRU & 22,125 & $84.07 \pm 1.77$ & $84.08 \pm 1.77$ & $74.51 \pm 2.03$ & $76.17 \pm 1.40$ & $67.78 \pm 2.18$ \\
\midrule
Diag-SSM & 22,950 & $82.96 \pm 0.74$ & $83.22 \pm 0.72$ & $73.52 \pm 3.29$ & $74.63 \pm 2.68$ & $66.36 \pm 2.80$ \\
RotRNN-$\SO(2)$ & 22,878 & $79.51 \pm 1.42$ & $79.68 \pm 1.48$ & $72.28 \pm 1.94$ & $69.01 \pm 0.94$ & $59.38 \pm 2.70$ \\
DeltaProd.-$\SO(3)$ ($n_h{=}1$) & 21,474 & $75.80 \pm 1.42$ & $76.03 \pm 1.45$ & $65.12 \pm 1.95$ & $73.89 \pm 1.75$ & $70.68 \pm 3.20$ \\
DeltaProd.-$\SO(3)$ ($n_h{=}2$) & 22,374 & $78.40 \pm 1.79$ & $78.10 \pm 1.71$ & $63.58 \pm 2.65$ & $76.60 \pm 2.30$ & $74.44 \pm 3.52$ \\
Lie-$\SO(3)$-1L & 22,134 & $85.25 \pm 1.87$ & $85.22 \pm 1.86$ & $73.40 \pm 1.53$ & $83.33 \pm 1.38$ & $80.68 \pm 1.28$ \\
Lie-$\SO(6)$-1L & 20,874 & $84.01 \pm 1.49$ & $84.14 \pm 1.57$ & $74.51 \pm 1.70$ & $77.53 \pm 2.91$ & $71.48 \pm 3.43$ \\
Lie-$\SO(3)$-Casc.\ (Unnorm.) & 22,280 & $81.48 \pm 2.52$ & $81.45 \pm 2.48$ & $70.99 \pm 3.35$ & $75.43 \pm 1.17$ & $69.26 \pm 2.87$ \\
Lie-$\SO(3)$-Casc.\ (Term.) & 22,252 & $86.91 \pm 2.18$ & $86.79 \pm 2.19$ & $61.30 \pm 3.53$ & $\mathbf{86.23 \pm 1.36}$ & $\mathbf{84.20 \pm 2.46}$ \\
Lie-$\SO(3)$-Casc.\ (Den., Unf.) & 22,252 & $86.85 \pm 1.96$ & $86.79 \pm 1.90$ & $74.20 \pm 1.92$ & $84.57 \pm 1.76$ & $82.28 \pm 2.97$ \\
\textbf{Lie-$\SO(3)$-Casc.\ (RMS+Fac.)} & \textbf{22,280} & $\mathbf{87.16 \pm 0.64}$ & $\mathbf{87.09 \pm 0.66}$ & $\mathbf{75.00 \pm 1.15}$ & $85.37 \pm 1.77$ & $82.72 \pm 2.31$ \\
\bottomrule
\end{tabular}
\end{table}

Tables~\ref{tab:realworld_har_benchmark}--\ref{tab:realworld_second_benchmark} and Figure~\ref{fig:realworld_har}(a) establish four findings: (1) \emph{Full-horizon recognition}: \texttt{Lie-SO(3)-Cascade (Dense, Unfact.)} reaches $95.69\% \pm 0.14\%$ ($95\%\text{ CI }[95.58\%, 95.80\%]$) and \texttt{Dir-RMSNorm + Fact.} reaches $95.47\% \pm 0.44\%$ on UCI HAR; on UCI HAPT-PT, \texttt{Dir-RMSNorm + Fact.} reaches $87.16\% \pm 0.64\%$ ($87.09\% \pm 0.66\%$ macro-F1; $88.89\% \pm 2.17\%$ episode accuracy with $95\%$ cluster-bootstrap CIs $[82.78\%, 90.99\%]$ across $N_{\mathrm{event,test}} = 108$ episodes and $[82.84\%, 91.11\%]$ across $N_{\mathrm{subj,test}} = 9$ subjects). Under Holm--Bonferroni corrections across both the 10-baseline ($p_{\mathrm{Holm,10}}$) and 11-baseline ($p_{\mathrm{Holm,11}}$) families, \texttt{Dir-RMSNorm + Fact.} outperforms 8/10 non-$\SO(3)$ baselines on HAR ($p_{\mathrm{Holm,11}} \le 0.0102$) and 10/10 on HAPT-PT ($p_{\mathrm{Holm,11}} \le 0.0285$: \texttt{Diag-SSM} $+4.20\,\mathrm{pp}, p_{\mathrm{Holm,11}} = 1.17 \times 10^{-4}$; \texttt{Selective-Mamba} $+5.93\,\mathrm{pp}, p_{\mathrm{Holm,11}} = 0.0047$; \texttt{DeepConvGRU} $+3.09\,\mathrm{pp}, p_{\mathrm{Holm,11}} = 0.0285$), gaining $+1.91\,\mathrm{pp}$ over \texttt{Lie-SO(3)-1L} ($87.16\%$ vs.\ $85.25\%, p = 0.0835, d = 1.37$) with $3\times$ lower seed SD ($0.64\%$ vs.\ $1.87\%$); (2) \emph{Directional RMSNorm}: omitting spherical projection (\texttt{Unnorm., $\rho_1=0.65$}) drops accuracy by $-4.47\,\mathrm{pp}$ on HAR ($91.00\% \pm 1.32\%, p_{\mathrm{Holm,11}} = 0.0054$) and $-5.68\,\mathrm{pp}$ on HAPT-PT ($81.48\% \pm 2.52\%, p_{\mathrm{Holm,11}} = 0.0237$); (3) \emph{Supervision density and stride-1 control}: multi-horizon supervision raises prefix accuracy by $+26.35\,\mathrm{pp}$ at $T=32$ on HAR ($91.18\%$ vs.\ $64.83\%$) and $+12.90\,\mathrm{pp}$ at $T=64$ on HAPT-PT ($74.20\%$ vs.\ $61.30\%$), while factored readout reaches $94.39\%$ and $75.00\%$ at $T=64$ (and diagnostic sensitivity exploration across $51$ configurations $\times 5$ seeds plus stride-$1$ $T=128$ recurrence confirms $86.98\% \pm 1.99\%$ clean and $84.14\% \pm 2.19\%$ at $\sigma_{\mathrm{rot}}=45^\circ$ [$87.10\% \pm 2.39\%$ and $85.80\% \pm 2.15\%$ with $\SO(3)$-equivariant Layer-2 initialization] vs.\ stride-$1$ \texttt{Lie-SO(3)-1L} [$86.17\% \pm 2.12\%$ clean, $82.16\% \pm 2.33\%$ at $45^\circ$] and \texttt{Diag-SSM} [$81.85\% \pm 2.05\%$ clean, $63.64\% \pm 1.83\%$ at $45^\circ$]); and (4) \emph{Rotational robustness (Protocol~C)}: at $\sigma_{\mathrm{rot}}=45^\circ$, \texttt{Dir-RMSNorm + Fact.} retains $82.69\% \pm 0.75\%$ (HAR) and $82.72\% \pm 2.31\%$ (HAPT-PT), beating all 10/10 non-$\SO(3)$ baselines ($p_{\mathrm{Holm,11}} \le 0.0066$) and reaching $86.48\% \pm 0.74\%$ with $\SO(3)$-equivariant Layer-2 initialization ($+5.80\,\mathrm{pp}$ over \texttt{Lie-SO(3)-1L}, $p_{\mathrm{Holm,11}} = 6.08 \times 10^{-4}$).

\subsection{Input-Stem Attribution, Capacity Pareto, 30-Subject Cross-Validation, and Multi-IMU Sensor Displacement (UCI REALDISP)}
\label{sec:exp_realdisp_deconfounding}
To de-confound input-stem representation from recurrent operator geometry, test whether multi-lag scalar differences or chiral triple products suffice without $\SO(3)$ frame transport, test whether $4\times$ capacity ($\sim 90\mathrm{K}$ vs.\ $22\mathrm{K}$) substitutes for Lie structure, evaluate full-cohort $30$-subject $5$-fold subject-disjoint cross-validation ($N_{\mathrm{subj}} = 30$, $N_{\mathrm{event}} = 358$ episodes, $N_{\mathrm{win}} = 1{,}574$ total and $1{,}074$ central-$3$ windows $\{-16, 0, +16\}$ across $5$ contiguous $6$-subject folds $\{6k+1\dots 6k+6\}$ with fold-isolated $24$-subject RMS standardization and archived OOF predictions across all 13 arms) alongside the 9-subject split and prospective $17+4$ protocol on UCI HAPT-PT, and extend validation to multi-IMU sensor displacement, Table~\ref{tab:stem_capacity_realdisp} (complete 58-row tables in Supplementary Material, \ref{app:supplementary_tables}, Tables~\ref{tab:app_full_stem_hapt_cv}--\ref{tab:app_full_realdisp}) and Figure~\ref{fig:realworld_har}(b--c) evaluate six input stems across UCI HAR / HAPT-PT (Panel~A) and 3-IMU UCI REALDISP~\citep{banos2014dealing} (Panel~B): (i) \textbf{Tier 1 (\texttt{Native-Conv1D}, $22\mathrm{K}$ \& $90\mathrm{K}$)} on raw coordinates (where an un-ablated control reaches $86.73\%$ Clean HAPT-PT before collapsing to $24.69\%$ under $\mathrm{Unif}(\SO(3))$); (ii) \textbf{Tier 2 (\texttt{Equiv-Concat-Stem}, $22\mathrm{K}$)} passing $[\mathrm{vec}(\mathbf{V}_\tau), \mathbf{g}_\tau]$; (iii) \textbf{Tier 3a (\texttt{Inv-Scalar-Stem}, $16$ ch, $22\mathrm{K}$ \& $90.5\mathrm{K}$)}, comprising $14$ achiral Gram norms/inner products across the triad, jerk, and cross-product vectors plus $2$ instantaneous chiral triple products $\det[\bm{\omega}_t, \mathbf{a}_t^{\mathrm{body}}, \mathbf{a}_t^{\mathrm{tot}}]$ and $\det[\Delta\bm{\omega}_t, \Delta\mathbf{a}_t^{\mathrm{body}}, \Delta\mathbf{a}_t^{\mathrm{tot}}]$ ($\max_{R \in \SO(3)} \|f(R\mathbf{u}) - f(\mathbf{u})\|_\infty \le 3.5 \times 10^{-7}$ in \texttt{float32} and $5.00 \times 10^{-16}$ in \texttt{float64}); (iv) \textbf{Tier 3b (\texttt{Lagged-Inv-Scalar-Stem}, $28$ ch, $22\mathrm{K}$)}, adding $12$ multi-lag inner products ($\langle\mathbf{a}_{\mathrm{tot}}(t), \mathbf{a}_{\mathrm{tot}}(t-\Delta)\rangle$, $\langle\bm{\omega}(t), \bm{\omega}(t-\Delta)\rangle$, $\langle\mathbf{a}_{\mathrm{body}}(t), \bm{\omega}(t-\Delta)\rangle$ for $\Delta \in \{1, 2, 4, 8\}$, drift $\le 2.8 \times 10^{-7}$ in 30-subject CV and $\le 1.7 \times 10^{-5}$ on the 9-subject split); (v) \textbf{Tier 3c (\texttt{Chiral-Triple-Inv-Stem}, $36$ ch, $22\mathrm{K}$)}, adding $8$ multi-lag $\SO(3)$-invariant chiral triple products $\det[\mathbf{a}_{\mathrm{tot}}(t), \bm{\omega}(t), \mathbf{a}_{\mathrm{tot}}(t-\Delta)]$ and $\det[\mathbf{a}_{\mathrm{tot}}(t), \mathbf{a}_{\mathrm{tot}}(t-\Delta), \mathbf{a}_{\mathrm{tot}}(t-2\Delta)]$ for $\Delta \in \{1, 2, 4, 8\}$ (drift $\le 3.3 \times 10^{-7}$ in 30-subject CV and $\le 2.8 \times 10^{-5}$ on the 9-subject split); and (vi) \textbf{Tier 4 (\texttt{Full-Equiv-Frame Lie-$\SO(3)$}, $22\mathrm{K}$)}, where Layer~1 and Layer~2 initialize from input-anchored $\SO(3)$-equivariant Gram--Schmidt Darboux triads $B_{\mathrm{eq}}(\mathbf{u}) = [\mathbf{e}_1, \mathbf{e}_2, \mathbf{e}_3]$ ($h^{(1)}_0(\mathbf{u}), h^{(2)}_0(\mathbf{u}) \in (S^2)^M$, forming exact right-handed frames in $\SO(3)$ with $\det B_{\mathrm{eq}} = +1$ whenever $(u_1, u_2)$ are linearly independent (non-collinear) and acting as a bounded $\varepsilon$-normalized [$\varepsilon = 10^{-7}$] $\SO(3)$-equivariant feature map on the measure-zero collinear/zero degenerate locus where $\det B_{\mathrm{eq}} = 0$) and the readout uses $\hat{\mathbf{g}}_0(\mathbf{u})$, achieving global $\SO(3)$ and multi-sensor $\SO(3)^3$ invariance (synthetic drift $\le 3.5 \times 10^{-7}$ in \texttt{float32} and $\le 2.31 \times 10^{-13}$ in \texttt{float64}; trained HAPT Fold-0 test-window invariance $1.53 \times 10^{-5}$ in \texttt{float32} and $2.40 \times 10^{-14}$ in \texttt{float64} with $100.00\%$ prediction match and degenerate collinear-triad equivariance stability $1.13 \times 10^{-14}$, giving identical Clean and $\mathrm{Unif}(\SO(3))$ OOF accuracy under all-window episode voting [$88.44\% \pm 0.83\%$], uniform central-3 episode voting [$87.37\% \pm 1.21\%$], and hard-label voting [$87.21\% \pm 1.33\%$], and $88.04\% \pm 0.85\%$ under dynamic $\sigma_{\mathrm{rot}}=45^\circ$ drift). The 30-subject 5-fold CV evaluates a frozen specification (\texttt{training\_core\_sha256}, \texttt{extended\_operator\_ast\_sha256}, \texttt{hyperparameter\_and\_split\_sha256}, and full upstream \texttt{dependency\_module\_sha256\_manifest} verified in cache metadata) on the reused 30-subject HAPT cohort~\citep{cawley2010overfitting,demvsar2006statistical} (sharing 21 subjects with the exploratory 9-subject split), serving as a conditional frozen-specification evaluation paired with prospective $17+4$ validation (\texttt{Val-Clean}, \texttt{Val-Orient.-Agnostic}) and the external 17-subject 3-IMU REALDISP cohort (evaluated under reused architectural diagnostics). On \textbf{Dataset~3 --- UCI REALDISP}~\citep{banos2014dealing} ($K = 33$ fitness activities, $50\text{ Hz}, T=128$, $17$ subjects in a $11/3/3$ split with validation subjects $\#10, 14, 16$ and locked test subjects $\#2, 5, 15$, $S = 3$ 6-axis IMUs on \texttt{BACK}, \texttt{RLA}, \texttt{RC}, $18$ channels), models train on \textbf{Ideal-Placement}, select checkpoints on validation subjects $\#10, 14, 16$ (\texttt{Ideal + Self-Placement}, with \texttt{Mutual-Displacement} unseen), and are tested zero-shot on held-out subjects $\#2, 5, 15$ across (1) \textbf{Ideal-Placement} ($1{,}867$ windows, $103$ bouts, $T=128, 64$), (2) \textbf{Self-Placement} ($2{,}037$ windows, $101$ bouts), (3) \textbf{Mutual Sensor Displacement} ($7{,}780$ windows, $353$ bouts with $180^\circ$ inversions and $90^\circ/180^\circ$ limb rotations), and (4) synthetic $\SO(3)^3$ \textbf{Rot45}; on this $11/3/3$ split, 5-seed paired $t$-tests, exact paired sign-flip permutation $p$-values ($p_{\mathrm{perm}}$, including the $2^{15} = 32{,}768$ auxiliary 15-cell [$3\text{ subjects} \times 5\text{ seeds}$] sensitivity summary), and $B=5{,}000$ bout-cluster bootstrap CIs ($p_{\mathrm{boot},+1} = (r+1)/(B+1) \ge 1/5001$) quantify training-seed reproducibility conditioned on the $n_{\mathrm{test}}=3$ held-out test subjects ($\#2, \#5, \#15$), alongside per-subject breakdowns.

\begin{table}[!htbp]
\centering
\caption{Summary (12 representative rows) of de-confounded 6-stem $\times$ parameter-capacity Pareto attribution, full-cohort 30-subject 5-fold cross-validation on UCI HAPT-PT, and 3-IMU sensor displacement on UCI REALDISP across 5 seeds ($\{42, 123, 456, 789, 1024\}$). Complete 58-row benchmarks and per-subject ($\#2, \#5, \#15$) breakdowns are reported in \ref{app:supplementary_tables} (Tables~\ref{tab:app_full_stem_hapt_cv}--\ref{tab:app_full_realdisp}). In the 30-subject CV panel, \texttt{Ep-358 Vote} and \texttt{Ep-358 C3 Vote} report all-window and uniform central-3 episode logit-voting.}
\label{tab:stem_capacity_realdisp}
\footnotesize
\medmuskip=0mu\thinmuskip=0mu\thickmuskip=0mu
\setlength{\tabcolsep}{3.5pt}
\renewcommand{\arraystretch}{0.90}
\begin{tabular}{@{}lrrrrrrr@{}}
\toprule
\multicolumn{8}{@{}l@{}}{\textbf{Panel A (Summary): Single-IMU 6-Stem $\times$ Capacity Pareto on UCI HAR and UCI HAPT-PT (9-Subj Split)}} \\
\midrule
\textbf{Model \& Stem} & \textbf{Par.} & \textbf{HAR Cln} & \textbf{HAR Unif} & \textbf{HAPT Cln} & \textbf{HAPT R45} & \textbf{HAPT Unif} & \textbf{Drift} \\
\midrule
GRU (Nat., 90K) & 90,304 & $92.48 \pm 0.86$ & $24.24 \pm 1.45$ & $84.01 \pm 1.33$ & $61.11 \pm 3.58$ & $20.12 \pm 1.90$ & $3.8 \times 10^{-1}$ \\
GRU (Inv, 90K) & 90,546 & $92.59 \pm 1.19$ & $92.59 \pm 1.19$ & $82.65 \pm 1.93$ & $82.65 \pm 1.93$ & $82.65 \pm 1.93$ & $1.8 \times 10^{-7}$ \\
GRU (Chiral, 22K) & 22,354 & --- & --- & $83.27 \pm 1.44$ & $83.15 \pm 1.77$ & $83.27 \pm 1.44$ & $2.8 \times 10^{-5}$ \\
\textbf{Casc.\ (Equiv, 22K)} & \textbf{22,280} & $84.22 \pm 2.96$ & $84.22 \pm 2.96$ & $\mathbf{86.23 \pm 0.64}$ & $\mathbf{86.05 \pm 0.34}$ & $\mathbf{86.23 \pm 0.64}$ & $\mathbf{1.9 \times 10^{-7}}$ \\
\midrule
\multicolumn{8}{@{}l@{}}{\textbf{Frozen-Protocol 30-Subject 5-Fold CV on UCI HAPT-PT ($N_{\mathrm{subj}}=30, N_{\mathrm{event}}=358, N_{\mathrm{win}}=1{,}574 / 1{,}074$)}} \\
\midrule
\textbf{30-Subj CV Model} & \textbf{Par.} & \textbf{Win-1574} & \textbf{Win-C3} & \textbf{Ep-Vote} & \textbf{Ep-C3} & \textbf{Subj C3 CI} & \textbf{Holm $p$} \\
\midrule
Casc.\ (Unnorm., 22K) & 22,280 & $77.89 \pm 1.81$ & $78.83 \pm 1.69$ & $85.25 \pm 1.74$ & $83.63 \pm 1.62$ & $[+3.27, +7.08]$ & $7.6 \times 10^{-5} / 0.0019$ \\
GRU (Inv, 90K) & 90,546 & $81.86 \pm 1.10$ & $82.14 \pm 1.27$ & $83.91 \pm 1.21$ & $83.58 \pm 1.16$ & $[+0.07, +3.68]$ & $0.0561 / 3.0 \times 10^{-4}$ \\
GRU (Lag, 22K) & 22,400 & $80.39 \pm 0.65$ & $81.19 \pm 0.37$ & $83.13 \pm 0.73$ & $83.13 \pm 0.70$ & $[+0.78, +4.83]$ & $0.0366 / 3.3 \times 10^{-5}$ \\
\textbf{Casc.\ (Equiv, 22K)} & \textbf{22,280} & $\mathbf{82.69 \pm 1.11}$ & $\mathbf{83.97 \pm 1.20}$ & $\mathbf{88.44 \pm 0.83}$ & $\mathbf{87.37 \pm 1.21}$ & Reference & Reference \\
\midrule
\multicolumn{8}{@{}l@{}}{\textbf{Panel B (Summary): Multi-IMU Sensor Displacement on UCI REALDISP ($S=3$ IMUs, $K=33$, Subjects $\#2, 5, 15$)}} \\
\midrule
\textbf{3-IMU Model ($K=33$)} & \textbf{Par.} & \textbf{Ideal $128$} & \textbf{Ideal $64$} & \textbf{Self} & \textbf{Mutual} & \textbf{Rot45} & \textbf{Subj $\#2/5/15$} \\
\midrule
GRU (Native) & 25,301 & $95.58 \pm 0.82$ & $95.47 \pm 0.67$ & $76.06 \pm 1.77$ & $18.54 \pm 1.86$ & $54.50 \pm 2.07$ & $21.4 / 25.4 / 5.2$ \\
GRU (Lag-Inv) & 25,587 & $96.25 \pm 0.64$ & $95.64 \pm 0.50$ & $95.32 \pm 0.85$ & $87.51 \pm 0.79$ & $96.24 \pm 0.60$ & --- \\
\textbf{Casc.\ (Dir-RMS)} & 25,173 & $93.15 \pm 0.74$ & $92.59 \pm 0.64$ & $92.92 \pm 0.76$ & $\mathbf{83.37 \pm 1.00}$ & $93.28 \pm 0.84$ & $92.5 / 85.7 / 66.1$ \\
\textbf{Casc.\ (Q-Turn)} & 25,379 & $94.09 \pm 0.53$ & --- & $93.33 \pm 1.03$ & $\mathbf{84.23 \pm 1.00}$ & $94.01 \pm 0.51$ & --- \\
\bottomrule
\end{tabular}
\end{table}

\begin{figure*}[!htbp]
\centering
\includegraphics[width=0.96\textwidth]{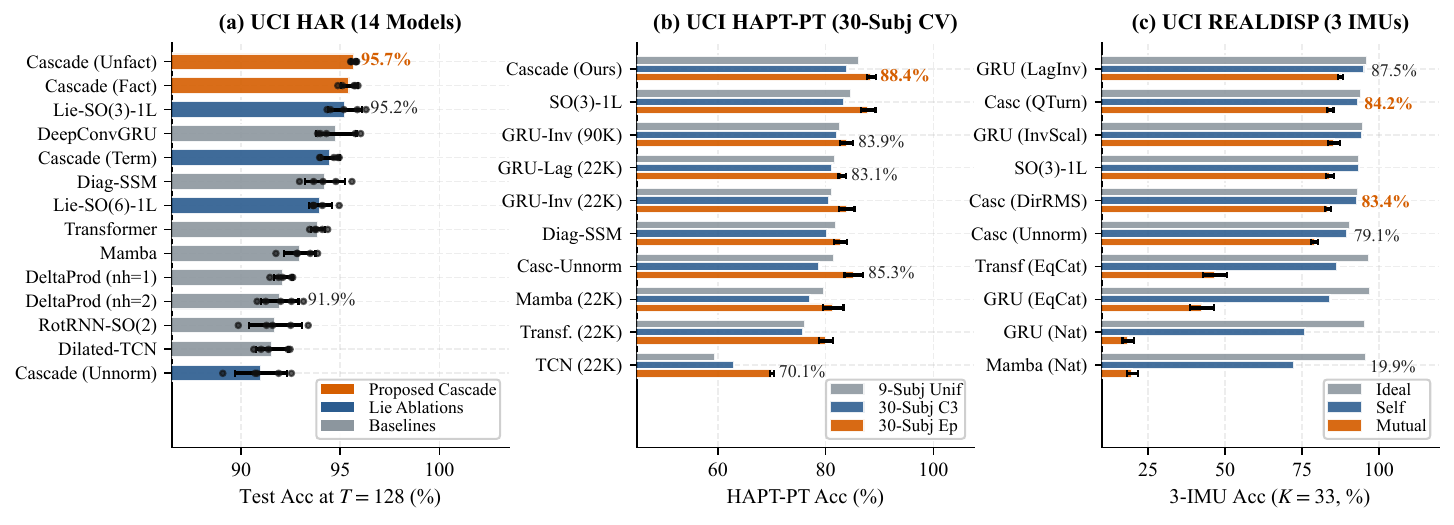}
\caption{Cross-cohort 3D inertial pattern recognition and 6-stem $\times$ capacity attribution across UCI HAR, UCI HAPT-PT, and multi-IMU UCI REALDISP (5 seeds $\{42, 123, 456, 789, 1024\}$). \textbf{(a)} UCI HAR full-horizon ($T=128$) subject-disjoint test accuracy across all 14 capacity-matched architectures ($\text{mean} \pm \text{SD}$ bars with 5-seed scatter). \textbf{(b)} UCI HAPT-PT order-sensitive 3D postural transition accuracy across representative architectures under 3-way grouped comparison: frozen-protocol 30-subject 5-fold cross-validation at uniform central-3 window (\texttt{Win-1074 C3}) and episode-vote (\texttt{Ep-358 Vote}) levels alongside the 9-subject official test split under $\mathrm{Unif}(\SO(3))$ mounting ($88.44\% \pm 0.83\%$ on \texttt{Ep-358 Logit-Vote} and $87.37\% \pm 1.21\%$ on \texttt{Ep-358 C3 Vote} vs.\ $83.91\% \pm 1.21\%$ and $83.58\% \pm 1.16\%$ for $4\times$ larger $90.5\mathrm{K}$ \texttt{DeepConvGRU (Inv-Scalar)}, $p_{\mathrm{Holm,6}} = 0.0017$ and $0.0057$). \textbf{(c)} UCI REALDISP 3-IMU ($K=33, N_{\mathrm{subj,test}}=3$ held-out subjects $\#2, 5, 15$) accuracy across Ideal placement, participant Self-Placement, and physical Mutual Sensor Displacement ($353$ bouts), highlighting the $+4.24\,\mathrm{pp}$ Directional-RMSNorm gain ($83.37\%$ vs.\ $79.13\%, p_{\mathrm{Holm}}=0.0353$), $84.23\% \pm 1.00\%$ for the Eq.~\eqref{eq:quarter_turn_operator} quarter-turn cascade, and $87.51\% \pm 0.79\%$ for \texttt{DeepConvGRU (Lagged-Inv-Scalar)}.}
\label{fig:realworld_har}
\end{figure*}

Table~\ref{tab:stem_capacity_realdisp} (with Tables~\ref{tab:app_full_stem_hapt_cv}--\ref{tab:app_full_realdisp}) and Figure~\ref{fig:realworld_har}(b--c) establish three domain-of-validity and attribution findings across all three wearable benchmarks:
\begin{enumerate}[leftmargin=2em,itemsep=0pt]
    \item \textbf{When Non-Commutative $\SO(3)$ Frame Transport Outperforms Instantaneous, Lagged, and Chiral-Triple Scalar Stems and $4\times$ Capacity Scaling (UCI HAPT-PT 30-Subject CV \& 9-Subject Split, Panel~A)}: On order-sensitive 3D postural transitions, scaling \texttt{Native-Conv1D} $4\times$ to $\sim 90\mathrm{K}$ parameters fails to prevent collapse under $\mathrm{Unif}(\SO(3))$ ($19.81\%$--$21.42\%$, trailing our $22\mathrm{K}$ cascade by $>64\,\mathrm{pp}, p_{\mathrm{Holm,23}} \le 1.45 \times 10^{-6}$). Even with an $\SO(3)$-invariant instantaneous Gram stem (\texttt{Inv-Scalar-Stem}), a 28-channel multi-lag inner-product stem (\texttt{Lagged-Inv-Scalar-Stem}, $\Delta \in \{1, 2, 4, 8\}$), or a 36-channel chiral triple-product stem (\texttt{Chiral-Triple-Inv-Stem}), fixed finite-lag scalars do not match continuous $\SO(3)$ frame transport. Under 30-subject 5-fold CV ($N_{\mathrm{subj}}=30, N_{\mathrm{event}}=358, 1{,}074$ C3 / $1{,}574$ total windows across $5\text{ folds} \times 5\text{ seeds} = 25$ fold-seed runs), our $22.3\mathrm{K}$ cascade reaches $88.44\% \pm 0.83\%$ on all-window \texttt{Ep-358 Logit-Vote} ($87.21\% \pm 1.33\%$ Hard-Vote, identical under Clean and $\mathrm{Unif}(\SO(3))$ and $88.04\% \pm 0.85\%$ at $\sigma_{\mathrm{rot}}=45^\circ$), $87.37\% \pm 1.21\%$ on uniform C3 \texttt{Ep-358 C3 Logit-Vote} ($86.48\% \pm 1.51\%$ C3 Hard-Vote), $83.97\% \pm 1.20\%$ on \texttt{Win-1074 C3}, and $82.69\% \pm 1.11\%$ on \texttt{Win-1574}. Against the $4\times$ larger $90.5\mathrm{K}$ \texttt{DeepConvGRU (Inv-Scalar)} ($83.91\% \pm 1.21\%$ Ep-358 [$83.74\% \pm 1.48\%$ Hard-Vote], $83.58\% \pm 1.16\%$ Ep-358 C3 [$83.07\% \pm 1.16\%$ C3 Hard-Vote], $82.14\% \pm 1.27\%$ Win-1074 C3, $81.86\% \pm 1.10\%$ Win-1574), our $22.3\mathrm{K}$ cascade achieves conditional out-of-fold episode-level superiority across pre-specified 6-, 8-, and 10-comparison families and the 12-comparison Holm family (because 5-fold CV models share overlapping $24$-subject training sets~\citep{bengio2004no} and $K=5$ or $S=5$ sign-flip tests have floor $2/2^5 = 2/32 = 0.0625 > 0.05$, 30-subject OOF tests establish conditional cohort superiority alongside 5-fold and 5-seed sensitivity checks) under all-window voting ($+4.53\,\mathrm{pp}$, raw $p = 8.29 \times 10^{-4}, p_{\mathrm{Holm,6}} = 0.0017, p_{\mathrm{Holm,8}} = 0.0033, p_{\mathrm{Holm,10}} = 0.0028, p_{\mathrm{Holm,12}} = 0.00415$, exact $2^{30}$ sign-flip $p_{\mathrm{perm}} = 2.56 \times 10^{-4}$ [$r = 275{,}100 / 2^{30}$; $p_{\mathrm{perm,w}} = 3.88 \times 10^{-4}$], CI $[+2.39, +6.68]\,\mathrm{pp}$; $5/5$ folds and $5/5$ seeds positive with 5-fold $t = 9.66, p_{\mathrm{fold},t} = 6.44 \times 10^{-4}$ [pooled $7.16 \times 10^{-4}$], 5-seed $p_{\mathrm{seed},t} = 0.00597$, $p_{\mathrm{fold,perm}} = p_{\mathrm{seed,perm}} = 2/32 = 0.0625$; $+3.46\,\mathrm{pp}$ Hard-Vote) and uniform C3 voting ($+3.80\,\mathrm{pp}$, raw $p = 0.00286, p_{\mathrm{Holm,6}} = 0.0057, p_{\mathrm{Holm,8}} = 0.0086, p_{\mathrm{Holm,10}} = 0.0114, p_{\mathrm{Holm,12}} = 0.01428$, $p_{\mathrm{perm}} = 0.00176$ [$r = 1{,}894{,}720 / 2^{30}$; $p_{\mathrm{perm,w}} = 0.00241$], CI $[+1.79, +5.94]\,\mathrm{pp}$; $5/5$ folds and $5/5$ seeds positive with 5-fold $t = 4.88, p_{\mathrm{fold},t} = 0.00818$ [pooled $0.00955$], 5-seed $p_{\mathrm{seed},t} = 0.02046$, $p_{\mathrm{fold,perm}} = p_{\mathrm{seed,perm}} = 2/32 = 0.0625$; $+3.41\,\mathrm{pp}$ C3 Hard-Vote), with positive point gains and marginal window-level $t$-test $p$-values on single $2.56\text{ s}$ windows: $+1.82\,\mathrm{pp}$ on \texttt{Win-1074 C3} ($5/5$ folds and $5/5$ seeds positive, 5-fold $p_{\mathrm{fold},t} = 0.0562$, $p_{\mathrm{fold,perm}} = 0.0625$; 30-subject raw $p = 0.0561, p_{\mathrm{Holm,6}} = 0.0561, p_{\mathrm{Holm,8}} = p_{\mathrm{Holm,12}} = 0.1122$, $p_{\mathrm{perm}} = 0.0564$ [$r = 60{,}562{,}948 / 2^{30}$; $p_{\mathrm{perm,w}} = 0.0624$], cluster bootstrap $p_{\mathrm{Holm,6}} = 0.0398, p_{\mathrm{Holm,8}} = 0.0796$ [$p_{\mathrm{boot},+1} = 0.0400, p_{\mathrm{Holm,6,+1}} = 0.0400, p_{\mathrm{Holm,8,+1}} = p_{\mathrm{Holm,12,+1}} = 0.0800$], CI $[+0.07, +3.68]\,\mathrm{pp}$; equal-weight cluster-macro CI $[+0.11, +3.69]\,\mathrm{pp}$, $p_{\mathrm{boot,macro},+1} = 0.0360$) and $+0.84\,\mathrm{pp}$ on \texttt{Win-1574} ($4/5$ folds positive, $p_{\mathrm{fold},t} = 0.2435, p_{\mathrm{fold,perm}} = 0.25$; 30-subject raw $p = 0.2638$, $p_{\mathrm{perm}} = 0.269$ [$p_{\mathrm{perm,w}} = 0.326$], CI $[-0.72, +2.49]\,\mathrm{pp}$). Against all capacity-matched $22\mathrm{K}$ scalar baselines, 30-subject paired comparisons confirm significant superiority on both \texttt{Win-1074 C3} and episode voting: $p_{\mathrm{Holm,6}} = 0.0333$ ($p_{\mathrm{Holm,12}} = 0.0499$, CI $[+0.93, +6.04]\,\mathrm{pp}$, $+3.39\,\mathrm{pp}$ C3 / $+4.53\,\mathrm{pp}$ Ep-358 [$p_{\mathrm{Holm,6}} = 0.0149, p_{\mathrm{Holm,12}} = 0.0298$] / $+4.47\,\mathrm{pp}$ Ep-358 C3 [$p_{\mathrm{Holm,6}} = 0.0195, p_{\mathrm{Holm,12}} = 0.0390$]) vs.\ $22\mathrm{K}$ \texttt{DeepConvGRU (Inv-Scalar)}, $p_{\mathrm{Holm,10}} = 0.0366$ ($p_{\mathrm{Holm,12}} = 0.0488$, CI $[+0.78, +4.83]\,\mathrm{pp}$, $+2.77\,\mathrm{pp}$ C3 / $+5.31\,\mathrm{pp}$ Ep-358 [$p_{\mathrm{Holm,10}} = 0.0028, p_{\mathrm{Holm,12}} = 0.00391$] / $+4.25\,\mathrm{pp}$ Ep-358 C3 [$p_{\mathrm{Holm,10}} = 0.0114, p_{\mathrm{Holm,12}} = 0.01480$]) vs.\ \texttt{DeepConvGRU (Lagged-Inv-Scalar)} ($81.19\% \pm 0.37\%$ C3 / $83.13\% \pm 0.73\%$ Ep-358 / $83.13\% \pm 0.70\%$ Ep-358 C3), and $p_{\mathrm{Holm,10}} = 0.0163$ ($p_{\mathrm{Holm,12}} = 0.0203$, CI $[+1.33, +5.77]\,\mathrm{pp}$, $+3.54\,\mathrm{pp}$ C3 / $+5.42\,\mathrm{pp}$ Ep-358 [$p_{\mathrm{Holm,10}} = 0.0028, p_{\mathrm{Holm,12}} = 0.00415$] / $+4.92\,\mathrm{pp}$ Ep-358 C3 [$p_{\mathrm{Holm,10}} = 0.0114, p_{\mathrm{Holm,12}} = 0.01480$]) vs.\ \texttt{DeepConvGRU (Chiral-Triple-Inv)} ($80.43\% \pm 1.03\%$ C3 / $83.02\% \pm 0.91\%$ Ep-358 / $82.46\% \pm 0.85\%$ Ep-358 C3, with $0.00028 \le p_{\mathrm{Holm,12}} \le 0.0148$ across all four lagged and chiral scalar-stem baselines on episode voting), alongside nested 358-episode descriptive $p_{\mathrm{Holm,6}} \le 3.0 \times 10^{-4}$ on \texttt{Ep-358} and $+5.14\,\mathrm{pp}$ C3 ($+4.80\,\mathrm{pp}$ Win-1574, $+3.18\,\mathrm{pp}$ Ep-358, $+3.74\,\mathrm{pp}$ Ep-358 C3) over the unnormalized $\rho_1=0.65$ cascade ($p_{\mathrm{Holm,8}} = 7.60 \times 10^{-5}, p_{\mathrm{Holm,12}} = 1.37 \times 10^{-4}$, CI $[+3.27, +7.08]\,\mathrm{pp}$). On the 9-subject split ($N_{\mathrm{subj,test}}=9, N_{\mathrm{event,test}}=108$), \texttt{Full-Equiv-Frame} ($86.23\% \pm 0.64\%$) beats $22\mathrm{K}$ \texttt{DeepConvGRU (Inv-Scalar)} ($81.23\% \pm 2.05\%$, $+5.00\,\mathrm{pp}, p_{\mathrm{Holm,23}} = 0.0142$, CI $[+1.42, +9.20]\,\mathrm{pp}$), $90.5\mathrm{K}$ \texttt{DeepConvGRU (Inv-Scalar)} ($82.65\% \pm 1.93\%$, $+3.58\,\mathrm{pp}, p = 0.0303$), \texttt{Lagged-Inv} ($81.79\% \pm 1.53\%$, $+4.44\,\mathrm{pp}$), and \texttt{Chiral-Triple} ($83.27\% \pm 1.44\%$, $+2.96\,\mathrm{pp}$), with prospective $17+4$ validation selecting orientation-agnostic refit ($86.23\% \pm 0.64\%$ refit; $85.99\% \pm 1.52\%$ direct 17-subject; $86.73\% \pm 0.90\%$ clean refit / $87.10\% \pm 0.80\%$ direct 17-subject). \texttt{Full-Equiv-Frame} cascade ($83.97\%$ C3 / $88.44\%$ Ep-358) and single-layer \texttt{Lie-SO(3)-1L (Equiv-Frame)} ($83.33\% \pm 1.23\%$ C3 / $87.88\% \pm 1.38\%$ Ep-358 / $86.65\% \pm 1.18\%$ Ep-358 C3 / $82.24\% \pm 1.02\%$ Win-1574, $\Delta = +0.63\,\mathrm{pp}$ C3, $p = 0.386$, which does not establish equivalence or Depth-2 necessity) show no significant difference on single-maneuver IMU transitions, whereas Depth-2 cascades ($M \ge 2$) are algebraically necessary on discrete holonomy groups outside $\SO(3)^m$ ($S_5, F_{20}, \Heis(\FF_p)$).
    \item \textbf{Physical Gravity Non-Identifiability on Static Waist Postures (UCI HAR, Panel~A)}: On single-sensor waist UCI HAR, nominal cascades reach $95.48\% \pm 0.48\%$ Clean ($82.56\% \pm 0.87\%$ Rot45), whereas global $\SO(3)$ quotienting lowers accuracy to $84.76\% \pm 2.56\%$ (\texttt{1L Equiv-Frame}) and $84.22\% \pm 2.96\%$ (\texttt{Cascade Full-Equiv-Frame}) across all gauges (vs.\ $90.58\%$--$92.59\%$ for \texttt{Inv-Scalar-Stem}). Since static \texttt{SITTING} and \texttt{LAYING} differ by pelvic gravity inclination, global $\SO(3)$ quotienting removes this reference, which prospective $17+4$ validation ($94.19\% \pm 0.94\%$ \texttt{Val-Clean} refit; $93.18\% \pm 0.78\%$ direct 17-subject) resolves under nominal mounting.
    \item \textbf{Multi-IMU Physical Sensor Displacement and Directional RMSNorm Closure (UCI REALDISP, Panel~B)}: On 33-class 3-IMU REALDISP~\citep{banos2014dealing} (test subjects $\#2, 5, 15$ across $353$ Mutual bouts and $7{,}780$ windows after selection on validation subjects $\#10, 14, 16$, Tables~\ref{tab:stem_capacity_realdisp} and~\ref{tab:app_full_realdisp}), \texttt{Native-Conv1D} reaches $94.34\%$--$95.88\%$ Ideal, drops to $72.25\%$--$76.06\%$ Self-Placement, and collapses to $17.72\%$--$19.91\%$ Mutual (\texttt{GRU Native}: $21.35\% / 25.42\% / 5.21\%$ on $\#2 / \#5 / \#15$; $35.66\%$--$46.75\%$ with \texttt{Equiv-Concat}). Sensor-wise $\SO(3)^3$ invariance eliminates this collapse ($+63.4$ to $+69.8\,\mathrm{pp}$ Mutual gain). Within our Depth-2 cascade ($25{,}173$ params), Directional RMSNorm (\texttt{Full-Equiv-Frame} vs.\ Unnormalized $\rho_1 = 0.65$) improves Ideal by $+2.54\,\mathrm{pp}$ ($93.15\%$ vs.\ $90.62\%, p_{\mathrm{Holm}} = 0.0175$), Self-Placement by $+3.27\,\mathrm{pp}$ ($92.92\%$ vs.\ $89.65\%, t = 11.84, p_{\mathrm{Holm}} = 0.0032$, CI $[+1.48, +5.23]\,\mathrm{pp}$), and Mutual by $+4.24\,\mathrm{pp}$ ($83.37\% \pm 1.00\%$ vs.\ $79.13\% \pm 1.07\%, t = 5.80, p_{\mathrm{Holm}} = 0.0353$, CI $[+3.09, +5.51]\,\mathrm{pp}$; $\#2$: $92.53\%$ vs.\ $89.32\%$; $\#5$: $85.72\%$ vs.\ $80.29\%$; $\#15$: $66.10\%$ vs.\ $61.79\%$). Restoring $\mathcal{J}_\alpha(f_1, \mathbf{v})$ of Eq.~\eqref{eq:quarter_turn_operator} with causal low-pass filtering and LayerNorm ($25{,}379$ params) raises cascade accuracy to $94.09\% \pm 0.53\%$ Ideal and $84.23\% \pm 1.00\%$ Mutual ($+0.86\,\mathrm{pp}$, $p_{\mathrm{Holm}} = 0.0400$, CI $[+0.04, +1.75]\,\mathrm{pp}$), matching \texttt{Lie-SO(3)-1L (Equiv-Frame + Causal-LP + LN)} ($84.61\% \pm 1.17\%, \Delta = -0.38\,\mathrm{pp}, p = 0.703$, CI $[-1.14, +0.46]\,\mathrm{pp}$; base \texttt{1L} $84.13\% \pm 1.28\%$) and \texttt{DeepConvGRU (Inv-Scalar)} ($85.44\% \pm 1.93\%, \Delta = -1.21\,\mathrm{pp}, p = 0.370$, CI $[-2.65, +0.33]\,\mathrm{pp}$), while multi-lag ($28$ ch) and chiral-triple ($36$ ch) stems on 3-IMU cyclic exercises allow \texttt{DeepConvGRU (Lagged-Inv)} ($96.25\% \pm 0.64\%$ Ideal, $87.51\% \pm 0.79\%$ Mutual) and \texttt{Chiral-Triple} ($96.68\% \pm 0.54\%$ Ideal, $86.67\% \pm 1.17\%$ Mutual) to surpass the 3-IMU cascade.
\end{enumerate}

\section{Discussion, Scope, and Limitations}
\label{sec:discussion}

Our theoretical and empirical results clarify three principles governing non-commutative sequence modeling:
\begin{enumerate}[leftmargin=2em,itemsep=0pt]
    \item \textbf{Geometric Domain of Validity Across Discrete Algebra, 6-DoF Odometry, and Wearable Inertial Recognition}: Across synthetic word problems (Tables~\ref{tab:s5_random_init_closure}--\ref{tab:d16_saddle_closure}) and 3D inertial benchmarks (Tables~\ref{tab:realworld_har_benchmark}--\ref{tab:stem_capacity_realdisp}), aligning the transition manifold with the physical symmetry group outperforms mismatched representations within its domain of validity. Our results separate \emph{algebraic cascade necessity on non-solvable extensions outside $\SO(3)^m$} from \emph{3D frame-transport utility on wearable IMU streams}: on $S_5$, single-layer $\SO(3)^m$ is obstructed by $[u^{30}, v^{30}]=e$ ($2.70\%$), requiring $\SO(5)/\SO(6)$ lifting ($99.90\% \pm 0.22\%$) or a Depth-2 Directional-RMSNorm $\SO(3)$ cascade ($99.40\% \pm 0.65\%$); on 3D IMU streams, physical attitude lies in $\SO(3)$, so \texttt{Lie-SO(3)-1L} and \texttt{Lie-SO(3)-Cascade} achieve comparable accuracy while outperforming $\SO(6)$ and unconstrained backbones. Moreover, 6-stem attribution and 30-subject 5-fold CV (Table~\ref{tab:stem_capacity_realdisp}) delineate when scalar invariants excel versus when $\SO(3)/\SE_2(3)$ frame transport provides a significant advantage: on cyclic exercises across held-out subjects $\#2, 5, 15$ of REALDISP (where 5-seed paired tests measure training-seed reproducibility conditioned on the $n_{\mathrm{test}}=3$ held-out test subjects), instantaneous scalars match $\SO(3)$ frame transport ($84.23\%$ vs.\ $85.44\%$ Mutual, $p=0.370$) and multi-lag / chiral-triple stems surpass the 3-IMU cascade ($87.51\% \pm 0.79\%$ and $86.67\% \pm 1.17\%$); on static postures (UCI HAR), global $\SO(3)$ quotienting exposes vertical-gravity non-identifiability between \texttt{SITTING} and \texttt{LAYING} ($84.22\%$ vs.\ $95.48\%$ nominal); whereas on time-ordered 3D postural transitions (UCI HAPT-PT in conditional frozen-specification 5-fold CV on the reused 30-subject cohort~\citep{cawley2010overfitting,demvsar2006statistical} alongside the external 17-subject REALDISP cohort) and 6-DoF strapdown odometry (Table~\ref{tab:dyck2_se3_combined}, Part~B), $\SO(3)$ frame transport and $\SE_2(3)$ Left-Jacobian coupling outperform all capacity-matched $22\mathrm{K}$ scalar stems on both window and episode metrics and significantly outperform the $4\times$ larger $90.5\mathrm{K}$ baseline at the episode level ($p_{\mathrm{Holm,6}} = 0.0017, p_{\mathrm{Holm,12}} = 0.00415$ on all-window episodes; $p_{\mathrm{Holm,6}} = 0.0057, p_{\mathrm{Holm,12}} = 0.01428$ on C3 episodes).
    \item \textbf{Lie Tangent Exponentials vs.\ Householder Reflection Products and Block-Diagonal Static-Orthogonal Controls}: By Theorem~\ref{thm:master2_dimension_headcount}(c)(ii), chaining $n_h = 2\lfloor d/2 \rfloor$ exact reflections ($\beta = 2$) generates $\SO(d)$, whereas single reflections ($n_h=1, \det=-1$) fail on $A_5$ ($1.60\%$) and $D_{16}$ ($23.90\%$), consistent with per-step orientation reversal ($\det H(v) = -1$) and rank-1 subspace restriction, and the block-diagonal radial control (\texttt{expRNN-RadBlk}: $z_{t,m} = W_{\mathrm{orth},m} h_{t-1,m} + U_m x_t$, $h_{t,m} = \mathrm{ReLU}(\|z_{t,m}\|_2 + b_m)\,z_{t,m}/\|z_{t,m}\|_2$ with static $W_{\mathrm{orth},m} = \exp(A_m - A_m^\top) \in \SO(6)$) stalls at $0.60\%$--$1.00\%$ on $S_5$. On $S_5$ ($d=6$), \texttt{Orthogonal DeltaProduct-SO(6)} ($\beta = 2$) reaches $89.50\% \pm 21.27\%$ at $n_h = 4$ and $99.90\% \pm 0.22\%$ at $n_h = 6 = 2\lfloor 6/2 \rfloor$, whereas learnable gates $\beta \in (0, 2)$ initialized at $b_0 = 2.0$~\citep{grazzi2025unlocking,deltaproduct} drop at $T=128$ to $74.80\%/67.00\%$ ($n_h=4$) and $63.10\% \pm 12.64\%$ ($n_h=6$), consistent with dissipative contraction ($|1 - \beta_{t,k}| < 1$ over $n_h T \in \{512, 768\}$).
    \item \textbf{Scope and Limitations}: First, Theorems~\ref{thm:master1_division}--\ref{thm:master2_dimension_headcount} govern homogeneous compact heads ($\SO(d), \SU(2), \OO(d)$ with $\rho_{a,h} \in [0, 1]$), and prefix scans use $\mathcal{O}(d^3)$ matrix products per node ($d \in \{3, 4, 5, 6\}$). Second, compact rotation manifolds cannot pack exponential free-group trees $F_2$ at fixed precision without non-compact counters or KV-cache checkpoints (Theorem~\ref{thm:master3_cascades_gating}(b)(iii); Table~\ref{tab:dyck2_se3_combined}, Part~A). Third, our wearable evaluation spans three pre-segmented benchmarks ($T=128$ at $50\text{ Hz}$) across two cohorts ($30$ subjects in UCI HAR / HAPT-PT and $17$ subjects in 3-IMU REALDISP with $n_{\mathrm{test}}=3$ held-out test subjects); extending Lie-SSMs to unsegmented streams and warp-level CUDA kernels remains future work.
\end{enumerate}

\section{Conclusion}
\label{sec:conclusion}

We presented a unified theoretical and empirical framework for Non-Commutative Lie-Algebraic State-Space Models (Lie-SSMs). Using an o-minimal cylindrical flag collar, we established an unconditional single-layer finite-division criterion for contracting $\SO(3)$ and $\SU(2)$ heads, deriving odd-prime Sylow abelianity, odd-$p$-core centrality $[u^2, x] = e$, exponent-30 commutator laws ($[u^{30}, v^{30}] = e$ in $\SO(3)^m$ and $[u^{30}, v^{30}] \in Z(G), [u^{30}, v^{30}]^2 = e$ in $\SU(2)^m, \SO(4)^m$), sharp $\OO(d)$ vs.\ $\SO(d)$ dimension thresholds, and coordinate-kernel / $\FF_2$ Arf head-count laws. We proved that Depth-2 semidirect and Schreier $2$-cocycle cascades resolve $S_5, 2I, 2T, 2O, Q_{16},$ and $\Heis(\FF_p)$ in $\mathcal{O}(\log T)$ scan depth with Directional RMSNorm, and characterized the Chen--Magnus--Hall--Witt $\Theta(\varepsilon^{c-1})$ commutator saddle, $1/|[G,G]|$ coset-centroid dilution, and $O(T \rho_{\mathrm{ab}}^{T-1})$ terminal decay, proving generic first-order symmetry breaking at $\varepsilon=0$ and local Polyak--\L{}ojasiewicz convergence ($\theta_{\max} < \pi$) validated across $S_5, D_{16}$, 6-DoF $\SE_2(3)$ odometry, parallel GPU/CPU scans, and UCI HAR, HAPT-PT, and 3-IMU REALDISP.

\paragraph{CRediT, Competing Interest, Funding, Data/Code Availability, and Supplementary Material}
\textbf{Zeyu Jia:} Conceptualization, Methodology, Formal analysis, Software, Validation, Investigation, Data curation, Visualization, Writing -- original draft, Writing -- review \& editing. The author declares no competing interests and received no external funding. PyTorch implementations, verification scripts, and receipts are in the supplementary archive; UCI HAR~\citep{anguita2013public}, UCI HAPT-PT~\citep{reyes2016transition}, and UCI REALDISP~\citep{banos2014dealing} are public at the UCI Repository; proofs (\ref{app:proof_master1}--\ref{app:proof_master4}) and extended tables (\ref{app:supplementary_tables}, Tables~\ref{tab:app_full_stem_hapt_cv}--\ref{tab:app_full_realdisp}) are in \texttt{supplementary\_material.pdf}.

\paragraph{Declaration of Generative AI and AI-assisted Technologies in the Manuscript Preparation Process}
During preparation of this work, the author used generative AI-assisted tools to assist with software and verification script implementation, \LaTeX{} typesetting, and language polishing (specifically OpenAI Codex and Google Antigravity (Gemini)). All mathematical proofs, experimental designs, code implementations, numerical results, and manuscript text were independently reviewed and verified by the author, who takes full responsibility for the content of the publication.

\begingroup
\singlespacing
\appendix
\numberwithin{equation}{section}
\setcounter{table}{11}
\renewcommand{\thetable}{\Alph{section}.\arabic{table}}

\section{Proofs of Propositions 1--2 and Theorem 1 (Single-Layer Finite Division, Variety Laws, and Spherical Margin Dichotomy)}
\label{app:proof_master1}

\begin{proof}[Proof of Proposition~\ref{prop:covering_vs_fibration} and Citation Note for Proposition~\ref{prop:klein}]
\textbf{Proposition~\ref{prop:covering_vs_fibration}(a):} Identifying $\SU(2) = \{q = a + b\mathbf{i} + c\mathbf{j} + d\mathbf{k} \in \HH : |q| = 1\} \cong S^3 \subset \R^4$ and $\mathfrak{su}(2) \cong \mathrm{Im}(\HH) \cong \R^3$, the adjoint action $\pi(q)(v) = q v q^{-1} = q v \bar{q}$ preserves the Euclidean norm $|q v \bar{q}| = |q||v||\bar{q}| = |v|$ and orientation, defining a surjective Lie group homomorphism $\pi: \SU(2) \twoheadrightarrow \SO(3)$ with $\ker(\pi) = \{q \in S^3 : q v = v q\;\forall v \in \mathrm{Im}(\HH)\} = \{\pm I_2\} \cong \ZZ_2$. Since $S^3$ is simply connected ($\pi_1(S^3) = 0$), $\pi$ is the $2$-to-$1$ universal covering $\SU(2) \to \SO(3) \cong S^3/\{\pm 1\} \cong \R P^3$ ($\dim \SO(3) = 3, \pi_1(\SO(3)) \cong \ZZ_2$), whereas the Hopf fibration $p(q) = q \mathbf{i} \bar{q} \in S^2$ quotients by the circle stabilizer $S^1 = \{e^{\theta\mathbf{i}}\} \le \SU(2)$ onto the $2$-dimensional base $S^2$ ($\dim S^2 = 2, \pi_1(S^2) = 0$). \textbf{Part~(b):} If $\Lambda_t \Lambda_{t'} = \Lambda_{t'} \Lambda_t$ for all $t, t'$, induction on $T$ yields $\Lambda_{1:T} = \prod_{t=1}^T \Lambda_t = \prod_{t=1}^T \Lambda_{\pi(t)}$ for every permutation $\pi \in S_T$; hence any two words with the same Parikh token counts induce the identical transition operator $\Lambda_{1:T}$ and cannot separate $g_1 g_2 \ne g_2 g_1$ through $\Lambda_{1:T}$ for any $d$. \textbf{Part~(c):} A string $w \in \{+1, -1\}^T$ is balanced iff no prefix has excess closing brackets ($c_t = \sum_{\tau=1}^t x_\tau \ge 0\;\forall t$) and $c_T = 0$. In $h_t = \lambda h_{t-1} + x_t = \sum_{\tau=1}^t \lambda^{t-\tau} x_\tau$ ($\lambda \in (0, 1)$), the contribution of $x_1 = +1$ at $t = L+1$ is $\lambda^L$, which remains $\ge \epsilon_{\mathrm{mach}}$ iff $L \le \frac{\log(1/\epsilon_{\mathrm{mach}})}{\log(1/\lambda)}$ (at $\lambda = 0.95$, $\lfloor \frac{23\log 2}{-\log 0.95} \rfloor = 310$ for $\epsilon_{\mathrm{mach}} = 2^{-23}$ and $\lfloor \frac{24\log 2}{-\log 0.95} \rfloor = 324$ for $u = 2^{-24}$). \textbf{Proposition~\ref{prop:klein}:} Every finite subgroup $F \le \SO(3)$ acts by isometries on $S^2$; applying the orbit-stabilizer identity $\sum_{g \in F \setminus \{e\}} |\mathrm{Fix}_{S^2}(g)| = 2(|F| - 1) = |F|\sum_{i=1}^s (1 - 1/n_i)$ to the pole orbits yields $C_n, D_n, T \cong A_4, O \cong S_4,$ and $I \cong A_5$~\citep{klein1884vorlesungen}, and lifting via $\pi^{-1}(F) \le \SU(2)$ yields $C_n, \mathrm{BD}_{2n}, 2T \cong \SL_2(\FF_3), 2O,$ and $2I \cong \SL_2(\FF_5)$~\citep{gorenstein1968finite}.
\end{proof}

\begin{proof}[Proof of Theorem~\ref{thm:master1_division}]
\textbf{Part (a): Multi-rate o-minimal flag collar and unconditional finite division.}
If $|G| = 1$, a constant decoder suffices and $G$ divides the trivial group. Assume $|G| > 1$. Define the universal identity prefix $u_0 = a_1^{|G|} a_2^{|G|} \cdots a_k^{|G|} \in \Sigma^+$. Since $\eta(a_i)^{|G|} = e$ in $G$, $\eta(u_0) = e$ and $\eta(u_0 w) = \eta(w)$ for all $w \in \Sigma^*$. After reading $u_0$, head $h$ has state norm $\alpha_h = \|x_{u_0,h}\|_2 = \|x_{0,h}\|_2 \prod_{i=1}^k \rho_{a_i,h}^{|G|}$. Any head with $\alpha_h = 0$ remains identically zero on $u_0 \Sigma^*$; since $D$ is surjective onto nontrivial $G$, the set of surviving heads $S = \{h \in \{1, \dots, m\} : \alpha_h > 0\}$ is nonempty, and $\rho_{a_i,h} \in (0, 1]$ for all $h \in S$ and $a_i \in \Sigma$.

For each $h \in S$, transpose $z_{0,h} = x_{u_0,h} / \alpha_h \in S^2$ (or $S^3 \subset \R^4$ for $\SU(2)$) as a unit row vector so that tokens act on the right by orthogonal matrices $B_{w,h} \in K_h$ ($K_h = \SO(3)$ or $\SU(2)$) in written order, and set $\ell_{i,h} = -\log \rho_{a_i,h} \ge 0$. For any suffix $w \in \Sigma^*$ with token counts $n_i = |w|_{a_i}$, the state is $x_{u_0 w} = T(e^{-n_1}, \dots, e^{-n_k}, z_0 B_w)$, where $T(r_1, \dots, r_k, z)_h = \alpha_h (\prod_{i=1}^k r_i^{\ell_{i,h}}) z_h$ for $h \in S$ and $0$ for $h \notin S$. Let $H = \overline{\{B_w : w \in \Sigma^*\}} \le K^{|S|}$ and $K_e = \overline{\{B_w : \eta(w) = e\}} \trianglelefteq H$. Both $H$ and $K_e$ are compact Lie groups, and for each $g \in G$ with representative $v_g$ ($\eta(v_g) = g$), the normalized state closure $M_g = z_0 K_e B_{v_g} \subset (S^2)^{|S|}$ is a single compact semialgebraic $K_e$-orbit~\citep{bochnak1998real}, so any two $M_g, M_{g'}$ are either equal or disjoint.

Fix $M = M_{g_0}$. By cylindrical o-minimal cell decomposition~\citep{vandendries1998tame} of $M \times (0, 1]^k$ subordinated to $\{(D \circ T)^{-1}(g)\}_{g \in G}$ with variable ordering $(z, r_k, r_{k-1}, \dots, r_1)$, there exists an open cylindrical flag cell $\mathcal{C} = \{(z, r_k, \dots, r_1) : z \in U, 0 < r_k < \delta_k(z), \dots, 0 < r_1 < \delta_1(z, r_k, \dots, r_2)\}$ on which $D \circ T \equiv g_*$, where $U \subseteq M$ is relatively open and nonempty and each $\delta_j > 0$ is continuous. Pick $z_* \in U$ and $\varepsilon > 0$ such that $U_0 = \overline{B}_{(k+1)\varepsilon}(z_*) \cap M \subset U$. For any $g \in G$ with $M_g = M$, pick $v \in \Sigma^*$ with $\eta(v) = g$ and $\|z_0 B_v - z_*\|_2 < \varepsilon$. Since $Q_j = B_{a_j^{|G|}} \in K_e$, by Poincar\'e recurrence in the compact group $K_e$ we may choose integers $m_k, m_{k-1}, \dots, m_1 \ge 1$ in descending order $j = k, \dots, 1$ such that $r_j = e^{-(|v|_{a_j} + |G| m_j)} < \min_{z \in U_0} \delta_j(z, r_k, \dots, r_{j+1})$ and $\|Q_j^{m_j} - I\|_{\mathrm{op}} < \varepsilon / \sqrt{|S|}$. Then the coordinate-isolated padded word $w_g = v a_k^{|G|m_k} \cdots a_1^{|G|m_1} \in \Sigma^+$ has $\eta(w_g) = \eta(v) = g$, radial coordinates $(r_1, \dots, r_k)$, and $\|z_0 B_{w_g} - z_*\|_2 < (k+1)\varepsilon$, so $(z_0 B_{w_g}, r_k, \dots, r_1) \in \mathcal{C}$ and $g = D(x_{u_0 w_g}) = g_*$. Thus $\{M_g\}_{g \in G}$ are pairwise disjoint. Consequently, $L = \overline{\{(B_w, \eta(w)) : w \in \Sigma^*\}} \le H \times G$ projects isomorphically onto $H$, inducing a continuous surjective homomorphism $\pi: H \twoheadrightarrow G$. By Gelander's finite-supplement theorem~\citep{gelander} (Lemma 3.6), $H$ contains a finite subgroup $F$ meeting every connected component ($H = F H^\circ$); since $\pi(H^\circ) = \{e\}$, $\pi(F) = G$. Conversely, if $\pi: F \twoheadrightarrow G$ for finite $F \le K^m$, choosing a generic $x_0$ with free $F$-orbit and setting $S_g(\hat{x}) = \max_{q \in F, \pi(q^{-1})=g} \langle q \hat{x}_0, \hat{x} \rangle$ realizes exact tracking across all $w \in \Sigma^+$ via a finite ReLU network with at most $C_{\mathrm{PL}} \le |F| = O(1)$ affine regions (and a single linear layer $C_{\mathrm{PL}} = 1$ when $N = \ker\pi = \{e\}$, i.e., $G \le K^m$).

\textbf{Part (b): Perfect targets, $120r$ GELU pooling, and Sylow/odd-$p$-core/exponent-30 laws.}
For (i), if $G$ is perfect and $\pi: F \twoheadrightarrow G$ with $F \le \SO(3)^m$, the derived series of $F$ stabilizes at a perfect subgroup $P \le F$ with $\pi(P) = G$. Each coordinate projection of $P$ is a perfect subgroup of $\SO(3)$, hence $\{e\}$ or $A_5$ by Proposition~\ref{prop:klein}. By subdirect decomposition~\citep{gorenstein1968finite}, $P \cong A_5^s$ ($s \le m$) and $G \cong A_5^r$ ($r \le s \le m$). In $\SU(2)^m$, each nontrivial perfect coordinate image is $2I \cong \SL_2(\FF_5)$, so $P/Z_0 \cong A_5^s$ for $Z_0 = P \cap Z((2I)^s)$. Since $(2I)^s$ is the universal central extension of $A_5^s$~\citep{milnor1971introduction}, $P$ and $G$ are central quotients $(2I)^r / Z$ ($Z \le C_2^r, r = v_5(|G|) \le m$). On $(2I)^r / C_2^r \cong A_5^r$, distinct elements of $2I \le \SU(2)$ satisfy $|\langle q' y_{0,h}, q_h y_{0,h} \rangle| \le \cos(\pi/5) = \frac{1+\sqrt{5}}{4}$ for $q' \ne \pm q_h$. With $c_1 = \frac{5+\sqrt{5}}{8}$ and $\alpha_1 = \frac{20}{1-c_1}$, the true antipodal coset $\{q_h, -q_h\}$ produces $\GELU(20) + \GELU(\alpha_1(-1-c_1)) \approx 20$, whereas every other coset produces $\le 2 |\GELU(-20)| < 2 \times 10^{-85}$, separating all $60^r$ classes with $120r$ hidden units.

For (ii), in every finite subgroup $F_h$ of $\SO(3)$ ($C_n, D_n, A_4, S_4, A_5$) or $\SU(2)$ ($C_n, \mathrm{BD}_{2n}, 2T, 2O, 2I$), all Sylow $p$-subgroups for odd primes $p \ge 3$ are cyclic (orders of $A_4, S_4, A_5, 2T, 2O, 2I$ divide $120 = 2^3 \cdot 3 \cdot 5$). Hence $\Syl_p(\prod_{h=1}^m F_h)$ and $\Syl_p(G)$ are abelian, excluding $\Heis(\FF_p)$. For (iii), let $p \ge 3$ be any odd prime (including $p=3$) and let $\pi: F \twoheadrightarrow G$ ($F \le \prod_{h=1}^m F_h$). Note that $\pi^{-1}(O_p(G)) = N P$ where $N = \ker\pi$ and $P \in \Syl_p(\pi^{-1}(O_p(G)))$ satisfies $\pi(P) = O_p(G)$ (even when $O_p(F) = \{e\}$, as in $S_4 \twoheadrightarrow S_3$). By the Frattini argument, $F = N \cdot N_F(P)$, so every $u \in G$ lifts to $f \in N_F(P)$ normalizing $P$. In every coordinate factor $F_h$ of $\SO(3), \SU(2),$ or $\SO(4)$, any element $f_h \in F_h$ normalizing an odd $p$-subgroup $P_h \le F_h$ satisfies $|N_{F_h}(P_h) / C_{F_h}(P_h)| \mid 2$ (since $\Aut(P_h)$ in $C_n, D_n, \mathrm{BD}_{2n}$ is generated by reflection/inversion of order $\le 2$, while for $p=3$ in $A_4, S_4, A_5, 2T, 2O, 2I$ the normalizer of $C_3$ in $S_4, A_5$ is $S_3$ or $D_3$ with $|N/C| = 2$ and in $A_4$ has $|N/C| = 1$). Hence $f_h^2 \in C_{F_h}(P_h)$, so $[f^2, P] = \{e\}$ in $F$ and projecting via $\pi$ yields $[u^2, x] = e$ for all $u \in G, x \in O_p(G)$. For (iv), in $C_n, D_n$ every $2$nd power lies in $C_n$; $\exp(A_4) = 6 \mid 30$; $\exp(A_5) = 30$; and in $S_4$ every $6$th power lies in the abelian normal subgroup $V_4$. Thus $[u^{30}, v^{30}] = e$ holds in $\SO(3)^m$ (and $[u^{30}, v^{30}] \in Z(G)$ of order $\le 2$ in $\SU(2)^m$), which is violated in $S_5$ by $x=(1234), y=(1235)$ where $[x^{30}, y^{30}] = (245) \ne e$.

\textbf{Part (c): Spherical margin dichotomy, state packing vs.\ cross-label margin, and ReLU complexity.}
If $G \cong F/N$ ($F \le \SO(3)^m$), the finite orbit $F \cdot \hat{x}_0$ has constant minimum separation $\Delta_{\mathrm{label}}(N; \theta) \ge \Delta_0(F) > 0$; conversely, for any fixed $\theta$, $\inf_{N \ge 1} \Delta_{\mathrm{label}}(N; \theta) = \delta > 0 \iff \mathrm{dist}(M_g, M_{g'}) \ge \delta > 0$ ($\forall g \ne g'$), which implies $G$ divides the finite supplement $F \le \overline{\langle B_a \rangle}$ by Part (a) (whereas if $G$ does not divide $\SO(3)^m$, $M_g \cap M_{g'} \ne \varnothing$ for some $g \ne g'$ forces $\inf_{N \ge 1} \Delta_{\mathrm{label}}(N; \theta) = 0$). On $S_5$ with $c=(12345), t=(12)$, $ct=(1345)$ and $tc=(2345)$ have order $4$, so $\eta(W_L) = (1345)^{30}(2345)^{30} = (14)(35)(24)(35) = (142) \ne (124) = \eta(W_R)$, whereas in any finite subgroup of $\SO(3)^m$, $(R_t R_c)^{30}$ and $(R_c R_t)^{30}$ commute universally by Eq.~\eqref{eq:exp30_law} and $W_L, W_R$ share Parikh vector $(60, 60)$, forcing $x_{W_L} = x_{W_R}$ at $N_{\mathrm{alg}} = 120$ for arbitrary single-step rotations in $F \le \SO(3)^m$. On $\Heis(\FF_p)$ ($p \ge 3$ odd), under any group homomorphism $\Phi: \Heis(\FF_p) \to F \le \SO(3)^m$, $\Phi(\Heis(\FF_p))$ is a finite $p$-subgroup of $F$, which is abelian by Part~(b)(ii), so $\Phi([a,b]) = I$ and $x_{ab} = x_{ba}$ at $N_{\mathrm{hom}} = 2$; by contrast, generator orders $R_a^p = R_b^p = I$ alone without the central-commutator relations $[a,[a,b]] = [b,[a,b]] = [a,b]^p = e$ do not force $R_a R_b = R_b R_a$ because two elements of order $p=3$ in $A_4 \le \SO(3)$ generate a non-$p$-group of order $12$ rather than a $3$-subgroup: explicitly, $R_a = \bigl[\begin{smallmatrix}0&0&1\\1&0&0\\0&1&0\end{smallmatrix}\bigr]$ and $R_b = \bigl[\begin{smallmatrix}0&0&-1\\-1&0&0\\0&1&0\end{smallmatrix}\bigr] \in A_4 \subset \SO(3)$ satisfy $R_a^3 = R_b^3 = I_3$ and separate $x_{ab} = R_b R_a e_1 = e_3 \ne -e_3 = R_a R_b e_1 = x_{ba}$ ($\|x_{ab} - x_{ba}\|_2 = 2$) at length $2$, while failing the central-commutator relations since $[R_a,[R_a,R_b]] = \mathrm{diag}(1,-1,-1) \ne I_3$.

For Part~(c)(iii), we distinguish the \emph{nearest-state spherical packing distance} $\delta_{\mathrm{state}}(N; \theta) := \min_{\hat{x}_u \ne \hat{x}_v} \|\hat{x}_u - \hat{x}_v\|_2$ across all $K_{\mathrm{state}}(N; \theta) := |\{\hat{x}_w : w \in \Sigma^{\le N}\}|$ distinct reachable states on $(S^2)^m$ from the \emph{cross-label margin} $\Delta_{\mathrm{label}}(N; \theta) := \min_{\eta(u) \ne \eta(v)} \|\hat{x}_u - \hat{x}_v\|_2$ and the \emph{Myhill--Nerode congruence margin} $\Delta_{\mathrm{nerode}}(N; \theta) := \min_{u \not\equiv_{\mathcal{L}} v} \|\hat{x}_u - \hat{x}_v\|_2$. On $(S^2)^m$, any $K_{\mathrm{state}}(N; \theta)$ pairwise-distinct reachable states with minimum separation $\delta_{\mathrm{state}}(N; \theta)$ induce $K_{\mathrm{state}}(N; \theta)$ disjoint spherical caps of chordal radius $r = \delta_{\mathrm{state}}(N; \theta)/(2\sqrt{m})$, whose product area obeys $K_{\mathrm{state}}(N; \theta) (\pi r^2)^m \le (4\pi)^m$ by Archimedes' hat-box theorem, giving $\delta_{\mathrm{state}}(N; \theta) \le 4\sqrt{m} K_{\mathrm{state}}(N; \theta)^{-1/(2m)}$. Under any Myhill--Nerode class-injective representation ($\hat{x}_u = \hat{x}_v \iff u \equiv_{\mathcal{L}} v$, so $K_{\mathrm{state}}(N; \theta) = K_{\mathrm{nerode}}(N)$ without redundant within-class state splitting), every pair of distinct states belongs to distinct Myhill--Nerode classes ($u \not\equiv_{\mathcal{L}} v$), so $\Delta_{\mathrm{nerode}}(N; \theta) = \delta_{\mathrm{state}}(N; \theta) \le 4\sqrt{m} K_{\mathrm{nerode}}(N)^{-1/(2m)}$ (and $\Delta_{\mathrm{label}}(N; \theta) = \delta_{\mathrm{state}}(N; \theta)$ whenever $u \equiv_{\mathcal{L}} v \iff \eta(u) = \eta(v)$ and $\hat{x}_w$ is class-constant, or $4\sqrt{m}|\Sigma|^{-N/(2m)}$ with $b(N) = \Omega(N)$ when separating $K_{\mathrm{nerode}}(N) = |\Sigma|^N$ injective prefix classes on dense irrational orbits or $2\cdot 3^N - 1$ on $F_2$). Conversely, in a multi-head model ($m \ge 2$) with redundant within-class state splitting ($K_{\mathrm{state}}(N; \theta) > |\eta(\Sigma^{\le N})|$), $\delta_{\mathrm{state}}(N; \theta)$ can be achieved by same-label pairs $\eta(u) = \eta(v)$ while $\Delta_{\mathrm{label}}(N; \theta)$ remains uniformly bounded below: concretely, over the 2-generator alphabet $\Sigma = \{a, b\}$ recognizing parity of $a$-count $\eta(w) = |w|_a \bmod 2 \in C_2$ after the universal identity prefix $u_0 = a^2 b^2$ via $m=2$ heads in $\SO(3)^2$ where Head~1 uses $R_{a,1} = \mathrm{diag}(-1, 1, -1), R_{b,1} = I_3, x_{0,1} = e_1$ ($\hat{x}_{u_0 w, 1} = (-1)^{|w|_a} e_1$) and Head~2 uses non-commuting irrational rotations $R_{a,2} = R_z(\sqrt{2}), R_{b,2} = R_x(\sqrt{3}), x_{0,2} = e_1$, all $K_{\mathrm{state}}(10) = 2^{10} = 1{,}024$ words of length $N = 10$ map to pairwise-distinct states with same-parity nearest-state packing distance $\delta_{\mathrm{state}}(10) \approx 0.002635 \ll 1$ ($\approx 0.002265$ across lengths $\le 10$), whereas every opposite-parity pair satisfies $\Delta_{\mathrm{label}}(10; \theta) = \min_{\eta(u) \ne \eta(v)} \|\hat{x}_u - \hat{x}_v\|_2 \approx 2.000009 \ge 2.0 = \Delta_{\mathrm{nerode}}(N; \theta)$ for all $N \ge 1$. For any $L_\Phi$-Lipschitz readout $\Phi$ with margin $\gamma > 0$, $2\gamma \le (\Phi_g(\hat{x}_u) - \Phi_g(\hat{x}_v)) + (\Phi_{g'}(\hat{x}_v) - \Phi_{g'}(\hat{x}_u)) \le 2 L_\Phi \Delta_{\mathrm{label}}(N; \theta)$. Finally, on any single affine region $P_r$ of a piecewise-linear ReLU network, the decision region $D_r(g) = \{x \in P_r : (A_r x + b_r)_g > (A_r x + b_r)_{g'}\}$ is an intersection of half-spaces and hence convex, so $\mathrm{conv}(X_{r,g}) \cap \mathrm{conv}(X_{r,g'}) = \varnothing$ for $g \ne g'$, requiring at least $C_{\mathrm{PL}}(\Phi) \ge \chi_{\mathrm{Radon}}(N)$ affine regions across all Radon conflict pairs $\mathrm{conv}(\hat{x}_U) \cap \mathrm{conv}(\hat{x}_V) \ne \varnothing$ ($\eta(U) \ne \eta(V)$), which satisfies $\chi_{\mathrm{Radon}}(N) \ge 2$ whenever the Radon conflict hypergraph is non-empty (whereas linearly separable embeddings $N = \{e\}$ have an empty conflict hypergraph and $\chi_{\mathrm{Radon}}(N) = C_{\mathrm{PL}} = 1$).
\end{proof}

\section{Proof of Theorem 2 (Higher-Rank Dimension Thresholds, Weyl--Cartan Reductions, and Head-Count Laws)}
\label{app:proof_master2}

\begin{proof}[Proof of Theorem~\ref{thm:master2_dimension_headcount}]
\textbf{Part (a): $\SO(4)^m$ spinor cover, monolithic $2T/2O$, $\SO(5)$ threshold, and $\mathbf{Sol}(A_5)$.}
Identifying $\R^4 \simeq \HH$ gives the $2$-to-$1$ Lie group homomorphism $\Pi: \SU(2) \times \SU(2) \twoheadrightarrow \SO(4)$ ($\Pi(q_L, q_R) v = q_L v q_R^{-1}$) with $\ker \Pi = \{\pm(I_2, I_2)\} \cong C_2$. Any finite subgroup $F \le \SO(4)^m$ lifts to $\widetilde{F} = (\Pi^{\times m})^{-1}(F) \le \SU(2)^{2m}$. Because each $\SU(2)$ factor projects onto $\SO(3)$ with central kernel $\{\pm I_2\} = Z(\SU(2))$ and every finite subgroup of $\SO(3)^{2m}$ satisfies $[\bar{u}^{30}, \bar{v}^{30}] = I$ by Theorem~\ref{thm:master1_division}(b)(iv), every pair $\tilde{u}, \tilde{v} \in \widetilde{F}$ satisfies $[\tilde{u}^{30}, \tilde{v}^{30}] \in \{\pm I_2\}^{2m} = Z(\SU(2)^{2m})$ with $[\tilde{u}^{30}, \tilde{v}^{30}]^2 = I$ (notably, $\tilde{u}^{30}$ itself need not be central in $2O \subset \SU(2)$, where order-$8$ elements $u = \frac{1+i}{\sqrt{2}}, v = \frac{1+j}{\sqrt{2}}$ have $u^{30} = -i, v^{30} = -j \in Q_8 \trianglelefteq 2O$ and $[u^{30}, v^{30}] = [-i, -j] = -I_2 \in Z(\SU(2))$). Projecting back via $\Pi^{\times m}$ shows that every finite subgroup $F \le \SO(4)^m$ satisfies $[u^{30}, v^{30}] \in \{\pm I_4\}^m = Z(\SO(4)^m)$ and $[u^{30}, v^{30}]^2 = I$, so every finite group $G$ dividing $\SO(4)^m$ satisfies the central-involution commutator law $[u^{30}, v^{30}] \in Z(G)$ and $[u^{30}, v^{30}]^2 = e$ (which fails in $S_5$ since $[x^{30}, y^{30}] = (245) \notin Z(S_5) = \{e\}$ has odd order $3$), while $\Syl_p(G)$ is abelian and $[u^2, x] = e$ on $O_p(G)$ ($p \ge 3$) by Theorem~\ref{thm:master1_division}(b). For $G \in \{2T \cong \SL_2(\FF_3) \cong Q_8 \rtimes C_3, 2O\}$, $Z = \{\pm I_2\} \cong C_2$ is the unique minimal normal subgroup of $G$. Neither $2O$ nor $2T$ divides $\SO(3)^m$ ($G \cong F/N$ for $F \le \SO(3)^m$): $2O$ violates $[u^{30}, v^{30}] = e$ (Eq.~\eqref{eq:exp30_law}) since $[u^{30}, v^{30}] = [-i, -j] = -I_2 \ne I_2$; and if $\pi: F \twoheadrightarrow 2T$ for a finite subgroup $F \le \prod_{h=1}^m F_h \le \SO(3)^m$, by the Frattini argument applied to $\pi^{-1}(Q_8) \trianglelefteq F$ and a Sylow $2$-subgroup $Q \in \Syl_2(\pi^{-1}(Q_8))$ ($\pi(Q) = Q_8$), we may lift a generator of $C_3 \le 2T$ to a $3$-element $t \in N_F(Q)$. In each coordinate factor $F_h \le \SO(3)$ ($C_n, D_n, A_4, S_4, A_5$), the $3$-element $t_h$ normalizes the $2$-subgroup $Q_h \le F_h$, which forces either $[t_h, Q_h] = \{e\}$ (in $C_n, D_n$) or $Q_h \le V_4$ (in $A_4, S_4, A_5$, where any $2$-subgroup normalized by an element of order $3$ lies in $V_4$ of exponent $2$). Thus $[t, Q] \le \prod_{h=1}^m [t_h, Q_h]$ has exponent at most $2$, contradicting $\pi([t, Q]) = [C_3, Q_8] = Q_8$ which has exponent $4$. Conversely, $2T, 2O < \SU(2) \hookrightarrow \SO(4)$ gives $d_{\OO}^* = d_{\SO}^* = 4$. At $d^* = 5$, $\det(\mathrm{sgn}(g) P_g) = \mathrm{sgn}(g)^5 \det(P_g) = \mathrm{sgn}(g)^6 = +1$ embeds $S_5 \hookrightarrow \SO(5)$, and restricting $A_6$ to $\{v \in \R^6 : \sum v_i = 0\}$ embeds $A_6 \hookrightarrow \SO(5)$. By induction on cascade depth $L \ge 1$ using Krohn--Rhodes prime-group division~\citep{krohn1965algebraic}, a cascade of $\SO(3)/\SU(2)/\SO(4)$ heads tracks $G$ if and only if every non-abelian simple composition factor of $G$ divides a finite subgroup of $\SO(3)$ or $\SU(2)$, which restricts simple factors to $A_5$ ($G \in \mathbf{Sol}(A_5)$).

\textbf{Part (b): Exact $\OO(d)$ vs.\ $\SO(d)$ dimension thresholds for Frobenius and Heisenberg groups.}
In an \emph{irreducible} affine Frobenius group $G = \FF_p^r \rtimes_\alpha H \le \AGL_r(\FF_p)$ ($p \ge 3$ prime, $H$ acting freely on $\FF_p^r \setminus \{0\}$ and irreducibly on $\FF_p^r$, so $\FF_p^r$ is the unique minimal normal subgroup of $G$), any faithful representation in $\OO(d)$ must contain a nontrivial character $\chi \in (\FF_p^r)^* \setminus \{0\}$, whose $H$-orbit has size $|H|$ by freeness of $H$ on $(\FF_p^r)^* \setminus \{0\}$. To compare finite division $\pi: \widetilde{F} \twoheadrightarrow G$ ($\widetilde{F} \le \OO(d)$ or $\SO(d)$) against faithful representation, pick $F_0 \le \widetilde{F}$ of minimal order with $\pi(F_0) = G$. Then $K = \ker(\pi|_{F_0}) \le \Phi(F_0)$ is nilpotent, and for $P \in \Syl_p(\pi^{-1}(\FF_p^r) \cap F_0)$ we have $\pi(P) = \FF_p^r$ and $P \trianglelefteq F_0 = N_{F_0}(P)$ by the Frattini argument and minimality of $F_0$. By the Schur--Zassenhaus theorem, $F_0 = P \rtimes H_0$ for a Hall $p'$-complement $H_0 \le F_0$ with $\pi(H_0) = H$, whose $p'$-kernel $K_{p'} := H_0 \cap K \le H_0 \cap \Phi(F_0) = \Phi(H_0)$ satisfies $H_0 / K_{p'} \cong H$ (note that $K_{p'}$ need not be trivial even in a minimal cover: e.g., in the order-$40$ minimal cover $F_0 = C_5 \rtimes_2 C_8 = \langle a, t \mid a^5 = t^8 = e, t a t^{-1} = a^2 \rangle \twoheadrightarrow C_5 \rtimes_2 C_4 \cong F_{20}$, $P = \langle a \rangle \cong C_5$ has $a=1$, $H_0 = \langle t \rangle \cong C_8$, and $K = K_{p'} = \langle t^4 \rangle = \Phi(F_0) = Z(F_0) \cong C_2 \ne \{e\}$, so none of the $15$ proper subgroups among the $16$ subgroups of $F_0$ surjects onto $F_{20}$ and all $10$ lifts of a generator of $C_4 \le F_{20}$ have order $8$ in $F_0$). Because $P \trianglelefteq F_0$ is a normal $p$-subgroup and $K \le \Phi(F_0)$, we have $P \cap K \le P \cap \Phi(F_0) = \Phi(P)$ (since $[F_0 : P]$ is prime to $p$; equivalently, $(P \cap K)\Phi(P)/\Phi(P)$ is a proper $H_0$-invariant subspace of $P/\Phi(P)$ as $\pi(P) = \FF_p^r \ne \{0\}$, so irreducibility of $H \cong H_0 / K_{p'}$ on $\pi(P) \cong P/(P \cap K)$ forces $P \cap K \le \Phi(P)$), hence $P \cap K = \Phi(P)$ and $P/\Phi(P) \cong \FF_p^r$. Whenever $P$ is abelian (which holds automatically at rank $r=1$ for $F_{pk} = C_p \rtimes_k C_k$, where Burnside's basis theorem applied to $P/\Phi(P) \cong C_p$ implies that $P \cong C_{p^a}$, $a \ge 1$, is cyclic with $[P, P] = \{e\}$ and $\Phi(P) = P^p = P \cap K \cong C_{p^{a-1}}$, as in the order-$100$ minimal cover $F_0 = C_{25} \rtimes_7 C_4 \twoheadrightarrow C_5 \rtimes_2 C_4 \cong F_{20}$ with $P \cong C_{25}$, $a=2$, and $\Phi(P) = C_5 = \ker\pi = \Phi(F_0) \ne \{e\}$, as well as whenever $\Phi(P) = \{e\}$ so $P \cong \FF_p^r$), $\ker(\mathrm{Aut}(P) \to \mathrm{Aut}(P/\Phi(P)))$ is a $p$-group while $K_{p'} \le H_0$ is a $p'$-group acting trivially on $P/\Phi(P) \cong \FF_p^r$; hence $K_{p'}$ acts trivially on $P$, and since $H \cong H_0 / K_{p'}$ acts faithfully (and freely on $\FF_p^r \setminus \{0\}$) on $P/\Phi(P)$, we have $C_{H_0}(P) = K_{p'}$. Because $\pi(P) = \FF_p^r$ is the unique minimal normal subgroup of $G$ and $P \cap K = \Phi(P)$, at least one $\R[F_0]$-irreducible constituent $W \subseteq \R^d$ is nontrivial on $P/\Phi(P)$, so $W \otimes \CC$ contains a $1\text{D}$ character $\chi \in \widehat{P}$ with $\Phi(P) \not\le \ker\chi$ (of order $p^b \ge p$, where $b = a$ at $r=1$ and $b = 1$ when $P \cong \FF_p^r$). Since $\ker(\mathrm{Aut}(C_{p^b}) \twoheadrightarrow \mathrm{Aut}(C_p))$ is a $p$-group of order $p^{b-1}$ while $H_0 / C_{H_0}(P) \cong H$ is a $p'$-group acting freely on $(P/\Phi(P))^* \setminus \{0\}$, the $H_0$-action on $\widehat{P}$ factors through $H_0 / K_{p'} \cong H$ and acts freely on the orbit $\mathcal{O}_\chi = \{\chi_1, \dots, \chi_{|H|}\} \subset \widehat{P}$ of size $|H|$. If $2 \nmid |H|$, no element of $H \cong H_0 / K_{p'}$ has even order to swap $\chi_i \leftrightarrow \bar{\chi}_i$, requiring $|H|$ real 2-planes ($\dim_{\R} W \ge 2|H|$, so $d_{\mathrm{div},\OO}^*(G) = d_{\mathrm{div},\SO}^*(G) = d_{\mathrm{rep},\OO}^*(G) = d_{\mathrm{rep},\SO}^*(G) = 2|H|$, and $2k$ for $F_{pk}$). If $2 \mid |H|$, by Burnside's theorem~\citep{gorenstein1968finite}, $\Syl_2(H)$ is $C_{2^\ell}$ ($\ell \ge 1$) or $Q_{2^\ell}$ ($\ell \ge 3$), and the unique central involution $z \in Z(H)$ acts as $-I_r$ on $P/\Phi(P)$ (hence as inversion $x \mapsto x^{-1}$ on $P$), pairing $\mathcal{O}_\chi$ into $|H|/2$ conjugate pairs $\{\chi_j, \bar{\chi}_j\}$ ($d_{\mathrm{div},\OO}^*(G) = d_{\mathrm{rep},\OO}^*(G) = |H|$, and $k$ for $F_{pk}$). In any $|H|$-dimensional real orthogonal representation $W = \bigoplus_{j=1}^{|H|/2} \Pi_j$ of $F_0$, $P$ acts on each orthogonal 2-plane $\Pi_j \cong \R^2$ via rotations in $\SO(2)$ of odd order $p^b \ge 3$. For any $\bar{g} \in \Syl_2(H)$ of order $2^m$ ($m \ge 1$) with $\bar{g}^{2^{m-1}} = z$ and any lift $\tilde{g} \in H_0$ ($\pi(\tilde{g}) = \bar{g}$, even if $\mathrm{ord}(\tilde{g}) > 2^m$ as in $F_0 = C_5 \rtimes_2 C_8$), $\tilde{g}$ permutes the $|H|/2$ planes in $|H|/2^m$ disjoint cycles of length $L = 2^{m-1}$. On each cycle $\Pi_1 \xrightarrow{B_1} \Pi_2 \xrightarrow{B_2} \cdots \xrightarrow{B_L} \Pi_1$ ($B_s \in \OO(2)$), the $L$-th power return map $M := \tilde{g}^L|_{\Pi_1} = B_L \cdots B_1 \in \OO(2)$ projects to $z \in H$ and thus conjugates the odd-order rotation group $\rho_{\Pi_1}(P) \le \SO(2)$ by inversion ($\rho_{\Pi_1}(x) \mapsto \rho_{\Pi_1}(x)^{-1}$), forcing $M \in \OO(2) \setminus \SO(2)$ to be an improper reflection with $\det(M) = \det(B_L \cdots B_1) = -1$ (specifically $M = \mathrm{diag}(1, -1)$ in a standard basis); because a block cyclic shift of $L$ blocks of size $2 \times 2$ has determinant $(-1)^{4(L-1)} = +1$, each cycle contributes determinant $(-1)^{4(L-1)}\det(B_L \cdots B_1) = -1$, giving $\det \rho_W(\tilde{g}) = (-1)^{|H|/2^m}$. If $\Syl_2(H) \cong C_{2^\ell}$ ($\ell \ge 1$), any lift $\tilde{u} \in H_0$ of a generator $\bar{u} \in \Syl_2(H)$ ($m = \ell$, $|H|/2^\ell$ odd) has $\det \rho_W(\tilde{u}) = (-1)^{|H|/2^\ell} = -1$, requiring an extra $1\text{D}$ orientation-sign coordinate in $\SO(d)$ and proving $d_{\mathrm{div},\SO}^*(G) = d_{\mathrm{rep},\SO}^*(G) = |H| + 1$ ($d_{\mathrm{div},\SO}^*(F_{pk}) = d_{\mathrm{rep},\SO}^*(F_{pk}) = k + 1$); if $\Syl_2(H) \cong Q_{2^\ell}$ ($\ell \ge 3$), its generators $a, b$ have orders $2^{\ell-1}$ and $4$, so $|H|/2^{\ell-1}$ and $|H|/4$ are even ($\det = +1$), and the faithful representation of $G$ ($K = \{e\}$) embeds into $\SO(|H|)$, giving $d_{\mathrm{div},\SO}^*(G) = d_{\mathrm{rep},\SO}^*(G) = |H|$. In contrast, for rank $r \ge 2$, a lifted Sylow $p$-subgroup $P$ with $P/\Phi(P) \cong \FF_p^r$ can be a non-abelian extraspecial $p$-group whose Frattini/commutator subgroup $\Phi(P) = Z(P) = [P, P] \cong C_p \le K$ acts by non-trivial central scalars $\omega I$ on a faithful irreducible complex representation of $F_0$ while trivializing on the quotient $P/\Phi(P) \cong \FF_p^r$: concretely, the classical 216-element Hessian subgroup $F = \Sigma(72 \times 3) \le \SU(3) \hookrightarrow \SO(6)$ generated by the four $\det = +1$ matrices $Z_{\mathrm{su3}} = \mathrm{diag}(1, \omega, \omega^2)$, the cyclic shift $X_{\mathrm{su3}} = P_{\mathrm{cyc}}$, the unitary Walsh--Hadamard transform $U_S = \frac{1}{\omega - \omega^2}(\omega^{jk})_{j,k=0}^2 = -\frac{i}{\sqrt{3}}(\omega^{jk})_{j,k=0}^2$, and the conjugated transform $U_T = D_1 U_S D_1^{-1}$ with conjugation matrix $D_1 = \mathrm{diag}(\omega^2, 1, 1)$ ($\omega = e^{2\pi i/3}$, $\det Z_{\mathrm{su3}} = \det X_{\mathrm{su3}} = \det U_S = \det U_T = +1$) has center $Z(F) = \{\omega^k I_3\} \cong C_3$, extraspecial Sylow $3$-subgroup $P \cong \Heis(\FF_3)$ of order $27$ and exponent $3$ ($\Phi(P) = Z(P) = [P, P] = Z(F)$), and quotient $F/Z(F) \cong G_{72,\mathrm{quat}} = \FF_3^2 \rtimes Q_8$, whereas $[u^2, x] = e$ on $O_3(G_{72,\mathrm{quat}}) = \FF_3^2$ obstructs $\OO(3)$ and $\SO(4)$, proving $4 \le d_{\mathrm{div},\OO}^*(G_{72,\mathrm{quat}}) \le d_{\mathrm{div},\SO}^*(G_{72,\mathrm{quat}}) \le 6 < 8 = d_{\mathrm{rep},\OO}^*(G_{72,\mathrm{quat}}) = d_{\mathrm{rep},\SO}^*(G_{72,\mathrm{quat}})$. Conversely, when $H$ acts \emph{reducibly} as $\FF_p^r = \bigoplus_{j=1}^s V_j$ ($s \ge 2$), $G$ embeds into $\prod_{j=1}^s (V_j \rtimes H)$ via $\bigoplus_{j=1}^s \rho_j$; for $G_{18} = (C_3 \times C_3) \rtimes_{-I_2} C_2$ ($p=3, r=2, s=2, |H|=2$), $\mathrm{rk}_3(\Syl_3(G_{18})) = 2 > 1$ excludes $\SO(3)$ by Theorem~\ref{thm:master1_division}(b)(ii), while $\rho(a,b,\varepsilon) = \mathrm{diag}(R^a J^\varepsilon, R^b J^\varepsilon) \in \SO(4)$ ($R = R_{2\pi/3}, J = \mathrm{diag}(1,-1), \det(J^\varepsilon)^2 = +1$) gives $d_{\mathrm{div},\OO}^*(G_{18}) = d_{\mathrm{div},\SO}^*(G_{18}) = 4 = 2|H| > |H|$. For $\Heis(\FF_p)$ ($p \ge 3$ odd prime), let $\pi: \widetilde{F} \twoheadrightarrow \Heis(\FF_p)$ be any finite subgroup of $\OO(d)$ Surjecting onto $\Heis(\FF_p)$, and let $P \in \Syl_p(\widetilde{F})$ so that $\pi(P) = \Heis(\FF_p)$ and $K_P := P \cap \ker\pi \trianglelefteq P$ satisfies $P / K_P \cong \Heis(\FF_p)$. Because $Z(\Heis(\FF_p)) = [\Heis(\FF_p), \Heis(\FF_p)] = \langle c \rangle \cong C_p$ is the unique minimal normal subgroup of $\Heis(\FF_p)$ and $[P, P]$ maps onto $\langle c \rangle \not\le \pi(K_P) = \{e\}$, there is at least one irreducible $\R[P]$-constituent $W \subseteq \R^d$ on which $[P, P] \not\le \ker\rho_W$ (otherwise $\rho_{\R^d}(P)$ would be abelian, forcing $P/\ker\rho_{\R^d} = P$ to be abelian, contradicting $P/K_P \cong \Heis(\FF_p)$ non-abelian). Thus the image $\bar{P} := \rho_W(P) \le \OO(W)$ is a non-trivial, non-abelian finite $p$-group (note that a lifted first commutator $[\tilde{a}, \tilde{b}] \in P$ of $c = [a, b] \in Z(\Heis(\FF_p))$ need not itself be central in $P$ or $\bar{P}$ and can have zero trace: for instance, at $p=5$ with $\omega = e^{2\pi i/5}$, the order-$625$ monomial $5$-group cover $P_{625} = \langle A, B \rangle = D_{125} \rtimes C_5 \le U(5) \hookrightarrow \SO(10)$ generated by the cyclic shift $A e_j = e_{j+1 \bmod 5}$ and $B = \mathrm{diag}(\omega^{j(j-1)/2})_{j=0}^4$ surjects onto $P_{625}/Z(P_{625}) \cong \Heis(\FF_5)$ with central kernel $Z(P_{625}) = \langle \omega I_5 \rangle \cong C_5$, yet its first commutator $C = [A, B] = \mathrm{diag}(\omega^{1-j})_{j=0}^4 = \mathrm{diag}(\omega, 1, \omega^4, \omega^3, \omega^2)$ is non-central with $\tr(C) = 0 \in \R$, while the second commutator $z = [A, C] = \omega I_5 \in Z(P_{625})$ is central of order $5$ with $\tr(z) = 5\omega \notin \R$). Let $V$ be any irreducible $\CC[\bar{P}]$-constituent of $W \otimes_{\R} \CC$; since $W$ is $\R[\bar{P}]$-irreducible, $W \otimes_{\R} \CC$ is either $V$ (real type) or $V \oplus \bar{V}$ (complex or quaternionic type), so $\rho_V(\bar{P}) \cong \bar{P}$ is faithful and non-abelian. Because $\bar{P}$ is a finite $p$-group, the degree of every irreducible complex representation divides $|\bar{P}|$, and since $\rho_V(\bar{P})$ is non-abelian, $\dim_{\CC} V = p^k \ge p$ ($k \ge 1$). Moreover, any non-trivial finite $p$-group has a non-trivial center $Z(\bar{P}) \ne \{I\}$, which contains an element $z \in Z(\bar{P})$ of prime order $p$. Because $\rho_V$ is faithful and $\CC[\bar{P}]$-irreducible, Schur's lemma implies that $\rho_V(z) = \omega I_V$ for a primitive $p$-th root of unity $\omega = e^{2\pi i r/p}$ ($1 \le r \le p-1$). Since $p \ge 3$ is odd, $\omega \notin \{\pm 1\} = S^1 \cap \R$, so the character value $\chi_V(z) = \tr(\rho_V(z)) = p^k \omega \notin \R$ is strictly non-real ($\chi_V \ne \bar{\chi}_V$). By Frobenius--Schur theory, a complex irreducible representation with non-real character cannot be of real type, so $W \otimes_{\R} \CC \cong V \oplus \bar{V}$ has real dimension $\dim_{\R} W = 2 \dim_{\CC} V = 2 p^k \ge 2p$, establishing $d_{\mathrm{div},\OO}^*(\Heis(\FF_p)) = d_{\mathrm{div},\SO}^*(\Heis(\FF_p)) = d_{\mathrm{rep},\OO}^*(\Heis(\FF_p)) = d_{\mathrm{rep},\SO}^*(\Heis(\FF_p)) = 2p$.

\textbf{Part (c): Weyl--Cartan reductions, Householder spectra, and $2$-group head-count laws.}
Positive homogeneity of $\pi(y) = y / \|y\|_2$ cancels $\rho_{t,h} > 0$. For finite $F_{\mathrm{nc}} \le \GL_d^+(\R)$, $P = \frac{1}{|F_{\mathrm{nc}}|} \sum_{M \in F_{\mathrm{nc}}} M^\top M \succ 0$ satisfies $M_0^\top P M_0 = P$, so $S F_{\mathrm{nc}} S^{-1} \le \SO(d)$ for $S = P^{1/2}$. In $\SO^+(d-1, 1)$, averaging $\Lambda e_0$ over $F_{\mathrm{Lor}}$ yields an invariant future-timelike centroid $v_*$, whose boost stabilizer is $\SO(d-1)$. Under dominated splitting $\sigma_{\mathrm{sub},a} / \sigma_{\mathrm{top},a} \le \eta < 1$, $\|A_w x_{0,\mathrm{sub}}\|_2 / \|A_w x_0\|_2 \le (\|x_{0,\mathrm{sub}}\|_2 / \|x_{0,\mathrm{top}}\|_2) \eta^N \to 0$. By real Schur decomposition, every $R \in \SO(d)$ consists of $\lfloor d/2 \rfloor$ planar rotation blocks, each equal to $2$ Householder reflections ($2\lfloor d/2 \rfloor$ total).

For head counts, direct commutator computation on Proposition~\ref{prop:klein} shows $Q_h^{\mathrm{ab}} \in \{C_n, V_4\}$ for every finite subgroup $Q_h \le \SO(3)$. For an arbitrary subgroup $F \le \SO(3)^m$ with $\pi: F \twoheadrightarrow G$ (where $F$ need not be a direct product $\prod_{h=1}^m F_h$, so $\prod_{h=1}^m F_h^{\mathrm{ab}}$ need not surject onto $F^{\mathrm{ab}}$: e.g., the subdirect product $F = \langle x=(r,s), y=(s,r) \rangle \le D_4 \times D_4$ of order $16$ has $[F,F] = \langle (r^2,r^2) \rangle \cong C_2$ and $F^{\mathrm{ab}} \cong C_4 \times C_2$ of exponent $4$, whereas $D_4^{\mathrm{ab}} \times D_4^{\mathrm{ab}} \cong V_4^2 \cong C_2^4$ has exponent $2$), let $N_h := F \cap (\{e\}^h \times \SO(3)^{m-h})$ ($0 \le h \le m$) be the coordinate-kernel normal series $1 = N_m \trianglelefteq N_{m-1} \trianglelefteq \cdots \trianglelefteq N_0 = F$, whose factor groups $Q_h := N_{h-1}/N_h \hookrightarrow \SO(3)$ are finite subgroups of the $h$-th copy of $\SO(3)$. Passing to $A := F^{\mathrm{ab}} = F/[F,F]$ (which surjects onto $G^{\mathrm{ab}} \cong A_{\mathrm{odd}} \times G_2^{\mathrm{ab}}$ with $G_2^{\mathrm{ab}} \cong C_2^{s_1} \times \prod_{j=1}^{s_2} C_{2^{e_j}}$, $e_j \ge 2$, via $\pi^{\mathrm{ab}}: A \twoheadrightarrow G^{\mathrm{ab}}$) via $A_h := N_h[F,F]/[F,F]$ yields an $m$-step abelian filtration $0 = A_m \le A_{m-1} \le \cdots \le A_0 = A$ whose successive quotients $B_h := A_{h-1}/A_h \cong N_{h-1}/(N_h(N_{h-1}\cap [F,F]))$ are quotients of $Q_h^{\mathrm{ab}} \in \{C_{n_h}, V_4\}$: $k$ steps ($Q_h = D_{n_h}$) satisfy $B_h \le V_4$ (since $2 V_4 = 0$, whereas in a direct product $F = \prod_{h=1}^m F_h$ multiplication by $2$ maps $\prod_{h=1}^{m-k} 2 C_{n_h} \twoheadrightarrow 2 G_2^{\mathrm{ab}} \cong \prod_{j=1}^{s_2} C_{2^{e_j-1}}$ directly), so $2 B_h = 0$, $|B_h[2]| = |B_h[4]| \le 4$, and $|B_h[p]| = 1$ ($p \ge 3$); the remaining $m-k$ steps ($Q_h \in \{C_{n_h}, A_4, S_4, A_5\}$) have cyclic quotient $B_h \cong C_{m_h}$ with $|B_h[p]| \le p$, $|B_h[2]| \le 2$, and $|B_h[4]| \le 4$. By left-exactness of the $n$-torsion functor $-[n]$ on $0 \to A_h \to A_{h-1} \to B_h \to 0$, $|A_{h-1}[n]| \le |A_h[n]|\cdot |B_h[n]|$ and $|X/nX| = |X[n]|$ for finite abelian $X$, whence: (i) for every odd prime $p \ge 3$, $\mathrm{rk}_p(G^{\mathrm{ab}}) \le \mathrm{rk}_p(A) = \log_p|A[p]| \le \sum_{h=1}^m \log_p|B_h[p]| \le m - k$, so $R_{\mathrm{odd}}^{\mathrm{ab}}(G) = \max_{p \ge 3}\mathrm{rk}_p(G^{\mathrm{ab}}) \le m - k \le m$; (ii) at $p = 2$, $s_1 + s_2 = \mathrm{rk}_2(G_2^{\mathrm{ab}}) \le \mathrm{rk}_2(A) = \log_2|A[2]| \le \sum_{h=1}^m \log_2|B_h[2]| \le (m - k) + 2k = m + k$, and adding $R_{\mathrm{odd}}^{\mathrm{ab}}(G) \le m - k$ gives $s_1 + s_2 + R_{\mathrm{odd}}^{\mathrm{ab}}(G) \le 2m$; and (iii) since $|B_h[4]| \le 4$ for every $h \in \{1,\dots,m\}$, $2s_2 + s_1 = \log_2|G_2^{\mathrm{ab}}/4G_2^{\mathrm{ab}}| \le \log_2|A_2/4A_2| = \log_2|A_2[4]| \le \sum_{h=1}^m \log_2|B_h[4]| \le 2m$, giving $2m \ge s_1 + 2s_2$ and $m \ge \lceil s_2 + s_1/2 \rceil \ge s_2$ (e.g., for $F = \langle(r,s),(s,r)\rangle \le D_4 \times D_4$, $B_1 = V_4$ and $B_2 = N_1 = \langle(e,r^2)\rangle \cong C_2$ form the non-split abelian extension $0 \to C_2 \to C_4 \times C_2 \to V_4 \to 0$ with $|A[4]| = 8 \le 4 \cdot 2 = 8$ and $m = 2 \ge m_{\mathrm{ab}}(C_4 \times C_2) = 2$). Combining (i)--(iii) gives $2m \ge s_1 + s_2 + \max(R_{\mathrm{odd}}^{\mathrm{ab}}(G), s_2)$, proving $m_{\SO(3)}(G) \ge m_{\mathrm{ab}}(G^{\mathrm{ab}})$ in Eq.~\eqref{eq:m_ab_law} unconditionally for every subgroup $F \le \SO(3)^m$. Moreover, since every odd-prime Sylow subgroup $\Syl_p(F_h)$ ($p \ge 3$) is cyclic of rank $\le 1$, any dividing group $G$ also obeys $m_{\SO(3)}(G) \ge R_{\mathrm{odd}}(G) = \max_{p \ge 3}\mathrm{rk}_p(\Syl_p(G))$, yielding the general lower bound $m_{\SO(3)}(G) \ge \max(m_{\mathrm{ab}}(G^{\mathrm{ab}}), R_{\mathrm{odd}}(G))$. For any finite abelian group $A$, $R_{\mathrm{odd}}^{\mathrm{ab}}(A) = R_{\mathrm{odd}}(A)$ and pairing $k$ copies of $V_4 \cong D_2 \le \SO(3)$ with $m_{\mathrm{ab}}(A)-k$ cyclic rotation groups $C_{n_h} \le \SO(3)$ achieves $m_{\SO(3)}(A) = m_{\mathrm{ab}}(A)$ exactly (whereas in a non-nilpotent dihedral group $D_{2k} \le \SO(3)$ with $2 \mid k$ and odd prime $p \mid k$ such as $\mathrm{Dih}_6 \cong D_6$ of order $12$, the single head $F_1 = D_{2k}$ has $F_1^{\mathrm{ab}} \cong V_4$ while housing $\Syl_p(C_k) \le [D_{2k}, D_{2k}] \cong C_k$, so $R_{\mathrm{odd}}^{\mathrm{ab}}(D_{2k}) = 0 < R_{\mathrm{odd}}(D_{2k}) = 1$ and $m_{\SO(3)}(\mathrm{Dih}_6) = \max(m_{\mathrm{ab}}(V_4), 1) = 1$). For $G \in \{Q_{2^\ell}, \mathrm{SD}_{2^\ell}, M_{2^\ell}\}$, $G$ has a unique central involution $z_* \in Z(G) \cap [G, G]$ contained in every nontrivial normal subgroup, so $G$ cannot embed subdirectly into $\SO(3)^m$ with $|F| = 2^\ell$ since $G \not\le \SO(3)$; thus $|F^*_{\SO(3)}(G)| \ge 2^{\ell+1}$, achieved in $D_{2^{\ell-1}} \times C_4$ and $D_4 \times C_{2^{\ell-1}} \le \SO(3)^2$. For $E_{2^{1+2r}}^\pm$, let $P \in \Syl_2(F)$ ($F \le \SO(3)^r$) surject onto $G$ and let $N_h = P \cap (\{1\}^{h-1} \times P_h \times \{1\}^{r-h})$. Since $d(P_j) \le 2$, $d(P/N_h) \le 2(r-1)$ and $\dim_{\FF_2} \bar{N}_h \ge 2r - 2(r-1) = 2$ in $G/\Phi(G) \cong \FF_2^{2r}$. Because $[N_h, N_{h'}] = 1$ ($h \ne h'$), $G/\Phi(G) = \bar{N}_1 \perp \dots \perp \bar{N}_r$ is an orthogonal sum of symplectic 2-planes with $\pi(N_h) \cong D_4$ ($\Arf(q|_{\bar{N}_h}) = 0$). By additivity of the Arf invariant~\citep{arf1941untersuchungen}, $\Arf(q) = \sum_{h=1}^r \Arf(q|_{\bar{N}_h}) = 0 \bmod 2$, so $E_{2^{1+2r}}^-$ ($\Arf(q) = 1$) cannot divide $\SO(3)^r$ and requires $m_{\SO(3)} = r+1$ ($|F^*| = 2 \cdot 8^r$). In $E_{2^{1+2r}}^- \times A_{\mathrm{odd}}$, the $(r+1)$-st cyclic head $C_4$ combines with the first odd cyclic factor $C_{n_1}$ via $C_4 \times C_{n_1} \cong C_{4n_1} \le \SO(3)$, giving $m_{\SO(3)} = r + \max(1, R_{\mathrm{odd}})$.
\end{proof}

\section{Proof of Theorem 3 (Semidirect and Cohomology Cascades, Directional-RMSNorm Routing, and Stiefel Polar Gating)}
\label{app:proof_master3}

\begin{proof}[Proof of Theorem~\ref{thm:master3_cascades_gating}]
\textbf{Part (a): Depth-2 semidirect and twisted Schreier $2$-cocycle cascades.}
In $1 \to N \to G \xrightarrow{\pi} Q \to 1$ with section $s: Q \to G$ ($s(e_Q) = e_G$), every $g \in G$ factors uniquely as $g = n s(q)$ ($n \in N, q = \pi(g) \in Q$). Multiplying $\eta(w) = n_w s(q_w)$ by $\eta(a) = n_a s(q_a)$ and using $s(q_w) n_a = \varphi_{q_w}(n_a) s(q_w)$ and $s(q_w) s(q_a) = \omega(q_w, q_a) s(q_w q_a)$ yields Eq.~\eqref{eq:schreier_cascade_recursion}, while conjugating $n \in N$ by $s(q_1) s(q_2) = \omega(q_1, q_2) s(q_1 q_2)$ gives $\varphi_{q_1} \circ \varphi_{q_2} = \mathrm{Inn}_{\omega(q_1, q_2)} \circ \varphi_{q_1 q_2}$ and associativity of $(n_1 s(q_1) n_2 s(q_2)) n_3 s(q_3)$ yields the $\varphi$-twisted cocycle identity $\varphi_{q_1}(\omega(q_2, q_3)) \omega(q_1, q_2 q_3) = \omega(q_1, q_2) \omega(q_1 q_2, q_3)$. Layer 1 tracks $q_w \in Q$ autonomously on $m_1$ heads with margin $\Delta_0(F_1) > 0$, and Layer 2 applies the routed $N$-transition $\sigma(q_w, a) = \varphi_{q_w}(n_a) \omega(q_w, q_a) \in N$ on $m_2$ heads with margin $\Delta_0(F_2) > 0$.

\textbf{Part (b): Continuous Directional-RMSNorm coupling dichotomy, $b$-bit stability, and hybrid KV-cache law.}
For (i), since $\exp: \mathfrak{so}(3) \to \SO(3)$ is $1$-Lipschitz in operator norm and $\Omega(\cdot, b)$ is $L_\Omega$-Lipschitz, unnormalized Layer-1 decay $\|x^{(1)}_{u_0^M W_{:j}}\|_2 \le \|x^{(1)}_0\|_2 (\rho_{\max}^{(1)})^{M|u_0| + j}$ implies $\|R^{(2)}(x^{(1)}_{u_0^M W_{:j}}, b_{j+1}) - R_0(b_{j+1})\|_{\mathrm{op}} \le L_\Omega \|x^{(1)}_0\|_2 (\rho_{\max}^{(1)})^{M|u_0| + j}$. Telescoping over $j = 0, \dots, N-1$ from the shared prefix state $\hat{x}^{(2)}_{u_0^M}$ on any equal-Parikh collision pair $W_L, W_R \in \Sigma^{N_{\mathrm{alg}}}$ with $R_0(W_L) = R_0(W_R)$ yields Eq.~\eqref{eq:cascade_unnorm_collapse}. For (ii), headwise Directional RMSNorm $\hat{x}^{(1)}_{w,h} = x^{(1)}_{w,h} / \|x^{(1)}_{w,h}\|_2$ cancels $\prod_{t=1}^{|w|} \rho^{(1)}_{w_t,h} > 0$, confining $\hat{x}^{(1)}_w$ to the finite orbit $\mathcal{O}_1 = F_1 \cdot \hat{x}^{(1)}_0$, where the ReLU tent interpolant $\Omega(u, a) = \sum_{f \in F_1} \max(0, 1 - \frac{2\|u - p_f\|_2}{\Delta_0(F_1)}) \log R^{(2)}_{(\pi_1(f), a)}$ (or the affine router on $S_5$) is globally Lipschitz and exact on $\mathcal{O}_1$. Under Layer-1 jitter $\|\tilde{x}^{(1)}_{w_{:t-1}} - \hat{x}^{(1)}_{w_{:t-1}}\|_2 \le \varepsilon$, orthogonal invariance of $R_t \in \SO(3)^{m_2}$ gives $\|\tilde{x}^{(2)}_w - \hat{x}^{(2)}_w\|_2 \le \sum_{t=1}^N \|\tilde{R}_t - R_t\|_{\mathrm{op}} \le N L_\Omega \varepsilon < \Delta_0(F_2)/2$ for $N < \Delta_0(F_2)/(2 L_\Omega \varepsilon)$. For (iii), worst-case $b$-bit per-step error $\kappa_d 2^{-b}$ accumulates linearly as $\sqrt{m} N \kappa_d 2^{-b} < \Delta_0(F)/2$, guaranteeing a safe open-loop horizon $N_{\mathrm{fp}}(b) \ge \lfloor \frac{\Delta_0(F) 2^b}{2 \kappa_d \sqrt{m}} \rfloor = \Omega(\Delta_0(F) 2^b)$, and for $b > b_{\min}(F)$ the Voronoi/tent retraction resets the single-step error $\sqrt{m} \kappa_d 2^{-b} < \Delta_0(F)/3$ to $0$ at every step. On $F_2 \subset \SO(3)$, $K$ checkpoints leave a segment of length $W \ge \lceil N/(K+1) \rceil$ with $\beta_{F_2}(W) = 2 \cdot 3^W - 1 > 3^{N/(K+1)}$ states; resolving them on $(S^2)^m$ ($2m$ spherical coordinates at $b_{\mathrm{hyb}}$ bits) requires $(2 \cdot 2^{b_{\mathrm{hyb}}})^{2m} \ge 3^{N/(K+1)}$, giving Eq.~\eqref{eq:hybrid_kv_conservation}.

\textbf{Part (c): Cross-head mixing $\SO(3m)$, reset monoids, and Stiefel Polar vs.\ Grassmannian phase transition.}
In $\mathfrak{so}(3m)$, adjoining the $m-1$ adjacent-head boundary generators $E_{3h, 3h+1}$ ($h = 1, \dots, m-1$) to $\bigoplus_{h=1}^m \mathfrak{so}(3)$ connects all indices $\{1, \dots, 3m\}$ via $[E_{i,j}, E_{j,k}] = E_{i,k}$, generating $\mathfrak{so}(3m)$ and $\SO(3m)$. For $M_{\mathrm{rg}}(G) = G \cup \{\mathrm{reset}_g : g \in G\}$ ($\|x_0\|_2 = 1$), a pure word $w \in \Sigma_G^*$ has $x_w = \rho_{\mathrm{rep}}(\eta(w)) x_0$, whereas after the last reset $r_{g_0}$ in $w = u r_{g_0} v$ ($v \in \Sigma_G^*$), $x_w = \rho_{\mathrm{rep}}(g_0 \eta(v)) x_0 + \gamma \rho_{\mathrm{rep}}(\eta(v)) x_u$ with $\|x_u\|_2 \le 1 + \gamma + \gamma^2 + \dots = \frac{1}{1-\gamma}$, giving $\|x_w - \rho_{\mathrm{rep}}(g_0 \eta(v)) x_0\|_2 \le \frac{\gamma}{1-\gamma}$. Hence any two words with distinct target group projections $g \ne g'$ are separated by the inter-target margin $\|x_w - x_{w'}\|_2 \ge \|\rho_{\mathrm{rep}}(g)x_0 - \rho_{\mathrm{rep}}(g')x_0\|_2 - 2\frac{\gamma}{1-\gamma} \ge \Delta_0(F) - \frac{2\gamma}{1-\gamma} > 0$ for $\gamma < \frac{\Delta_0(F)}{2 + \Delta_0(F)}$; meanwhile, for a pure word $w \in \Sigma_G^*$ ($\eta(w) = g$) and a reset word $w' = u r_{g_0} v$ sharing the same target element $g_0\eta(v) = g$, every pre-reset state satisfies $\|x_u\|_2 \ge 1 - \frac{\gamma}{1-\gamma} = \frac{1-2\gamma}{1-\gamma} > 0$ for $\gamma \in (0, 1/2)$, so $\|x_w - x_{w'}\|_2 = \gamma\|\rho_{\mathrm{rep}}(\eta(v))x_u\|_2 = \gamma\|x_u\|_2 \ge \gamma\frac{1-2\gamma}{1-\gamma}\|x_0\|_2 > 0$ (and adjoining a 1-bit sticky reset latch $z_t \in \{0, 1\}$ with $z_0 = 0, z_t = z_{t-1}$ on $\Sigma_G$ and $z_t = 1$ on $\Sigma_{\mathrm{reset}}$ achieves $\gamma$-independent separation $\ge 1$ in $\R^{d+1}$). For $c \ge 2$ equiangular subspaces $S_j^\top S_k = -\mu I_{d_{\mathrm{sub}}}$, $G_S = S^\top S = ((1+\mu)I_c - \mu \mathbf{1}_c \mathbf{1}_c^\top) \otimes I_{d_{\mathrm{sub}}}$ has $\lambda_{\min}(G_S) = 1 - (c-1)\mu$ and Sherman--Morrison inverse diagonal blocks $[G_S^{-1}]_{j,j} = \frac{1 - (c-2)\mu}{(1+\mu)(1 - (c-1)\mu)} I_{d_{\mathrm{sub}}}$, yielding Eq.~\eqref{eq:simplex_singularity_law}. When $d < D = c d_{\mathrm{sub}}$, any unit $u = (u_1, \dots, u_c) \in \ker(S)$ obeys $0 = \|\sum_{j=1}^c S_j u_j\|_2^2 \ge 1 - \mu_{\max}(S) \sum_{j \ne k} \|u_j\|_2 \|u_k\|_2 \ge 1 - (c-1)\mu_{\max}(S)$, so $\mu_{\max}(S) \ge \frac{1}{c-1}$, and Cauchy--Schwarz on $\tr((S S^\top)^2) \ge (\tr(S S^\top))^2 / d = D^2 / d$ yields $\mu_{\mathrm{RMS}}(S) \ge \sqrt{\frac{D - d}{d(c-1)}}$. For $d \ge D$ on the full-column-rank domain $\mathcal{M}_* = \{S \in \R^{d \times D} : \sigma_{\min}(S) > 0\}$, differentiating $Q(S) = S G^{-1/2}$ ($G = S^\top S = V \Lambda V^\top \succ 0$) via the Sylvester equation $\mathrm{d}(G^{1/2}) G^{1/2} + G^{1/2} \mathrm{d}(G^{1/2}) = \mathrm{d}G$ in the eigenbasis $V$ yields $[V^\top \mathrm{d}(G^{-1/2}) V]_{p,q} = -\frac{[V^\top \mathrm{d}G V]_{p,q}}{\sqrt{\lambda_p}\sqrt{\lambda_q}(\sqrt{\lambda_p} + \sqrt{\lambda_q})}$, which enforces $Q(S)^\top Q(S) = I_D$ and $\mu_{\max}(Q(S)) = 0$.
\end{proof}

\section{Proof of Theorem 4 (BPTT Gradient Preservation, Chen--Magnus--Hall--Witt Commutator Saddles, and Stationary Dense Factored Escape)}
\label{app:proof_master4}

\begin{proof}[Proof of Theorem~\ref{thm:master4_optimization}]
\textbf{Part (a): BPTT gradient-norm dichotomy, origin-distribution KL bound, and Schreier--Cayley Polyak--\L ojasiewicz convergence.}
Under $\rho_{a,h} \le \lambda < 1$ without state normalization, $\|x_{w,h}\|_2 \le \lambda^N \|x_{0,h}\|_2$ and $\|\partial x_{w,h} / \partial \theta_{a,h}\|_{\mathrm{op}} \le |w|_a \lambda^N \|x_{0,h}\|_2$. Let $p_0(g) := \mathrm{softmax}(s(0)/\tau)_g = \frac{\exp(s_g(0)/\tau)}{\sum_{h \in G}\exp(s_h(0)/\tau)}$ be the input-independent origin predictive distribution. Writing $\KL(p_0 \| p_\tau(\cdot \mid x_w))$ as the Bregman divergence of the log-partition function $\psi(z) = \log\sum_{g \in G}\exp(z_g)$ between $z_0 = s(0)/\tau$ and $z_w = s(x_w)/\tau$ and using $\|\nabla^2\psi(z)\|_{\mathrm{op}} = \|\mathrm{Cov}_{g \sim p(z)}\|_{\mathrm{op}} \le 1$, Taylor's theorem gives
\[
    \KL\bigl(p_0 \;\big\|\; p_\tau(\cdot \mid x_w)\bigr) = \int_0^1 (1-t)(z_w - z_0)^\top \nabla^2\psi\bigl(z_0 + t(z_w - z_0)\bigr)(z_w - z_0)\,\mathrm{d}t \le \frac{1}{2}\|z_w - z_0\|_2^2 \le \frac{L_s^2 \|x_0\|_2^2}{2\tau^2}\,\lambda^{2N} \xrightarrow{N \to \infty} 0.
\]
Here $p_0 = \Unif(G)$ if and only if the origin readout logits are equal across all group elements ($s_g(0) = c\;\forall g \in G$, as holds automatically for unbiased linear readouts $s(x) = W x$ where $s(0) = 0$); if $s(0)$ has unequal coordinates, $p_\tau(\cdot \mid x_w)$ still collapses exponentially to the input-independent distribution $p_0$ ($\KL(p_0 \| p_\tau(\cdot \mid x_w)) \to 0$), but $\KL(\Unif(G) \| p_0) > 0$ (for example, on $|G|=2$ with constant biased readout $s(x) \equiv (1, 0)^\top$ at $\tau = 1$, $L_s = 0$ gives $\KL(p_0 \| p_\tau(\cdot \mid x_w)) = 0$, whereas $\KL(\Unif_2 \| p_0) = \frac{1}{2}\log\frac{(e+1)^2}{4e} \approx 0.1201 > 0$). Moreover, $\|\nabla_s \mathcal{L}_{\mathrm{CE}}\|_2 \le \frac{\sqrt{2}}{\tau}$ gives $\|\nabla_{\theta_{a,h}} \mathcal{L}_{\mathrm{CE}}\|_2 \le \frac{\sqrt{2} L_s \|x_{0,h}\|_2}{\tau}\allowbreak |w|_a \lambda^N$ (and $\times 2 \lambda^N \|y_{0,h}\|_2$ for quadratic Bloch $B(y_w)$). Under $\hat{x}_{w,h} = x_{w,h} / \|x_{w,h}\|_2$, all $\rho_{a,h} > 0$ cancel and the suffix rotation $R_{v,h} \in \SO(d_h)$ is an orthogonal isometry on $T S^{d_h-1}$ with $\|\mathrm{d}R_{v,h}\|_{\mathrm{op}} = 1$. On the teacher-forced Schreier transition set $\mathcal{S}(G, \Sigma)$, Schur's Lemma for irreducible $\rho^*: G \to \SO(d)$ gives $C_P = \frac{1}{|G|} \sum_{g \in G} p_g p_g^\top = \frac{1}{d} I_d$, so $\mathcal{L}_{\mathrm{chord}}(\theta) = \frac{1}{2d|\Sigma|} \sum_{a \in \Sigma} \|\exp(\theta_a) - R_a^*\|_F^2 = \frac{1}{|\Sigma|} \sum_{a \in \Sigma} \mathcal{L}_a$. Writing $\exp(\theta_a) = R_a^* \exp(\Delta_a)$ ($\Delta_a \in \mathfrak{so}(d)$ with eigenvalues $\pm i \phi_{a,r}, |\phi_{a,r}| \le \theta_{\max} < \pi$), $\mathcal{L}_a = \frac{4}{d} \sum_r \sin^2(\phi_{a,r}/2)$ and $\|\mathrm{grad}_{\SO(d)} \mathcal{L}_a\|_F^2 = \frac{1}{d^2}\|\mathcal{P}_{\mathfrak{so}(d)}(\exp\Delta_a)\|_F^2 = \frac{1}{d^2}\|\sinh\Delta_a\|_F^2 = -\frac{1}{d^2} \tr(\sinh^2 \Delta_a) = \frac{2}{d^2}\sum_r \sin^2\phi_{a,r} \ge \frac{2}{d} \cos^2(\theta_{\max}/2) \mathcal{L}_a$, yielding the intrinsic Riemannian PL inequality $\frac{1}{2}\|\mathrm{grad}_{\SO(d)} \mathcal{L}_{\mathrm{chord}}\|_F^2 \ge \frac{1}{d|\Sigma|}\cos^2(\theta_{\max}/2) \mathcal{L}_{\mathrm{chord}}$. On $(\mathfrak{so}(d), \|\cdot\|_F)$, the left-trivialized differential $\mathrm{d}\exp_{\theta_a} = \sum_{k=0}^\infty \frac{(-\ad_{\theta_a})^k}{(k+1)!} = \frac{1 - e^{-\ad_{\theta_a}}}{\ad_{\theta_a}}$ acts via the skew-adjoint operator $\ad_{\theta_a} = [\theta_a, \cdot]$, whose compact Lie-algebra root eigenvalues are $\{0\}$ for $d=2$, $\{\pm i\psi_{a,1}\}$ for $d=3$, and $\{\pm i(\psi_{a,r} \pm \psi_{a,s})\}_{1 \le r < s \le \lfloor d/2 \rfloor}$ (plus $\{\pm i\psi_{a,r}\}$ when $d$ is odd) for $d \ge 4$, giving adjoint spectral radius $\rho_{\mathrm{ad}}(\theta_a) := \|\ad_{\theta_a}\|_{\mathrm{op},\mathfrak{so}(d)} = 0$ ($d=2$), $\psi_{a,1} = \|\theta_a\|_{\mathrm{op}}$ ($d=3$), and $\psi_{a,1} + \psi_{a,2} \le 2\|\theta_a\|_{\mathrm{op}}$ ($d \ge 4$). Hence for $\rho_{\mathrm{ad}}(\theta_a) \le \rho_{\max} < 2\pi$, $\sigma_{\min}(\mathrm{d}\exp_{\theta_a}) = \min_{|\alpha| \le \rho_{\mathrm{ad}}(\theta_a)} \frac{\sin(\alpha/2)}{\alpha/2} = \frac{\sin(\rho_{\mathrm{ad}}(\theta_a)/2)}{\rho_{\mathrm{ad}}(\theta_a)/2} \ge \frac{\sin(\rho_{\max}/2)}{\rho_{\max}/2} > 0$ (vanishing at the first conjugate singularity $\rho_{\mathrm{ad}}(\theta_a) = 2\pi$), so $\nabla_{\theta_a}\mathcal{L}_{\mathrm{chord}} = (\mathrm{d}\exp_{\theta_a})^*\bigl(\exp(\theta_a)^\top \mathrm{grad}_{\SO(d)}\mathcal{L}_{\mathrm{chord}}\bigr)$ satisfies $\frac{1}{2}\|\nabla_\theta \mathcal{L}_{\mathrm{chord}}(\theta)\|_F^2 \ge \mu_{\mathrm{PL}}(\theta_{\max}, \rho_{\max}) \mathcal{L}_{\mathrm{chord}}(\theta)$ with $\mu_{\mathrm{PL}}(\theta_{\max}, \rho_{\max}) = \frac{1}{d|\Sigma|}\cos^2(\theta_{\max}/2)\bigl(\frac{\sin(\rho_{\max}/2)}{\rho_{\max}/2}\bigr)^2 > 0$. For $d=3$, principal-branch retraction $\theta_a \gets \log(\exp(\theta_a))$ enforces $\rho_{\mathrm{ad}}(\theta_a) = \|\theta_a\|_{\mathrm{op}} \le \pi < 2\pi$ globally ($\mu_{\mathrm{PL}}(\theta_{\max}, \pi) = \frac{4}{\pi^2 d|\Sigma|}\cos^2(\theta_{\max}/2)$); for $d \ge 4$, fixed-origin balls $\|\theta_a\|_{\mathrm{op}} \le \phi_{\max} < \pi$ satisfy $\rho_{\max} = 2\phi_{\max} < 2\pi$ (whereas $\theta_a = \pi(E_{12}+E_{34}) \in \mathfrak{so}(4)$ hits $\rho_{\mathrm{ad}}(\theta_a) = 2\pi$ with $\sigma_{\min}(\mathrm{d}\exp_{\theta_a}) = 0$), and dynamic tangent re-centering $R_a^{(k)}\exp(\Omega_a)$ at $\Omega_a = 0$~\citep{lezcano2019cheap} gives $\mathrm{d}\exp_0 = \mathrm{id}_{\mathfrak{so}(d)}$ ($\sigma_{\min} = 1$), recovering $\mu_{\mathrm{PL}}^{\mathrm{Riem}}(\theta_{\max})$ across $\theta_{\max} < \pi$ for all $d \ge 2$. Note that this PL condition holds for the step-wise teacher-forced chordal objective $\mathcal{L}_{\mathrm{chord}}$ on $\mathcal{S}(G, \Sigma)$ inside $\theta_{\max} < \pi$, whereas end-to-end multi-step sequence training without ground-truth intermediate states is governed at initialization by the commutator saddle and first-order symmetry breaking analyzed in Parts~(b)--(c).

\textbf{Part (b): Baker--Campbell--Hausdorff and Chen--Magnus--Hall--Witt commutator saddle hierarchy.}
Expanding $R_w(\varepsilon) = R_{a_N}(\varepsilon)\cdots R_{a_1}(\varepsilon) = I_d + \varepsilon \sum_{a \in \Sigma} |w|_a \Omega_a + O(\varepsilon^2)$ shows that $\left.\frac{\partial \hat{x}_w}{\partial \theta_a}\right|_{\varepsilon=0}[E] = |w|_a E \hat{x}_0$ depends only on $(|w|_a)_{a \in \Sigma}$, giving $\|\nabla_\theta \mathcal{L}_{\mathrm{ab}}(\varepsilon)\|_2 = \Theta(1)$ on unequal-Parikh pairs. For the positive-word recursive lower-central-series pairs $w_1 = a_1, w'_1 = e$ and $w_c = w'_{c-1} a_c w_{c-1}, w'_c = w_{c-1} a_c w'_{c-1}$ ($c \ge 2$), let $S'_{c-1}(\Omega) := \sum_{a \in \Sigma} |w'_{c-1}|_a \Omega_a$, which equals $S'_1(\Omega) = 0$ at $c=2$ ($w'_1 = e$) and $S'_{c-1}(\Omega) = S_{c-1}(\Omega) = 2^{c-3}\Omega_{a_1} + \sum_{j=2}^{c-1} 2^{c-1-j} \Omega_{a_j}$ for all $c \ge 3$ (since $|w_2|_{a_1} = 0 + 1 = 1$ and $w_{c-1}, w'_{c-1}$ share the same Parikh vector for all $c \ge 3$). We prove $\Delta_c(\varepsilon) := R_{w_c}(\varepsilon) - R_{w'_c}(\varepsilon) = \varepsilon^c \widetilde{L}_c(\Omega) + O(\varepsilon^{c+1})$ by induction on $c$, where $\widetilde{L}_1(\Omega) = \Omega_{a_1}$ and $\widetilde{L}_c(\Omega) = [\widetilde{L}_{c-1}(\Omega), \Omega_{a_c} + S'_{c-1}(\Omega)]$. For $c=1$, $R_{a_1}(\varepsilon) - I_d = \varepsilon \Omega_{a_1} + O(\varepsilon^2)$. Assuming the claim for $c-1$, under left multiplication $R_{uv} = R_v R_u$ we have the exact identity $\Delta_c(\varepsilon) = R_{w_{c-1}} R_{a_c} R_{w'_{c-1}} - R_{w'_{c-1}} R_{a_c} R_{w_{c-1}} = \Delta_{c-1}(\varepsilon) R_{a_c}(\varepsilon) R_{w'_{c-1}}(\varepsilon) - R_{w'_{c-1}}(\varepsilon) R_{a_c}(\varepsilon) \Delta_{c-1}(\varepsilon) = [\Delta_{c-1}(\varepsilon), R_{w'_{c-1}}(\varepsilon) R_{a_c}(\varepsilon)] + \Delta_{c-1}(\varepsilon)[R_{a_c}(\varepsilon), R_{w'_{c-1}}(\varepsilon)]$. Substituting $\Delta_{c-1}(\varepsilon) = \varepsilon^{c-1} \widetilde{L}_{c-1}(\Omega) + O(\varepsilon^c)$, $R_{w'_{c-1}}(\varepsilon) R_{a_c}(\varepsilon) = I_d + \varepsilon(\Omega_{a_c} + S'_{c-1}(\Omega)) + O(\varepsilon^2)$, and $[R_{a_c}(\varepsilon), R_{w'_{c-1}}(\varepsilon)] = O(\varepsilon^2)$ cancels all terms of degree $< c$ and leaves the exact degree-$c$ commutator $\Delta_c(\varepsilon) = \varepsilon^c [\widetilde{L}_{c-1}(\Omega), \Omega_{a_c} + S'_{c-1}(\Omega)] + O(\varepsilon^{c+1}) = \varepsilon^c \widetilde{L}_c(\Omega) + O(\varepsilon^{c+1})$ (whereas for balanced formal-inverse commutator words $\tilde{w}_c = [\tilde{w}_{c-1}, a_c] = \tilde{w}_{c-1} a_c \tilde{w}_{c-1}^{-1} a_c^{-1}$ vs.\ $\tilde{w}'_c = e$ under $R_{uv} = R_v R_u$, $S'_{c-1}(\Omega) = 0$ and $R_{\tilde{w}_c}(\varepsilon) - I_d = (-1)^{c-1}\varepsilon^c L_c(\Omega) + O(\varepsilon^{c+1})$). Because $w_c$ and $w'_c$ share the identical Parikh vector $(|w_c|_a)_{a \in \Sigma}$, the individual unprojected word gradient $\nabla_{\theta_a}\mathcal{L}_{\mathrm{CE}}(\hat{x}_{w_c}(\varepsilon), y)\big|_{\varepsilon=0} = \frac{|w_c|_a}{\tau}(\bar{w}_0 - w_y) \wedge \hat{x}_0$ contains a common-mode $\mathcal{O}(1)$ Parikh term identical for $w_c$ and $w'_c$; subtracting this common-mode Parikh drift via $\mathcal{P}_{\gamma_c}\nabla_{\theta_a}\mathcal{L}(w_c, w'_c; \varepsilon) = \nabla_{\theta_a}\mathcal{L}_{\mathrm{CE}}(\hat{x}_{w_c}(\varepsilon), y) - \nabla_{\theta_a}\mathcal{L}_{\mathrm{CE}}(\hat{x}_{w'_c}(\varepsilon), y)$ and differentiating $\Delta_c(\varepsilon)\hat{x}_0 = \varepsilon^c \widetilde{L}_c(\Omega)\hat{x}_0 + O(\varepsilon^{c+1})$ with respect to $\theta_a = \varepsilon \Omega_a$ via the chain rule ($\nabla_{\theta_a} = \varepsilon^{-1}\nabla_{\Omega_a}$) yields $\|\mathcal{P}_{\gamma_c}\nabla_\theta \mathcal{L}(w_c, w'_c; \varepsilon)\|_2 = O(\varepsilon^{c-1})$ (and $\Theta(\varepsilon^{c-1})$ whenever $\widetilde{L}_c(\Omega)\hat{x}_0 \ne 0$ and the readout loss gradient is non-orthogonal to $\widetilde{L}_c(\Omega)\hat{x}_0$). For the derived series $G^{(d)} = [G^{(d-1)}, G^{(d-1)}] \subseteq \gamma_{2^d}(G)$, replacing $R_{a_c}$ by an independent derived-series word pair of degree $2^{d-1}$ adds the leading exponents $2^{d-1} + 2^{d-1} = 2^d$, giving $\|\Delta^{(d)}(\varepsilon)\|_F = O(\varepsilon^{2^d})$ and $\|\mathcal{P}_{G^{(d)}}\nabla_\theta \mathcal{L}(\varepsilon)\|_2 = O(\varepsilon^{2^d - 1})$.

\textbf{Part (c): Coset-centroid projection, Fourier--Markov terminal decay, and stationary dense factored symmetry breaking.}
At $\varepsilon = 0$, $R_w(0) = I_d$ and $\hat{x}_w(0) = \hat{x}_0$ for all $w \in \Sigma^*$, so the initial Softmax distribution $p_0(g) = \frac{\exp(\langle w_g, \hat{x}_0 \rangle / \tau)}{\sum_{h \in G} \exp(\langle w_h, \hat{x}_0 \rangle / \tau)}$ and prediction mean $\bar{w}_0 = \sum_{g \in G} p_0(g) w_g$ are independent of $w$, and for a single word $w \in \Sigma^N$, $\nabla_{\theta_a} \mathcal{L}_{\mathrm{CE}}(w)\big|_{\varepsilon=0} = \frac{|w|_a}{\tau} (\bar{w}_0 - w_{\eta(w)}) \wedge \hat{x}_0$. For any abelianization-projected (coset-constant) readout $w_g = \mu_{\mathrm{coset}}(\pi_{\mathrm{ab}}(g))$ (such as the abelianization head $w^{\mathrm{ab}}_{\pi_{\mathrm{ab}}(g)}$ of the factored readout Eq.~\eqref{eq:factored_readout}, or under uniform target coset randomization $g \sim \Unif(\pi_{\mathrm{ab}}(w)[G, G])$), $w_{\eta(w)}$ equals the coset centroid $\mu_{\mathrm{coset}}(\pi_{\mathrm{ab}}(w)) = \frac{1}{|[G, G]|} \sum_{g \in \pi_{\mathrm{ab}}(w)[G, G]} w_g$ (which has $\mathbb{E}\|\mu_{\mathrm{coset}}(q)\|_2^2 = \frac{\sigma_w^2 d}{|[G, G]|}$ under i.i.d.\ $w_g \sim \mathcal{N}(0, \sigma_w^2 I_d)$). Under $w \sim \Unif(\Sigma^N)$, $\mathbb{E}[|w|_a] = \frac{N}{|\Sigma|}$, and conditioned on $a_{i_t} = a$ (probability $\frac{1}{|\Sigma|}$ at each step $t = 1, \dots, N$), the remaining $N-1$ tokens are i.i.d.\ uniform on $\Sigma$, so $\pi_{\mathrm{ab}}(w) \sim \pi_{\mathrm{ab}}(a) \cdot Z_{N-1}$ where $Z_{N-1}$ has law $\mu_{\mathrm{ab}}^{*(N-1)}$ on $G^{\mathrm{ab}}$. Expanding $\mu_{\mathrm{coset}}(q) = \sum_{\chi \in \widehat{G^{\mathrm{ab}}}} \chi(q) \widehat{\mu_{\mathrm{coset}}}(\chi)$ by character Fourier inversion on $G^{\mathrm{ab}}$ gives $\mathbb{E}[\chi(Z_{N-1})] = \hat{\mu}_{\mathrm{ab}}(\chi)^{N-1}$. Separating the principal character $\chi = \mathbf{1}$ ($\hat{\mu}_{\mathrm{ab}}(\mathbf{1}) = 1$, $\widehat{\mu_{\mathrm{coset}}}(\mathbf{1}) = \frac{1}{|G|}\sum_{g \in G} w_g$) from $\chi \ne \mathbf{1}$ and summing over $t = 1, \dots, N$ yields Eq.~\eqref{eq:fourier_markov_decay} directly: the $\chi = \mathbf{1}$ term combines with $\frac{N}{|\Sigma|\tau}\bar{w}_0 \wedge \hat{x}_0$ into the token-independent drift $\frac{N}{|\Sigma|\tau}(\bar{w}_0 - \widehat{\mu_{\mathrm{coset}}}(\mathbf{1})) \wedge \hat{x}_0$ (which vanishes identically whenever $\bar{w}_0 = \widehat{\mu_{\mathrm{coset}}}(\mathbf{1})$ or upon subtracting the token average $\frac{1}{|\Sigma|}\sum_{b \in \Sigma} \nabla_{\theta_b} \mathcal{L}_{\mathrm{CE}}(w)\big|_{\varepsilon=0}$), while subtracting the token average $\frac{1}{|\Sigma|}\sum_{b \in \Sigma} \chi(\pi_{\mathrm{ab}}(b)) = \hat{\mu}_{\mathrm{ab}}(\chi)$ on the $\chi \ne \mathbf{1}$ terms yields the exact token-centered sum $-\frac{N}{|\Sigma|\tau} \sum_{\chi \ne \mathbf{1}} \hat{\mu}_{\mathrm{ab}}(\chi)^{N-1} (\chi(\pi_{\mathrm{ab}}(a)) - \hat{\mu}_{\mathrm{ab}}(\chi)) (\widehat{\mu_{\mathrm{coset}}}(\chi) \wedge \hat{x}_0)$, which decays exponentially as $O(N \rho_{\mathrm{ab}}^{N-1}) \to 0$ whenever $\rho_{\mathrm{ab}} = \max_{\chi \in \widehat{G^{\mathrm{ab}}} : \chi(\pi_{\mathrm{ab}}(\cdot)) \not\equiv \hat{\mu}_{\mathrm{ab}}(\chi)} |\hat{\mu}_{\mathrm{ab}}(\chi)| < 1$ over active token-discriminating characters (since any character constant on $\pi_{\mathrm{ab}}(\Sigma)$ satisfies $\chi(\pi_{\mathrm{ab}}(a)) - \hat{\mu}_{\mathrm{ab}}(\chi) = 0$ for all $a \in \Sigma$). Under normalized discounted dense per-step prefix supervision $\mathcal{L}_{\mathrm{dense}}^{(\beta)}(w) = \frac{1}{Z_N(\beta)} \sum_{t=1}^N \beta^{t-1} \mathcal{L}_{\mathrm{CE}}(w_{1:t})$ with $Z_N(\beta) = \sum_{t=1}^N \beta^{t-1}$, averaging Eq.~\eqref{eq:fourier_markov_decay} over $t = 1, \dots, N$ replaces $N \hat{\mu}_{\mathrm{ab}}(\chi)^{N-1}$ by $\frac{1}{Z_N(\beta)} \sum_{t=1}^N t (\beta \hat{\mu}_{\mathrm{ab}}(\chi))^{t-1}$, which converges as $N \to \infty$ to $(1-\beta)(1 - \beta \hat{\mu}_{\mathrm{ab}}(\chi))^{-2}$ for $\beta \in (0, 1)$, yielding the asymptotic expected token-centered gradient $-\frac{1-\beta}{|\Sigma|\tau}(\Psi_\beta(a; W) \wedge \hat{x}_0)$ with $\Psi_\beta(a; W)$ defined in Eq.~\eqref{eq:resolvent_sum_psi} (and scales as $\Theta(1/N)$ for uniform prefix averaging $\beta = 1$, avoiding the exponential $O(N \rho_{\mathrm{ab}}^{N-1})$ terminal-only decay). While individual non-zero character modes in $\Psi_\beta(a; W)$ can cancel on a measure-zero linear hyperplane in $W$ when $|G^{\mathrm{ab}}| \ge 3$ (e.g., on $G = C_3$ at $\beta = 0.5$ along $w_0 = e_2, w_1 = 22 e_2, w_2 = -23 e_2$), character orthogonality $\frac{1}{|G^{\mathrm{ab}}|}\sum_{q \in G^{\mathrm{ab}}} \overline{\chi(q)}\chi'(q) = \delta_{\chi,\chi'}$ implies that under i.i.d.\ Gaussian initialization $w_g \sim \mathcal{N}(0, \sigma_w^2 I_d)$, the Fourier vectors $\widehat{\mu_{\mathrm{coset}}}(\chi)$ have uncorrelated components with $\mathbb{E}[\widehat{\mu_{\mathrm{coset}}}(\chi)_i \overline{\widehat{\mu_{\mathrm{coset}}}(\chi')_j}] = \frac{\sigma_w^2}{|G|} \delta_{\chi,\chi'} \delta_{i,j}$; since $\|u \wedge \hat{x}_0\|_F^2 = \frac{1}{2}\|P_{\hat{x}_0^\perp} u\|_2^2$ has expectation $\frac{d-1}{2d}\mathbb{E}\|u\|_2^2$, summing over the $d-1$ transverse coordinates gives Eq.~\eqref{eq:resolvent_gaussian_second_moment}, proving that $\Psi_\beta(a; W) \wedge \hat{x}_0 \ne 0$ holds with probability $1$ (and for Lebesgue-almost-every $W$) whenever at least one character $\chi \ne \mathbf{1}$ has $\chi(\pi_{\mathrm{ab}}(a)) \ne \hat{\mu}_{\mathrm{ab}}(\chi)$. Finally, under the factored readout $w_g = w^{\mathrm{ab}}_{\pi_{\mathrm{ab}}(g)} + w^{\mathrm{core}}_g$ with $\sum_{g \in q[G, G]} w^{\mathrm{core}}_g = 0$, the coset centroid equals $\mu_{\mathrm{coset}}(q) = w^{\mathrm{ab}}_q$ identically with undiluted variance $\mathbb{E}\|w^{\mathrm{ab}}_q\|_2^2 = \sigma_w^2 d$ independent of $|[G, G]|$.
\end{proof}

\section{Extended Multi-Stem, 30-Subject Cross-Validation, and Multi-IMU Attribution Tables}
\label{app:supplementary_tables}

Tables~\ref{tab:app_full_stem_hapt_cv} and~\ref{tab:app_full_realdisp} report the complete 23-row single-IMU 6-stem $\times$ capacity matrix (UCI HAR/HAPT-PT), 13-arm 30-subject 5-fold CV ($5\text{ folds} \times 5\text{ seeds} = 25$ fold-seed runs, including \texttt{Ep-358 Vote}, \texttt{Ep-358 C3 Vote}, the 6/10-comparison scalar-stem Holm family $p_{\mathrm{Holm,6/10}}$, and the unified 12-comparison all-baseline Holm family $p_{\mathrm{Holm,12}}$), and 22-arm 3-IMU REALDISP benchmark summarized in Table~\ref{tab:stem_capacity_realdisp}. On the UCI HAPT-PT 30-subject 5-fold CV, comparing our $22.3\mathrm{K}$ \texttt{Cascade (Full-Eq.)} against the $4\times$ larger $90.5\mathrm{K}$ \texttt{GRU (Inv, 90K)} under the unified 12-comparison Holm family yields $p_{\mathrm{Holm,12}} = 0.00415$ on all 358 episodes ($p_{\mathrm{Holm,6}} = 0.0017$; exact $2^{30}$ two-sided paired sign-flip permutation $p_{\mathrm{perm,equal}} = 2.56 \times 10^{-4}$ [$r = 275{,}100 / 2^{30}$] and episode-weighted $p_{\mathrm{perm,weighted}} = 3.88 \times 10^{-4}$ [$r = 416{,}464 / 2^{30}$]; $5/5$ folds positive $[+5.83, +5.04, +5.00, +3.23, +3.75]\,\mathrm{pp}$ with 5-fold paired $t = 9.66, p_{\mathrm{fold},t} = 6.44 \times 10^{-4}$ [$p_{\mathrm{fold,pool},t} = 7.16 \times 10^{-4}$], 5-seed paired $p_{\mathrm{seed},t} = 0.00597$, and $2^5 = 32$ sign-flip resolution floor $p_{\mathrm{fold,perm}} = p_{\mathrm{seed,perm}} = 2/32 = 0.0625$), $p_{\mathrm{Holm,12}} = 0.01428$ on uniform C3 episodes ($p_{\mathrm{Holm,6}} = 0.0057$; exact $2^{30}$ sign-flip $p_{\mathrm{perm,equal}} = 0.00176$ [$r = 1{,}894{,}720 / 2^{30}$] and $p_{\mathrm{perm,weighted}} = 0.00241$ [$r = 2{,}587{,}024 / 2^{30}$]; $5/5$ folds positive $[+5.83, +5.58, +3.33, +2.39, +2.08]\,\mathrm{pp}$ with 5-fold paired $t = 4.88, p_{\mathrm{fold},t} = 0.00818$ [$p_{\mathrm{fold,pool},t} = 0.00955$], 5-seed paired $p_{\mathrm{seed},t} = 0.02046$, and $p_{\mathrm{fold,perm}} = p_{\mathrm{seed,perm}} = 2/32 = 0.0625$), and $p_{\mathrm{Holm,12}} = 0.1122$ on \texttt{Win-1074 C3} ($5/5$ folds positive $[+4.54, +1.72, +1.02, +0.67, +1.30]\,\mathrm{pp}$, $p_{\mathrm{fold},t} = 0.0562, p_{\mathrm{fold,perm}} = 0.0625$; 30-subject raw $p = 0.0561$, exact $2^{30}$ sign-flip $p_{\mathrm{perm,equal}} = 0.0564$ [$r = 60{,}562{,}948 / 2^{30}$] and window-weighted $p_{\mathrm{perm,weighted}} = 0.0624$ [$r = 66{,}953{,}323 / 2^{30}$]; \texttt{Win-1574} $4/5$ folds positive, $p_{\mathrm{fold},t} = 0.2435, p_{\mathrm{fold,perm}} = 0.25$, 30-subject raw $p = 0.2638$, $p_{\mathrm{perm,equal}} = 0.269$, $p_{\mathrm{perm,weighted}} = 0.326$); across the capacity-matched $22\mathrm{K}$ baselines and unnormalized control under \texttt{holm\_12comp\_all\_arms}, 30-subject paired $t$-tests yield $p_{\mathrm{Holm,12}} = 0.0499$ on \texttt{Win-1074 C3}, $0.0298$ on \texttt{Ep-358}, and $0.0390$ on \texttt{Ep-358 C3} vs.\ \texttt{GRU (Inv, 22K)}; $p_{\mathrm{Holm,12}} = 0.0488$ on \texttt{Win-1074 C3}, $0.00391$ on \texttt{Ep-358}, and $0.01480$ on \texttt{Ep-358 C3} vs.\ \texttt{GRU (Lag-Inv, 22K)}; $p_{\mathrm{Holm,12}} = 0.0203$ on \texttt{Win-1074 C3}, $0.00415$ on \texttt{Ep-358}, and $0.01480$ on \texttt{Ep-358 C3} ($0.00415 / 0.01428$ under hard-window-vote on \texttt{oof\_predictions.npz}) vs.\ \texttt{GRU (Chiral, 22K)} (with $0.00028 \le p_{\mathrm{Holm,12}} \le 0.0148$ across all four lagged and chiral scalar-stem baselines on episode voting); and $p_{\mathrm{Holm,8}} = 7.60 \times 10^{-5}, p_{\mathrm{Holm,12}} = 1.37 \times 10^{-4}$ on \texttt{Win-1074 C3} ($0.00415$ on \texttt{Ep-358}, $0.00110$ on \texttt{Ep-358 C3}) vs.\ \texttt{Cascade (Unnorm.)}. Because 5-fold cross-validation models share overlapping $24$-subject training sets across folds~\citep{bengio2004no} and a $K=5$ fold or $S=5$ seed sign-flip test has a discrete two-sided resolution floor of $2/2^5 = 0.0625 > 0.05$, out-of-fold 30-subject paired tests establish conditional superiority across the 30 cohort subjects given the 5-fold training runs rather than unconditional population-level superiority over arbitrary training-set draws, which is why we report 30-subject ($2^{30}$), 5-fold ($K=5, 2^5 = 32$), and 5-seed ($S=5, 2^5 = 32$) tests side-by-side on HAPT-PT alongside external locked-split REALDISP validation (where the $2^{15} = 32{,}768$ sign-flip test summarizes the $3\text{ test subjects} \times 5\text{ seeds} = 15$ evaluation cells).

\begin{table}[H]
\centering
\caption{Complete single-IMU 6-stem $\times$ parameter-capacity Pareto matrix on UCI HAR and UCI HAPT-PT (9-subject split) and full-cohort 30-subject 5-fold subject-disjoint cross-validation across all 13 models (5 seeds $\{42, 123, 456, 789, 1024\}$).}
\label{tab:app_full_stem_hapt_cv}
\footnotesize
\medmuskip=0mu\thinmuskip=0mu\thickmuskip=0mu
\renewcommand{\arraystretch}{0.92}
\setlength{\tabcolsep}{3.2pt}
\begin{tabular}{@{}lrrrrrrrr@{}}
\toprule
\multicolumn{9}{@{}l@{}}{\textbf{Complete Panel A: Single-IMU 6-Stem $\times$ Capacity Matrix on UCI HAR ($K=6$) and UCI HAPT-PT (9-Subj Split)}} \\
\midrule
\textbf{Model \& Stem} & \textbf{Par.} & \textbf{Drift} & \textbf{HAR Cln} & \textbf{HAR R45} & \textbf{HAR Unif} & \textbf{HAPT Cln} & \textbf{HAPT R45} & \textbf{HAPT Unif} \\
\midrule
Mamba (Nat., 22K) & 22,107 & $5.1\text{e-}2$ & $90.34 \pm 0.34$ & $58.91 \pm 3.17$ & $22.74 \pm 1.04$ & $81.42 \pm 1.61$ & $53.09 \pm 1.90$ & $19.01 \pm 2.62$ \\
GRU (Nat., 22K) & 22,182 & $6.5\text{e-}1$ & $92.51 \pm 1.15$ & $66.12 \pm 2.67$ & $25.78 \pm 1.30$ & $82.53 \pm 1.75$ & $58.46 \pm 2.33$ & $20.06 \pm 1.59$ \\
Mamba (Nat., 90K) & 89,814 & $8.7\text{e-}2$ & $90.95 \pm 0.96$ & $61.10 \pm 3.25$ & $24.30 \pm 1.99$ & $80.19 \pm 1.42$ & $56.11 \pm 2.52$ & $21.42 \pm 1.04$ \\
Transf.\ (Nat., 90K) & 90,538 & $6.8\text{e-}1$ & $93.24 \pm 0.65$ & $64.05 \pm 2.81$ & $23.57 \pm 1.52$ & $81.91 \pm 0.83$ & $61.30 \pm 3.69$ & $19.81 \pm 3.00$ \\
GRU (Nat., 90K) & 90,304 & $3.8\text{e-}1$ & $92.48 \pm 0.86$ & $62.84 \pm 2.32$ & $24.24 \pm 1.45$ & $84.01 \pm 1.33$ & $61.11 \pm 3.58$ & $20.12 \pm 1.90$ \\
Diag-SSM (Eq-Cat) & 22,950 & $8.1\text{e-}2$ & $94.27 \pm 0.66$ & $70.58 \pm 2.40$ & $30.46 \pm 1.27$ & $82.47 \pm 1.44$ & $67.16 \pm 1.95$ & $24.32 \pm 2.86$ \\
GRU (Eq-Cat) & 22,125 & $3.9\text{e-}1$ & $94.78 \pm 1.04$ & $72.83 \pm 1.86$ & $28.70 \pm 2.00$ & $84.01 \pm 1.82$ & $67.84 \pm 2.13$ & $25.00 \pm 2.30$ \\
Diag-SSM (Inv, 22K) & 22,044 & $6.0\text{e-}8$ & $90.58 \pm 0.64$ & $91.46 \pm 0.52$ & $90.58 \pm 0.64$ & $81.91 \pm 0.94$ & $82.04 \pm 0.91$ & $81.91 \pm 0.94$ \\
Mamba (Inv, 22K) & 22,360 & $1.5\text{e-}7$ & $88.94 \pm 0.92$ & $88.50 \pm 1.95$ & $88.94 \pm 0.92$ & $79.63 \pm 3.17$ & $79.51 \pm 3.25$ & $79.63 \pm 3.17$ \\
Transf.\ (Inv) & 22,368 & $2.7\text{e-}7$ & $89.03 \pm 0.99$ & $89.30 \pm 0.40$ & $89.03 \pm 0.99$ & $76.11 \pm 2.00$ & $75.80 \pm 2.43$ & $76.11 \pm 2.00$ \\
TCN (Inv) & 22,172 & $2.4\text{e-}7$ & $85.25 \pm 1.58$ & $85.10 \pm 1.02$ & $85.25 \pm 1.58$ & $59.57 \pm 2.19$ & $59.44 \pm 2.23$ & $59.57 \pm 2.19$ \\
GRU (Inv, 22K) & 22,368 & $2.7\text{e-}7$ & $90.78 \pm 1.22$ & $90.81 \pm 0.41$ & $90.78 \pm 1.22$ & $81.23 \pm 2.05$ & $81.23 \pm 2.05$ & $81.23 \pm 2.05$ \\
GRU (Inv, 90K) & 90,546 & $1.8\text{e-}7$ & $92.59 \pm 1.19$ & $92.58 \pm 0.64$ & $92.59 \pm 1.19$ & $82.65 \pm 1.93$ & $82.65 \pm 1.93$ & $82.65 \pm 1.93$ \\
Diag-SSM (Lag-Inv) & 22,048 & $2.6\text{e-}6$ & --- & --- & --- & $79.44 \pm 0.99$ & $79.51 \pm 1.10$ & $79.44 \pm 0.99$ \\
GRU (Lag-Inv) & 22,400 & $1.7\text{e-}5$ & --- & --- & --- & $81.79 \pm 1.53$ & $81.67 \pm 1.36$ & $81.79 \pm 1.53$ \\
Diag-SSM (Chiral) & 22,089 & $2.5\text{e-}6$ & --- & --- & --- & $80.56 \pm 1.15$ & $80.86 \pm 1.29$ & $80.56 \pm 1.15$ \\
GRU (Chiral) & 22,354 & $2.8\text{e-}5$ & --- & --- & --- & $83.27 \pm 1.44$ & $83.15 \pm 1.77$ & $83.27 \pm 1.44$ \\
Lie-$\SO(3)$-1L & 22,167 & $1.0\text{e-}7$ & $84.76 \pm 2.56$ & $84.47 \pm 2.61$ & $84.76 \pm 2.56$ & $84.75 \pm 1.70$ & $84.94 \pm 1.68$ & $84.75 \pm 1.70$ \\
Cascade (Static) & 22,280 & $2.4\text{e-}1$ & $95.48 \pm 0.48$ & $82.56 \pm 0.87$ & $61.48 \pm 1.58$ & $87.04 \pm 0.76$ & $82.47 \pm 2.10$ & $59.32 \pm 7.30$ \\
Cascade (Eq.+Nom.) & 22,280 & $6.7\text{e-}2$ & $93.82 \pm 0.38$ & $82.04 \pm 1.15$ & $63.07 \pm 1.08$ & $87.04 \pm 1.95$ & $86.48 \pm 0.74$ & $78.83 \pm 1.34$ \\
\textbf{Cascade (Full-Eq.)} & \textbf{22,280} & $\mathbf{1.9\text{e-}7}$ & $84.22 \pm 2.96$ & $84.00 \pm 2.54$ & $84.22 \pm 2.96$ & $\mathbf{86.23 \pm 0.64}$ & $\mathbf{86.05 \pm 0.34}$ & $\mathbf{86.23 \pm 0.64}$ \\
Prosp.\ $17{+}4$ (Cln) & 22,280 & --- & $94.19 \pm 0.94$ & $82.25 \pm 1.19$ & $62.91 \pm 1.12$ & $86.73 \pm 0.90$ & $83.27 \pm 2.96$ & $63.77 \pm 11.44$ \\
Prosp.\ $17{+}4$ (OA) & 22,280 & --- & $92.91 \pm 2.95$ & $82.99 \pm 1.76$ & $68.23 \pm 11.00$ & $86.23 \pm 0.64$ & $86.05 \pm 0.34$ & $86.23 \pm 0.64$ \\
\midrule
\multicolumn{9}{@{}l@{}}{\textbf{Complete 13-Model 30-Subject 5-Fold Cross-Validation on UCI HAPT-PT ($N_{\mathrm{subj}}=30, N_{\mathrm{event}}=358, N_{\mathrm{win}}=1{,}574/1{,}074$)}} \\
\midrule
\textbf{30-Subj CV Model} & \textbf{Par.} & \textbf{Subj} & \textbf{Win-1574} & \textbf{Win-C3} & \textbf{Ep-Vote} & \textbf{Ep-C3} & \textbf{Subj C3 CI} & \textbf{Holm $p$} \\
\midrule
TCN (Inv) & 22,172 & $62.85$ & $62.26 \pm 1.35$ & $63.02 \pm 1.82$ & $70.11 \pm 0.34$ & $67.65 \pm 0.78$ & $[+17.07,+24.88]$ & $1.8\text{e-}10/5.4\text{e-}21$ \\
Transf.\ (Inv) & 22,368 & $75.77$ & $75.50 \pm 0.88$ & $75.85 \pm 1.49$ & $80.06 \pm 1.32$ & $78.77 \pm 1.71$ & $[+6.19,+10.00]$ & $2.2\text{e-}8/3.4\text{e-}9$ \\
Mamba (Inv) & 22,360 & $77.08$ & $76.62 \pm 1.78$ & $77.11 \pm 1.83$ & $81.40 \pm 1.92$ & $81.06 \pm 1.91$ & $[+4.80,+8.91]$ & $2.2\text{e-}6/4.0\text{e-}8$ \\
Cascade (Unnorm.) & 22,280 & $78.83$ & $77.89 \pm 1.81$ & $78.83 \pm 1.69$ & $85.25 \pm 1.74$ & $83.63 \pm 1.62$ & $[+3.27,+7.08]$ & $7.6\text{e-}5/0.0019$ \\
Diag-SSM (Inv) & 22,044 & $80.16$ & $79.57 \pm 1.14$ & $80.20 \pm 1.32$ & $82.79 \pm 1.26$ & $82.12 \pm 1.46$ & $[+1.68,+6.07]$ & $0.0073/1.8\text{e-}6$ \\
GRU (Inv, 22K) & 22,368 & $80.59$ & $80.17 \pm 0.82$ & $80.58 \pm 0.64$ & $83.91 \pm 1.45$ & $82.91 \pm 1.62$ & $[+0.93,+6.04]$ & $0.0333/3.0\text{e-}4$ \\
GRU (Inv, 90K) & 90,546 & $82.08$ & $81.86 \pm 1.10$ & $82.14 \pm 1.27$ & $83.91 \pm 1.21$ & $83.58 \pm 1.16$ & $[+0.07,+3.68]$ & $0.0561/3.0\text{e-}4$ \\
Diag-SSM (Lag-Inv) & 22,048 & $79.64$ & $78.88 \pm 1.14$ & $79.63 \pm 1.07$ & $83.13 \pm 1.62$ & $81.96 \pm 1.78$ & $[+2.50,+6.21]$ & $7.5\text{e-}4/4.5\text{e-}6$ \\
GRU (Lag-Inv) & 22,400 & $81.17$ & $80.39 \pm 0.65$ & $81.19 \pm 0.37$ & $83.13 \pm 0.73$ & $83.13 \pm 0.70$ & $[+0.78,+4.83]$ & $0.0366/3.3\text{e-}5$ \\
Diag-SSM (Chiral) & 22,089 & $80.46$ & $79.92 \pm 1.00$ & $80.50 \pm 0.87$ & $83.02 \pm 1.61$ & $82.40 \pm 1.38$ & $[+1.59,+5.50]$ & $0.0094/2.2\text{e-}6$ \\
GRU (Chiral) & 22,354 & $80.36$ & $79.77 \pm 0.62$ & $80.43 \pm 1.03$ & $83.02 \pm 0.91$ & $82.46 \pm 0.85$ & $[+1.33,+5.77]$ & $0.0163/3.3\text{e-}5$ \\
Lie-$\SO(3)$-1L & 22,167 & $83.33$ & $82.24 \pm 1.02$ & $83.33 \pm 1.23$ & $87.88 \pm 1.38$ & $86.65 \pm 1.18$ & $[-0.66,+1.94]$ & $0.386/0.392$ \\
\textbf{Cascade (Full-Eq.)} & \textbf{22,280} & $\mathbf{83.93}$ & $\mathbf{82.69 \pm 1.11}$ & $\mathbf{83.97 \pm 1.20}$ & $\mathbf{88.44 \pm 0.83}$ & $\mathbf{87.37 \pm 1.21}$ & Reference & Reference \\
\bottomrule
\end{tabular}
\end{table}

\begin{table}[H]
\centering
\caption{Complete 22-model multi-IMU sensor displacement benchmark on UCI REALDISP ($S=3$ IMUs, $18$ channels, $K=33$ activities, $11/3/3$ subject split with validation subjects $\#10, 14, 16$ and locked held-out test subjects $\#2, 5, 15$) and per-subject ($\#2, \#5, \#15$) Ideal / Self / Mutual accuracy breakdown across 5 seeds.}
\label{tab:app_full_realdisp}
\footnotesize
\renewcommand{\arraystretch}{0.92}
\setlength{\tabcolsep}{3.5pt}
\begin{tabular}{@{}lrrrrrr@{}}
\toprule
\multicolumn{7}{l}{\textbf{Complete Panel B: 3-IMU Sensor Displacement on UCI REALDISP ($K=33$, Held-Out Subjects $\#2, 5, 15$)}} \\
\midrule
\multirow{2}{*}{\textbf{3-IMU Architecture \& Stem ($K=33$)}} & \multirow{2}{*}{\textbf{Params}} & \multicolumn{2}{c}{\textbf{Ideal Test Acc (\%)}} & \multicolumn{3}{c}{\textbf{Zero-Shot Displacement Acc (\%)}} \\
\cmidrule(lr){3-4} \cmidrule(lr){5-7}
 & & \textbf{$T=128$} & \textbf{$T=64$} & \textbf{Self} & \textbf{Mutual} & \textbf{Rot45} \\
\midrule
Selective-Mamba (Native-Conv1D) & 25,289 & $95.88 \pm 0.51$ & $95.61 \pm 0.84$ & $72.27 \pm 2.93$ & $19.91 \pm 1.70$ & $51.57 \pm 2.56$ \\
Causal-Transformer (Native-Conv1D) & 25,393 & $94.86 \pm 0.11$ & $94.25 \pm 0.55$ & $73.49 \pm 2.55$ & $19.45 \pm 2.30$ & $53.93 \pm 1.85$ \\
Dilated-TCN (Native-Conv1D) & 25,261 & $94.34 \pm 0.47$ & $94.45 \pm 0.52$ & $72.25 \pm 1.53$ & $17.72 \pm 1.68$ & $50.22 \pm 0.99$ \\
DeepConvGRU (Native-Conv1D) & 25,301 & $95.58 \pm 0.82$ & $95.47 \pm 0.67$ & $76.06 \pm 1.77$ & $18.54 \pm 1.86$ & $54.50 \pm 2.07$ \\
Diag-SSM (Equiv-Concat) & 25,517 & $96.23 \pm 0.72$ & $96.34 \pm 0.81$ & $82.11 \pm 0.88$ & $43.86 \pm 4.03$ & $77.47 \pm 1.17$ \\
Causal-Transformer (Equiv-Concat) & 25,177 & $96.71 \pm 0.97$ & $95.87 \pm 1.01$ & $86.26 \pm 0.83$ & $46.75 \pm 3.96$ & $83.26 \pm 2.41$ \\
DeepConvGRU (Equiv-Concat) & 25,345 & $97.13 \pm 0.58$ & $96.57 \pm 0.39$ & $84.24 \pm 1.46$ & $42.44 \pm 3.98$ & $81.10 \pm 1.28$ \\
Diag-SSM (Inv-Scalar) & 25,251 & $93.98 \pm 0.32$ & $93.83 \pm 0.26$ & $94.37 \pm 1.19$ & $83.78 \pm 1.71$ & $93.98 \pm 0.33$ \\
Selective-Mamba (Inv-Scalar) & 25,259 & $93.77 \pm 0.42$ & $93.07 \pm 0.50$ & $93.78 \pm 1.05$ & $84.26 \pm 0.88$ & $93.74 \pm 0.44$ \\
Causal-Transformer (Inv-Scalar) & 25,387 & $94.17 \pm 0.85$ & $93.36 \pm 0.45$ & $93.78 \pm 0.65$ & $84.48 \pm 0.52$ & $94.19 \pm 0.85$ \\
Dilated-TCN (Inv-Scalar) & 25,255 & $92.24 \pm 0.61$ & $92.32 \pm 0.63$ & $91.56 \pm 1.58$ & $80.25 \pm 1.60$ & $92.21 \pm 0.60$ \\
DeepConvGRU (Inv-Scalar) & 25,295 & $94.80 \pm 1.33$ & $94.12 \pm 1.14$ & $94.38 \pm 1.06$ & $85.44 \pm 1.93$ & $94.76 \pm 1.33$ \\
Diag-SSM (Lagged-Inv-Scalar) & 25,347 & $95.29 \pm 0.63$ & $95.05 \pm 0.52$ & $95.24 \pm 1.63$ & $85.73 \pm 0.91$ & $95.32 \pm 0.58$ \\
DeepConvGRU (Lagged-Inv-Scalar) & 25,587 & $96.25 \pm 0.64$ & $95.64 \pm 0.50$ & $95.32 \pm 0.85$ & $87.51 \pm 0.79$ & $96.24 \pm 0.60$ \\
Diag-SSM (Chiral-Triple-Inv) & 25,309 & $96.65 \pm 0.26$ & $96.28 \pm 0.50$ & $93.87 \pm 0.66$ & $85.72 \pm 1.01$ & $96.63 \pm 0.23$ \\
DeepConvGRU (Chiral-Triple-Inv) & 25,377 & $96.68 \pm 0.54$ & $95.89 \pm 0.84$ & $95.46 \pm 0.92$ & $86.67 \pm 1.17$ & $96.70 \pm 0.59$ \\
Lie-$\SO(3)$-1L (Equiv-Frame) & 25,049 & $93.45 \pm 1.06$ & $93.02 \pm 1.19$ & $93.38 \pm 0.84$ & $84.13 \pm 1.28$ & $93.41 \pm 1.06$ \\
Lie-$\SO(3)$-Cascade (Dir-RMS, Static) & 25,173 & $93.59 \pm 0.86$ & $93.12 \pm 0.71$ & $91.09 \pm 1.26$ & $79.02 \pm 1.48$ & $92.61 \pm 0.82$ \\
Lie-$\SO(3)$-Cascade (Unnorm.\ $\rho_1$) & 25,173 & $90.62 \pm 1.11$ & $90.18 \pm 0.95$ & $89.65 \pm 0.68$ & $79.13 \pm 1.07$ & $90.57 \pm 1.14$ \\
\textbf{Lie-$\SO(3)$-Cascade (Dir-RMS, Full-Equiv)} & 25,173 & $93.15 \pm 0.74$ & $92.59 \pm 0.64$ & $92.92 \pm 0.76$ & $\mathbf{83.37 \pm 1.00}$ & $93.28 \pm 0.84$ \\
Lie-$\SO(3)$-1L (Equiv + Causal-LP + LN) & 25,219 & $93.58 \pm 0.52$ & --- & $93.62 \pm 0.92$ & $84.61 \pm 1.17$ & $93.52 \pm 0.52$ \\
\textbf{Lie-$\SO(3)$-Cascade (Q-Turn + LP + LN)} & 25,379 & $94.09 \pm 0.53$ & --- & $93.33 \pm 1.03$ & $\mathbf{84.23 \pm 1.00}$ & $94.01 \pm 0.51$ \\
\midrule
\multicolumn{7}{l}{\textbf{Per-Subject Held-Out Test Breakdown on UCI REALDISP (Subjects $\#2, \#5, \#15$, $\text{mean} \pm \text{SD}$ over 5 seeds)}} \\
\midrule
\textbf{Architecture} & \textbf{$\#2$ Ideal} & \textbf{$\#5$ Ideal} & \textbf{$\#15$ Ideal} & \textbf{$\#2$ Mut.} & \textbf{$\#5$ Mut.} & \textbf{$\#15$ Mut.} \\
\midrule
Selective-Mamba (Native-Conv1D) & $99.55 \pm 0.69$ & $92.67 \pm 1.44$ & $95.38 \pm 1.51$ & $22.98 \pm 3.35$ & $25.58 \pm 2.60$ & $7.74 \pm 1.34$ \\
DeepConvGRU (Native-Conv1D) & $99.82 \pm 0.33$ & $94.15 \pm 0.95$ & $92.12 \pm 2.02$ & $21.35 \pm 2.45$ & $25.42 \pm 2.67$ & $5.21 \pm 2.40$ \\
DeepConvGRU (Equiv-Concat) & $100.00 \pm 0.00$ & $96.11 \pm 0.46$ & $94.86 \pm 2.49$ & $49.81 \pm 7.28$ & $48.45 \pm 7.63$ & $23.16 \pm 8.30$ \\
Diag-SSM (Inv-Scalar) & $99.30 \pm 0.08$ & $90.92 \pm 0.97$ & $91.26 \pm 0.96$ & $93.50 \pm 1.56$ & $85.24 \pm 3.53$ & $66.79 \pm 1.38$ \\
DeepConvGRU (Inv-Scalar) & $99.39 \pm 0.30$ & $94.01 \pm 1.90$ & $90.13 \pm 2.35$ & $94.27 \pm 1.74$ & $87.55 \pm 1.40$ & $68.99 \pm 3.47$ \\
Lie-$\SO(3)$-1L (Equiv-Frame) & $99.15 \pm 0.25$ & $90.92 \pm 1.59$ & $89.61 \pm 2.32$ & $93.19 \pm 1.28$ & $87.08 \pm 1.77$ & $66.21 \pm 1.65$ \\
Lie-$\SO(3)$-Cascade (Dir-RMS, Static) & $98.73 \pm 1.31$ & $92.11 \pm 1.81$ & $89.12 \pm 1.74$ & $89.46 \pm 2.45$ & $79.99 \pm 3.30$ & $61.54 \pm 1.83$ \\
Lie-$\SO(3)$-Cascade (Unnorm.\ $\rho_1$) & $97.45 \pm 1.29$ & $87.72 \pm 1.85$ & $85.82 \pm 2.30$ & $89.32 \pm 2.19$ & $80.29 \pm 1.33$ & $61.79 \pm 0.65$ \\
\textbf{Lie-$\SO(3)$-Cascade (Dir-RMS, Full-Equiv)} & $98.73 \pm 0.28$ & $90.86 \pm 1.37$ & $89.16 \pm 2.88$ & $92.53 \pm 0.55$ & $85.72 \pm 2.02$ & $66.10 \pm 2.06$ \\
\bottomrule
\end{tabular}
\end{table}

\setlength{\bibsep}{0.0pt}\bibliographystyle{elsarticle-num}
\bibliography{references}

\endgroup

\end{document}